\newcommand*{\newletterthm@internal}{}%Dummy definition to use \renewcommand later
\newcommand*{\newletterthm}[1]{%
  \def\newletterthm@name{#1}%
  \renewcommand*{\newletterthm@internal}[1][]{%
    \ifthenelse{\isempty{##1}}{%
      \expandafter\expandafter\expandafter\newtheorem%
      \expandafter\expandafter\expandafter{%
        \expandafter\newletterthm@name%
        \expandafter}%
      \expandafter{%
        \newletterthm@text}%
      \expandafter\renewcommand%
      \expandafter*%
      \expandafter{%
        \csname the#1\endcsname}{\Alph{#1}}%
    }{%
      \expandafter\expandafter\expandafter\newtheorem%
      \expandafter\expandafter\expandafter{%
        \expandafter\newletterthm@name%
        \expandafter}%
      \expandafter{%
        \newletterthm@text}[##1]%
    \expandafter\renewcommand%
        \expandafter*%
        \expandafter{%
          \csname the#1\endcsname}{\csname the##1\endcsname.\Alph{#1}}%
    }%
  }%
  \newletterthm@newthm%
}
\newcommand*{\newletterthm@newthm}[2][]{%
  \ifthenelse{\isempty{#1}}{%
    \def\newletterthm@text{#2}%
    \newletterthm@internal%
  }{%
    \expandafter\newtheorem\expandafter{\newletterthm@name}[#1]{#2}%
  }%
}
\newtheoremstyle{thmstyle}%style name
  {\medskipamount}%space before
  {\smallskipamount}%space after
  {\slshape}%font used
  {0pt}%indentation
  {\bfseries}%modifier theorem head
  {.}%punctuation between theorem head and body
  { }%space after punctuation
  {\thmname{#1}\thmnumber{ #2}{\normalfont\thmnote{ (#3)}}}%theorem specifier
\newtheoremstyle{plainstyle}%style name
  {\medskipamount}%space before
  {\smallskipamount}%space after
  {\rmfamily}%font used
  {0pt}%indentation
  {\bfseries}%modifier theorem head
  {.}%punctuation between theorem head and body
  { }%space after punctuation
  {\thmname{#1}\thmnumber{ #2}{\normalfont\thmnote{ (#3)}}}%theorem specifier
\theoremstyle{thmstyle}
\newtheorem{theorem}{Theorem}[section]
\newtheorem{lemma}[theorem]{Lemma}
\newtheorem{proposition}[theorem]{Proposition}
\newtheorem{claim}[theorem]{Claim}
\theoremstyle{plainstyle}
\newtheorem{definition}[theorem]{Definition}
\newtheorem{remark}[theorem]{Remark}
\newenvironment{proofof}[1]{\begin{proof}[Proof of #1.]}{\end{proof}}
\def\refdescformat#1{%
  \phantomsection%
  \let\oldlabel\label%
  \let\label\@gobble%
  \edef\@currentlabel{\MakeLowercase{#1}}% Label format (in \ref).
  \let\label\oldlabel%
  #1.% Item format (in description).
}
\newlist{refdesc}{description}{1}
\setlist[refdesc]{format={\refdescformat}}
\newlist{enumdef}{enumerate}{1}
\setlist[enumdef]{before={\leavevmode}, label={\arabic*.}, ref={\thetheorem.\arabic*}}
\setlist[enumerate]{label={\roman*.}, ref={(\roman*)}} % Provides default enumitem arguments for enumerate
\newcommand\notoc@internal[2][]{}%Dummy definition to use \renewcommand later
\newcommand{\notoc}[1]{%Executes the next command (such as \section, \subsection, etc.) without adding it to toc.
  \renewcommand{\notoc@internal}[2][]{%
    \let\old@addtocontents\addtocontents%
    \let\addtocontents\@gobbletwo%
    \ifthenelse{\isempty{##1}}{%
      #1{##2}%
    }{%
      #1[##1]{##2}%
    }%
    \let\addtocontents\old@addtocontents%
  }%
  \notoc@internal%
}
\newenvironment{noindquote}[1][]{%
  \begin{quotation}%
    \noindent%
    \ifthenelse{\isempty{#1}}{}{\textbf{#1.} }%
    \it%
}{%
  \end{quotation}%
}
\newenvironment{emphquote}[1][]{%
  \begin{quotation}%
    \noindent%
    \ifthenelse{\isempty{#1}}{}{(#1). }%
    \it%
}{%
  \end{quotation}%
}
\newcommand{\biggg}{\bBigg@\thr@@}
\def\bigggl{\mathopen\biggg}
\def\bigggr{\mathclose\biggg}
\newcommand{\Biggg}{\bBigg@{3.5}}
\def\Bigggl{\mathopen\Biggg}
\def\Bigggr{\mathclose\Biggg}
\newcommand{\bigggg}{\bBigg@{4}}
\def\biggggl{\mathopen\bigggg}
\def\biggggr{\mathclose\bigggg}
\newcommand{\Bigggg}{\bBigg@{4.5}}
\def\Biggggl{\mathopen\Bigggg}
\def\Biggggr{\mathclose\Bigggg}
\newcommand{\biggggg}{\bBigg@{5}}
\newcommand{\Biggggg}{\bBigg@{5.5}}
\numberwithin{equation}{section} % Changes equation numbering to be within section.
\let\epsilon\varepsilon
\newcommand{\rn}{\bm}
\newcommand{\df}{\stackrel{\text{def}}{=}}
\newcommand{\place}{\mathord{-}}
\newcommand{\comp}{\mathbin{\circ}}
\newcommand{\rest}{\mathord{\vert}}
\newcommand{\floor}[1]{\ensuremath{\lfloor#1\rfloor}}
\newcommand{\ceil}[1]{\ensuremath{\lceil#1\rceil}}
\newcommand{\Floor}[1]{\ensuremath{\left\lfloor#1\right\rfloor}}
\newcommand{\Ceil}[1]{\ensuremath{\left\lceil#1\right\rceil}}
\newcommand{\Given}{\;\middle\vert\;}
\newcommand{\unk}{\mathord{?}}
\DeclareMathOperator{\im}{im}
\DeclareMathOperator{\ag}{ag}
\DeclareMathOperator{\sym}{sym}
\DeclareMathOperator{\SC}{SC}
\DeclareMathOperator{\advSC}{advSC}
\DeclareMathOperator{\sSC}{sSC}
\DeclareMathOperator{\advsSC}{advsSC}
\DeclareMathOperator{\SUC}{SUC}
\newcommand{\hPHP}[1][h]{#1\operatorname{-PHP}}
\newcommand{\hSHP}[1][h]{#1\operatorname{-SHP}}
\DeclareMathOperator{\VC}{VC}
\DeclareMathOperator{\Nat}{Nat}
\DeclareMathOperator{\VCN}{VCN}
\newcommand{\kpart}[1][k]{#1\operatorname{-part}}
\DeclareMathOperator{\Bi}{Bi}
\DeclareMathOperator{\ev}{ev}
\DeclareMathOperator{\ex}{ex}
\DeclareMathOperator{\dist}{dist}
\DeclareMathOperator{\Heis}{Heis}
\newcommand{\EE}{\mathbb{E}}
\newcommand{\FF}{\mathbb{F}}
\newcommand{\NN}{\mathbb{N}}
\newcommand{\PP}{\mathbb{P}}
\newcommand{\RR}{\mathbb{R}}
\newcommand{\One}{\mathbbm{1}}
\newcommand{\cA}{\mathcal{A}}
\newcommand{\cB}{\mathcal{B}}
\newcommand{\cE}{\mathcal{E}}
\newcommand{\cF}{\mathcal{F}}
\newcommand{\cG}{\mathcal{G}}
\newcommand{\cH}{\mathcal{H}}
\newcommand{\cL}{\mathcal{L}}
\newcommand{\cU}{\mathcal{U}}
\def\Erdos{Erd\H{o}s}
\def\Kovari{K\H{o}v\'{a}ri}
\def\Sos{S\'{o}s}
\def\Turan{Tur\'{a}n}
\title{Sample completion, structured correlation, and Netflix problems}
\author{%
  Leonardo N.~Coregliano\thanks{Research partially supported by the 2024--2025 Suzuki Postdoctoral Fellowship} \and%
  Maryanthe Malliaris\thanks{Research partially supported by NSF-BSF 2051825.}%
}
\date{\today}
\begin{document}
\maketitle

\begin{abstract}
We develop a new high-dimensional statistical learning model which can take advantage of structured correlation in data even in
the presence of randomness. We completely characterize learnability in this model in terms of $\VCN_{k,k}$-dimension
(essentially $k$-dependence from Shelah's classification theory). This model suggests a theoretical explanation for the success
of certain algorithms in the 2006~Netflix Prize competition.
\end{abstract}

One of the most famous learning competitions of the early internet era was arguably the Netflix Prize competition~\cite{BL07} of
2006--2009. In this competition, the task was to predict user ratings of movies from partial information. Although some
reasonably successful algorithms were developed for this task, almost twenty years later, essentially no theoretical explanation
for their success is known. This is both a challenge to theory, and a blind spot for improving algorithms for these and related
problems (which we will refer to under the umbrella name of ``Netflix problems'').

Because these are well-known problems with a long history, because they have so far eluded a complete and satisfying
explanation, and because the kinds of learning tasks they describe are still central concerns today, Netflix problems are a
compelling test case for the question of how theory can contribute to the conversation around learning models.

In this paper we develop a statistical learning model for Netflix-type problems, which we call \emph{sample completion
learning}, and we completely characterize the problems it addresses. This is part of a program we are developing to deal with
certain kinds of intrinsic high dimensionality in learning, which can be described by the slogan:
\begin{emphquote}
  Learning problems arising in nature may hide ``structured correlation'' which may need to be leveraged if the learning task is
  to succeed.
\end{emphquote}
This model is inspired by, although independent from, the first two papers in this program~\cite{CM24+,CM25+} as explained
below. We are mathematicians, and part of what interests us in this work is what has always interested mathematicians about the
natural world: that nature (including, of course, AI and machine learning) provides a very interesting source of mathematical
problems and potentially new mathematical phenomena.

Readers may choose to begin with the informal exposition in Section~\ref{sec:informal}, the more technical exposition in
Section~\ref{sec:techexp}, the discussion of the Netflix Prize competition of Section~\ref{sec:Netflix} or the main technical
body of the paper in Section~\ref{sec:defs}.

\section{Informal exposition}
\label{sec:informal}

\subsection*{PAC learning}

Consider the following kind of problem (this paragraph is a simplified sketch of the celebrated PAC learning theory of
Valiant~\cite{Val84}). There is a set $X$ and a collection $\cH$ of subsets of $X$, both of which we know. Our adversary puts a
measure $\mu$ on $X$, which we do not know, and chooses one set $F\in\cH$, which we do not know. We receive a random
i.i.d.\ sample $\rn{x}_1,\ldots,\rn{x}_m$, according to $\mu$, and the adversary tells us which points in the sample belong to
their set $F$. Based on this, we guess some $H\in\cH$. We are judged according to the $\mu$-measure of the symmetric difference
of our $H$ and the correct $F$. How well can we do? PAC learning theory completely describes this problem as follows: call the
class $\cH$ learnable if for every $\epsilon$, $\delta$ there is an $m=m(\epsilon,\delta)$ such that for every adversarial
choice of $\mu$, $H$, given an i.i.d.\ sample of size $m$, we can with probability $1-\delta$ make a guess which is
$\epsilon$-close to being correct. Then ``$\cH$ is learnable'' has a purely combinatorial characterization: if and only if it
has finite Vapnik--Chervonenkis ($\VC$) dimension. This is part\footnote{In its modern statement, due to Vapnik--Chervonenkis,
Blumer--Ehrenfeucht--Haussler--Warmuth, and Natarajan, the fundamental theorem has other equivalent clauses including uniform
convergence and agnostic PAC learning. We will discuss these in due course. See~\cite[Theorem~A]{CM24+}; or~\cite[\S 6.4]{SB14}
for a full statement and references.} of the \emph{Fundamental Theorem of PAC Learning}.

\subsection*{Netflix problems and present work}

In this paper, we will address a class of collaborative filtering problems, whose most famous instance is arguably the Netflix
Prize competition~\cite{BL07} (see~\cite{SK09} for a survey on algorithmic techniques for collaborative filtering problems).
These collaborative filtering problems have been widely studied from an application standpoint and even some sufficient
conditions have been found from a theoretical standpoint. In this work we regard these problems as statistical learning problems
and provide a full theoretical characterization of their feasibility in terms of a combinatorial dimension of the hypothesis
class (in the language of classical PAC learning, we provide a complete fundamental theorem). Without further ado, here is a
simplified version of the Netflix Prize problem (see Section~\ref{sec:Netflix} for a more complete version and a discussion):
\begin{emphquote}[Netflix Prize competition, simplified version on a sample]
  Netflix has a finite set $A$ of users and a finite set $B$ of movies. This information we know. Netflix also has a
  confidential matrix $F\in\{0,1\}^{A\times B}$ whose $(a,b)$ entry is $1$ if user $a$ likes movie $b$ and $0$ otherwise.
  Netflix chooses randomly a $\rho$-proportion of the entries $(a,b)$ of the matrix and provides us with their labels (i.e.,
  with all such triples $(a,b,F(a,b))$). We are tasked with guessing the correct labels for all other pairs $(a,b)$ in the
  matrix $A\times B$.
\end{emphquote}

In fact, as we will see in Section~\ref{sec:Netflix}, in the actual Netflix Prize competition, the sets $A$ and $B$ are picked
at random from much larger sets of users and movies. This motivates the following framing of the problem in the language of
statistical learning: we consider sets of users $\cA$ and of movies $\cB$ which may possibly be infinite, we require the unknown
matrix $F\in\{0,1\}^{\cA\times\cB}$ to be an element of a known hypothesis class $\cH\subseteq\{0,1\}^{\cA\times\cB}$ and we are
provided with a finite portion of it chosen at random in the sense that the adversary picks probability measures on $\cA$ and
$\cB$ and randomly samples $m$ elements from each, independently, forming finite subsets $A\subseteq\cA$ and $B\subseteq\cB$ and
we then consider the Netflix problem on $A\times B$, that is, the adversary reveals to us a randomly chosen $\rho$-proportion of
the labels and we are tasked with completing the $A\times B$ matrix. Just as in PAC learning, our task is to provide an
approximate solution with high probability provided $m$ is large enough, where approximate means only an $\epsilon$ fraction of
the $m^2$ entries can be wrong. This will be an example of what we call \emph{sample completion high-arity PAC learning},
defined formally in Section~\ref{sec:defs} and abbreviated simply to \emph{sample completion learning}.

Beyond this, there are obvious key differences from this setup to classical PAC:
\begin{enumerate}
\item Our sample is not i.i.d. More specifically, even though the sampled users $\rn{a}_i$ and movies $\rn{b}_j$ are chosen
  i.i.d., the information of the problem, i.e., all triples $(\rn{a}_i, \rn{b}_j, F(\rn{a}_i,\rn{b}_j))_{i,j=1}^m$, is not
  i.i.d.\ at all, it features ``structured correlation'' as e.g., entry $(i,j_1)$ is correlated to entry $(i,j_2)$.
\item On top of this, some fraction of the information is randomly erased.
\item On the bright side, differently from classical PAC, once the triples are chosen, we no longer care about users and movies
  outside of the sample $A\times B$, we only need to retrieve the erased information on the sample.
\end{enumerate}

To hammer home the fact that this setup is different from PAC learning, consider the hypothesis class $\cH$ in which each user
$a$ has a favorite movie $b_a$ that they like and they do not like any other movie (i.e., $\cH$ is the set of all matrices in
$\{0,1\}^{A\times B}$ that have exactly one $1$ in each row). It is easy to see that $\cH$ has infinite $\VC$ dimension, hence
it is not learnable in the classic PAC sense. However, in the simplified Netflix Problem for $\cH$, if in a sample grid we see a
$1$ in some row $a_i$, then we immediately know $b_{a_i}$, otherwise, if the row of $a_i$ does not have a $1$ revealed to us, we
can simply guess that $a_i$ does not like any of the $b_j$ as this will only incur at most one error per row, hence less than
$\epsilon\cdot m^2$ errors in total if $m$ is large.

Before we proceed with the mathematical exposition in Section~\ref{sec:techexp}, we make some remarks (which the reader should
feel free to skip, as we do not define all terms) on how this compares with existing work. This is the second step in a larger
program of the authors on leveraging the aforementioned structured correlation. The first step came in the
papers~\cite{CM24+,CM25+} in the form of high-arity PAC learning theory, which already featured improved learning power through
structured correlation in the training data (more specifically, high-arity PAC is characterized by a slicewise $\VC$-dimension,
which is always at most the $\VC$-dimension and can actually be finite without the latter being finite). In the present work, we
leverage not only structured correlation in the training data, but also between the training data and test data. The dimension
that characterizes sample completion learning, which we call $\VCN_{k,k}$-dimension, is in turn at most the slicewise
$\VC$-dimension (and can be finite without the latter being finite); this means that there is a \emph{strict} hierarchy of
learnability: PAC $\implies$ high-arity PAC $\implies$ sample completion. From the definitions alone, it is not immediately
clear why there should be a strict hierarchy of learnability. To explain how this arises and how the different kinds of
correlation contribute to a learning advantage will be the work of the current paper.

To conclude this introduction, we now summarize what we believe to be the main contributions of the present paper:
\begin{itemize}
\item we define a new high-arity statistical learning model, which we call ``sample completion learning,'' which includes as a
  special case the problem of reconstructing randomly erased entries from finite matrices labeled from a finite set.
\item we prove a complete analogue of the fundamental theorem of PAC learning for this model.
\item in particular, we completely characterize learnability in this model in terms of a combinatorial dimension of independent
  mathematical interest.
\item we use this to suggest a theoretical explanation of learning in Netflix problems in Section~\ref{sec:Netflix}.
\end{itemize}

We now turn to a more technical exposition of the paper. The reader can also consult the table of contents on
page~\pageref{tableofcontentsmarker} for pointers to main definitions and theorems and Figure~\ref{fig:roadmap} on
page~\pageref{fig:roadmap} for a pictorial view of the implications involved in the main theorems.

\section{More technical exposition}
\label{sec:techexp}

This section exposits some of the main definitions and proofs in the case where $k=2$ and the learning is partite. Definitions
are mathematical but not completely formal, and we try for simplicity. The formal text will begin in Section~\ref{sec:defs}.

\begin{noindquote}[Convention]
Throughout this expository section, the arity is $k=2$. We also make the following slight simplifications: we fix sets $X_1$ and
$X_2$ and consider a family $\cH$ of hypotheses which are functions\footnote{So we can think about each $H\in\cH$ as a subset of
$X_1\times X_2$ identified with its characteristic function. It is also sometimes useful to think of these sets as the edge-set
of a bipartite graph with bipartition $(X_1,X_2)$, so that the function is the (bipartite) adjacency matrix.} from $X_1\times
X_2$ to $\Lambda = \{0,1\}$. Finally, in this expository section, we will use the $0/1$-loss function $\ell_{0/1}$; this means
that all incorrect guesses algorithms make get the same penalty of $1$ and correct guesses get $0$ penalty.
\end{noindquote}

\subsection*{Three examples}

We start with three examples illustrating a certain kind of \emph{structured correlation in data} which we shall leverage in our
learning model. Unlike previous forms of statistical learning, our model allows for a certain kind of randomness. The first
example is from combinatorics, the second from analysis/physics, and the third from linear algebra and as we will see in
Section~\ref{sec:Netflix}, connected to the ``the real world'' Netflix Prize competition. These examples all have infinite
$\VC$-dimension (hence escape the analysis of original PAC), have infinite slicewise $\VC$-dimension (hence escape the analysis
of the first two high-arity PAC papers), but do have what we will call \emph{bounded $\VCN_{k,k}$-dimension} (Shelah's
$k$-dependence, in the language of model theory). These examples are special in that the dimension itself will not generally
guarantee such a combinatorially basic analysis, however:
\begin{noindquote}
a consequence of our main theorem will be that for any class of finite $\VCN_{k,k}$-dimension, on any sufficiently large finite
grid, \emph{a relatively small set of values can determine the behavior of the hypothesis, and moreover such a representative
small set is statistically easy to find, or rather, statistically hard to erase}.
\end{noindquote}
Here then are the examples to keep in mind:

\begin{enumerate}[label={Example~\Roman*.}, ref={\Roman*}, itemsep=\medskipamount, wide]
\item\label{ex:Rado} First, let $G$ be the Rado graph\footnote{Here is one construction: let $\NN$ be the vertex set, and for
each $(i,j)\in\NN\times\NN$, flip a fair coin and put an edge if it comes up heads. The outcome will be the Rado graph (up to
isomorphism) with probability $1$.} (known to model-theorists simply as the countable random graph), with vertex set $\NN$. Let
  $X_1 = X_2 = \NN$. For each $c\in V(G) = \NN$, let $H_c(a,b) = 1$ if and only if $c$ has an edge to $a$ and not to $b$. Let
  $\cH = \{H_c\mid c\in G\}$. Observe that $\cH$ clearly does not have slicewise finite $\VC$-dimension, so the usual high-arity
  PAC won't apply.\footnote{Fix any $a\in X_1$ and any other $b_1,\ldots,b_n\in X_2$. For any
  $\sigma\subseteq\{b_1,\ldots,b_n\}$, there is some $c\in G$ connected to $a$ and to all $b_i\in\sigma$, but not to any element
  of $\{b_1,\ldots,b_n\}\setminus\sigma$. Letting $H_c$ vary in $\cH$, we shatter arbitrarily large subsets of $X_2$.}
  Nonetheless, $\cH$ has quite a bit of structure. For instance, if $H_c(a,b) = 1$, then necessarily $H_c(b,d) = 0$ and
  necessarily $H_c(e,a) = 0$, regardless of the values of $d$, $e$.
\item\label{ex:Heis} The second example draws on a recent analysis of some widely used matrix groups in the paper~\cite{DM21}.
  These are the discrete Heisenberg groups whose ``continuous'' analogues over $\RR$ are central in analysis and physics.

  Fix a prime $p > 2$ and let $\FF_p$ be the finite field with $p$ elements. For each $n\geq 1$, let $\Heis_n$ be the group of
  $(n+2)\times (n+2)$ matrices with $1$s on the diagonal, arbitrary elements of $\FF_p$ in the remaining entries of the top row
  and right column, and $0$s everywhere else. Let $X_1 = X_2 = \Heis_n$. The learning problem will be to determine the matrix
  $A\in\Heis_n$ as closely as possible based on information about which elements of $\Heis_n$ it does and does not commute
  with. That is, for each $A\in\Heis_n$, let
  \begin{equation*}
    H_A(B,C) = 1 \text{ if and only if $A$ commutes with $B$ and $A$ does not commute with $C$.}
  \end{equation*}
  Then let
  \begin{equation*}
    \cH = \{H_A \mid A\in\Heis_n\}.
  \end{equation*}
  This relates to the previous example in a nontrivial way: \cite{DM21} show that the sequence of commuting graphs of $\Heis_n$,
  for fixed $p$, as $n\to\infty$, are quasirandom. (They also show the unique countable limit, $\Heis_\omega$, is in some sense
  a random graph ``except for'' linear dependence.) Even without quoting these theorems, it may be plausible that $\cH$ cannot
  really act freely to shatter grids $A\times B$ since commuting depends on an underlying binary relation.\footnote{This example
  can also be seen terms of a vector space with a symplectic form, see~\cite{DM21} \S 2.5.}
\item\label{ex:boundedrank} Our third example comes from linear algebra and as we will see in Section~\ref{sec:Netflix} is
  related to the Netflix Prize competition: let $X_1=X_2=\NN$ and let us interpret hypotheses as infinite matrices with entries
  in $\FF_2$, i.e., functions $X_1\times X_2\to\FF_2$. For any given $r\in\NN$, we let $\cH_r\subseteq\FF_2^{\NN\times\NN}$ be
  the set of all infinite matrices of rank at most $r$ (i.e., matrices $M$ that can be written using exterior products as
  $M=\sum_{i=1}^r v_i\cdot v_i^\top$ for $v_i\in\FF_2^\NN$). Note that $\cH_r$ has structure that gets revealed exactly on grids
  $(r+1)\times(r+1)$: no such grid can span an identity matrix.
\end{enumerate}

These three examples exhibit the kind of behavior which our learning model will be able to leverage. In the case of $k=2$:

\subsection*{Bipartite $\VCN_{2,2}$-dimension}

Given $m$ and $A = \{a_1,\ldots,a_m\}$ from $X_1$, $B = \{b_1,\ldots,b_m\}$ from $X_2$, say that $\cH$ \emph{shatters} $A \times
B$ if every partial function $F\colon A\times B\to\{0,1\}$ is extended by some $H\in\cH$. (The ``N'' for ``Natarajan'' indicates
that we also allow a larger finite label set and a corresponding slighly more general notion of shattering.)  Bipartite
$\VCN_{2,2}$-dimension is essentially the largest integer $m$, if it exists, such that $\cH$ shatters some $A\times B$ where
$\lvert A\rvert = \lvert B\rvert = m$; and $\infty$ otherwise.

Again, to our knowledge, a dimension of this kind was first isolated by Shelah (see~\cite[Definition~5.63]{She14} and
references there) under the name of $k$-dependence of a first order formula.

For reference, Examples~\ref{ex:Rado} and~\ref{ex:Heis} have $\VCN_{2,2}$-dimension exactly $1$, while $\cH_r$ in
Example~\ref{ex:boundedrank} has $\VCN_{2,2}$-dimension exactly $r$.

\subsection*{Statement of a simplified main theorem}

We now state a simplified version of the paper's main theorem, a ``fundamental theorem'' for sample completion learning, in the
\emph{partite} case when $k=2$: after stating it, we will discuss the various new definitions and sketch proofs of the main
arrows in this special case. To our knowledge, all definitions except for the combinatorial dimension are new (and to our
knowledge this is the first time this combinatorial dimension has been used in statistical learning).

\begin{theorem}[Simplified version of Theorem~\ref{thm:SCpart} in the case $k=2$]\label{thm:expos}
  For $X_1$, $X_2$, $\cH$ (and using the $0/1$ loss function $\ell_{0/1}$), the following are equivalent:
  \begin{enumerate}[label={\arabic*.}, ref={(\arabic*)}]
  \item\label{thm:expos:VCN22} $\VCN_{2,2}(\cH) < \infty$.
  \item\label{thm:expos:SUC} $\cH$ satisfies sample uniform convergence.
  \item\label{thm:expos:advSC} $\cH$ is adversarial sample completion $2$-PAC learnable.
  \item\label{thm:expos:SC} $\cH$ is sample completion $2$-PAC learnable.
  \item\label{thm:expos:SHP} $\cH$ has the $m^2$-sample Haussler packing property.
  \item\label{thm:expos:PHP} $\cH$ has the $m^2$-probabilistic Haussler packing property.
  \end{enumerate}
\end{theorem}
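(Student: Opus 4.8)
\begin{proofint}
The plan is to prove the cyclic chain of implications \ref{thm:expos:VCN22}~$\Rightarrow$~\ref{thm:expos:SHP}~$\Rightarrow$~\ref{thm:expos:PHP}~$\Rightarrow$~\ref{thm:expos:SUC}~$\Rightarrow$~\ref{thm:expos:advSC}~$\Rightarrow$~\ref{thm:expos:SC}~$\Rightarrow$~\ref{thm:expos:VCN22}, following the architecture of the classical Fundamental Theorem of PAC learning. All of the genuinely new content sits in the first arrow (a high-arity counting/packing lemma) and the last (a high-arity ``no free lunch'' bound); the four arrows in between are soft.

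For \ref{thm:expos:VCN22}~$\Rightarrow$~\ref{thm:expos:SHP}, put $d\df\VCN_{2,2}(\cH)<\infty$. The crux is a \emph{bipartite Sauer--Shelah estimate}: for all $A\subseteq X_1$, $B\subseteq X_2$ with $\lvert A\rvert=\lvert B\rvert=m$, the number of distinct restrictions $H\rest_{A\times B}$, $H\in\cH$, is at most $2^{O_d(m\log m)}$, hence \emph{subexponential} in the cell count $m^2$ --- even though (as the rank example already shows) this count is genuinely exponential in $m$, so no bound on the $\VC$-dimension of $\cH$ viewed as a set system on the ground set $A\times B$ could ever produce it. I would prove the estimate by an induction on $d$ which, fixing the columns $B$, combines the one-dimensional Sauer--Shelah bound along rows with the prohibition of a shattered $d{\times}d$ subgrid, through a two-sided shifting/compression argument. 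Given the estimate, a greedy volume argument in the spirit of Haussler's packing lemma, now with the normalised Hamming metric on $\{0,1\}^{A\times B}$, shows that any $\epsilon$-separated subfamily of $\cH\rest_{A\times B}$ has size $2^{O_d(m\log m)}=2^{o(m^2)}$, which is the $m^2$-sample Haussler packing property. \textbf{This step is the one I expect to be the main obstacle}: one-dimensional compression does not transfer verbatim, and the genuinely two-dimensional combinatorics of avoiding a shattered subgrid has to be used.

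The arrows \ref{thm:expos:SHP}~$\Rightarrow$~\ref{thm:expos:PHP}~$\Rightarrow$~\ref{thm:expos:SUC} are sampling and concentration. A packing bound holding for every grid at once gives the probabilistic version (a packing bound holding with high probability for a grid drawn from the adversary's measures on $X_1$, $X_2$), and conversely a probabilistic packing bound of size $2^{o(m^2)}$ supplies, on a typical sampled grid $A\times B$, an $\tfrac{\epsilon}{2}$-net for $\cH\rest_{A\times B}$ of subexponential size. A union bound over the net, with Hoeffding's inequality applied to the random choice of revealed cells --- each net element deviating with probability at most $\exp(-\Omega(\epsilon^2\rho m^2))$ while the net has size $2^{O(m\log m)}$ --- shows that with high probability, uniformly over $H\in\cH$, the empirical loss on the revealed cells is within $\epsilon$ of the true loss on all of $A\times B$. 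That is sample uniform convergence.

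Finally, \ref{thm:expos:SUC}~$\Rightarrow$~\ref{thm:expos:advSC} is the usual ``sample ERM works'' argument: the learner outputs $H\rest_{A\times B}$ for an $H\in\cH$ (nearly) minimising loss on the revealed cells, and uniform convergence carries near-optimality over to the full grid; and \ref{thm:expos:advSC}~$\Rightarrow$~\ref{thm:expos:SC} holds because the realisable problem is a special case of the adversarial one. For \ref{thm:expos:SC}~$\Rightarrow$~\ref{thm:expos:VCN22} I argue the contrapositive: if $\VCN_{2,2}(\cH)=\infty$, then for each $m$ fix a finite grid $A_0\times B_0$, with $\lvert A_0\rvert=\lvert B_0\rvert$ far larger than $m^2$, shattered by $\cH$; picking one witnessing hypothesis for each of the $2^{\lvert A_0\times B_0\rvert}$ patterns yields a finite subfamily $\cH'\subseteq\cH$ on which restriction to $A_0\times B_0$ is a bijection onto $\{0,1\}^{A_0\times B_0}$. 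The adversary puts the uniform measures on $A_0$ and $B_0$ and draws $F$ uniformly from $\cH'$. With high probability the $m$ sampled rows and the $m$ sampled columns are pairwise distinct, so the $m^2$ cells of the sample grid carry $m^2$ mutually independent uniform labels; the $\rho$-revelation exposes only $\rho m^2$ of them, and on the remaining $(1-\rho)m^2$ cells no learner beats random guessing, forcing $\ge\tfrac{1}{2}(1-\rho)m^2$ errors in expectation --- more than $\epsilon m^2$ once $\epsilon<\tfrac{1}{4}(1-\rho)$. Averaging over $F$ and the learner's internal randomness turns this into a deterministic adversary foiling learnability for these fixed $\epsilon$, $\delta$, closing the cycle.
\end{proofint}
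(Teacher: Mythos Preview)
Your chain has a genuine break at \ref{thm:expos:PHP}~$\Rightarrow$~\ref{thm:expos:SUC}. Sample uniform convergence, as defined here, must hold for \emph{every} grid $A\times B$ (the only randomness is the erasure), whereas the probabilistic Haussler property speaks only about \emph{random} grids drawn from $\mu_1,\mu_2$. Even setting that aside, PHP does not give you an $\epsilon/2$-net on a typical grid: it says each fixed large subfamily $\cH'$ fails to be $\epsilon$-separated on most grids, but different $\cH'$ may fail on different grids, so you cannot swap the quantifiers to conclude that most grids admit a small net (the paper flags exactly this point). The paper therefore never proves \ref{thm:expos:PHP}~$\Rightarrow$~\ref{thm:expos:SUC} directly. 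Instead it runs the cycle \ref{thm:expos:VCN22}$\to$\ref{thm:expos:SUC}$\to$\ref{thm:expos:advSC}$\to$\ref{thm:expos:SC}$\to$\ref{thm:expos:PHP}$\to$\ref{thm:expos:VCN22}, with \ref{thm:expos:VCN22}$\to$\ref{thm:expos:SHP}$\to$\ref{thm:expos:PHP} on the side. The two arrows you are missing are \ref{thm:expos:SC}$\to$\ref{thm:expos:PHP} (a pigeonhole argument: if $\cH'$ is $\epsilon$-separated on $x$ and the learner sees only $\sim\rho m^2$ revealed labels, it can be correct for at most $|\Lambda|^{\rho m^2}$ of the $H_i$) and \ref{thm:expos:PHP}$\to$\ref{thm:expos:VCN22} (a random-linear-code construction in $\FF_2^{[n]^2}$: on a shattered $n\times n$ grid, a random code of dimension $\lceil\rho m^2\rceil$ stays $\epsilon$-separated under most coordinate projections to an $m\times m$ subgrid). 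Your no-free-lunch argument for \ref{thm:expos:SC}$\to$\ref{thm:expos:VCN22} is correct in this partite setting (the paper says as much), but it leaves \ref{thm:expos:PHP} orphaned: you still need some arrow out of \ref{thm:expos:PHP}, and the coding-theory argument is the one that works.

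A second, smaller issue: your claimed bound of $2^{O_d(m\log m)}$ restrictions is too strong and is in fact false. Take $\cH$ to be all $K_{2,2}$-free bipartite graphs on $\NN\times\NN$; then $\VCN_{2,2}(\cH)=1$ but there are $2^{\Theta(m^{3/2})}$ restrictions to an $m\times m$ grid. The correct bound is $2^{O_d(m^{2-1/(d+1)}\log m)}$, and the paper obtains it not by compression but by observing that any Natarajan-shattered subset of $[m]^2$, viewed as a bipartite graph, is $K_{d+1,d+1}$-free, and then invoking \Kovari--\Sos--\Turan\ for $\ex_{\kpart[2]}(m,K_{d+1,d+1})=O(m^{2-1/(d+1)})$ together with the one-dimensional Sauer--Shelah lemma. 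This is still $2^{o(m^2)}$, which is all you need for the Hoeffding-plus-union-bound step to \ref{thm:expos:SUC}, so the overall strategy for \ref{thm:expos:VCN22}$\to$\ref{thm:expos:SHP} survives---but the mechanism is extremal graph theory, not two-sided shifting.
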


Each of these items is formally defined in Section~\ref{def:SCpart} below, so our discussion here is informal (and in a
slightly different order). Item~\ref{thm:expos:VCN22} was already discussed.

\subsection*{Sample completion learning (item~\ref{thm:expos:SC})}

Suppose we have fixed our spaces $X_1$ and $X_2$ and a family $\cH$ of hypotheses, where each $H\in\cH$ is a function from
$X_1\times X_2$ to $\{0,1\}$. Suppose we are given in addition some $\epsilon,\delta,\rho > 0$. The setup is:

\begin{description}
\item[Input:] The adversary first fixes $F\in\cH$ and probability measures $\mu_1$ on $X_1$ and $\mu_2$ on $X_2$, respectively,
  all of these are unknown to the learner. The adversary then samples $\rn{a}_1,\ldots,\rn{a}_m$ i.i.d.\ from $\mu_1$ and
  $\rn{b}_1,\ldots,\rn{b}_m$ i.i.d.\ from $\mu_2$ (and independently from the $\rn{a}_i$), revealing these values to the
  learner. The adversary then forms an $m\times m$ grid as follows: first, they take a coin with probabiity of heads $\rho$ and,
  for each $(i,j)\in[m]\times[m]$, they flip the coin and label the $(i,j)$ entry of an $[m]\times[m]$ grid with $F(i,j)$ if the
  coin is heads, and ``$\unk$'' if the coin is tails. (Here ``$\unk$'' is a distinguished symbol that indicates to the learner
  that the label of the entry has been erased.) The adversary now gives this partially erased grid to the learner\footnote{Let
  us point out that this is not exactly the formulation of the Netflix Prize competition as in that one, we got a random
  $\rho$-proportion of all the entries instead of getting each entry independently with probability $\rho$; however, it is
  straightforward to translate between the two settings using concentration bounds and a small adjustment of parameters.}. The
  collection of names $\rn{a}_1,\ldots,\rn{a}_m,\rn{b}_1,\ldots,\rn{b}_m$ along with the labels of the $[m]\times[m]$ grid is
  referred to as \emph{partially erased sample}.
\item[Output:] Based on partially erased sample, the learner outputs some $H\in\cH$.
\item[Judgment:] We look at all $(a,b)\in A\times B$ and compare $H(a,b)$ to the true answer $F(a,b)$. The learner is
  successful in this instance if $H$ is different from $F$ in at most an $\epsilon$ proportion of all the $m^2$
  entries.\footnote{This setup corresponds to $0/1$-loss, but the theory also covers more general loss functions as long as they
  satisfy some mild natural assumptions.}

  Note that because we are simply trying to \emph{reconstruct the erased labels} on the given grid $A\times B$, there is no
  measure involved in calculating the loss, we are judged only on the entries of the $m\times m$ grid.
\end{description}

In general, we say that $\cH$ is \emph{sample completion learnable} if there exists a learning algorithm\footnote{As in the
usual PAC setup, ``algorithm'' just means that $\cA$ is a set-theoretic function from inputs to outputs.} $\cA$ such that for
every $\epsilon,\delta,\rho > 0$, there is $m^{\SC}_{\cH,\cA} = m^{\SC}_{\cH,\cA}(\epsilon,\delta,\rho)$ such that for every
$\mu_1$ on $X_1$ and $\mu_2$ on $X_2$, for every $F\in\cH$ and every integer $m\geq m^{\SC}_{\cH,\cA}$, with probability
$1-\delta$ over the choice of
\begin{itemize}
\item $m$ elements $\rn{a}_1,\ldots,\rn{a}_m$ from $X_1$ according to $\mu_1$,
\item $m$ elements $\rn{b}_1,\ldots,\rn{b}_m$ from $X_2$ according to $\mu_2$, and
\item the $(1-\rho)$-erasure of the labeling of the resulting grid,
\end{itemize}
our algorithm $\cA$, on receiving the partially erased sample, outputs some $H\in\cH$ whose fraction of errors on the grid is
less than $\epsilon$.

\subsection*{Adversarial sample completion learning (item~\ref{thm:expos:advSC})}

We now cover an adversarial version of sample completion learning. This differs from item~\ref{thm:expos:SC} in two aspects, one
expected by those familiar with agnostic PAC learning, and one a bit surprsing:
\begin{description}
\item[Non-realizability:] In the vein of agnostic learning, the adversary is not required to pick an element of $\cH$, but
  rather a general $F$. However, the learner's goal is not to attain small loss, but rather to be competitive, i.e., if the
  learner outputs $H\in\cH$ that differs from $F$ on an $L$ proportion of the $m^2$ entries of the sample, then they are
  successful in the learning task if $L < L_* + \epsilon$, where $L_*$ is the proportion of the difference from $F$ of the best
  element of $\cH$ (which is completely inaccessible to the learner as one needs to know the erased entries to compute $L_*$).
\item[Adversarial:] The choice of both the sample and the function $F$ by the adversary is completely free (justifying the name
  ``adversarial''). This means that the only randomness involved in the learning test is of the $(1-\rho)$-erasure. 
\end{description}

We now make the definition a bit more precise. Suppose we have fixed our spaces $X_1$ and $X_2$ and a family $\cH$ of
hypotheses, where each $H\in\cH$ is a function from $X_1\times X_2$ to $\{0,1\}$. Suppose we are given in addition some
$\epsilon,\delta,\rho > 0$. The setup is:

\begin{description}
\item[Input:] The adversary picks an arbitrary function $F\colon X_1\times X_2\to\{0,1\}$, this is unknown to the learner. The
  adversary then pick points $a_1,\ldots,a_m$ from $X_1$ and $b_1,\ldots,b_m$ from $X_2$ adversarially, revealing these to the
  learner (we emphasize a priori no measure is involved). The adversary then form an $m\times m$ grid as before: first, they
  take a coin with probabiity of heads $\rho$ and, for each $(i,j)\in[m]\times[m]$, they flip the coin and label the $(i,j)$
  entry of an $[m]\times[m]$ grid with $F(i,j)$ if the coin is heads, and ``$\unk$'' if the coin is tails. The adversary now
  gives this partially erased grid to the learner.
\item[Output:] Based on the partially erased sample, the learner outputs some $H\in\cH$.
\item[Judgment:] We look at all $(a,b)\in A\times B$ and compare $H(a,b)$ to the true answer $F(a,b)$. However, now the goal of
  the learner is to be competitive. Namely, if $H_*\in\cH$ minimizes its difference to $F$ on the $A\times B$, differing on a
  proportion $L_*$ of the $m^2$ entries, then the leaner is successful if $H$ differs from $F$ in at most an $L_*+\epsilon$
  proportion of the $m^2$ entries.
\end{description}

In general, we say that $\cH$ is \emph{adversarial sample completion learnable} if there exists a learning algorithm $\cA$ such
that for every $\epsilon,\delta,\rho>0$, there is $m^{\advSC}_{\cH,\cA} = m^{\advSC}_{\cH,\cA}(\epsilon,\delta,\rho)$ such that
for every $F\colon X_1\times X_2\to\{0,1\}$, every $m$ elements from $X_1$, and every $m$ elements from $X_2$, we have that with
probability at least $1-\delta$ over the choice of a $(1-\rho)$-erasure of the labeling of the resulting grid, our algorithm
$\cA$, on receiving the partially erased labeled grid, outputs some $H\in\cH$ whose fraction of errors on the grid is less than
$L_*+\epsilon$, where $L_*$ is the fraction of errors achieved by the best element $H_*$ of $\cH$.

\subsection*{Sample uniform convergence (item~\ref{thm:expos:SUC})}

Sample uniform convergence works for any sufficiently large grids $A\times B$. Unlike the parallel theorems in earlier forms of
PAC learning, this convergence statement is \emph{not} saying that if we fix $\mu_1$,$\mu_2$ then for most choices of $A\times
B$ something is likely to happen. Rather, here the random element is the erasure.

Sample uniform convergence says essentially that for every $\epsilon,\delta,\rho$ there exists $m\in\NN$ so that for every
$F\colon X_1\times X_2\to\{0,1\}$, on any $A\times B$ of size at least $m\times m$, with probability at least $1-\delta$ over
the $(1-\rho)$-erasure, for \emph{every} hypothesis $H\in\cH$, the ``empirical loss'' of $H$ (that is, the proportion of the
difference of $H$ and $F$ on the entire labeled sample before erasure) is within $\epsilon$ of the ``partially erased empirical
loss'' of $H$ (that is, the proportion of the difference on the part of the sample that was not erased).

This has a very similar flavor to uniform convergence of classic (and high-arity) PAC learning theory, which says that ``with
high probability, the actual and empirical losses are close''\footnote{Recall that the classical fundamental theorem of PAC
learning relies on a uniform convergence theorem for $\VC$ classes $\cH$ over a set $X$ which says, approximately, that for
every $\epsilon,\delta>0$ there is $m\in\NN$ so that given any measure $\mu$ on $X$ and any $F\colon X\to\{0,1\}$ an
i.i.d.\ sample $\rn{x}_1,\ldots, \rn{x}_m$, with probability at least $1-\delta$ over the choice of sample, we have that the
sample $\rn{x}_1,\ldots,\rn{x}_m$ is $\epsilon$-representative in the sense that for every $H\in\cH$, the ``empirical distance
between $F$ and $H$'' (calculated as the proportion of $\rn{x}_i$'s in which $F$ and $H$ differ) and the ``actual distance
between $F$ and $H$'' (calculated as $\mu(\{x \mid F(x)\neq H(x)\})$) are within $\epsilon$. Clearly this is a very useful
feature for a would-be learner.}. However, here the role of the actual loss is played by the empirical loss and the role of the
empirical loss is played by the partially erased empirical loss, so sample uniform convergence amounts to ``with high
probability, the empirical and partially erased empirical losses are close''.

\subsection*{The sample Haussler packing property (item~\ref{thm:expos:SHP})}

Recall that the classical Haussler packing property of a hypothesis class over $X$ asks for a bound on the size of the largest
$\epsilon$-separated set that depends only on $\epsilon$ (and the class itself), but not on the measure $\mu$ we put on $X$
(see~\cite[Corollary~1]{Hau95} or~\cite[\S 5.3]{Mat10} for a modern treatment). It is clear that if we require the same for a
hypothesis class over $X_1\times X_2$, then this is simply treating $X_1\times X_2$ as an $X$ and since the fundamental theorem
shows that classical Haussler packing is equivalent to finite $\VC$-dimension, it cannot apply in the sample completion setting.

Instead, we expect to have a high-arity version of Haussler packing. A natural candidate is to require a bound that depends only
on $\epsilon$ (and the class itself), but that only holds for product measures $\mu_1\otimes\mu_2$. However, this high-arity
Haussler packing property was shown in~\cite{CM25+} to be equivalent to finite slicewise $\VC$-dimension, hence it also cannot
apply in the sample completion setting.

So the present study demands reconsideration of what the correct ``packing phenomenon'' might be. Inspired by the fact that both
the learning and the uniform convergence notions are localized to the sample grid $A\times B$, we instead analyze when
hypotheses $H_1,\ldots,H_t\in\cH$ are \emph{$\epsilon$-separated} on $A\times B$, that is, if any two have distance greater than
$\epsilon$ \emph{on the grid}, that is, they differ on more than an $\epsilon$ proportion of the $m^2$ entries of $A\times B$.

Now for a function $h\colon\NN\to\NN_+$, say that $\cH$ has the \emph{$h$-sample Haussler packing property} if for every
$\epsilon,\delta,\rho > 0$, there exists $m^{\hSHP}_\cH = m^{\hSHP}_\cH(\epsilon,\delta,\rho)$ such that for every choice of
$a_1,\ldots,a_m\in X_1$ and $b_1,\ldots,b_m\in X_2$ with $m\geq m^{\hSHP}_\cH(\epsilon,\delta,\rho)$, the largest
$\epsilon$-separated collection $\cH'\subseteq\cH$ of hypotheses has size $\lvert\cH'\rvert < 2^{\rho\cdot h(m)}$; in a slightly
less formal language, the largest $\epsilon$-separated collections on $m\times m$ grids have size $2^{o(h(m))}$.

Note that the aforementioned high-arity Haussler packing of~\cite{CM25+} would instead say that the largest $\epsilon$-separated
collection on $m\times m$ grids has constant size (i.e., $O(1)$). In Theorem~\ref{thm:expos}\ref{thm:expos:SHP}, we consider the
$m^2$-sample Haussler packing property, which is drastically weaker than high-arity Haussler packing (and not surprisingly as
finite $\VCN_{2,2}$-dimension does not imply finite slicewise $\VC$-dimension).

\subsection*{The probabilistic Haussler packing property (item~\ref{thm:expos:PHP})}

We now further weaken (at least a priori) the sample Haussler packing property to a probabilistic version. Namely, for each
subcollection $\cH'\subseteq\cH$, we can let $S_{m,\epsilon}(\cH)\subseteq X_1^m\times X_2^m$ be the set of all $m\times m$
grids on which $\cH'$ is $\epsilon$-separated.

For a function $h\colon\NN\to\NN_+$, say that $\cH$ has the \emph{$h$-probabilistic Haussler packing property} if for every
$\epsilon,\delta,\rho > 0$, there exists $m^{\hPHP}_\cH = m^{\hPHP}_\cH(\epsilon,\delta,\rho)$ such that for every choice of
measures $\mu_1$ on $X_1$ and $\mu_2$ on $X_2$, every $m\geq m^{\hPHP}_\cH$ and every $\cH'\subseteq\cH$ where
$\lvert\cH'\rvert\geq 2^{\rho\cdot h(m)}$, with probability at least $1-\delta$ over sampling $\rn{a}_1,\ldots,\rn{a}_m$ from
$\mu_1$ and $\rn{b}_1,\ldots,\rn{b}_m$ from $\mu_2$, the collection $\cH'$ is \emph{not} $\epsilon$-separated on the
resulting grid $A\times B$.

This is clearly implied by the $h$-sample Haussler packing property, but note also that it is a priori even weaker than an
intermediate version saying that with probability at least $1-\delta$, a random $m\times m$ grid has largest
$\epsilon$-separated set smaller than $2^{\rho\cdot h(m)}$; instead it says that if $\cH'$ has size at least $2^{\rho\cdot
  h(m)}$, then it will \emph{not} be $\epsilon$-separated on a random grid with $1-\delta$ probability. A priori, it could be
that $\ceil{1/\delta}$ many different $\cH'$ of size $2^{\rho\cdot h(m)}$ together cover all the randomly picked grids without
any single $\cH'$ covering more than $\delta$ probability.

However, Theorem~\ref{thm:expos} says that even this apparently extremely weak $m^2$-probabilistic Haussler packing property is
equivalent to finite $\VCN_{2,2}$-dimension (hence also equivalent to the $m^2$-sample Haussler packing property and even to the
aforementioned intermediate version).

\subsection*{Brief notes on these arrows}

Here we comment on some of the arrows in the proof of Theorem~\ref{thm:expos}.

\begin{refdesc}
\item[\ref{thm:expos:advSC}$\implies$\ref{thm:expos:SC}] Adversarial sample completion implies sample completion a fortiori
  because in adversarial sample completion~\ref{thm:expos:advSC}, the adversary can choose the sample in any manner, and in
  sample completion~\ref{thm:expos:SC}, the adversary is restricted to fixing some pair of measures $(\mu_1,\mu_2)$ and sampling
  via them.
\item[Implicit arrow\label{it:implicitarrow}] Finite $\VCN_{2,2}$-dimension implies control of the growth function of the number
  of hypotheses over any square grid $A\times B$ of size $m^2$. A counting lemma for $k$-dependence was already known to
  Shelah~\cite[Conclusion~5.66]{She14}, who shows that in $k$-dependent theories, it must be the case that for infinitely many
  $m$, in an $m^k$ grid we see less than $2^{m^k}$ patterns. However, for our proof to go through, we need a much finer control
  of this growth function. Namely, by connecting the problem with the extremal problem in combinatorics of maximization of edges
  in graph without a complete bipartite graph with $\VCN_{2,2}(\cH)+1$ vertices in each part (in combinatorial notation
  $\ex(m,K_{\VCN_{2,2}(\cH)+1,\VCN_{2,2}(\cH)+1})$) and a using classical result by \Kovari--\Sos--\Turan\ (see
  Theorems~\ref{thm:KSTErdos:partite} and~\ref{thm:KSTErdos}, which also include the general $k$ case studied by \Erdos), we
  show in Lemma~\ref{lem:VCNkk->kgrowth} that the number of patterns in an $m^k$ grid is at most
  \begin{equation*}
    \exp\bigl(O(m^{2 - 1/(\VCN_{2,2}(\cH)+1)}\cdot\ln m)\bigr),
  \end{equation*}
  this is asymptotically much smaller than $2^{m^2}$ (and it holds for every sufficiently large $m$ as opposed to infinitely
  many $m$).
\item[\ref{thm:expos:VCN22}$\implies$\ref{thm:expos:SHP}] A direct consequence of the \ref{it:implicitarrow} is that in a
  (sufficiently large) $m^2$ grid, there are at most $2^{o(m^2)}$ many patterns (in fact, $m^{O(m^{2 - 1/\VCN_{2,2}(\cH)})}$
  many), which in particular means that all collections $\cH'\subseteq\cH$ of size larger than this bound must repeat a pattern
  on this grid, hence cannot be $\epsilon$-separated on the grid. Thus finite $\VCN_{2,2}$-dimension implies $m^2$-sample
  Haussler packing property.
\item[\ref{thm:expos:SHP}$\implies$\ref{thm:expos:PHP}] The fact that $m^2$-sample completion Haussler packing implies
  $m^2$-probabilistic Haussler packing is obvious from definitions.
\item[\ref{thm:expos:VCN22}$\implies$\ref{thm:expos:SUC}] Finite $\VCN_{2,2}$-dimension implies sample uniform convergence by an
  argument which has some parallels to the classical case. (To emphasize, this sketch covers the bipartite argument, which is
  simpler than the non-partite version.)

  A partite empirical loss function takes in: a tuple [a pair from $A\times B$], our guess $H$, and the adversary's
  labeling. It then returns a penalty. Call this penalty ``the loss on the tuple.''

  Fixing a hypothesis $H\in\cH$, we want to compare two quantities on $A\times B$. The first is the normalized sum of all
  errors: that is, $1/m^2$ times the sum over all tuples of the loss on the tuple. Let $\rn{\cU}$ be the set of tuples whose
  labels were not erased. The second quantity is the normalized sum of all errors made on tuples whose labels were not erased:
  that is, $1/\lvert\rn{\cU}\rvert$ times the sum over all tuples in $\rn{\cU}$ of the loss on the tuple. We aim to show that
  with high probability, the sup over all $H\in\cH$ of this difference is small.

  First, by a standard Chernoff bound, $\lvert\rn{\cU}\rvert$ will, with high probability, be close to its expected value $\rho
  m^2$. For our fixed $H\in\cH$, then, the difference looks like
  \begin{equation*}
    \frac{1}{m^2}
    \cdot
    \left\lvert\text{sum of all losses} - \frac{1}{\rho}\cdot\text{sum of losses on non erased tuples}\right\rvert.
  \end{equation*}
  Informally, still for a fixed $H\in\cH$, weight the loss on a given tuple by $1$ if it is erased (which happens with
  probability $1-\rho$) and $1-1/\rho$ if it is not (which happens with probability $\rho$). In particular, the expected value
  of each weight is $0$ and we are interested in showing that with high probability, the sum of all $m^2$ weights is
  $\epsilon$-small \emph{for all $H\in\cH$}.

  With a standard Hoeffding bound, for each $H\in\cH$, with probability $1 - \exp(-O(\rho^2\cdot\epsilon^2\cdot m^2))$, the
  weight corresponding to $H$ is $\epsilon$-small. On the other hand, by the \ref{it:implicitarrow}, we know that in the $m^2$
  grid, there are at most $2^{o(m^2)}$ many patterns, so if $m$ is large enough, we can apply a union bound to conclude that
  with probability at least $1-\delta$ the weights of all $H\in\cH$ are $\epsilon$-small.
\item[\ref{thm:expos:SUC}$\implies$\ref{thm:expos:advSC}] This crucial arrow is a direct consequence of the definition of sample
  uniform convergence being the correct version of uniform convergence for sample completion learning. It simply uses sample
  uniform convergence (along with an appropriate notion of empirical risk minimizer for sample completion) to obtain adversarial
  sample completion learnability.
\item[\ref{thm:expos:SC}$\implies$\ref{thm:expos:PHP}] Sample 2-PAC learnability implies $m^2$-probabilistic Haussler packing:
  suppose we have $m = m(\epsilon,\delta,\rho)$ and a collection $\cH'=\{H_1,\ldots,H_t\}\subseteq\cH$ with
  $t=\lvert\cH'\rvert\geq 2^{\rho\cdot m^2}$. We want to show that for any choice of measures $\mu_1$ on $X_1$, $\mu_2$ on $X_2$,
  the set
  \begin{equation*}
    S_{m,\epsilon}(\cH)
    =
    \{(x^1,x^2)\in X_1^m\times X_2^m \mid
    \cH'\text{ is $\epsilon$-separated on } (x^1,x^2)\}
  \end{equation*}
  has product measure at most $\delta$. To prove this, let us assume that $\cH$ is learnable with parameters
  $(\epsilon/2,\delta/2,\rho/2)$, say by some learning algorithm $\cA$ and hence, very informally, we find three points of
  leverage. Define, for each $1\leq i\leq t$ and each appropriate sequence $w$ of $0$s and $1$s (where the $0$s encode the
  labels to be erased: call $w$ an erasure rule; it will have length $m^2$), the set $G_i$ of pairs $(x,w)$ where $x$ is an
  $m$-sample, $w$ is an erasure rule, and if $\cA$ receives this sample labeled by $H_i$ and erased according to $w$, then $\cA$
  returns a hypothesis $\epsilon/2$-close to $H_i$.

  First, learning says that these $G_i$ are large: if we randomly choose the sample\footnote{In our running sense: $\rn{x}$
  involves choosing elements from $X_1$ and from $X_2$ and forming the resulting finite grid.} and the erasure rule $\rn{w}$
  then the probability of belonging to $G_i$ is at least say $1-\delta/2$.

  Second, for each fixed sample $x$ and each fixed erasure rule $w$ with $s$-many $1$s, if $x\in S_{m,\epsilon}(\cH)$ (i.e.,
  $\cH'$ is $\epsilon$-separated on the grid generated by $x$), our learning algorithm $\cA$ can only receive one of $2^s$ many
  possible inputs. On the other hand, since $\cH'$ is $\epsilon$-separated on $x$, if several $H_i$ provide the same input to
  $\cA$ with respect to $(x,w)$, then $\cA$ can only be successful in one of them (as being successful for $H_i$ means its
  answer is $\epsilon/2$-close to $H_i$ on $x$). This means that $(x,w)$ is in at most $2^s$ of the $G_i$.

  The final point of leverage is that, informally, we expect most outcomes of the erasure rule $\rn{w}$ to have approximately
  $(\rho/2)\cdot m^2$ many $1$s (the formal argument actually is via expectation and not a concentration bound); this yields an
  inequality of the form
  \begin{equation*}
    \left(1-\frac{\delta}{2}\right)\cdot t
    \leq
    \EE_{\rn{x},\rn{w}}\left[\sum_{i=1}^t \One_{G_i}(\rn{x},\rn{w})\right]
    \lesssim
    (\mu_1\otimes\mu_2)\bigl(S_{m,\epsilon}(\cH)\bigr)\cdot 2^{\rho\cdot m^2/2}
    + \Bigl(1 - (\mu_1\otimes\mu_2)\bigl(S_{m,\epsilon}(\cH)\bigr)\Bigr)\cdot t
  \end{equation*}
  hence
  \begin{equation*}
    (\mu_1\otimes\mu_2)\bigl(S_{m,\epsilon}(\cH)\bigr)
    \lesssim
    \frac{\delta}{2}\cdot\frac{t}{t - 2^{\rho\cdot m^2/2}}
    \leq
    \frac{\delta}{2}\cdot\frac{2^{\rho\cdot m^2}}{2^{\rho\cdot m^2} - 2^{\rho\cdot m^2/2}}
  \end{equation*}
  so if $m$ is sufficiently large, the above is at most $\delta$.
\item[\ref{thm:expos:PHP}$\implies$\ref{thm:expos:VCN22}] $m^2$-Probabilistic Haussler packing implies finite
  $\VCN_{2,2}$-dimension: This key implication is responsible for closing the loop of equivalences. It goes by the
  contrapositive. Suppose we take small $\epsilon,\delta,\rho$ and hence $m=m(\epsilon,\delta,\rho)$ for probabilistic Haussler
  is given. Choose $n$ to be large enough relative to $m$.

  Since we assume $\VCN_{2,2}$-dimension is infinite, we can find $A = \{a_1,\ldots,a_n\}\subseteq X_1$ and $B =
  \{b_1,\ldots,b_n\}\subseteq X_2$ so that $A\times B$ is shattered by $\cH$ (i.e., every one of the $2^{n^2}$ possible
  labelings of the points in the grid is extended by some hypothesis from $\cH$). Identify hypotheses with their restrictions to
  $A\times B$ and consider their domain to be $n^2$. Instead of considering their range to be $\{0,1\}$, we may view them as
  functions from $[n]^2$ to $\FF_2$, that is, as elements of the $\FF_2$-vector space $\FF_2^{[n]^2}$.

  We would like to generate our contradiction by putting the uniform probability measures $\mu_1$ and $\mu_2$ on $A$ and $B$,
  respectively, and finding a subcollection $C$ of $\FF_2^{[n]^2}$ (i.e., of $\cH$) of size at least $2^{\rho\cdot m^2}$ such
  that when we sample $m$ points from $\mu_1$ and $\mu_2$, the subcollection is $\epsilon$-separated on the sample.

  Here it becomes extremely convenient to frame the problem in coding theory language. First, given functions
  $\gamma_1,\gamma_2\colon[n]\to[m]$, let $\gamma^*\colon\FF_2^{[n]^2}\to\FF_2^{[m]^2}$ be the projection given by
  \begin{equation*}
    \gamma^*(x)_{(i,j)} \df x_{(\gamma_1(i),\gamma_2(j))}.
  \end{equation*}
  We are looking for a code, i.e., a subset $C\subseteq\FF_2^{[n]^2}$ such that for independently uniformly randomly picked
  functions $\rn{\gamma}_1,\rn{\gamma}_2\colon [n]\to [m]$ with high probability, $C$ will have large ``projected distance''
  defined by
  \begin{equation*}
    \dist_{\rn{\gamma}}(C)
    \df
    \min_{\substack{w,w'\in C\\ w\neq w'}} d_H\bigl(\rn{\gamma}^*(w), \rn{\gamma}^*(w')\bigr),
  \end{equation*}
  where $d_H(z,z')\df\lvert\{i\in[m]^2 \mid z_i\neq z'_i\}\rvert$ is the Hamming distance (on $\FF_2^{[m]^2}$). Namely, our goal
  is to get the projected distance above to be larger than $\epsilon\cdot m^2$ with probability greater than $\delta$, so that
  this generates a contradiction with the $m^2$-probabilistic Haussler packing property.

  To find such a $C$, it is convenient to restrict oneself to \emph{linear codes}, i.e., $\FF_2$-linear subspaces of
  $\FF_2^{[n]^2}$. The convenience comes from the fact that the projection maps $\gamma^*$ are linear and the projected distance
  of a linear code can be more easily computed as
  \begin{equation*}
    \dist_{\rn{\gamma}}(C)
    =
    \min_{w\in C\setminus\{0\}} d_H(\rn{\gamma}^*(w), 0),
  \end{equation*}
  i.e., the minimum Hamming weight of the $\rn{\gamma}^*$-projection of a non-zero element of $C$.

  In turn, to prove that such a large linear code $C$, we use (as is common in coding theory) a probabilistic method: we fix
  $d\df\ceil{\rho\cdot m^2}$ and we pick a random linear code of dimension at most $d$; more specifically, we let $\rn{C}$ be
  the image of a uniformly at random $[m]^2\times[d]$-matrix with entries in $\FF_2$. With standard concentration techniques, we
  show that with such a random code $\rn{C}$ with high probability satisfies the properties required above (and has dimension
  exactly $d$), provided $m$ is sufficiently large and $n$ is sufficiently large with respect to $m$.
\end{refdesc}

\subsection*{Bridge to the main proofs}

To conclude this expository section let us call attention to some of the main points and extensions not present in the above
sketch.

\begin{itemize}
\item \emph{Values of $k$ greater than two.} As mentioned before, our results are actually proved for general $k\in\NN_+$ and the
  exposition above remains reasonably indicative of the arguments for general $k$.
\item \emph{Larger $\Lambda$.} In generality the label set $\Lambda$ can be finite but larger than $\{0,1\}$. To reflect
  this we add ``Natarajan'' to ``Vapnik-Chervonenkis'' in our dimension acronym. Note that this comes with an interesting
  upgrade to shattering (following Natarajan): we ask essentially that there are two functions $f,g$ which take different values
  everywhere on the set to be shattered, and then for every partition of the set into two pieces, there is a hypothesis agreeing
  with $f$ on one piece and with $g$ on the complement.
\item \emph{Partite versus non-partite}. This is a central conceptual and technical feature which arises in high-arity
  statistical learning, including in our present work. (It doesn't appear in the classic PAC theory, though it will be familiar
  to readers of~\cite{CM24+}.) Briefly:
  \begin{itemize}
  \item \emph{Partite:} In the exposition just given, we kept track of two separate axes, $X_1$ and $X_2$, we sampled a set of
    $m$ points $A$ from $X_1$ and $B$ from $X_2$ according to two possibly different measures, and we only asked the hypothesis
    to label pairs from $A\times B$, not $A\times A$ or $B\times B$. (Of course, if $X_1 = X_2$ our sets $A$ and $B$ might
    possibly have overlapped, but the quantification in the learning problem ranges over all $\mu_1$ and $\mu_2$ and all
    randomly chosen $A$ and $B$ so overlap cannot be counted on.) In other words, the problem appeared ``bipartite,'' and for
    $k\geq 2$ and $X_1,\ldots,X_k$ we could simply say ``partite'' or ``$k$-partite.''
  \item \emph{Non-partite:} Suppose instead we had been trying to learn a class of graphs $\cG$ all on the same vertex set $X$.
    In this case the natural learning problem would be receiving $m$ vertices from $X$ along with the (partially erased) induced
    subgraph on those vertices arising from the adversary's choice of $G\in\cG$. This is a ``non-partite'' problem. In
    particular, a key difference is that in the non-partite, there is only one measure $\mu$ (which is over $X$), regardless of
    the arity $k$ of the hypothesis class.
  \item \emph{Comparison:} How does this non-partite learning problem compare to the partite learning problem we obtain by
    turning each $G\in\cG$ into a bipartite graph in the natural way by doubling its vertex set (or more generally, turning
    $k$-hypergraphs into $k$-partite by $k$-fold repeating the vertex set)? Our main theorem shows that a non-partite class is
    sample completion learnable if and only if its partization is sample completion learnable (in the partite sense).
  \end{itemize}
  At the scale of a learning problem, a priori, the ``partite'' and ``non-partite'' sample completion learning paradigms may
  appear to involve different kinds and amounts of information. A central feature of the theory is the entanglement of these two
  paradigms. For instance, non-partite learning is extremely natural in practice since ``induced substructure'' and ``induced
  subgraph'' are basic mathematical carriers of information. On the other hand, the $\VCN_{k,k}$-dimension is basically defined
  in a partite way.
\item \emph{Some features of the non-partite.} In the partite case, given $A\times B$, there is a clear order on any tuple we
  receive: its first coordinate comes from $A$ and its second from $B$. Here are several subtleties of the non-partite case.
  First, recalling that the intent is that we receive a set of vertices and the information about induced substructure on that
  set, we have to specify an order on the vertices in order to make sense of this, but we then need to be able to reference
  different sub-orders. For instance, if we are learning a family of colored directed graphs with binary edge $E$ and colors
  $P$, $Q$, given a sequence of vertices $\langle v_1,\ldots,v_r\rangle$ we need to input whether $E$ holds on any $(v_i, v_j)$
  for $\langle i,j\rangle$ a function from $\{0,1\}$ to $\{1,\ldots,r\}$; and we need to input whether $P$, as well as
  $Q$, hold on any $v_i$. And we also have to \emph{output} this quantity of information. Second, loss functions may not give
  the same loss when presented with the ``same information'' in two different ways.\footnote{Is our guess $(v_1,v_2)$ with the
  information that there is a directed edge from the first coordinate to the second but not from the second to the first? Or is
  our guess $(v_2,v_1)$ with the information that there is a directed edge from the second coordinate to the first but not from
  the first to the second? These obviously present the same structure, but the loss function may penalize them differently for
  its own reasons. Why not simply require that the loss function behaves well? We may; this is ``symmetric''; but often a more
  robust result can be proved.} Third, when we are erasing labels in the non-partite case, there is a possibility of erasing
  just part of the label associated to a given set of vertices (i.e., part of the information about its induced structure). So
  when computing the set $\cU$ of tuples which are not erased, we gather those where \emph{no} information has been lost.
\item \emph{No-free-lunch Theorem for sample completion.} The reader familiar with classical PAC theory might have been
  expecting to see a ``No-free-lunch Theorem'' for sample completion, i.e., a direct proof that sample completion learnability
  implies finite $\VCN_{2,2}$-dimension. While it is straightforward to adapt the No-free-lunch Theorem of classical PAC theory
  to sample completion, we opted to go via $m^2$-probabilistic Haussler packing property for two main reasons:
  \begin{itemize}
  \item a No-free-lunch Theorem would not be enough to include the (apparently weak) $m^2$-probabilistic Haussler packing
    property (not even the $m^2$-sample Haussler packing property) in the list of equivalent properties; and
  \item more importantly, this adaptation would only work seamlessly in the partite. This is because even in the non-partite,
    the $\VCN_{2,2}$-dimension has an inherently partite definition: for it to be at least $n$, we need to find $a_1,\ldots,a_n$
    distinct and $b_1,\ldots,b_n$ distinct such that the set $\{\{a_i,b_j\} \mid i,j\in[n]\}$ is shattered. However, this does
    \emph{not} say that the edges between two of the $a_i$ or between two of the $b_j$ are free either from each other or from
    the ones of the form $\{a_i,b_j\}$. A priori, a sample completion algorithm in the non-partite could use information on how
    the $a_i$ relate to each other and how the $b_j$ relate to each other to deduce some information about the ``crossing
    edges'' $\{a_i,b_j\}$. In the partite this issue is not present as the setup itself makes it so that there is no information
    on how the $a_i$ relate to each other nor any information on how the $b_j$ relate to each other. An analogous difficulty in
    lifting the No-free-lunch Theorem in the non-partite had already happened in the high-arity PAC theory of~\cite{CM24+} and
    was circumvented exactly by closing the equivalence via a high-arity Haussler packing property~\cite{CM25+} (which is what
    prompted the authors to look for a Haussler packing property compatible sample completion).
  \end{itemize}
\end{itemize}

This concludes the introductory material.

\section{Connection to Netflix Prize competition}
\label{sec:Netflix}

In this section we explain how our model contributes to understanding of the Netflix Prize competition. While this is likely to
be the most read part of the paper, we caution that it is also in some sense the least mathematical. Obviously, we do not claim
explanation has the same status as a theorem. Nonetheless, we find the parallels compelling enough to set out for discussion.
Note to the reader: in this section we will occasionally refer to Section~\ref{sec:techexp}.
  
The Netflix Prize competition, from contemporary reports, functioned as follows. We quote from Bennett--Lanning~\cite{BL07}:
\begin{quote}
  Netflix provided over 100 million ratings (and their dates) from over 480 thousand randomly-chosen, anonymous subscribers on
  nearly 18 thousand movie titles. The data were collected between October, 1998 and December, 2005 and reflect the distribution
  of all ratings received by Netflix during this period. The ratings are on a scale from 1 to 5 (integral) stars. It withheld
  over 3 million most-recent ratings from those same subscribers over the same set of movies as a competition qualifying set.

  Contestants are required to make predictions for all 3 million withheld ratings in the qualifying set. The RMSE [root mean
    squared error] is computed immediately and automatically for a fixed but unknown half of the qualifying set (the “quiz”
  subset). This value is reported to the contestant and posted to the leader board, if appropriate. The RMSE for the other half
  of the qualifying set (the “test” subset) is not reported and is used by Netflix to identify potential winners of a Prize.

  [\textellipsis]

  In addition to providing the baseline Cinematch performance on the quiz subset, Netflix also identified a "probe" subset of
  the complete training set and the Cinematch RMSE value to permit off-line comparison with systems before submission[.]

  \textbf{3. Formation of the Training Set}

  Two separate random sampling processes were employed to compose first the entire Prize dataset and then the quiz, test, and
  probe subsets used to evaluate the performance of contestant systems.

  The complete Prize dataset (the training set, which contains the probe subset, and the qualifying set, which comprises the
  quiz and test subsets) was formed by randomly selecting a subset of all users who provided at least 20 ratings between
  October, 1998 and December, 2005. All their ratings were retrieved. To protect some information about the Netflix subscriber
  base [5], a perturbation technique was then applied to the ratings in that dataset. The perturbation technique was designed to
  not change the overall statistics of the Prize dataset. However, the perturbation technique will not be described since that
  would defeat its purpose.

  The qualifying set was formed by selecting, for each of the randomly selected users in the complete Prize dataset, a set of
  their most recent ratings. These ratings were randomly assigned, with equal probability, to three subsets: quiz, test, and
  probe. Selecting the most recent ratings reflects the Netflix business goal of predicting future ratings based on past
  ratings. The training set was created from all the remaining (past) ratings and the probe subset; the qualifying set was
  created from the quiz and test subsets. The training set ratings were released to contestants; the qualifying ratings were
  withheld and form the basis of the contest scoring system.
\end{quote}

Here is our formal interpretation of the above:
\begin{emphquote}[Netflix Prize competition, full version on a sample]
  Netflix has a finite set $A$ of users and a finite set $B$ of movies. This information we know. Netflix also has a
  confidential partial function $F\colon A\times B\rightharpoonup\{0,1,\ldots,5\}\times T$ (i.e., a partially filled $A\times B$
  matrix), where $T$ is a set of possible ``dates of rating''. Netflix chooses randomly a $\rho$-proportion of the filled
  entries $(a,b)$ of the matrix and provides us with their labels (i.e., with all such triples $(a,b,F(a,b))$). We are tasked
  with guessing the correct rating (but not the date of rating)\footnote{In fact, in the actual competition, per
  Bennett--Lanning, even in for erased entries, we are provided with the date of rating.} for all other pairs $(a,b)$ in the
  matrix $A\times B$. We are allowed to answer fractional values and we are judged according to the mean square distance of our
  guess from the actual values of the matrix.
\end{emphquote}
Recall both from the simplified version and the account of Bennett--Lanning~\cite{BL07} that we see the problem above as
happening after $A$ and $B$ got randomly sampled from much larger sets of users $\cA$ and movies $\cB$, respectively.

Let us now comment on the differences of the above to the simplified version and why they should not matter:
\begin{itemize}
\item The ratings are not $0$ or $1$, but rather one of finitely many values; this is actually covered by our theory.
\item The fact that we are allowed to guess fractional values should also not affect learnability, since rounding them to the
  nearest integer value is plausible to only incur small error (say, if it the error was $\epsilon$ before rounding, then with
  high probability the error should be $\epsilon^{\Omega(1)}$ after rounding).
\item The fact that the matrix is partially filled can be encoded by simply adding an extra label that means ``rating not
  known'', which does not incur any penalty if guessed incorrectly.
\item The date of rating can be considered a part of the label that is ignored by the loss function, but is provided to us and
  can be used to improve learning. We will elaborate on that at the end of this section.
\end{itemize}

If we accept that this is a correct formulation of the Netflix Prize competition, then our main theorem has a strong prediction
about algorithms succeeding in the competition. Namely, their underlying hypothesis class must have finite
$\VCN_{2,2}$-dimension. We now examine this prediction.

\paragraph*{Why do the winning algorithms have finite $\VCN_{2,2}$-dimension?} According to the accounts
of~\cite{Kor09,TJ09,PC09}, the best algorithms are actually a blend of several different algorithms and remarkably, for all of
those that we investigated, we can provide a reasonable explanation of why the underlying hypothesis class has finite
$\VCN_{2,2}$-dimension. In the paragraph below, we provide details so that interested readers can contribute further to the
picture.

First, let us address the blend of algorithms itself: any kind of weighted combination of algorithms all of which have finite
$\VCN_{2,2}$-dimension has itself finite $\VCN_{2,2}$-dimension (which is at most the sum of the dimensions). Second, several of
the algorithms fall under variations of the following principle: we assume that each user $a$ has a fixed number of features
$v_{a,1},\ldots,v_{a,t}$, each of which is a vector in some fixed dimensional space $v_{a,i}\in\RR^{d_i}$ and the same for each
movie $b$ having features $w_{b,i}\in\RR^{d_i}$. The rating is then determined by a formula of the form
\begin{equation}\label{eq:boundedrankext}
  F(a,b) \df \sum_{i=1}^t v_{a,i}\cdot w_{b,i}.
\end{equation}
As is, such rating clearly only generates matrices of rank at most $r\df\sum_{i=1}^t d_i$, i.e., all such $F$ are in the
hypothesis class $\cH_r$ of Example~\ref{ex:boundedrank}, which has $\VCN_{2,2}$-dimension $r$.

Variations of these classifications involve the following:
\begin{itemize}
\item We might have fixed functions $g_i\colon\RR\to\RR$ and the rating is determined by
  \begin{equation*}
    F(a,b) \df \sum_{i=1}^t g(v_{a,i}\cdot w_{b,i}).
  \end{equation*}
  While it is no longer true that the resulting matrices have bounded rank, since the $g_i$ are fixed, it is not hard to argue
  that the resulting class still has $\VCN_{k,k}$-dimension at most $r\df\sum_{i=1}^t d_i$.
\item One way we can interpret the classifier in~\eqref{eq:boundedrankext} is by forming matrices $V_i\in\RR^{A\times d_i}$ for
  the features of all users in the sample and matrices $W_i\in\RR^{d_i\times B}$ for the features of all movies in the sample
  and the classfier is given by the sum of matrix products
  \begin{equation*}
    F \df \sum_{i=1}^t V_i\cdot W_i.
  \end{equation*}

  Another variation is to instead compute a different matrix product based on $K$-nearest neighbors ($K\in\NN_+$ here is fixed).
  We conjecture that the resulting hypothesis class still has bounded $\VCN_{2,2}$-dimension based on the fact that $K$-nearest
  neighbors should have a local effect. It may be interesting for a reader of this paper to investigate further.
\end{itemize}
\paragraph*{Implicit assumptions that some algorithms seem to be making, which abstractly guarantees finite
  $\VCN_{2,2}$-dimension.} One of the recurring themes in the algorithms above (and in the overall treatment of the problem
in~\cite{Kor09,TJ09,PC09}) is the belief that there are a fixed amount of features that a user can have and a fixed amount of
features that a movie can have and once one knows the features, there is a global rule that maps them to a rating. One way to
interpret this is that they expect all hypotheses to actually factor as
\begin{equation*}
  F(a,b) = h(g_1(a),g_2(b)),
\end{equation*}
where $g_1$ is a function in some unary hypothesis class $\cH_1\subseteq Y_1^{\cA}$, $g_2$ is a function in some unary
hypothesis class $\cH_2\subseteq Y_2^{\cB}$ (with both $Y_1$ and $Y_2$ finite) and $h\colon Y_1\times Y_2\to\{0,\ldots,5\}$ is a
fixed rule of how the rating is deduced from the hidden features.

By appealing to the connection of $\VCN_{2,2}$-dimension to growth functions (see Section~\ref{sec:techexp}), one can show that
all such hypothesis class have finite $\VCN_{2,2}$-dimension.
\paragraph*{What about the timestamps?} In the Netflix Prize competition, we are actually provided the timestamps of when the rating
actually happened and several algorithms in~\cite{Kor09,TJ09,PC09} indeed use these timestamps (a common usage is to reweight
ratings, giving priority to newer ones). For the purposes of estimating how this affects the $\VCN_{2,2}$-dimension, we might
interpret this usage of timestamps as follows: the timestamps $t(a,b)$ of each user-movie pair determines an underlying linear
order of the user-movie pairs in $A\times B$ and the algorithms have access to questions of the form ``Is $t(a,b)\leq r_i$?''
for a finite collection of times $r_1,\ldots,r_s$. Even if we add this extra layer to the basically unary strategies described
in the previous item, it is not too difficult to see that the $\VCN_{2,2}$-dimension remains finite.
\paragraph*{How does the theory help us go further?} So far we have seen how the theory developed in this paper can explain the
success of existing algorithms. However, can we actually use this theory to suggest how to improve the learning power and design
better algorithms? Indeed, the theory provides an actual ceiling of sample completion learnability, namely, that of finite
$\VCN_{2,2}$-dimension of the background hypothesis class. This is much more power than what existing algorithms seem to use.
Let us give here an example of a hypothesis class that
\begin{enumerate*}[label={(\roman*)}]
\item has finite $\VCN_{2,2}$-dimension,
\item is not a combination of a basically unary strategy along with the linear order of timestamps,
\item does not seem to have been explored in any of the algorithms in~\cite{Kor09,TJ09,PC09}, and
\item seems natural to the Netflix problem.
\end{enumerate*}

Since the timestamps include day and month, they also induce a natural cyclic order on the ratings corresponding to the year
cycles. This means that our algorithm could have access to the (365-valued) question ``On which day of the year was this
rated?''. This still generates a finite $\VCN_{2,2}$-dimension hypothesis class, which is not of any of the previously discussed
forms (but can be combined with them). Furthermore, it seems natural that ratings might have some underlying seasonality to them
which could be exploited to improve algorithms.

To conclude, the theory can potentially contribute to practice in at least two ways: first by suggesting larger hypothesis
classes that are still finite $\VCN_{2,2}$-dimension, hence guaranteed to be at least qualitatively learnable, so can serve as
guides for the development of better algorithms; second, before one actually implements an algorithm, which can be
time-consuming and costly, instead one can use the characterization proved in this paper and first investigate the plausibility
that the algorithmic ideas have an underlying hypothesis class that has finite $\VCN_{2,2}$-dimension.

\clearpage

\phantomsection
\label{tableofcontentsmarker}
\tableofcontents

\afterpage{%
  \begingroup
\def\smallbend{14}
\begin{landscape}
  \vfill
  \begin{figure}[p]
    \centering
    \begin{small}
      \begin{tikzcd}[ampersand replacement=\&]
        \begin{tabular}{c}
          Non-partite finite\\
          $\VCN_{k,k}$-dimension
        \end{tabular}
        \arrow[dddd, twoheadleftarrow, two heads, dashed, "{\scriptsize\shortstack{\ref{prop:VCNkk}}}"']
        \arrow[r, tail,
          "{\scriptsize\shortstack{\ref{prop:VCNkk->SHP}}}",
          "{\scriptsize\shortstack{$\ell$ separated}}"']
        \arrow[dr, tail,
          "{\scriptsize\shortstack{\ref{prop:VCNkk->SUC}}}" pos={0.7},
          "{\scriptsize\shortstack{$\ell$ bounded, local\\$\ell$ symmetric}}"' pos={0.9}]
        \&
        \begin{tabular}{c}
          Non-partite\\
          $\omega(m^{k-1/(d+1)^{k-1}}\cdot\ln m)$-sample\\
          Haussler packing property
        \end{tabular}
        \arrow[r, "{\scriptsize\shortstack{\ref{rmk:SHP->PHP}}}"]
        \&
        \begin{tabular}{c}
          Non-partite $m^k$-sample\\
          Haussler packing property
        \end{tabular}
        \arrow[r, two heads, "{\scriptsize\shortstack{\ref{rmk:SHP->PHP}}}"]
        \&
        \begin{tabular}{c}
          Non-partite\\
          $m^k$-probabilistic\\
          Haussler packing property
        \end{tabular}
        \arrow[lll, bend right={\smallbend},
          "{\scriptsize\shortstack{\ref{prop:PHP->VCNkk}}}"',
          "{\scriptsize\shortstack{$\ell$ separated}}"]
        \\
        \&
        \begin{tabular}{c}
          Non-partite sample\\
          uniform convergence
        \end{tabular}
        \arrow[dr,
          "{\scriptsize\shortstack{\ref{prop:SUC->advSC}}}",
          "{\scriptsize\shortstack{Existence of completion\\(almost) empirical\\ risk minimizers}}"' pos={0.8}]
        \&
        \begin{tabular}{c}
          Non-partite adversarial\\
          sample completion\\
          $k$-PAC learnable
        \end{tabular}
        \arrow[r, two heads, dashed, "{\scriptsize\shortstack{\ref{rmk:advSC->agSC->SC}}}"']
        \&
        \begin{tabular}{c}
          Non-partite\\
          sample completion\\
          $k$-PAC learnable
        \end{tabular}
        \arrow[u,
          "{\scriptsize\shortstack{\ref{prop:SC->PHP}}}"',
          "{\scriptsize\shortstack{$\ell$ metric or\\$\ell$ separated and bounded}}"]
        \\
        \&
        \&
        \begin{tabular}{c}
          Non-partite adversarial\\
          symmetric sample\\
          completion $k$-PAC learnable
        \end{tabular}
        \arrow[r, two heads, dashed, "{\scriptsize\shortstack{\ref{rmk:advSC->agSC->SC}}}"']
        \arrow[u, "{\scriptsize\shortstack{\ref{rmk:symm->nonsymm}}}"']
        \&
        \begin{tabular}{c}
          Non-partite\\
          symmetric sample\\
          completion $k$-PAC learnable
        \end{tabular}
        \arrow[u, "{\scriptsize\shortstack{\ref{rmk:symm->nonsymm}}}"']
        \\[0.5cm]
        \&
        \begin{tabular}{c}
          Partite sample\\
          uniform convergence
        \end{tabular}
        \arrow[r,
          "{\scriptsize\shortstack{\ref{prop:SUC->advSC}}}"',
          "{\scriptsize\shortstack{Existence of completion\\(almost) empirical\\ risk minimizers}}"]
        \&
        \begin{tabular}{c}
          Partite adversarial\\
          sample completion\\
          $k$-PAC learnable
        \end{tabular}
        \arrow[r, two heads, dashed, "{\scriptsize\shortstack{\ref{rmk:advSC->agSC->SC}}}"]
        \&
        \begin{tabular}{c}
          Partite\\
          sample completion\\
          $k$-PAC learnable
        \end{tabular}
        \arrow[d,
          "{\scriptsize\shortstack{\ref{prop:SC->PHP}}}",
          "{\scriptsize\shortstack{$\ell$ metric or\\$\ell$ separated and bounded}}"']
        \\
        \begin{tabular}{c}
          Partite finite\\
          $\VCN_{k,k}$-dimension
        \end{tabular}
        \arrow[ur, tail,
          "{\scriptsize\shortstack{\ref{prop:VCNkk->SUC}}}"',
          "{\scriptsize\shortstack{$\ell$ bounded and local}}" pos={0.9}]
        \arrow[r, tail,
          "{\scriptsize\shortstack{\ref{prop:VCNkk->SHP}}}"',
          "{\scriptsize\shortstack{$\ell$ separated}}"]
        \&
        \begin{tabular}{c}
          Partite\\
          $\omega(m^{k-1/(d+1)^{k-1}}\cdot\ln m)$-sample\\
          Haussler packing property
        \end{tabular}
        \arrow[r, "{\scriptsize\shortstack{\ref{rmk:SHP->PHP}}}"']
        \&
        \begin{tabular}{c}
          Partite $m^k$-sample\\
          Haussler packing property
        \end{tabular}
        \arrow[r, two heads, "{\scriptsize\shortstack{\ref{rmk:SHP->PHP}}}"']
        \&
        \begin{tabular}{c}
          Partite\\
          $m^k$-probabilistic\\
          Haussler packing property
        \end{tabular}
        \arrow[lll, bend left={\smallbend},
          "{\scriptsize\shortstack{\ref{prop:PHP->VCNkk}}}",
          "{\scriptsize\shortstack{$\ell$ separated}}"']
      \end{tikzcd}
      \captionof{figure}{Diagram of results proved in this document. Labels on arrows contain the number of the
        proposition/remark that contains the proof of the implication and extra hypotheses needed. Arrows with two heads
        ($\twoheadrightarrow$) are tight in some sense with a straightforward proof of tightness. Dashed arrows involve a
        construction (meaning that either the hypothesis class changes and/or the loss function changes) due to being in
        different settings; this also means that objects in one of the sides of the implication might not be completely general
        (as they are required to be in the image of the construction). Arrows with tails ($\rightarrowtail$) mean that exactly
        one of the sides involves a loss function (so when composing a solid arrow with a tailed arrow, the result might involve
        a construction that changes the loss function and thus be a dashed arrow). Under appropriate hypotheses, all items are
        proved equivalent.}
      \label{fig:roadmap}
    \end{small}
  \end{figure}
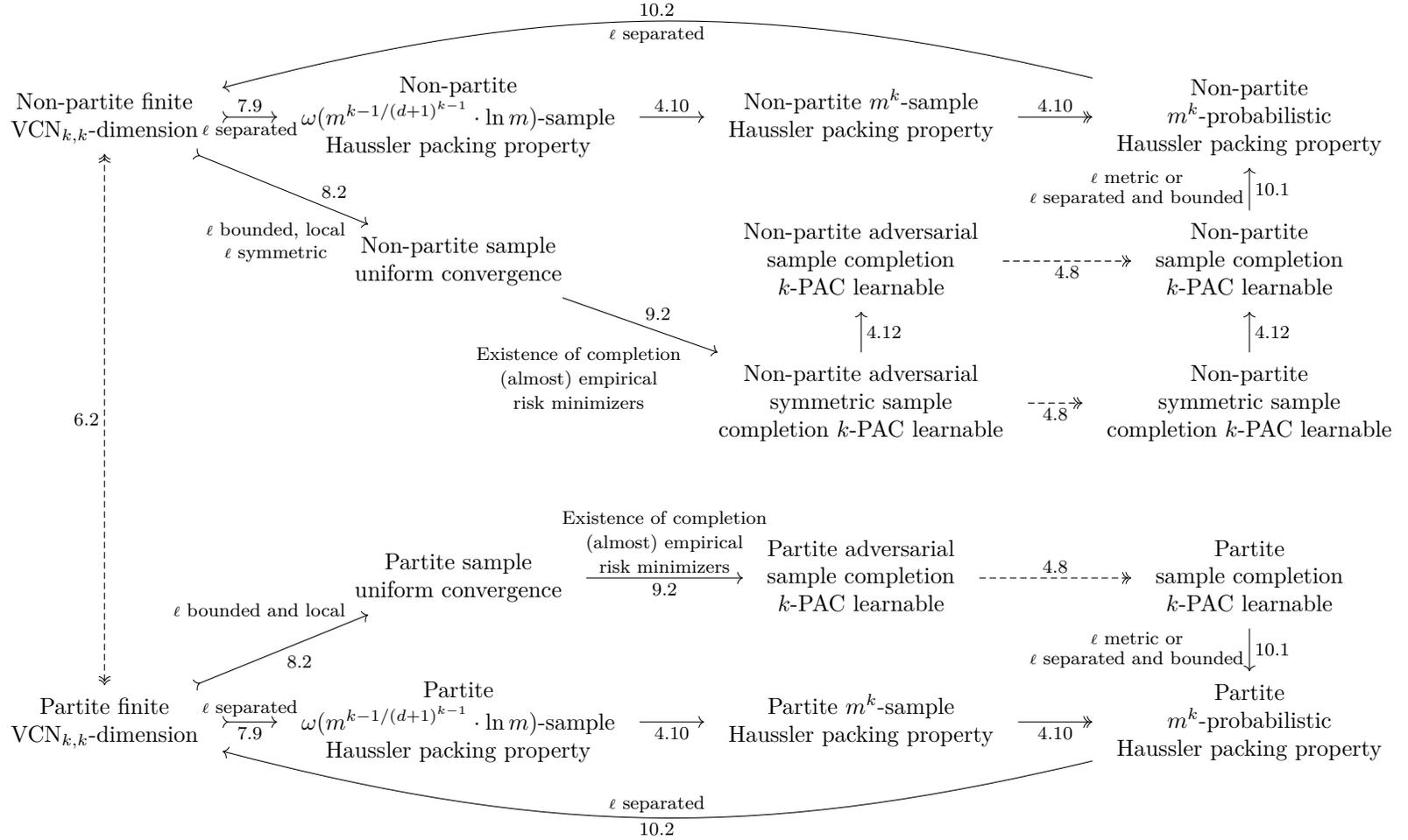
  \vfill
\end{landscape}
\endgroup
}
%% Local Variables:
%% mode: latex
%% End:

\clearpage

\section{Definitions}
\label{sec:defs}

In this section, we collect the definitions of the high-arity PAC theory of~\cite{CM24+,CM25+} that we will need as well as the
main definitions of the current work so that we can formally state our main theorems in Section~\ref{sec:main}. Before we start,
let us comment on a notational convention of both~\cite{CM24+,CM25+} and the current work: we will have two settings, the
partite and non-partite, and we will aim to use the same notation for concepts that are analogous to each other; this will both
make the analogy self-evident and make it easier to write proofs whose arguments are the same on both settings. Let us point out
that even though the symbols used are the same, there is no ambiguity in the notation as, e.g., for notation such as
$\cE_V(\Omega)$ (Definition~\ref{def:part:cEV} in the partite and Definition~\ref{def:nonpart:cEV} in the non-partite), the
underlying $\Omega$ is different in the settings: it is a $k$-tuple of non-empty Borel spaces in the partite and it is a single
non-empty Borel space in the non-partite (furthermore, the collision when $k=1$ is intentional as both settings coincide when
$k=1$).

We start with general notation: we denote the set of non-negative integers by $\NN$ and the set of positive integers by
$\NN_+\df\NN\setminus\{0\}$. For $m\in\NN$, we let $[m]\df\{1,\ldots,m\}$ and for a set $V$, we let $\binom{V}{m}$ be the
collection of all subsets of $V$ of cardinality $m$ and we let $(V)_m$ be the set of all injections $[m]\to V$; in particular,
we view $([m])_m$ as the symmetric group $S_m$ on $[m]$.

\subsection{Definitions from high-arity PAC}
\label{subsec:highPAC}

In this subsection, we collect the definitions from the high-arity PAC theory of~\cite{CM24+,CM25+} that we will need. The
definitions here are simplified versions do not cover ``higher-order variables'' as in~\cite{CM24+,CM25+}; for the full versions
of these definitions, we refer the interested reader to those works (and give specific pointers on where each concept can be
found).

Let us also comment on the measurability assumptions that we impose here: since the definitions of Section~\ref{subsec:highPAC}
are imported from~\cite{CM24+,CM25+}, we make the same measurability assumptions, but we point out right now that sample
completion learning requires much fewer measurability assumptions. We will discuss further in Remark~\ref{rmk:measurable}, but
the reader unfamiliar with measure theory should just interpret these assumptions as ``all probabilities and expectations need
to make sense'' and should know right away that if they only use the $0/1$-loss function (and its agnostic counterpart), then
all measurability assumptions are satisfied.

\begin{definition}[Definitions in the partite, simplified]\label{def:part}
  By a Borel space, we mean a standard Borel space, i.e., a measurable space that is Borel-isomorphic to a Polish space when
  equipped with the $\sigma$-algebra of Borel sets. The space of probability measures on a Borel space $\Lambda$ is denoted
  $\Pr(\Lambda)$.

  Let $k\in\NN_+$, let $\Omega=(\Omega_i)_{i=1}^k$ be a $k$-tuple of non-empty Borel spaces and let $\Lambda$ be a non-empty
  Borel space.
  \begin{enumdef}
  \item\label{def:part:cEV} \cite[4.1.4]{CM24+} For a finite set $V$, we let $\cE_V(\Omega)\df\prod_{i=1}^k \Omega_i^V$ be
    equipped with the product $\sigma$-algebra. We will also use the shorthand notation $\cE_m(\Omega)\df\cE_{[m]}(\Omega)$ when
    $m\in\NN$ (recall that $[m]\df\{1,\ldots,m\}$). For the particular case of $\cE_1(\Omega)$, we will simply view it as
    $\prod_{i=1}^k \Omega_i$ (as opposed to $\prod_{i=1}^k \Omega_i^{[1]}$).

    With a slight abuse of notation, we let $\Pr(\Omega)$ be the space of $k$-tuples $\mu=(\mu_i)_{i=1}^k$ where
    $\mu_i\in\Pr(\Omega_i)$ is a probability measure on $\Omega_i$. For $\mu\in\Pr(\Omega)$ and $m\in\NN$, we let
    $\mu^m\in\Pr(\cE_m(\Omega))$ be the product measure
    \begin{equation*}
      \mu^m \df \bigotimes_{i=1}^k \mu_i^m
    \end{equation*}
    (where each $\mu_i^m$ is itself the product measure of $m$-many copies of $\mu_i$).
  \item\label{def:part:alpha*} \cite[4.1.5]{CM24+} For a finite set $V$ and $\alpha\in V^k$ (i.e., a function $\alpha\colon[k]\to
    V$), we define the map $\alpha^*\colon\cE_V(\Omega)\to\cE_1(\Omega)$ by
    \begin{equation}\label{eq:part:alpha*}
      \alpha^*(x)_i \df (x_i)_{\alpha(i)} \qquad \bigl(x\in\cE_V(\Omega), i\in[k]\bigr).
    \end{equation}
  \item\cite[4.2.1]{CM24+} The set of \emph{$k$-partite hypotheses} from $\Omega$ to $\Lambda$, denoted
    $\cF_k(\Omega,\Lambda)$, is the set of (Borel) measurable functions from $\cE_1(\Omega)$ (i.e., $\prod_{i=1}^k \Omega_i$) to
    $\Lambda$.
  \item\cite[4.2.2]{CM24+}\label{def:part:hypclass} A \emph{$k$-partite hypothesis class} is a subset $\cH$ of
    $\cF_k(\Omega,\Lambda)$ equipped with a $\sigma$-algebra such that:
    \begin{enumerate}
    \item the evaluation map $\ev\colon\cH\times\cE_1(\Omega)\to\Lambda$ given by $\ev(H,x)\df H(x)$ is measurable;
    \item for every $H\in\cH$, the set $\{H\}$ is measurable;
    \item for every Borel space $\Upsilon$ and every measurable set $A\subseteq\cH\times\Upsilon$, the projection of $A$ onto
      $\Upsilon$, i.e., the set
      \begin{equation*}
        \{\upsilon\in\Upsilon \mid \exists H\in\cH, (H,\upsilon)\in A\}
      \end{equation*}
      is universally measurable\footnote{\label{ftn:measurability} This assumption about hypothesis classes is not made
      in~\cite{CM24+}, but for uniform convergence there to make sense, one needs that this is true. As we will see in
      Remark~\ref{rmk:measurable}, this measurability assumption is not necessary for sample completion learning.}.
    \end{enumerate}
  \item\label{def:part:F*V} \cite[4.2.3]{CM24+} Given $F\in\cF_k(\Omega,\Lambda)$ and a finite set $V$, we define the function
    $F^*_V\colon\cE_V(\Omega)\to\Lambda^{V^k}$ by
    \begin{equation*}
      F^*_V(x)_\alpha \df F\bigl(\alpha^*(x)\bigr)
      \qquad \bigl(x\in\cE_V(\Omega), \alpha\in V^k\bigr).
    \end{equation*}
    For $m\in[m]$, we use the shorthand $F^*_m\df F^*_{[m]}$.
  \item\label{def:part:alphasharp} \cite[4.3]{CM24+} Given a $k$-tuple $\alpha=(\alpha_i)_{i=1}^k$ of injections $\alpha_i\colon
    U\to V$ between finite sets $U$ and $V$, we contra-variantly define the map $\alpha^\#\colon\cE_V(\Omega)\to\cE_U(\Omega)$
    by
    \begin{equation*}
      \bigl(\alpha^\#(x)_i\bigr)_u \df (x_i)_{\alpha_i(u)} \qquad \bigl(x\in\cE_V(\Omega), i\in[k], u\in U\bigr)
    \end{equation*}
    and the map $\alpha^\#\colon\Lambda^{V^k}\to\Lambda^{U^k}$ by
    \begin{equation*}
      \alpha^\#(y)_\beta \df y_{\alpha_1(\beta_1),\ldots,\alpha_k(\beta_k)} \qquad (\beta\in U^k).
    \end{equation*}
    The overload of notation here is intention as these definitions make the one in~\ref{def:part:F*V} above equivariant in the
    sense that the diagram
    \begin{equation}\label{eq:F*Vequiv:part}
      \begin{tikzcd}
        \cE_V(\Omega)
        \arrow[r, "F^*_V"]
        \arrow[d, "\alpha^\#"']
        &
        \Lambda^{V^k}
        \arrow[d, "\alpha^\#"]
        \\
        \cE_U(\Omega)
        \arrow[r, "F^*_U"]
        &
        \Lambda^{U^k}
      \end{tikzcd}
    \end{equation}
    is commutative (this is a straightforward proof that can be found in~\cite[Lemma~4.3]{CM24+}).
  \item\cite[4.7.1, 4.7.2, 4.7.3, 4.7.4]{CM24+}, \cite[A.12]{CM25+} A \emph{$k$-partite loss function} over $\Lambda$ is a
    measurable function $\ell\colon\cE_1(\Omega)\times\Lambda\times\Lambda\to\RR_{\geq 0}$. We further define
    \begin{align*}
      \lVert\ell\rVert_\infty & \df \sup_{\substack{x\in\cE_1(\Omega)\\y,y'\in\Lambda}} \ell(x,y,y'),
      &
      s(\ell) & \df \inf_{\substack{x\in\cE_1(\Omega)\\y,y'\in\Lambda\\y\neq y'}} \ell(x,y,y'),
    \end{align*}
    and we say that $\ell$ is:
    \begin{description}[format={\normalfont\textit}]
    \item[bounded] if $\lVert\ell\rVert_\infty < \infty$.
    \item[separated] if $s(\ell) > 0$ and $\ell(x,y,y)=0$ for every $x\in\cE_1(\Omega)$ and every $y\in\Lambda$.
    \item[metric] if for every $x\in\cE_1(\Omega)$, the function $\ell(x,\place,\place)$ is a metric on $\Lambda$ in the usual
      sense, that is, the following hold for every $x\in\cE_1(\Omega)$ and $y,y',y''\in\Lambda$:
      \begin{enumerate}
      \item We have $\ell(x,y,y') = \ell(x,y',y)$.
      \item We have $\ell(x,y,y') = 0$ if and only if $y = y'$.
      \item We have $\ell(x,y,y'')\leq\ell(x,y,y')+\ell(x,y',y'')$.
      \end{enumerate}
    \end{description}

    If we are further given $k$-partite hypotheses $F,H\in\cF_k(\Omega,\Lambda)$ and $\mu\in\Pr(\Omega)$, then we define the
    \emph{total loss} of $H$ with respect to $\mu$, $F$ and $\ell$ as
    \begin{equation*}
      L_{\mu,F,\ell}(H)
      \df
      \EE_{\rn{x}\sim\mu^1}\Bigl[\ell\bigl(\rn{x}, H(\rn{x}), F(\rn{x})\bigr)\Bigr].
    \end{equation*}
  \item\cite[4.7.5]{CM24+} We say that $F\in\cF_k(\Omega,\Lambda)$ is \emph{realizable} in a $k$-partite hypothesis class
    $\cH\subseteq\cF_k(\Omega,\Lambda)$ with respect to a $k$-partite loss function $\ell$ and $\mu\in\Pr(\Omega)$ if
    $\inf_{H\in\cH} L_{\mu,F,\ell}(H) = 0$.
  \item\cite[4.7.6]{CM24+} The \emph{$k$-partite $0/1$-loss function} over $\Lambda$ is defined as
    $\ell_{0/1}(x,y,y')\df\One[y\neq y']$.
  \item\cite[4.10.1, 4.10.2, 4.10.3, 4.12]{CM24+} A \emph{$k$-partite agnostic loss function} over $\Lambda$ with respect to a
    $k$-partite hypothesis class $\cH$ is a measurable function $\ell\colon\cH\times\cE_1(\Omega)\times\Lambda\to\RR_{\geq 0}$.
    We further define
    \begin{equation*}
      \lVert\ell\rVert_\infty \df \sup_{\substack{H\in\cH\\ x\in\cE_1(\Omega)\\ y\in\Lambda}} \ell(H,x,y)
    \end{equation*}
    and we say that $\ell$ is:
    \begin{description}[format={\normalfont\textit}]
    \item[bounded] if $\lVert\ell\rVert_\infty < \infty$.
    \item[local] if there exists a function $r\colon\cH\to\RR$ such that for every $F,H\in\cH$, every $x\in\cE_1(\Omega)$ and
      every $y\in\Lambda$, we have
      \begin{equation*}
        F(x) = H(x) \implies \ell(F,x,y) - r(F) = \ell(H,x,y) - r(H) \geq 0.
      \end{equation*}
      A function $r$ satisfying the above is called a \emph{regularization term} of $\ell$. Equivalently, $\ell$ is local if and
      only if it can be factored as
      \begin{equation}\label{eq:localellr:part}
        \ell(H,x,y) = \ell_r\bigl(x,H(x),y\bigr) + r(H) \qquad \bigl(H\in\cH, x\in\cE_1(\Omega), y\in\Lambda\bigr)
      \end{equation}
      for some (non-agnostic) $k$-partite loss function $\ell_r\colon\cE_1(\Omega)\times\Lambda\times\Lambda\to\RR_{\geq 0}$ and
      some regularization term $r\colon\cH\to\RR$.
    \end{description}
  \item\cite[4.10.5]{CM24+} The \emph{$k$-partite agnostic $0/1$-loss function} over $\Lambda$ with respect to $\cH$ is defined
    as $\ell_{0/1}(H,x,y)\df\One[H(x)\neq y]$.
  \item\cite[4.17.1, 4.17.2]{CM24+} For $m\in\NN_+$, $x\in\cE_m(\Omega)$, $y\in\Lambda^{[m]^k}$ and
    $H\in\cF_k(\Omega,\Lambda)$, we define the \emph{empirical loss} (or \emph{empirical risk}) of $H$ with respect to $(x,y)$ and
    a $k$-partite loss function $\ell\colon\cE_1(\Omega)\times\Lambda\times\Lambda\to\RR_{\geq 0}$ as
    \begin{equation}\label{eq:emploss:part}
      L_{x,y,\ell}(H)
      \df
      \frac{1}{m^k}\sum_{\alpha\in[m]^k}
      \ell\bigl(\alpha^*(x), H^*_m(x)_\alpha, y_\alpha\bigr)
    \end{equation}
    (we define the above to be $0$ when $m=0$).

    We also define the \emph{empirical loss} (or \emph{empirical risk}) of $H$ with respect to $(x,y)$ and a $k$-partite
    agnostic loss function $\ell\colon\cH\times\cE_1(\Omega)\times\Lambda\to\RR_{\geq 0}$ as
    \begin{equation*}
      L_{x,y,\ell}(H) \df \frac{1}{m^k}\sum_{\alpha\in[m]^k}\ell\bigl(H,\alpha^*(x),y_\alpha\bigr)
    \end{equation*}
    (and define the above to be $0$ when $m=0$). 
  \end{enumdef}
\end{definition}

\begin{remark}\label{rmk:localbounded}
  Note that if $\ell$ is a local loss function that is bounded, then there is always a choice of functions $\ell_r$ and $r$ that
  satisfy~\eqref{eq:localellr:part} and are \emph{both} non-negative, which in particular implies
  \begin{align*}
    \lVert r\rVert_\infty \df \sup_{H\in\cH} \lvert r(H)\rvert & \leq \lVert\ell\rVert_\infty,
    &
    \lVert\ell_r\rVert_\infty & \leq \lVert\ell\rVert_\infty.
  \end{align*}

  Indeed, if $\ell_r$ and $r$ satisfy~\eqref{eq:localellr}, then we must have $\lVert\ell_r\rVert_\infty < \infty$ (otherwise
  fixing one $H$ and varying $x,y,y'$ would make $\ell$ unbounded). In turn, since $\ell$ is non-negative, we get that $R \df
  \inf_{H\in\cH} r(H) > -\infty$ and $\ell_r\geq \max\{0,-R\}$ everywhere. Let $C\df\max\{0,-R\}$ and note that replacing
  $(\ell_r,r)$ with $(\ell_r - C, r + C)$ also satisfies~\eqref{eq:localellr:part}, but both $\ell_r - C$ and $r + C$ are
  non-negative. A similar remark applies in the non-partite case in~\eqref{eq:localellr} below.
\end{remark}

\begin{definition}[Definitions in the non-partite, simplified]\label{def:nonpart}
  Let $\Omega=(X,\cB)$ and $\Lambda=(Y,\cB')$ be non-empty Borel spaces and $k\in\NN_+$.
  \begin{enumdef}
  \item\label{def:nonpart:cEV} \cite[3.1.4]{CM24+} For a finite set $V$, we let $\cE_V(\Omega)\df\Omega^V$ be equipped with the
    product $\sigma$-algebra. We will also use the shorthand notation $\cE_m(\Omega)\df\cE_{[m]}(\Omega)$ when $m\in\NN$, where
    $[m]\df\{1,\ldots,m\}$.
  \item\label{def:nonpart:alpha*} \cite[3.1.5]{CM25+} For an injective function $\alpha\colon U\to V$ between finite sets, we
    contra-variantly define the map $\alpha^*\colon\cE_V(\Omega)\to\cE_U(\Omega)$ by
    \begin{equation}\label{eq:alpha*}
      \alpha^*(x)_u \df x_{\alpha(u)} \qquad \bigl(x\in\cE_V(\Omega), u\in U\bigr).
    \end{equation}
  \item\cite[3.2.1]{CM24+} The set of \emph{$k$-ary hypotheses} from $\Omega$ to $\Lambda$, denoted $\cF_k(\Omega,\Lambda)$, is
    the set of (Borel) measurable functions from $\cE_k(\Omega)$ to $\Lambda$.
  \item\cite[3.2.2]{CM24+} A \emph{$k$-ary hypothesis class} is a subset $\cH$ of $\cF_k(\Omega,\Lambda)$ equipped with a
    $\sigma$-algebra such that:
    \begin{enumerate}
    \item the evaluation map $\ev\colon\cH\times\cE_k(\Omega)\to\Lambda$ given by $\ev(H,x)\df H(x)$ is measurable;
    \item for every $H\in\cH$, the set $\{H\}$ is measurable;
    \item for every Borel space $\Upsilon$ and every measurable set $A\subseteq\cH\times\Upsilon$, the projection of $A$ onto
      $\Upsilon$, i.e., the set
      \begin{equation*}
        \{\upsilon\in\Upsilon \mid \exists H\in\cH, (H,\upsilon)\in A\}
      \end{equation*}
      is universally measurable.
    \end{enumerate}
  \item\label{def:nonpart:F*V} \cite[3.2.3]{CM24+} Given $F\in\cF_k(\Omega,\Lambda)$ and a finite set $V$, we define the function
    $F^*_V\colon\cE_V(\Omega)\to\Lambda^{(V)_k}$ by
    \begin{equation*}
      F^*_V(x)_\alpha \df F\bigl(\alpha^*(x)\bigr)
      \qquad \bigl(x\in\cE_V(\Omega), \alpha\in (V)_k\bigr)
    \end{equation*}
    (recall that $(V)_k$ is the set of injections $[k]\to V$). For $m\in\NN$, we use the shorthand $F^*_m\df F^*_{[m]}$; note that when
    $k=m$, we have $F^*_k\colon\cE_k(\Omega)\to\Lambda^{S_k}$, where $S_k\df([k])_k$ is the symmetric group on $[k]$.
  \item\cite[3.2.4]{CM24+} For an injective function $\alpha\colon U\to V$ between finite sets, we also contra-variantly
    define the map $\alpha^*\colon\Lambda^{(V)_k}\to\Lambda^{(U)_k}$ by
    \begin{equation*}
      \alpha^*(y)_\beta \df y_{\alpha\comp\beta} \qquad \bigl(y\in\Lambda^{(V)_k}, \beta\in(U)_k\bigr).
    \end{equation*}
    This is intentionally the same notation as Definition~\ref{def:nonpart:alpha*} to make explicit the fact that the definition
    in~\ref{def:nonpart:F*V} above is equivariant in the sense that the diagram
    \begin{equation}\label{eq:F*Vequiv}
      \begin{tikzcd}
        \cE_V(\Omega)
        \arrow[r, "F^*_V"]
        \arrow[d, "\alpha^*"']
        &
        \Lambda^{(V)_k}
        \arrow[d, "\alpha^*"]
        \\
        \cE_U(\Omega)
        \arrow[r, "F^*_U"]
        &
        \Lambda^{(U)_k}
      \end{tikzcd}
    \end{equation}
    is commutative (this is a one-line proof that can be found in~\cite[Lemma~3.3]{CM24+}).
  \item\cite[3.7.1, 3.7.2, 3.7.3, 3.7.4]{CM24+}, \cite[A.12]{CM25+} A \emph{$k$-ary loss function} over $\Lambda$ is a
    measurable function $\ell\colon\cE_k(\Omega)\times\Lambda^{S_k}\times\Lambda^{S_k}\to\RR_{\geq 0}$. We further define
    \begin{align*}
      \lVert\ell\rVert_\infty & \df \sup_{\substack{x\in\cE_k(\Omega)\\y,y'\in\Lambda^{S_k}}} \ell(x,y,y'),
      &
      s(\ell) & \df \inf_{\substack{x\in\cE_k(\Omega)\\y,y'\in\Lambda^{S_k}\\y\neq y'}} \ell(x,y,y'),
    \end{align*}
    and we say that $\ell$ is:
    \begin{description}[format={\normalfont\textit}]
    \item[bounded] if $\lVert\ell\rVert_\infty < \infty$.
    \item[separated] if $s(\ell) > 0$ and $\ell(x,y,y)=0$ for every $x\in\cE_k(\Omega)$ and every $y\in\Lambda^{S_k}$.
    \item[symmetric] if it is $S_k$-invariant in the sense that
      \begin{equation*}
        \ell\bigl(\sigma^*(x),\sigma^*(y),\sigma^*(y')\bigr) = \ell(x,y,y')
      \end{equation*}
      for every $x\in\cE_k(\Omega)$, every $y,y'\in\Lambda^{S_k}$ and every $\sigma\in S_k$.
    \item[metric] if for every $x\in\cE_k(\Omega)$, the function $\ell(x,\place,\place)$ is a metric on $\Lambda^{S_k}$ in the
      usual sense, that is, the
      following hold for every $x\in\cE_k(\Omega)$ and $y,y',y''\in\Lambda^{S_k}$:
      \begin{enumerate}
      \item We have $\ell(x,y,y') = \ell(x,y',y)$.
      \item We have $\ell(x,y,y') = 0$ if and only if $y = y'$.
      \item We have $\ell(x,y,y'')\leq\ell(x,y,y')+\ell(x,y',y'')$.
      \end{enumerate}
    \end{description}

    If we are further given $k$-ary hypotheses $F,H\in\cF_k(\Omega,\Lambda)$ and a probability measure $\mu\in\Pr(\Omega)$, then
    we define the \emph{total loss} of $H$ with respect to $\mu$, $F$ and $\ell$ as
    \begin{equation*}
      L_{\mu,F,\ell}(H)
      \df
      \EE_{\rn{x}\sim\mu^k}\Bigl[\ell\bigl(\rn{x}, H^*_k(\rn{x}), F^*_k(\rn{x})\bigr)\Bigr].
    \end{equation*}
  \item\cite[3.7.5]{CM24+} We say that $F\in\cF_k(\Omega,\Lambda)$ is \emph{realizable} in a $k$-ary hypothesis class
    $\cH\subseteq\cF_k(\Omega,\Lambda)$ with respect to a $k$-ary loss function $\ell$ and $\mu\in\Pr(\Omega)$ if
    $\inf_{H\in\cH} L_{\mu,F,\ell}(H) = 0$.
  \item\cite[3.7.6]{CM24+} The \emph{$k$-ary $0/1$-loss function} over $\Lambda$ is defined as
    $\ell_{0/1}(x,y,y')\df\One[y\neq y']$.
  \item\cite[3.10.1, 3.10.2, 3.10.3, 3.12]{CM24+} A \emph{$k$-ary agnostic loss function} over $\Lambda$ with respect to a $k$-ary
    hypothesis class $\cH$ is a measurable function $\ell\colon\cH\times\cE_k(\Omega)\times\Lambda^{S_k}\to\RR_{\geq 0}$. We
    further define
    \begin{equation*}
      \lVert\ell\rVert_\infty \df \sup_{\substack{H\in\cH\\ x\in\cE_k(\Omega)\\ y\in\Lambda^{S_k}}} \ell(H,x,y)
    \end{equation*}
    and we say that $\ell$ is:
    \begin{description}[format={\normalfont\textit}]
    \item[bounded] if $\lVert\ell\rVert_\infty < \infty$.
    \item[symmetric] if it is $S_k$-invariant in the sense that
      \begin{equation*}
        \ell\bigl(H,\sigma^*(x),\sigma^*(y)\bigr) = \ell(H,x,y)
      \end{equation*}
      for every $H\in\cH$, every $x\in\cE_k(\Omega)$ and every $y\in\Lambda^{S_k}$.
    \item[local] if there exists a function $r\colon\cH\to\RR$ such that for every $F,H\in\cH$, every $x\in\cE_k(\Omega)$ and
      every $y\in\Lambda^{S_k}$, we have
      \begin{equation*}
        F^*_k(x) = H^*_k(x) \implies \ell(F,x,y) - r(F) = \ell(H,x,y) - r(H) \geq 0.
      \end{equation*}
      A function $r$ satisfying the above is called a \emph{regularization term} of $\ell$. Equivalently, $\ell$ is local if and
      only if it can be factored as
      \begin{equation}\label{eq:localellr}
        \ell(H,x,y) = \ell_r\bigl(x,H^*_k(x),y\bigr) + r(H) \qquad \bigl(H\in\cH, x\in\cE_k(\Omega), y\in\Lambda^{S_k}\bigr)
      \end{equation}
      for some (non-agnostic) $k$-ary loss function $\ell_r\colon\cE_k(\Omega)\times\Lambda^{S_k}\times\Lambda^{S_k}\to\RR_{\geq
        0}$ and some regularization term $r\colon\cH\to\RR$.
    \end{description}
  \item\cite[3.10.5]{CM24+} The \emph{$k$-ary agnostic $0/1$-loss function} over $\Lambda$ with respect to $\cH$ is defined
    as $\ell_{0/1}(H,x,y)\df\One[H^*_k(x)\neq y]$.
  \item\cite[7.1.1, 7.1.2, 7.1.3, 7.1.4]{CM24+} For $m\in\NN$, a \emph{($k$-ary) order choice} for $[m]$ is a sequence
    $\alpha=(\alpha_U)_{U\in\binom{[m]}{k}}$ such that for each $U\in\binom{[m]}{k}$, $\alpha_U\in([m])_k$ is an injection with
    $\im(\alpha_U) = U$.

    Any such order choice $\alpha$ defines a natural Borel-isomorphism
    $b_\alpha\colon\Lambda^{([m])_k}\to(\Lambda^{S_k})^{\binom{[m]}{k}}$ by
    \begin{equation}\label{eq:balpha}
      \bigl(b_\alpha(y)_U\bigr)_\pi \df y_{\alpha_U\comp\pi}
      \qquad \left(y\in\Lambda^{([m])_k}, U\in\binom{[m]}{k}, \pi\in S_k\right).
    \end{equation}

    If we are further given $x\in\cE_m(\Omega)$, $y\in\Lambda^{([m])_k}$ and $H\in\cF_k(\Omega,\Lambda)$, we define the
    \emph{empirical loss} (or \emph{empirical risk}) of $H$ with respect to $(x,y)$, a $k$-ary loss function
    $\ell\colon\cE_k(\Omega)\times\Lambda^{S_k}\times\Lambda^{S_k}\to\RR_{\geq 0}$ and $\alpha$ as
    \begin{equation*}
      L_{x,y,\ell}^\alpha(H)
      \df
      \frac{1}{\binom{m}{k}}\sum_{U\in\binom{[m]}{k}}
      \ell\Bigl(\alpha_U^*(x), b_\alpha\bigl(H^*_m(x)\bigr)_U, b_\alpha(y)_U\Bigr)
    \end{equation*}
    (when $m\geq k$, and defined to be $0$ if $m < k$).

    We also define the \emph{empirical loss} (or \emph{empirical risk}) of $H$ with respect to $(x,y)$, a $k$-ary agnostic loss
    function $\ell\colon\cH\times\cE_k(\Omega)\times\Lambda^{S_k}\to\RR_{\geq 0}$ and $\alpha$ as
    \begin{equation*}
      L_{x,y,\ell}^\alpha(H) \df \frac{1}{\binom{m}{k}}\sum_{U\in\binom{[m]}{k}} \ell\bigl(H,\alpha_U^*(x),b_\alpha(y)_U\bigr)
    \end{equation*}
    (when $m\geq k$, and defined to be $0$ if $m < k$).
  \end{enumdef}
\end{definition}

\begin{definition}[Partization, simplified]\label{def:kpart}
  Let $\Omega=(X,\cB)$ and $\Lambda=(Y,\cB')$ be non-empty Borel spaces and $k\in\NN_+$.
  \begin{enumdef}
  \item\cite[4.20.1]{CM24+} The \emph{$k$-partite version} of $\Omega$ is the constant $k$-tuple
    $\Omega^{\kpart}\df(\Omega,\ldots,\Omega)$ consisting of $k$ copies of $\Omega$.
  \item\cite[4.20.2]{CM24+} For $\mu\in\Pr(\Omega)$, the \emph{$k$-partite version} of $\mu$ is the constant $k$-tuple
    $\mu^{\kpart}\df(\mu,\ldots,\mu)\in\Pr(\Omega^{\kpart})$ consisting of $k$ copies of $\mu$.
  \item\cite[4.20.3]{CM24+} For a $k$-ary hypothesis $F\in\cF_k(\Omega,\Lambda)$, the \emph{$k$-partite version} of $F$ is
    the $k$-partite hypothesis $F^{\kpart}\in\cF_k(\Omega^{\kpart},\Lambda^{S_k})$ given by
    \begin{equation*}
      F^{\kpart}(x) \df F^*_k(\bigl(\iota_{\kpart}(x)\bigr) \qquad \bigl(x\in\cE_1(\Omega^{\kpart})\bigr),
    \end{equation*}
    where $\iota_{\kpart}\colon\cE_1(\Omega^{\kpart})\to\cE_k(\Omega)$ is given by
    \begin{equation}\label{eq:iotakpart:simplified}
      \iota_{\kpart}(x)_i \df x_i \qquad \bigl(x\in\cE_1(\Omega^{\kpart}), i\in[k]\bigr)
    \end{equation}
    (recall that $\cE_1(\Omega^{\kpart})$ is simply viewed as $\prod_{i=1}^k \Omega^{\kpart}_i = \prod_{i=1}^k \Omega =
    \Omega^k$).
  \item\label{def:kpart:cH} \cite[4.20.4]{CM24+} For a $k$-ary hypothesis class $\cH\subseteq\cF_k(\Omega,\Lambda)$, the
    \emph{$k$-partite version} of $\cH$ is $\cH^{\kpart}\df\{H^{\kpart} \mid H\in\cH\}$, equipped with the pushforward
    $\sigma$-algebra of the one of $\cH$. It is clear that $\iota_{\kpart}$ is a Borel-isomorphism, which in turn implies that
    $\cH\ni F\mapsto F^{\kpart}\in\cH^{\kpart}$ is a bijection and $\cH\mapsto\cH^{\kpart}$ is an injection. We denote by
    $\cH^{\kpart}\ni G\mapsto G^{\kpart,-1}\in\cH$ the inverse of $\cH\ni F\mapsto F^{\kpart}\in\cH^{\kpart}$.
  \item\cite[4.20.5]{CM24+} For a $k$-ary loss function
    $\ell\colon\cE_k(\Omega)\times\Lambda^{S_k}\times\Lambda^{S_k}\to\RR_{\geq 0}$ over $\Lambda$, the \emph{$k$-partite
  version} of $\ell$ is the $k$-partite loss function
    $\ell^{\kpart}\colon\cE_1(\Omega^{\kpart})\times\Lambda^{S_k}\times\Lambda^{S_k}\to\RR_{\geq 0}$ given by
    \begin{equation*}
      \ell^{\kpart}(x,y,y') \df \ell\bigl(\iota_{\kpart}(x),y,y')
      \qquad \bigl(\cE_1(\Omega^{\kpart}),y,y'\in\Lambda^{S_k}\bigr).
    \end{equation*}
  \item\cite[4.20.6]{CM24+} For a $k$-ary agnostic loss function
    $\ell\colon\cH\times\cE_k(\Omega)\times\Lambda^{S_k}\to\RR_{\geq 0}$, the \emph{$k$-partite version} of $\ell$ is the
    $k$-partite loss function $\ell^{\kpart}\colon\cH^{\kpart}\times\cE_1(\Omega^{\kpart})\times\Lambda^{S_k}\to\RR_{\geq 0}$
    given by
    \begin{equation*}
      \ell^{\kpart}(H,x,y) \df \ell\bigl(H^{\kpart,-1},\iota_{\kpart}(x),y\bigr)
      \qquad \bigl(H\in\cH^{\kpart}, \cE_1(\Omega^{\kpart}), y\in\Lambda^{S_k}\bigr).
    \end{equation*}
  \end{enumdef}
\end{definition}

\begin{definition}[Natarajan dimension~\cite{Nat89}]\label{def:Nat}
  Let $\cF$ be a collection of functions of the form $X\to Y$ and let $A\subseteq X$.
  \begin{enumdef}
  \item We say that $\cF$ \emph{Natarajan-shatters} $A$ if there exist functions $f_0,f_1\colon A\to Y$ such that
    \begin{enumerate}
    \item for every $a\in A$, we have $f_0(a)\neq f_1(a)$,
    \item for every $U\subseteq A$, there exists $F_U\in\cF$ such that
      \begin{equation*}
        F_U(a) = f_{\One[a\in U]}(a) =
        \begin{dcases*}
          f_0(a), & if $a\notin U$,\\
          f_1(a), & if $a\in U$
        \end{dcases*}
      \end{equation*}
      for every $a\in A$. (We will typically summarize this as $F_U(a) = f_{\One[a\in U]}(a)$.)
    \end{enumerate}
  \item The \emph{Natarajan dimension} of $\cF$ is defined as
    \begin{equation*}
      \Nat(\cF)\df\sup\{\lvert A\rvert \mid A\subseteq X\land\cF\text{ Natarajan-shatters } A\}.
    \end{equation*}
  \end{enumdef}
\end{definition}

\subsection{Sample completion versions of high-arity PAC}
\label{subsec:defSC}

This subsection contains the main definitions of the current work.

\begin{definition}[Sample completion definitions in the partite]\label{def:SCpart}
  Let $k\in\NN_+$, let $\Omega=(\Omega_i)_{i=1}^k$ be a $k$-tuple of non-empty Borel spaces, let $\Lambda=(Y,\cB')$ be a
  non-empty Borel space and let $\cH\subseteq\cF_k(\Omega,\Lambda)$ be a $k$-partite hypothesis class.
  \begin{enumdef}
  \item For $m\in\NN$, a \emph{($k$-partite) $[m]$-sample} (with respect to $\Omega$ and $\Lambda$) is an element of
    $\cE_m(\Omega)\times\Lambda^{[m]^k}$. A \emph{partially erased ($k$-partite) $[m]$-sample} (with respect to $\Omega$ and
    $\Lambda$) is an element of $\cE_m(\Omega)\times(\Lambda\cup\{\unk\})^{[m]^k}$, where $\unk$ is a special symbol assumed to
    \emph{not} be an element of $\Lambda$ (and is meant to represent that the original symbol of this entry got erased).
  \item For $m\in\NN$ and a partially erased $[m]$-sample $(x,y)\in\cE_m(\Omega)\times(\Lambda\cup\{\unk\})^{[m]^k}$, the
    \emph{partially erased empirical loss} (or \emph{partially erased empirical risk}) of a $k$-partite hypothesis
    $H\in\cF_k(\Omega,\Lambda)$ with respect to $(x,y)$ and a $k$-partite loss function
    $\ell\colon\cE_1(\Omega)\times\Lambda\times\Lambda\to\RR_{\geq 0}$ is\footnote{We use the same notation as the empirical
    loss (see~\eqref{eq:emploss:part}) intentionally: if $y$ does not have any entries $\unk$, then the partially erased
    empirical loss amounts simply to the empirical loss.}
    \begin{equation*}
      L_{x,y,\ell}(H)
      \df
      \frac{1}{\lvert\cU_y\rvert}\sum_{\alpha\in\cU_y}
      \ell\bigl(\alpha^*(x), H^*_m(y)_\alpha, y_\alpha\bigr),
    \end{equation*}
    where
    \begin{equation*}
      \cU_y
      \df
      \{\alpha\in[m]^k \mid y_\alpha\neq\unk\}.
    \end{equation*}
    If $\cU_y=\varnothing$, we set $L_{x,y,\ell}(H)\df 0$ instead.

    If we are given instead a $k$-partite agnostic loss function $\ell\colon\cH\times\cE_1(\Omega)\times\Lambda\to\RR_{\geq 0}$,
    then we define the \emph{partially erased empirical loss} (or \emph{partially erased empirical risk}) of $H\in\cH$ with
    respect to $(x,y)$ and $\ell$ similarly:
    \begin{equation*}
      L_{x,y,\ell}(H)
      \df
      \frac{1}{\lvert\cU_y\rvert}\sum_{\alpha\in\cU_y}
      \ell\bigl(H, \alpha^*(x), y_\alpha\bigr).
    \end{equation*}
  \item If $y\in\Lambda^{[m]^k}$ and $y'\in(\Lambda\cup\{\unk\})^{[m]^k}$, then we say that $y$ \emph{extends} $y'$ if $y_\alpha
    = y'_\alpha$ for every $\alpha\in([m])_k$ such that $y'_\alpha\neq\unk$.
  \item Given $y\in\Lambda^{[m]^k}$ and $\rho\in[0,1]$, the \emph{$(1-\rho)$-erasure}\footnote{It might seem weird to define in
  terms of $1-\rho$, but this is done so that $\rho$ small will correspond to an a priori harder learning task.} is the random
    element $\rn{E}_\rho(y)$ of $(\Lambda\cup\{\unk\})^{[m]^k}$ in which each entry of $y$ is replaced with $\unk$ independently
    with probability $1-\rho$.

    By construction, $y$ always extends $\rn{E}_\rho(y)$.
  \item A \emph{($k$-partite) completion algorithm}\footnote{Similarly to~\cite{SB14}, even though we use the term ``algorithm''
  here, we make no assumptions about the complexity of the function, in fact, not even about its computability. Furthermore, our
  algorithm notion here is proper: namely, it is required to return an element of $\cH$ rather than simply an arbitrary
  function. However, we point out that it is straightforward to adapt the proofs here to show that the improper version of
  sample completion learning is also equivalent to its proper counterpart.} is a measurable function
    \begin{equation*}
      \cA\colon\bigcup_{m\in\NN} \bigl(\cE_m(\Omega)\times(\Lambda\cup\{\unk\})^{[m]^k}\bigr)\to\cH,
    \end{equation*}
    where $\unk\notin\Lambda$ and $\Lambda\cup\{\unk\}$ is equipped with co-product $\sigma$-algebra.

    We want to interpret $\cA$ as receiving a $k$-partite $[m]$-sample that has been partially erased and outputting what it
    thinks was the original hypothesis from $\cH$ that generated the sample (or more generally, the hypothesis of $\cH$ that
    best explains the sample).
  \item We say that a completion algorithm $\cA$ is a \emph{(completion) empirical risk minimizer} with respect to an (agnostic
    or not) loss function $\ell$ if for every $m\in\NN$ and every partially erased $[m]$-sample
    $(x,y)\in\cE_m(\Omega)\times(\Lambda\cup\{\unk\})^{[m]^k}$, we have
    \begin{equation}\label{eq:ERM:part}
      L_{x,y,\ell}\bigl(\cA(x,y)\bigr) = \inf_{H\in\cH} L_{x,y,\ell}(H).
    \end{equation}
  \item\label{def:SCpart:SC} We say that $\cH$ is \emph{sample completion $k$-PAC learnable} with respect to a $k$-partite loss
    function $\ell\colon\cE_1(\Omega)\times\Lambda\times\Lambda\to\RR_{\geq 0}$ if there exist a completion algorithm $\cA$ and
    a function $m^{\SC}_{\cH,\ell,\cA}\colon(0,1)^3\to\RR_{\geq 0}$ such that for every $\epsilon,\delta,\rho\in(0,1)$, every
    $\mu\in\Pr(\Omega)$ and every $F\in\cF_k(\Omega,\Lambda)$ that is realizable in $\cH$ with respect to $\ell$ and $\mu$, we
    have
    \begin{equation}\label{eq:SCpart:SC}
      \PP_{\rn{x}\sim\mu^m,\rn{E}_\rho}\Biggl[
        L_{\rn{x},F^*_m(\rn{x}),\ell}\biggl(
        \cA\Bigl(\rn{x}, \rn{E}_\rho\bigl(F^*_m(\rn{x})\bigr)\Bigr)
        \biggr)
        \leq \epsilon
        \Biggr]
      \geq 1 - \delta
    \end{equation}
    for every integer $m\geq m^{\SC}_{\cH,\ell,\cA}(\epsilon,\delta,\rho)$. A remark on the notation above: the probability is
    computed as a total probability over both $\rn{x}$ picked according to $\mu^m$ and the $(1-\rho)$-erasure $\rn{E}_\rho$,
    which is done independently from $\rn{x}$.

    A completion algorithm $\cA$ satisfying the above is called a \emph{sample completion $k$-PAC learner} for $\cH$ with
    respect to $\ell$.
  \item We say that $\cH$ is \emph{adversarial sample completion $k$-PAC learnable} with respect to $\ell$ if there exist a
    completion algorithm $\cA$ and a function $m^{\advSC}_{\cH,\ell,\cA}\colon(0,1)^3\to\RR_{\geq 0}$ such that for every
    $\epsilon,\delta,\rho\in(0,1)$ and every $[m]$-sample $(x,y)\in\cE_m(\Omega)\times\Lambda^{[m]^k}$, we have
    \begin{equation*}
      \PP_{\rn{E}_\rho}\biggl[
        L_{x,y,\ell}\Bigl(
        \cA\bigl(x,\rn{E}_\rho(y)\bigr)
        \Bigr)
        \leq \inf_{H\in\cH} L_{x,y,\ell}(H) + \epsilon
        \biggr]
      \geq 1 - \delta.
    \end{equation*}

    A completion algorithm $\cA$ satisfying the above is called an \emph{adversarial sample completion $k$-PAC learner} for
    $\cH$ with respect to $\ell$.
  \item Let $(x,y)\in\cE_m(\Omega)\times(\Lambda\cup\{\unk\})^{[m]^k}$ be a partially erased $[m]$-sample and let
    $y'\in\Lambda^{[m]^k}$ extend $y$. For $\epsilon > 0$, we say that $(x,y)$ is \emph{$\epsilon$-representative} with respect
    to $\cH$, $y'$ and $\ell$ if
    \begin{equation*}
      \bigl\lvert L_{x,y,\ell}(H) - L_{x,y',\ell}(H)\bigr\rvert \leq \epsilon
    \end{equation*}
    for every $H\in\cH$. Note that in the above, the first $L$ is the partially erased empirical loss, while the second is the
    (usual) empirical loss\footnote{This also highlights a big difference between sample completion notions and classical
    PAC notions (high-arity or not): in sample completion, the partially erased empirical loss plays the role that (usual)
    empirical loss plays in classical PAC, whereas the (usual) empirical loss plays the role that total loss plays in classical
    PAC.}.
  \item\label{def:SCpart:SUC} We say that $\cH$ has the \emph{sample uniform convergence property} with respect to $\ell$ if
    there exists a function $m^{\SUC}_{\cH,\ell}\colon(0,1)^3\to\RR_{\geq 0}$ such that for every
    $\epsilon,\delta,\rho\in(0,1)^3$, every integer $m\geq m^{\SUC}_{\cH,\ell}(\epsilon,\delta,\rho)$ and every $[m]$-sample
    $(x,y)\in\cE_m(\Omega)\times\Lambda^{[m]^k}$, we have
    \begin{equation*}
      \PP_{\rn{E}_\rho(y)}\Bigl[
        \bigl(x, \rn{E}_\rho(y)\bigr)
        \text{ is $\epsilon$-representative w.r.t.\ $\cH$, $y$ and $\ell$}
        \Bigr]
      \geq 1 - \delta.
    \end{equation*}
  \item For $\epsilon > 0$, $m\in\NN$ and $x\in\cE_m(\Omega)$, we say that a (finite) sequence $(H_1,\ldots,H_t)$ of $k$-partite
    hypotheses is \emph{$\epsilon$-separated on $x$} with respect to a $k$-partite loss function
    $\ell\colon\cE_1(\Omega)\times\Lambda\times\Lambda\to\RR_{\geq 0}$ if
    \begin{equation*}
      L_{x,(H_i)^*_m(x),\ell}(H_j) > \epsilon
    \end{equation*}
    for every $i,j\in[m]$ with $i < j$.
  \item For a function $h\colon\NN\to\NN$, we say that $\cH$ has the \emph{($k$-partite) $h$-sample Haussler packing property}
    with respect to a $k$-partite loss function $\ell\colon\cE_1(\Omega)\times\Lambda\times\Lambda\to\RR_{\geq 0}$ if there
    exists a function $m^{\hPHP}_{\cH,\ell}\colon(0,1)^3\to\RR_{\geq 0}$ such that for every $\epsilon,\delta,\rho\in(0,1)$ and
    every integer $m\geq m^{\hSHP}_{\cH,\ell}(\epsilon,\delta,\rho)$, if $(H_1,\ldots,H_t)\in\cH^t$ with $t\geq 2^{\rho\cdot
      h(m)}$, then $(H_1,\ldots,H_t)$ is \emph{not} $\epsilon$-separated on $x$ w.r.t.\ $\ell$. In plain English, this means
    that we cannot pack $t$ many elements of $\cH$ so that they are pairwise $\epsilon$-far apart on $x$.
  \item For a function $h\colon\NN\to\NN$, we say that $\cH$ has the \emph{($k$-partite)
  $h$-probabilistic Haussler packing property} with respect to a $k$-partite loss function
    $\ell\colon\cE_1(\Omega)\times\Lambda\times\Lambda\to\RR_{\geq 0}$ if there exists a function
    $m^{\hPHP}_{\cH,\ell}\colon(0,1)^3\to\RR_{\geq 0}$ such that for every $\epsilon,\delta,\rho\in(0,1)$, every integer $m\geq
    m^{\hPHP}_{\cH,\ell}(\epsilon,\delta,\rho)$ and every $(H_1,\ldots,H_t)\in\cH^t$ with $t\geq 2^{\rho\cdot h(m)}$, we have
    \begin{equation*}
      \PP_{\rn{x}\sim\mu^m}\bigl[
        (H_1,\ldots,H_t)\text{ is $\epsilon$-separated on $\rn{x}$ w.r.t.\ $\ell$}
        \bigr]
      \leq \delta.
    \end{equation*}
  \item\label{def:SCpart:VCNkk} For $m\in\NN$, we write\footnote{A small remark on the notation: the first $k$ denotes the
  arity of the hypothesis class, while the second $k$ denotes the ``level'' of the learning task. This is both to differentiate
  from the $\VCN_k$-dimension that controls the (non-sample) $k$-PAC learning notion of~\cite{CM24+}, to connect to
  $k$-dependence~\cite{She14} and the $\VC_k$-dimension of~\cite{CT20,TW22} that controls hypergraphs regularity lemmas that are
  tame in the top level and to anticipate future work that will provide $k$-ary/$k$-partite learning theories of all ``levels''
  $\ell\in[k]$ (in particular, the $\VCN_k$-dimension of \cite{CM24+} will then be rebaptized as $\VCN_{k,1}$-dimension).}
    $\VCN_{k,k}(\cH)\geq m$ if there exists $x\in\cE_m(\Omega)$ such that
    \begin{equation}\label{eq:cHx:partite}
      \cH_x \df \{H^*_m(x) \mid H\in\cH\} \subseteq \Lambda^{[m]^k}
    \end{equation}
    Natarajan-shatters $[m]^k$ (see Definition~\ref{def:Nat}).

    The \emph{Vapnik--Chervonenkis--Natarajan $(k,k)$-dimension} of $\cH$, denoted $\VCN_{k,k}(\cH)$, is the largest $m\in\NN$
    such that $\VCN_{k,k}(\cH)\geq m$ (and if this holds for every $m\in\NN$, then we write $\VCN_{k,k}(\cH)=\infty$).
  \end{enumdef}
\end{definition}

\begin{remark}\label{rmk:ERM}
  A technicality regarding empirical risk minimizers analogous to the one in~\cite[Remark~4.18]{CM24+} happens here: completion
  empirical risk minimizers might not exist due to the infimum in~\eqref{eq:ERM:part} not being attained. Again, it will be
  clear from the proofs that for sample completion learnability, it will suffice to consider \emph{almost} empirical risk
  minimizers in the sense that~\eqref{eq:ERM:part} holds with an extra additive term $f(m)$ on the right-hand side for some
  function $f\colon\NN_+\to\RR_{\geq 0}$ with $\lim_{m\to\infty} f(m) = 0$. Nonetheless, even (completion) almost empirical risk
  minimizers might not exist due to measurability issues if the loss function and hypothesis class are too wild. Nevertheless,
  in most applications, the fact that algorithms are (efficiently) implemented implicitly gives measurability.

  However, we point out one major difference between sample high-arity PAC and high-arity PAC regarding empirical risk
  minimizers: if $\Lambda$ is finite and $\ell$ is a non-agnostic $k$-ary loss function, then for a fixed $(x,y)$, the partially
  erased empirical loss $L_{x,y,\ell}(H)$ can only take at most $\lvert\Lambda\rvert^{m^k}$ values, i.e., finitely many, which
  means that the infimum in~\eqref{eq:ERM:part} is indeed attained. A similar observation also holds in the non-partite case.
\end{remark}

\begin{remark}\label{rmk:advSC->agSC->SC}
  It is clear that if $\ell$ is a $k$-partite loss function and we define the $k$-partite agnostic loss function
  $\ell^{\ag}\colon\cH\times\cE_k(\Omega)\times\Lambda\to\RR_{\geq 0}$ by
  \begin{equation}\label{eq:ellag}
    \ell^{\ag}(H,x,y) \df \ell\bigl(x, H(x), y\bigr) \qquad \bigl(H\in\cH, x\in\cE_1(\Omega), y\in\Lambda\bigr),
  \end{equation}
  then adversarial sample completion $k$-PAC learnability of $\cH$ with respect to $\ell^{\ag}$ implies sample completion
  $k$-PAC learnability of $\cH$ with respect to $\ell$ (with the same learner $\cA$ and same bounds $m^{\SC}_{\cH,\ell,\cA}\df
  m^{\advSC}_{\cH,\ell^{\ag},\cA}$). This follows simply by conditioning on the outcome $\rn{x}\sim\mu^m$ of the sample in the
  non-adversarial version.

  We also point out that $\ell^{\ag}$ is clearly local and if $\ell$ is bounded, then so is $\ell^{\ag}$ (the proof is
  straightforward, but it is made explicit in~\cite[Proposition~6.3]{CM24+}).

  Finally, we could have also defined a notion of agnostic sample completion $k$-PAC learnability which is a priori in between
  adversarial and (standard) sample completion $k$-PAC learnability: namely, the loss is agnostic, but the adversary is not
  allowed to pick an $[m]$-sample adversarially and must instead sample it at random from an ``agnostic distribution''. The
  precise meaning of ``agnostic distribution'' here is a finite marginal of a separately exchangeable distribution;
  see~\cite[Propostion~4.9, Definitions~4.10 and~4.11]{CM24+} for more details. We would then have a chain of trivial
  implications adversarial $\implies$ agnostic $\implies$ standard. Since a consequence of the main result of this paper is that
  standard sample completion $k$-PAC learnability also implies the adversarial version, the agnostic one is then also equivalent
  to the other two. However, differently from~\cite{CM24+}, we do not currently have any particular application/result
  that requires specifically this agnostic version, so we refrain from stating it formally here. As expected, a similar
  observation applies in the non-partite case, in which ``agnostic distribution'' means a finite marginal of a (jointly)
  exchangeable distribution in the non-partite case; see~\cite[Proposition~3.9, Definitions~3.10 and~3.11 and paragraphs that
    precede them]{CM24+} for more details.
\end{remark}

\begin{remark}\label{rmk:SUC}
  Pedantically, it would be more correct to call the notion in Definition~\ref{def:SCpart:SUC}, adversarial sample uniform
  convergence and allow for an agnostic variant and a non-agnostic variant defined in analogy to Definition~\ref{def:SCpart:SC}
  and Remark~\ref{rmk:advSC->agSC->SC}. However, similarly to Remark~\ref{rmk:advSC->agSC->SC}, we would trivially have the
  implications adversarial $\implies$ agnostic $\implies$ standard. In turn, we will show in Proposition~\ref{prop:SUC->advSC}
  that adversarial sample uniform convergence implies adversarial sample completion $k$-PAC learnability and its proof is easily
  adapted to yield agnostic and standard versions of this implication. Finally, since a consequence of the main result of this
  paper is that standard sample completion $k$-PAC learnability implies adversarial sample uniform convergence, we refrain from
  stating formally all these variations of sample uniform convergence here. Again, an analogous observation holds in the
  non-partite case.
\end{remark}

\begin{remark}\label{rmk:SHP->PHP}
  We will abuse notation slightly by writing, for example, $m^k$-sample Haussler packing property for when we mean $h$-sample
  Haussler packing property for $h(m)\df m^k$.

  It is clear from definitions that the $h$-sample Haussler packing property implies the $h$-probabilistic Haussler packing
  property by simply conditioning on the outcome $\rn{x}\sim\mu^m$ of the sample in the probabilistic version. It is also clear
  that if $h_1,h_2\colon\NN\to\NN$ are such that $h_1\leq O(h_2)$ (i.e., we have $\limsup_{m\to\infty}
  h_1(m)/h_2(m)\leq\infty$), then the $h_1$-sample Haussler packing property implies the $h_2$-sample Haussler packing property
  with
  \begin{equation*}
    m^{\hPHP[h_2]}_{\cH,\ell}(\epsilon,\delta,\rho)
    \df
    \min_{C}
    \min\left\{m_0\in\NN
    \;\middle\vert\;
    m_0\geq m^{\hPHP[h_1]}\left(\epsilon,\delta,\frac{\rho}{C}\right)
    \land\forall m\geq m_0, \frac{h_1(m)}{h_2(m)}\leq C\right\},
  \end{equation*}
  where the outer minimum is over
  \begin{equation*}
    C > \max\left\{\limsup_{m\to\infty} \frac{h_1(m)}{h_2(m)}, 1\right\}.
  \end{equation*}
  A similar remark holds for the probabilistic Haussler packing property.

  As expected, similar observations apply in the non-partite case.
\end{remark}

\begin{definition}[Sample completion definitions in the non-partite]\label{def:SC}
  Let $k\in\NN_+$, let $\Omega=(X,\cB)$ and $\Lambda=(Y,\cB')$ be non-empty Borel spaces and let
  $\cH\subseteq\cF_k(\Omega,\Lambda)$ be a $k$-ary hypothesis class.
  \begin{enumdef}
  \item For $m\in\NN$, a \emph{($k$-ary) $[m]$-sample} (with respect to $\Omega$ and $\Lambda$) is an element of
    $\cE_m(\Omega)\times\Lambda^{([m])_k}$. A \emph{partially erased ($k$-ary) $[m]$-sample} (with respect to $\Omega$ and
    $\Lambda$) is an element of $\cE_m(\Omega)\times(\Lambda\cup\{\unk\})^{([m])_k}$, where $\unk$ is a special symbol assumed
    to \emph{not} be an element of $\Lambda$ (and is meant to represent that the original symbol of this entry got erased).
  \item For $m\in\NN$ and a partially erased $[m]$-sample $(x,y)\in\cE_m(\Omega)\times(\Lambda\cup\{\unk\})^{([m])_k}$, the
    \emph{partially erased empirical loss} (or \emph{partially erased empirical risk}) of a $k$-ary hypothesis
    $H\in\cF_k(\Omega,\Lambda)$ with respect to $(x,y)$ a $k$-ary loss function
    $\ell\colon\cE_k(\Omega)\times\Lambda^{S_k}\times\Lambda^{S_k}\to\RR_{\geq 0}$ and an order choice $\alpha$ for $[m]$
    is
    \begin{equation*}
      L_{x,y,\ell}^\alpha(H)
      \df
      \frac{1}{\lvert\cU_y\rvert}\sum_{U\in\cU_y}
      \ell\Bigl(\alpha_U^*(x), b_\alpha\bigl(H^*_m(x)\bigr)_U, b_\alpha(y)_U\Bigr),
    \end{equation*}
    where
    \begin{equation}\label{eq:cUy}
      \begin{aligned}
        \cU_y
        & \df
        \left\{U\in\binom{[m]}{k} \;\middle\vert\;
        \forall\beta\in([m])_k, \bigl(\im(\beta)=U\to y_\beta\neq\unk\bigr)\right\}
        \\
        & =
        \left\{U\in\binom{[m]}{k} \;\middle\vert\;
        \unk\notin\im\bigl(b_\alpha(y)_U\bigr)\right\}.
      \end{aligned}
    \end{equation}
    If $\cU_y=\varnothing$, we set $L_{x,y,\ell}^\alpha(H)\df 0$ instead.

    If we are given instead a $k$-ary agnostic loss function $\ell\colon\cH\times\cE_k(\Omega)\times\Lambda^{S_k}\to\RR_{\geq
      0}$, then we define the \emph{partially erased empirical loss} (or \emph{partially erased empirical risk}) of $H\in\cH$
    with respect to $(x,y)$, $\ell$ and an order choice $\alpha$ for $[m]$ similarly:
    \begin{equation*}
      L_{x,y,\ell}^\alpha(H)
      \df
      \frac{1}{\lvert\cU_y\rvert}\sum_{U\in\cU_y}
      \ell\bigl(H, \alpha_U^*(x), b_\alpha(y)_U\bigr).
    \end{equation*}
  \item If $y\in\Lambda^{([m])_k}$ and $y'\in(\Lambda\cup\{\unk\})^{([m])_k}$, then we say that $y$ \emph{extends} $y'$ if
    $y_\alpha = y'_\alpha$ for every $\alpha\in([m])_k$ such that $y'_\alpha\neq\unk$.
  \item Given $y\in\Lambda^{([m])_k}$ and $\rho\in[0,1]$, the \emph{$(1-\rho)$-erasure} is the random element $\rn{E}_\rho(y)$
    of $(\Lambda\cup\{\unk\})^{([m])_k}$ in which each entry of $y$ is replaced with $\unk$ independently with probability
    $1-\rho$.

    Similarly, the \emph{symmetric $(1-\rho)$-erasure} is the random element $\rn{E}^{\sym}_\rho(y)$ of
    $(\Lambda\cup\{\unk\})^{([m])_k}$ obtained from $y$ through the following procedure: for each $U\in\binom{[m]}{k}$, with
    probability $1-\rho$, independently from other elements of $\binom{[m]}{k}$, we replace all entries of $y$ indexed by all
    $\beta\in([m])_k$ with $\im(\beta)=U$ with $\unk$.

    By construction, $y$ always extends $\rn{E}_\rho(y)$ and $\rn{E}_\rho^{\sym}(y)$.
  \item A \emph{($k$-ary) completion algorithm} is a measurable function
    \begin{equation*}
      \cA\colon\bigcup_{m\in\NN}\bigl(\cE_m(\Omega)\times(\Lambda\cup\{\unk\})^{([m])_k}\bigr)\to\cH,
    \end{equation*}
    where $\unk\notin\Lambda$ and $\Lambda\cup\{\unk\}$ is equipped with co-product $\sigma$-algebra.

    We want to interpret $\cA$ as receiving a $k$-ary $[m]$-sample that has been partially erased and outputting what it thinks
    was the original hypothesis from $\cH$ that generated the sample (or more generally, the hypothesis of $\cH$ that best
    explains the sample).
  \item We say that a completion algorithm $\cA$ is a \emph{(completion) empirical risk minimizer} with respect to an (agnostic
    or not) loss function $\ell$ if for every $m\in\NN$ and every partially erased $[m]$-sample
    $(x,y)\in\cE_m(\Omega)\times(\Lambda\cup\{\unk\})^{([m])_k}$, we have
    \begin{equation}\label{eq:ERM}
      L_{x,y,\ell}^\alpha\bigl(\cA(x,y)\bigr)
      =
      \inf_{H\in\cH} L_{x,y,\ell}^\alpha(H)
    \end{equation}
    for every order choice $\alpha$ for $[m]$.
  \item We say that $\cH$ is \emph{sample completion $k$-PAC learnable} with respect to a $k$-ary loss function
    $\ell\colon\cE_k(\Omega)\times\Lambda^{S_k}\times\Lambda^{S_k}\to\RR_{\geq 0}$ if there exist a completion algorithm $\cA$
    and a function $m^{\SC}_{\cH,\ell,\cA}\colon(0,1)^3\to\RR_{\geq 0}$ such that for every $\epsilon,\delta,\rho\in(0,1)$,
    every $\mu\in\Pr(\Omega)$ and every $F\in\cF_k(\Omega,\Lambda)$ that is realizable in $\cH$ with respect to $\ell$ and
    $\mu$, we have
    \begin{equation*}
      \PP_{\rn{x}\sim\mu^m,\rn{E}_\rho}\Biggl[
        L_{\rn{x}, F^*_m(\rn{x}),\ell}^\alpha\biggl(
        \cA\Bigl(\rn{x}, \rn{E}_\rho\bigl(F^*_m(\rn{x})\bigr)\Bigr)
        \biggr)
        \leq \epsilon
        \Biggr]
      \geq 1 - \delta
    \end{equation*}
    for every integer $m\geq m^{\SC}_{\cH,\ell,\cA}(\epsilon,\delta,\rho)$ and every order choice $\alpha$ for $[m]$.

    A completion algorithm $\cA$ satisfying the above is called a \emph{sample completion $k$-PAC learner} for $\cH$ with
    respect to $\ell$.

    We define the notions of \emph{symmetric sample completion $k$-PAC learnability}, $m^{\sSC}_{\cH,\ell,\cA}$ and of a
    \emph{symmetric sample completion $k$-PAC learner} analogously to the non-symmetric case, but replacing the
    $(1-\rho)$-erasure $\rn{E}_\rho$ with the symmetric $(1-\rho)$-erasure $\rn{E}^{\sym}_\rho$.
  \item We say that $\cH$ is \emph{adversarial sample completion $k$-PAC learnable} with respect to $\ell$ if there exist a
    completion algorithm $\cA$ and a function $m^{\advSC}_{\cH,\ell,\cA}\colon(0,1)^3\to\RR_{\geq 0}$ such that for every
    $\epsilon,\delta,\rho\in(0,1)$ and every $[m]$-sample $(x,y)\in\cE_m(\Omega)\times\Lambda^{([m])_k}$, we have
    \begin{equation*}
      \PP_{\rn{E}_\rho}\biggl[
        L_{x,y,\ell}^\alpha\Bigl(
        \cA\bigl(x,\rn{E}_\rho(y)\bigr)
        \Bigr)
        \leq \inf_{H\in\cH} L_{x,y,\ell}^\alpha(H) + \epsilon
        \biggr]
      \geq 1 - \delta
    \end{equation*}
    for every order choice $\alpha$ for $[m]$.

    A completion algorithm $\cA$ satisfying the above is called an \emph{adversarial sample completion $k$-PAC learner} for
    $\cH$ with respect to $\ell$.

    We define the notions of \emph{adversarial symmetric sample completion $k$-PAC learnability}, $m^{\advsSC}_{\cH,\ell,\cA}$
    and of a \emph{adversarial symmetric sample completion $k$-PAC learner} analogously to the non-symmetric case, but replacing
    the $(1-\rho)$-erasure $\rn{E}_\rho$ with the symmetric $(1-\rho)$-erasure $\rn{E}^{\sym}_\rho$.
  \item Let $(x,y)\in\cE_m(\Omega)\times(\Lambda\cup\{\unk\})^{([m])_k}$ be a partially erased $[m]$-sample and let
    $y'\in\Lambda^{([m])_k}$ extend $y$. For $\epsilon > 0$, we say that $(x,y)$ is \emph{$\epsilon$-representative} with respect
    to $\cH$, $y'$ and $\ell$ if
    \begin{equation*}
      \bigl\lvert L_{x,y,\ell}^\alpha(H) - L_{x,y',\ell}^\alpha(H)\bigr\rvert \leq \epsilon
    \end{equation*}
    for every $H\in\cH$ and every order choice $\alpha$ for $[m]$.
  \item\label{def:SC:SUC} We say that $\cH$ has the \emph{sample uniform convergence property} with respect to $\ell$ if there
    exists a function $m^{\SUC}_{\cH,\ell}\colon(0,1)^3\to\RR_{\geq 0}$ such that for every $\epsilon,\delta,\rho\in(0,1)^3$,
    every integer $m\geq m^{\SUC}_{\cH,\ell}(\epsilon,\delta,\rho)$ and every $[m]$-sample
    $(x,y)\in\cE_m(\Omega)\times\Lambda^{([m])_k}$, we have
    \begin{equation*}
      \PP_{\rn{E}^{\sym}_\rho(y)}\Bigl[
        \bigl(x, \rn{E}^{\sym}_\rho(y)\bigr)
        \text{ is $\epsilon$-representative w.r.t.\ $\cH$, $y$ and $\ell$}
        \Bigr]
      \geq 1 - \delta.
    \end{equation*}
  \item For $\epsilon > 0$, $m\in\NN$, $x\in\cE_m(\Omega)$ and an order choice $\alpha$ for $[m]$, we say that a (finite)
    sequence $(H_1,\ldots,H_t)$ of $k$-ary hypotheses is \emph{$\epsilon$-separated on $x$} with respect to a $k$-ary loss
    function $\ell\colon\cE_k(\Omega)\times\Lambda^{S_k}\times\Lambda^{S_k}\to\RR_{\geq 0}$ and $\alpha$ if
    \begin{equation*}
      L_{x,(H_i)^*_m(x),\ell}^\alpha(H_j) > \epsilon
    \end{equation*}
    for every $i,j\in[m]$ with $i < j$.
  \item For a function $h\colon\NN\to\NN$, we say that $\cH$ has the \emph{($k$-ary) $h$-sample Haussler packing
  property} with respect to a $k$-ary loss function $\ell\colon\cE_k(\Omega)\times\Lambda^{S_k}\times\Lambda^{S_k}\to\RR_{\geq
    0}$ if there exists a function $m^{\hSHP}_{\cH,\ell}\colon(0,1)^3\to\RR_{\geq 0}$ such that for every
    $\epsilon,\delta,\rho\in(0,1)$, every integer $m\geq m^{\hSHP}_{\cH,\ell}(\epsilon,\delta,\rho)$ and every order choice
    $\alpha$ for $[m]$, if $(H_1,\ldots,H_t)\in\cH^t$ with $t\geq 2^{\rho\cdot h(m)}$, then $(H_1,\ldots,H_t)$ is \emph{not}
    $\epsilon$-separated on $x$ w.r.t.\ $\ell$ and $\alpha$.
  \item For a function $h\colon\NN\to\NN$, we say that $\cH$ has the \emph{($k$-ary) $h$-probabilistic Haussler packing
  property} with respect to a $k$-ary loss function $\ell\colon\cE_k(\Omega)\times\Lambda^{S_k}\times\Lambda^{S_k}\to\RR_{\geq
    0}$ if there exists a function $m^{\hPHP}_{\cH,\ell}\colon(0,1)^3\to\RR_{\geq 0}$ such that for every
    $\epsilon,\delta,\rho\in(0,1)$, every integer $m\geq m^{\hPHP}_{\cH,\ell}(\epsilon,\delta,\rho)$ and every
    $(H_1,\ldots,H_t)\in\cH^t$ with $t\geq 2^{\rho\cdot h(m)}$, we have
    \begin{equation*}
      \PP_{\rn{x}\sim\mu^m}\bigl[
        (H_1,\ldots,H_t)\text{ is $\epsilon$-separated on $\rn{x}$ w.r.t.\ $\ell$ and $\alpha$}
        \bigr]
      \leq \delta
    \end{equation*}
    for every order choice $\alpha$ for $[m]$.
  \item\label{def:SC:VCNkk} For $m\in\NN$, we let
    \begin{equation}\label{eq:Tkm}
      T_{k,m} \df \left\{U\in\binom{[k\cdot m]}{k} \;\middle\vert\; \lvert U\cap[(i-1)m + 1, im]\rvert=1\right\}
    \end{equation}
    be the set of all $k$-subsets of $[k\cdot m]$ that are transversal to the equipartition of $[k\cdot m]$ into $k$ intervals.

    If we are further given $x\in\cE_{k\cdot m}(\Omega)$ and $H\in\cH$, we define the function $H_x\colon T_{k,m}\to\Lambda^{S_k}$ by
    \begin{equation}\label{eq:Hx}
      H_x(U)_\tau \df H^*_{k\cdot m}(x)_{\iota_{U,k\cdot m}\comp\tau}
      \qquad (U\in T_{k,m}, \tau\in S_k),
    \end{equation}
    where $\iota_{U,k\cdot m}\colon[k]\to[k\cdot m]$ is the unique increasing function with $\im(\iota_{U,k\cdot m})=U$.

    We then write $\VCN_{k,k}(\cH)\geq m$ if there exists $x\in\cE_{k\cdot m}(\Omega)$ such that
    \begin{equation}\label{eq:cHx}
      \cH_x \df \{H_x \mid H\in\cH\} \subseteq (\Lambda^{S_k})^{T_{k,m}}
    \end{equation}
    Natarajan-shatters $T_{k,m}$.

    The \emph{Vapnik--Chervonenkis--Natarajan $(k,k)$-dimension}\footnote{We will prove in Proposition~\ref{prop:VCNkk} that the
    non-partite $\VCN_{k,k}$-dimension can be computed is in terms of the partization operation of Definition~\ref{def:kpart:cH}
    and the partite version of the $\VCN_{k,k}$-dimension as $\VCN_{k,k}(\cH)=\VCN_{k,k}(\cH^{\kpart})$.} of $\cH$, denoted
    $\VCN_{k,k}(\cH)$, is the largest $m\in\NN$ such that $\VCN_{k,k}(\cH)\geq m$ (and if this holds for every $m\in\NN$, then
    we write $\VCN_{k,k}(\cH)=\infty$).
  \end{enumdef}
\end{definition}

\begin{remark}\label{rmk:symm->nonsymm}
  It is clear that the symmetric version of (adversarial, resp.) sample completion $k$-PAC learnability implies its
  non-symmetric counterpart with a simple adjustment of parameters. Namely, to produce a sample completion learner $\cA'$ using
  a symmetric sample completion learner $\cA$, we can simply start by erasing all entries indexed by $\beta\in([m])_k$ such that
  there exists an entry indexed by some $\beta'\in([m])_k$ with $\im(\beta)=\im(\beta')$ that was erased. If our sample was
  indeed of the form $\rn{E}_\rho(y)$, then the result of this operation has the same distribution as $\rn{E}^{\sym}_{\rho'}(y)$
  for $\rho'\df\rho^{k!}$, so we get $m^{\SC}_{\cH,\ell,\cA'}(\epsilon,\delta,\rho)\df
  m^{\sSC}_{\cH,\ell,\cA}(\epsilon,\delta,\rho^{k!})$ (and similarly for the adversarial variant). A consequence of the main
  result of this paper is that the converse implication also holds.

  Furthermore, regarding the definition of sample uniform convergence in the non-partite (Definition~\ref{def:SC:SUC}),
  pedantically, it would be more accurate to call this the symmetric notion. However, note that it does not make sense to
  compute a $k$-ary (agnostic or not) loss function if we do not know all $S_k$-labels (i.e., if we have an element of
  $(\Lambda\cup\{\unk\})^{S_k}\setminus\Lambda^{S_k}$); this is reflected in the definition of $\cU_y$ in~\eqref{eq:cUy}. Thus
  the non-symmetric version of sample uniform convergence is trivially equivalent to its symmetric counterparts (except for the
  same change in the parameter $\rho$ to $\rho^{k!}$) and as such, for sample uniform convergence, we will simply use the
  symmetric version and omit ``symmetric'' from the terminology.
\end{remark}

\begin{remark}\label{rmk:measurable}
  Let us formalize why the measurability conditions that we impose make all probabilities and expectations make sense. We will
  also argue that when we only use the $0/1$-loss function and its agnostic counterpart, essentially all measurability
  conditions immediately hold. We will discuss only the partite case, but the non-partite case is completely analogous.

  First, when compute total losses
  \begin{equation*}
    L_{\mu,F,\ell}(H) \df \EE_{\rn{x}\sim\mu^1}\Bigl[\ell\bigl(\rn{x}, H(\rn{x}), F(\rn{x})\bigr)\Bigr],
  \end{equation*}
  the expectation above makes sense since the evaluation map $\ev\colon\cH\times\cE_1(\Omega)\ni(H,x)\mapsto H(x)\in\Lambda$ is
  measurable and $\ell$ is measurable.

  Similarly, for a fixed loss function $\ell$, $m\in\NN$ and $y\in\Lambda^{[m]^k}$, the function
  $\cL_{y,\ell}\colon\cH\times\cE_m(\Omega)\to\RR_{\geq 0}$ that maps $(H,x)\in\cH\times\cE_m(\Omega)$ to the empirical loss
  \begin{equation}\label{eq:measurable:cL}
    \cL_{y,\ell}(H,x)
    \df
    L_{x,y,\ell}(H)
    \df
    \frac{1}{m^k}\sum_{\alpha\in[m]^k}
    \ell\bigl(\alpha^*(x), H^*_m(x)_\alpha, y_\alpha\bigr)
  \end{equation}
  is also measurable due to $\ev$ and $\ell$ being measurable. A similar argument holds for the corresponding function
  $\cL_{y,\ell}$ defined from an agnostic loss function.

  We also need to reason about the erasure operation $\rn{E}_\rho$. For this, given $m\in\NN$ and $y\in\Lambda^{[m]^k}$, let
  $\Upsilon_m\df\{0,1\}^{[m]^k}$ be equipped with discrete $\sigma$-algebra, let $\nu_m\in\Pr(\Upsilon_m)$ be the distribution
  in which each entry is $1$ independently with probability $\rho$ and let $E_y\colon\Upsilon\to(\Lambda\cup\{\unk\})^{[m]^k}$
  be given by
  \begin{equation*}
    E_y(w)_\beta \df
    \begin{dcases*}
      y_\beta, & if $w_\beta=1$,\\
      \unk, & if $w_\beta=0$.
    \end{dcases*}
  \end{equation*}
  This successfully encodes the $\rho$-erasure operation as if $\rn{w}\sim\nu_m$, then $\rn{E}_\rho(y)\sim E_y(\rn{w})$.

  This means that the probability in the definition of adversarial sample completion learning is encoded as:
  \begin{equation*}
    \PP_{\rn{w}\sim\nu_m}\biggl[
      L_{x,y,\ell}\Bigl(
      \cA\bigl(x,E_y(\rn{w})\bigr)
      \Bigr)
      \leq \inf_{H\in\cH} L_{x,y,\ell}(H) + \epsilon
      \biggr].
  \end{equation*}
  Since $\Upsilon_m$ is equipped with discrete $\sigma$-algebra, the probability above makes sense.

  For the non-adversarial version, the probability is encoded as:
  \begin{equation*}
    \PP_{\rn{x}\sim\mu^m, \rn{w}\sim\nu_m}\biggl[
      L_{\rn{x},F^*_m(\rn{x}),\ell}\Bigl(
      \cA\bigl(\rn{x},E_{F^*_m(\rn{x})}(\rn{w})\bigr)
      \Bigr)
      \leq \epsilon
      \biggr].
  \end{equation*}
  Using the fact that the map $\cL_{y,\ell}$ of~\eqref{eq:measurable:cL} and the algorithm $\cA$ are measurable (and that
  $\Upsilon_m$ is equipped with discrete $\sigma$-algebra), the probability above is also well-defined.

  We now consider sample uniform convergence, which involves the following probability:
  \begin{equation*}
    \PP_{\rn{w}\sim\nu_m}[
      \bigl(x, E_y(\rn{w}))\bigr)
      \text{ is $\epsilon$-representative w.r.t.\ $\cH$, $y$ and $\ell$}
      \Bigr].
  \end{equation*}
  Again, this is well-defined since $\Upsilon_m$ is equipped with discrete $\sigma$-algebra\footnote{This is in sharp contrast
  with the definition of high-arity uniform convergence of~\cite{CM24+}: in that setting, the fact that
  $\epsilon$-representativeness involves quantifying over all $H\in\cH$ is what leads to the requirement mentioned in
  Footnote~\ref{ftn:measurability}.}.

  The fact that $h$-sample Haussler property makes sense does not require any measurability.

  For the $h$-probabilistic Haussler property to make sense, we need to compute the probability
  \begin{equation*}
    \PP_{\rn{x}\sim\mu^m}\bigl[
      (H_1,\ldots,H_t)\text{ is $\epsilon$-separated on $\rn{x}$ w.r.t.\ $\ell$}
      \bigr].
  \end{equation*}
  Since $(H_1,\ldots,H_t)$ is fixed in the above, the fact that the set of $x\in\cE_m(\Omega)$ in which $(H_1,\ldots,H_t)$ is
  $\epsilon$-separated is measurable follows from $\ev$ and $\ell$ being measurable.

  \medskip

  Let us now mention which of these measurability assumptions can be relaxed in sample completion. First, note that at no point
  we used that $\Omega$ is a tuple of Borel spaces. Indeed, sample completion learning makes sense in the setting of tuples of
  general measurable spaces\footnote{In the high-arity setting of~\cite{CM24+,CM25+}, Borelness was required to invoke theorems
  from exchangeability theory to cover agnostic learning; these are not required here.}. Furthermore, if we consider only the
  $0/1$-loss function $\ell_{0/1}$ and its agnostic counterpart and equip $\Lambda$ with the discrete $\sigma$-algebra, then
  $\ell_{0/1}$ is immediately measurable. We can then equip our hypotheses classes $\cH$ with the discrete $\sigma$-algebra as
  well and the evaluation map $\ev$ immediately becomes measurable (we do not necessarily satisfy the universal measurability of
  projections of Footnote~\ref{ftn:measurability}, but sample completion learning does not require it).
\end{remark}

\clearpage

\section{Statements of the main theorems}
\label{sec:main}

In this section, we formally state our main theorems. Figure~\ref{fig:roadmap} contains a pictorial image of the structure of
the proof and the location of the proofs of the specific implications.

\begin{restatable}[Fundamental theorem of sample PAC learning, partite version]{theorem}{thmSCpart}
  \label{thm:SCpart}
  Let $k\in\NN_+$, let $\Omega=(\Omega_i)_{i=1}^k$ be a $k$-tuple of non-empty Borel spaces, let $\Lambda$ be a non-empty finite
  Borel space, let $\cH\subseteq\cF_k(\Omega,\Lambda)$ be a $k$-partite hypothesis class, let
  $\ell\colon\cE_1(\Omega)\times\Lambda\times\Lambda\to\RR_{\geq 0}$ be a $k$-partite loss function that is separated and
  bounded. Suppose completion (almost) empirical risk minimizers exist (see Remark~\ref{rmk:ERM}). Let further
  $\ell^{\ag}\colon\cH\times\cE_1(\Omega)\times\Lambda\to\RR_{\geq 0}$ be the $k$-partite agnostic loss function given by
  \begin{equation*}
    \ell^{\ag}(H,x,y) \df \ell\bigl(x, H(x), y\bigr) \qquad \bigl(H\in\cH, x\in\cE_1(\Omega), y\in\Lambda\bigr).
  \end{equation*}
  Then the following are equivalent:
  \begin{enumerate}[label={\arabic*.}, ref={(\arabic*)}]
  \item\label{thm:SCpart:VCNkk} $\VCN_{k,k}(\cH) < \infty$.
  \item\label{thm:SCpart:SUC} $\cH$ has the sample uniform convergence with respect to $\ell^{\ag}$.
  \item\label{thm:SCpart:advSC} $\cH$ is adversarial sample completion $k$-PAC learnable with respect to $\ell^{\ag}$.
  \item\label{thm:SCpart:SC} $\cH$ is sample completion $k$-PAC learnable with respect to $\ell$.
  \item\label{thm:SCpart:SHP} $\cH$ has the $m^k$-sample Haussler packing property with respect to $\ell$.
  \item\label{thm:SCpart:SHPbootstrap} $\VCN_{k,k}(\cH) = d < \infty$ and $\cH$ has the $h$-sample Haussler packing
    property with respect to $\ell$ for every $h(m) = \omega(m^{k - 1/(d+1)^{k-1}}\cdot\ln m)$.
  \item\label{thm:SCpart:PHP} $\cH$ has the $m^k$-probabilistic Haussler packing property with respect to $\ell$.
  \end{enumerate}
\end{restatable}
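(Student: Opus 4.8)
The plan is to establish one cycle of implications running through all seven items, following the road map of Figure~\ref{fig:roadmap}; each arrow is a proposition or remark proved elsewhere in the paper, so the work here is only to record how they compose and what drives them. Concretely, I would prove $\ref{thm:SCpart:VCNkk}\implies\ref{thm:SCpart:SHPbootstrap}\implies\ref{thm:SCpart:SHP}\implies\ref{thm:SCpart:PHP}\implies\ref{thm:SCpart:VCNkk}$ as the main loop, and attach items \ref{thm:SCpart:SUC}, \ref{thm:SCpart:advSC}, \ref{thm:SCpart:SC} via the chain $\ref{thm:SCpart:VCNkk}\implies\ref{thm:SCpart:SUC}\implies\ref{thm:SCpart:advSC}\implies\ref{thm:SCpart:SC}\implies\ref{thm:SCpart:PHP}$. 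Two bookkeeping points: $\ref{thm:SCpart:SHPbootstrap}\implies\ref{thm:SCpart:SHP}$ holds by Remark~\ref{rmk:SHP->PHP} because $h(m)=m^k$ itself satisfies $m^k=\omega(m^{k-1/(d+1)^{k-1}}\cdot\ln m)$, and $\ref{thm:SCpart:SHP}\implies\ref{thm:SCpart:PHP}$ is again Remark~\ref{rmk:SHP->PHP} (condition on the sampled grid); the switch between $\ell$ and $\ell^{\ag}$ along the $\ref{thm:SCpart:SUC}$--$\ref{thm:SCpart:SC}$ chain is handled by Remark~\ref{rmk:advSC->agSC->SC}, using that a separated bounded $\ell$ makes $\ell^{\ag}$ bounded and local.

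The engine behind the ``top'' implications is the growth-function estimate of Lemma~\ref{lem:VCNkk->kgrowth}: if $\VCN_{k,k}(\cH)=d<\infty$, then for every sufficiently large $m$ and every $[m]$-sample $x$ the set $\cH_x$ of patterns has size at most $\exp\!\bigl(O_d(m^{k-1/(d+1)^{k-1}}\cdot\ln m)\bigr)$, obtained by noting that shattering a $(d+1)^k$-subgrid would witness $\VCN_{k,k}(\cH)\geq d+1$ and then invoking the \Kovari--\Sos--\Turan/\Erdos\ bounds (Theorems~\ref{thm:KSTErdos:partite} and~\ref{thm:KSTErdos}) to forbid $K^{(k)}_{d+1,\dots,d+1}$ and thereby control the number of patterns. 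Granting this, $\ref{thm:SCpart:VCNkk}\implies\ref{thm:SCpart:SHPbootstrap}$ is Proposition~\ref{prop:VCNkk->SHP}: an $\epsilon$-separated sequence cannot repeat a pattern on the grid, so it has size at most $\lvert\cH_x\rvert<2^{\rho\cdot h(m)}$ once $\rho\cdot h(m)$ dominates the exponent above. The arrow $\ref{thm:SCpart:VCNkk}\implies\ref{thm:SCpart:SUC}$ (Proposition~\ref{prop:VCNkk->SUC}) runs the Chernoff--Hoeffding--union-bound argument sketched in Section~\ref{sec:techexp}: a Chernoff bound pins $\lvert\cU_y\rvert$ near $\rho m^k$, for each fixed hypothesis the discrepancy between the partially erased and full empirical losses is an average of $m^k$ independent mean-zero bounded contributions (weight $1$ on erased cells, $1-1/\rho$ on revealed cells) so Hoeffding gives failure probability $\exp(-\Omega(\rho^2\epsilon^2 m^k))$, and since there are only $2^{o(m^k)}$ patterns a union bound suffices. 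Then $\ref{thm:SCpart:SUC}\implies\ref{thm:SCpart:advSC}$ feeds a completion (almost) empirical risk minimizer---whose existence is hypothesized---into the definition of $\epsilon$-representativeness (Proposition~\ref{prop:SUC->advSC}), $\ref{thm:SCpart:advSC}\implies\ref{thm:SCpart:SC}$ is a fortiori (Remark~\ref{rmk:advSC->agSC->SC}), and $\ref{thm:SCpart:SC}\implies\ref{thm:SCpart:PHP}$ is the counting argument of Proposition~\ref{prop:SC->PHP}: the ``good'' sets of (sample, erasure-rule) pairs on which a learner with parameters $(\epsilon/2,\delta/2,\rho/2)$ recovers a given $H_i$ are each large, but on any grid where $(H_1,\dots,H_t)$ is $\epsilon$-separated and the erasure reveals $s$ cells a fixed pair lies in at most $2^s$ of them, so taking expectations over the erasure and comparing with $t\geq 2^{\rho m^k}$ forces the product measure of the $\epsilon$-separating grids down to $\delta$.

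The crux is the closing arrow $\ref{thm:SCpart:PHP}\implies\ref{thm:SCpart:VCNkk}$ (Proposition~\ref{prop:PHP->VCNkk}), proved by contrapositive and carrying the one genuinely new idea. If $\VCN_{k,k}(\cH)=\infty$, then for $n$ large relative to a given $m$ there is a Natarajan-shattered grid $A_1\times\cdots\times A_k$ with $\lvert A_i\rvert=n$; identifying hypotheses with their restrictions, every element of an $\FF_2$-vector space $\FF_2^{[n]^k}$ is realized, and, putting uniform measures on the $A_i$, the aim is to exhibit, for one fixed triple $(\epsilon,\delta,\rho)$, a subcollection of size $\geq 2^{\rho m^k}$ that stays $\epsilon$-separated on a random $m$-sample with probability exceeding $\delta$, contradicting \ref{thm:SCpart:PHP}. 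The move is to take this subcollection to be a \emph{linear code} and to select it by the probabilistic method: a uniformly random linear code realized inside the shattered grid has, with high probability over both the code and the random sampling maps $\rn\gamma_i$, the property that every nonzero codeword still projects to Hamming weight larger than $\epsilon m^k$ on the sampled grid---here one reduces ``$\epsilon$-separated'' to ``minimum projected weight'' using linearity, then controls that minimum by a first-moment/concentration estimate, choosing $n$ large enough (relative to $m$) that the sampling maps behave generically and $\rho$, $\epsilon$ small enough that a rate-$\rho$ random linear code has relative distance above $\epsilon$. I expect this implication to be the main obstacle: it simultaneously juggles the random code, the random projection, and the limit $n\to\infty$, and it is the step that genuinely exploits the partite form of the $\VCN_{k,k}$-shattering definition. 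Everything else is either a clean reduction or a standard concentration bound, modulo the classical but nontrivial extremal input behind Lemma~\ref{lem:VCNkk->kgrowth}. Combining the main loop with the attached chain gives the equivalence of all seven items.
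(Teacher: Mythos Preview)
Your proposal is correct and follows exactly the paper's own proof: the same cycle \ref{thm:SCpart:VCNkk}$\to$\ref{thm:SCpart:SUC}$\to$\ref{thm:SCpart:advSC}$\to$\ref{thm:SCpart:SC}$\to$\ref{thm:SCpart:PHP}$\to$\ref{thm:SCpart:VCNkk} together with the branch \ref{thm:SCpart:VCNkk}$\to$\ref{thm:SCpart:SHPbootstrap}$\to$\ref{thm:SCpart:SHP}$\to$\ref{thm:SCpart:PHP}, each arrow supplied by the same proposition or remark you cite, and your summaries of the underlying arguments (growth-function via \Kovari--\Sos--\Turan/\Erdos, Chernoff--Hoeffding--union bound for sample uniform convergence, the counting argument for \ref{thm:SCpart:SC}$\to$\ref{thm:SCpart:PHP}, and the random linear code for \ref{thm:SCpart:PHP}$\to$\ref{thm:SCpart:VCNkk}) match the paper's content.
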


\begin{restatable}[Fundamental theorem of sample PAC learning, non-partite version]{theorem}{thmSC}
  \label{thm:SC}
  Let $\Omega$ and $\Lambda$ be non-empty Borel spaces with $\Lambda$ finite, let $k\in\NN_+$, let
  $\cH\subseteq\cF_k(\Omega,\Lambda)$ be a $k$-ary hypothesis class, let
  $\ell\colon\cE_k(\Omega)\times\Lambda^{S_k}\times\Lambda^{S_k}\to\RR_{\geq 0}$ be a $k$-ary loss function that is symmetric,
  separated and bounded. Suppose completion (almost) empirical risk minimizers exist (see Remark~\ref{rmk:ERM}). Let further
  $\ell^{\ag}\colon\cH\times\cE_k(\Omega)\times\Lambda^{S_k}\to\RR_{\geq 0}$ be the $k$-ary agnostic loss function given by
  \begin{equation*}
    \ell^{\ag}(H,x,y) \df \ell\bigl(x, H^*_k(x), y\bigr) \qquad \bigl(H\in\cH, x\in\cE_k(\Omega), y\in\Lambda^{S_k}\bigr).
  \end{equation*}
  Then the following are equivalent:
  \begin{enumerate}[label={\arabic*.}, ref={(\arabic*)}]
  \item\label{thm:SC:VCNkk} $\VCN_{k,k}(\cH) < \infty$.
  \item\label{thm:SC:VCNkkpart} $\VCN_{k,k}(\cH^{\kpart}) < \infty$.
  \item\label{thm:SC:SUC} $\cH$ has the sample uniform convergence with respect to $\ell^{\ag}$.
  \item\label{thm:SC:SUCpart} $\cH^{\kpart}$ has the sample uniform convergence with respect to $(\ell^{\ag})^{\kpart}$.
  \item\label{thm:SC:advSCsymm} $\cH$ is adversarial symmetric sample completion $k$-PAC learnable with respect to
    $\ell^{\ag}$.
  \item\label{thm:SC:advSC} $\cH$ is adversarial sample completion $k$-PAC learnable with respect to $\ell^{\ag}$.
  \item\label{thm:SC:advSCpart} $\cH^{\kpart}$ is adversarial sample completion $k$-PAC learnable with respect to
    $(\ell^{\ag})^{\kpart}$.
  \item\label{thm:SC:SCsymm} $\cH$ is symmetric sample completion $k$-PAC learnable with respect to $\ell$.
  \item\label{thm:SC:SC} $\cH$ is sample completion $k$-PAC learnable with respect to $\ell$.
  \item\label{thm:SC:SCpart} $\cH^{\kpart}$ is sample completion $k$-PAC learnable with respect to $\ell^{\kpart}$.
  \item\label{thm:SC:SHP} $\cH$ has the $m^k$-sample Haussler packing property with respect to $\ell$.
  \item\label{thm:SC:SHPpart} $\cH^{\kpart}$ has the $m^k$-sample Haussler packing property with respect to $\ell^{\kpart}$.
  \item\label{thm:SC:SHPbootstrap} $\VCN_{k,k}(\cH) = d < \infty$ and $\cH$ has the $h$-sample Haussler packing property with
    respect to $\ell$ for every $h(m) = \omega(m^{k - 1/(d+1)^{k-1}}\cdot\ln m)$.
  \item\label{thm:SC:SHPbootstrappart} $\VCN_{k,k}(\cH^{\kpart}) = d < \infty$ and $\cH$ has the $h$-sample Haussler
    packing property with respect to $\ell^{\kpart}$ for every $h(m) = \omega(m^{k - 1/(d+1)^{k-1}}\cdot\ln m)$.
  \item\label{thm:SC:PHP} $\cH$ has the $m^k$-probabilistic Haussler packing property with respect to $\ell$.
  \item\label{thm:SC:PHPpart} $\cH^{\kpart}$ has the $m^k$-probabilistic Haussler packing property with respect to $\ell^{\kpart}$.
  \end{enumerate}
\end{restatable}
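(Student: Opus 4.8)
Theorem~\ref{thm:SC} has sixteen clauses, but they organize into two ``worlds''. The \emph{non-partite} clauses --- \ref{thm:SC:VCNkk}, \ref{thm:SC:SUC}, \ref{thm:SC:advSCsymm}, \ref{thm:SC:advSC}, \ref{thm:SC:SCsymm}, \ref{thm:SC:SC}, \ref{thm:SC:SHP}, \ref{thm:SC:SHPbootstrap}, \ref{thm:SC:PHP} --- speak about $\cH$, $\ell$ and $\ell^{\ag}$; the \emph{partite} clauses --- \ref{thm:SC:VCNkkpart}, \ref{thm:SC:SUCpart}, \ref{thm:SC:advSCpart}, \ref{thm:SC:SCpart}, \ref{thm:SC:SHPpart}, \ref{thm:SC:SHPbootstrappart}, \ref{thm:SC:PHPpart} --- speak about $\cH^{\kpart}$, $\ell^{\kpart}$ and $(\ell^{\ag})^{\kpart}$. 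The plan is to prove one full cycle of equivalences among the ``$\VCN_{k,k}$-and-learning'' properties \emph{inside a single setting}, to run it once in each world, and then to glue the two worlds along the single bridge $\VCN_{k,k}(\cH)=\VCN_{k,k}(\cH^{\kpart})$ of Proposition~\ref{prop:VCNkk} (which is precisely \ref{thm:SC:VCNkk}$\Leftrightarrow$\ref{thm:SC:VCNkkpart}). The partite world is an instance of the partite Theorem~\ref{thm:SCpart}, once one has checked that the standing hypotheses survive partization: $\ell^{\kpart}$ is separated and bounded if $\ell$ is, $(\ell^{\ag})^{\kpart}$ is local and bounded, and a completion (almost) empirical risk minimizer for $(\cH^{\kpart},\ell^{\kpart})$ exists --- all immediate from Definition~\ref{def:kpart} and Remark~\ref{rmk:ERM}. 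The symmetric clauses \ref{thm:SC:advSCsymm} and \ref{thm:SC:SCsymm} are folded in by Remark~\ref{rmk:symm->nonsymm} (cost: $\rho\mapsto\rho^{k!}$; and sample uniform convergence is intrinsically the symmetric notion), and the a-fortiori steps ``adversarial $\Rightarrow$ standard'' together with the interchange of $\ell$ and $\ell^{\ag}$ are Remark~\ref{rmk:advSC->agSC->SC}. With all of this in place every clause becomes equivalent to ``$\VCN_{k,k}(\cH)<\infty$'' or to ``$\VCN_{k,k}(\cH^{\kpart})<\infty$'', and these two are equal.

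It remains to prove the cycle in one setting; I write it with the non-partite clause numbers. The implications split as \ref{thm:SC:VCNkk}$\Rightarrow$\ref{thm:SC:SUC}$\Rightarrow$\ref{thm:SC:advSC}$\Rightarrow$\ref{thm:SC:SC}$\Rightarrow$\ref{thm:SC:PHP}$\Rightarrow$\ref{thm:SC:VCNkk}, plus the side chain \ref{thm:SC:VCNkk}$\Rightarrow$\ref{thm:SC:SHPbootstrap}$\Rightarrow$\ref{thm:SC:SHP}$\Rightarrow$\ref{thm:SC:PHP}. The workhorse for every arrow leaving \ref{thm:SC:VCNkk} is the growth-function estimate Lemma~\ref{lem:VCNkk->kgrowth}: if $\VCN_{k,k}(\cH)=d<\infty$, then on every sufficiently large $m$-grid $\cH$ realizes at most $\exp\bigl(O(m^{k-1/(d+1)^{k-1}}\ln m)\bigr)=2^{o(m^k)}$ labellings. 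I would prove that lemma by noting that Natarajan-shattering of a grid would, via the two shattering functions, let the ``disagreement hypergraph'' of two realized labellings in a fixed coordinate direction contain an arbitrarily large complete $k$-partite box --- impossible; hence each direction is $K_{d+1,\dots,d+1}$-free, the \Kovari--\Sos--\Turan\ and \Erdos\ extremal bounds (Theorems~\ref{thm:KSTErdos:partite}, \ref{thm:KSTErdos}) cap the number of disagreeing cells, and summing over the finitely many label pairs (using $|\Lambda|<\infty$) gives the bound for \emph{every} large $m$. Granting it: \ref{thm:SC:VCNkk}$\Rightarrow$\ref{thm:SC:SHPbootstrap} is pigeonhole (more than $2^{o(h(m))}$ hypotheses must repeat a pattern, so cannot be $\epsilon$-separated); \ref{thm:SC:SHPbootstrap}$\Rightarrow$\ref{thm:SC:SHP}$\Rightarrow$\ref{thm:SC:PHP} are formal (Remark~\ref{rmk:SHP->PHP}); \ref{thm:SC:VCNkk}$\Rightarrow$\ref{thm:SC:SUC} is a Hoeffding bound for each fixed $H$ --- the sum over all $m^k$ cells of a bounded mean-zero weight ($1$ on erased cells, $1-1/\rho$ on kept cells) times the loss is $\epsilon m^k$-small with probability $1-\exp(-\Omega(\epsilon^2\rho^2 m^k))$ --- followed by a union bound over the $2^{o(m^k)}$ patterns; \ref{thm:SC:SUC}$\Rightarrow$\ref{thm:SC:advSC} is the standard empirical-risk-minimization argument, a completion almost-ERM existing by hypothesis and sample uniform convergence transporting near-optimality from the kept sub-grid to the whole grid; and \ref{thm:SC:advSC}$\Rightarrow$\ref{thm:SC:SC} is conditioning on the sample (Remark~\ref{rmk:advSC->agSC->SC}).

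Two arrows carry the real content. For \ref{thm:SC:SC}$\Rightarrow$\ref{thm:SC:PHP}: assume a sample completion learner $\cA$ for parameters $(\epsilon/2,\delta/2,\rho/2)$ and suppose $(H_1,\dots,H_t)$ with $t\ge2^{\rho\cdot m^k}$ is $\epsilon$-separated on a set of grids of probability $>\delta$; letting $G_i$ be the set of (sample, erasure-rule) pairs on which $\cA$ returns something $\epsilon/2$-close to $H_i$, learning gives $\EE\sum_i\One_{G_i}\ge(1-\delta/2)t$, while $\epsilon$-separation on a grid forces $(x,w)$ into at most $2^{s}$ of the $G_i$ once the erasure $w$ keeps $s$ cells, and taking the expectation over $w$ (kept-count mean $\rho m^k/2$) yields that the probability of an $\epsilon$-separated grid is $\lesssim(\delta/2)\cdot t/(t-2^{\rho m^k/2})\le\delta$ for $m$ large --- a contradiction. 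The genuinely hard arrow, which closes the loop, is \ref{thm:SC:PHP}$\Rightarrow$\ref{thm:SC:VCNkk}, and I expect it to be the main obstacle. Arguing contrapositively, if $\VCN_{k,k}(\cH)=\infty$ then for any target $m$ one can Natarajan-shatter a grid of side $n\gg m$ and, using the shattering functions $f_0,f_1$, identify the realized patterns with all of $\FF_2^{[n]^k}$; the task becomes to exhibit a \emph{linear} code $C\subseteq\FF_2^{[n]^k}$ of dimension $d\df\ceil{\rho\cdot m^k}$ whose projected minimum weight $\min_{0\ne w\in C}d_H(\rn{\gamma}^*(w),0)$ under independent uniform coordinate maps $\rn{\gamma}_1,\dots,\rn{\gamma}_k\colon[n]\to[m]$ exceeds $\epsilon\cdot m^k$ with probability $>\delta$ --- which, read back inside $\cH$, contradicts $m^k$-probabilistic Haussler packing. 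I would build $C$ by the probabilistic method: let $\rn{C}$ be the column span of a uniformly random matrix in $\FF_2^{[m]^k\times d}$, verify it has full dimension $d$ with high probability, bound by $\exp(-\Omega(m^k))$ the probability that a fixed nonzero generator projects to Hamming weight $\le\epsilon m^k$, and union-bound over the $\le 2^d$ nonzero codewords. The crux --- and the step I expect to be hardest --- is the single-codeword estimate: a random $k$-tuple of coordinate-restrictions must rarely collapse a fixed high-weight vector of $\FF_2^{[n]^k}$, which forces one to choose $n$ large enough in terms of $m$ so that the shattered grid is quantitatively ``spread'' (once more via the $k$-fold extremal/counting input of Theorems~\ref{thm:KSTErdos:partite}, \ref{thm:KSTErdos}). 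For $k>2$ the combinatorics of the transversal $k$-subsets $T_{k,m}$ of Definition~\ref{def:SC:VCNkk} makes the bookkeeping heavier than the $k=2$ sketch without changing its shape, and finiteness of $\Lambda$ is used both in Lemma~\ref{lem:VCNkk->kgrowth} and in the reduction to $\FF_2$-coding.
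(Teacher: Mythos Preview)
Your overall architecture is exactly the paper's: split into the non-partite and partite worlds, invoke Theorem~\ref{thm:SCpart} for the partite clauses, run the cycle $\ref{thm:SC:VCNkk}\Rightarrow\ref{thm:SC:SUC}\Rightarrow\ref{thm:SC:advSCsymm}\Rightarrow\ref{thm:SC:advSC}\Rightarrow\ref{thm:SC:SC}\Rightarrow\ref{thm:SC:PHP}\Rightarrow\ref{thm:SC:VCNkk}$ (plus the side chain through \ref{thm:SC:SHPbootstrap} and \ref{thm:SC:SHP}) in the non-partite world via Propositions~\ref{prop:VCNkk->SUC}, \ref{prop:SUC->advSC}, \ref{prop:SC->PHP}, \ref{prop:PHP->VCNkk}, \ref{prop:VCNkk->SHP} and Remarks~\ref{rmk:advSC->agSC->SC}, \ref{rmk:symm->nonsymm}, \ref{rmk:SHP->PHP}, and glue the two worlds along Proposition~\ref{prop:VCNkk}. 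Your sketches of the growth-function, Hoeffding/union-bound, ERM, and learning-implies-PHP arguments match the paper's.

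The one place where your intuition diverges is the coding step inside $\ref{thm:SC:PHP}\Rightarrow\ref{thm:SC:VCNkk}$. You anticipate that the ``single-codeword estimate'' requires analysing how a random restriction collapses a fixed high-weight vector, and you expect to re-enter the \Kovari--\Sos--\Turan/\Erdos\ machinery there. The paper sidesteps this entirely: the maps go $\gamma_i\colon[m]\to[n]$ (not $[n]\to[m]$), and the random code lives in $\FF_2^{[n]^k}$ (not $\FF_2^{[m]^k}$). One first conditions on the event that all $\gamma_i$ are injective --- which has probability bounded away from zero once $n>m^2$ --- so that $\gamma^*\colon\FF_2^{[n]^k}\to\FF_2^{[m]^k}$ is onto; then for a uniformly random $w\in\FF_2^{[n]^k}$, the projection $\gamma^*(w)$ is uniform in $\FF_2^{[m]^k}$, and the probability that it lands in the Hamming ball of radius $\epsilon m^k/s(\ell)$ is at most $2^{(h_2(\epsilon/s(\ell))-1)m^k}$ by the standard entropy bound. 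The union bound over the $\le 2^d$ nonzero codewords then goes through provided $\rho<1-h_2(\epsilon/s(\ell))$. No extremal graph input is needed in this half; the \Kovari--\Sos--\Turan\ theorems are used only in Lemma~\ref{lem:VCNkk->kgrowth}. In the non-partite version there is one additional conditioning step (on each interval of the $k$-partition of $[kn]$ receiving roughly its share of the $m$ samples), but the shape is otherwise identical.
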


We also state quotable versions of the theorems above for the $0/1$-loss function (and its agnostic counterpart), in which
almost all measurability conditions can be dropped (see Remark~\ref{rmk:measurable}):

\begin{theorem}[Fundamental theorem of sample PAC learning, partite version, $0/1$-loss]
  Let $k\in\NN_+$, let $\Omega=(\Omega_i)_{i=1}^k$ be a $k$-tuple of non-empty measurable spaces, let $\Lambda$ be a non-empty
  finite measurable space, equipped with discrete $\sigma$-algebra, let $\cH\subseteq\cF_k(\Omega,\Lambda)$ be a $k$-partite
  hypothesis class, equipped with discrete $\sigma$-algebra. Suppose completion (almost) empirical risk minimizers exist (see
  Remark~\ref{rmk:ERM}). Then the following are equivalent:
  \begin{enumerate}[label={\arabic*.}, ref={(\arabic*)}]
  \item $\VCN_{k,k}(\cH) < \infty$.
  \item $\cH$ has the sample uniform convergence with respect to the agnostic $0/1$-loss function.
  \item $\cH$ is adversarial sample completion $k$-PAC learnable with respect to the agnostic $0/1$-loss function.
  \item $\cH$ is sample completion $k$-PAC learnable with respect to the $0/1$-loss function.
  \item $\cH$ has the $m^k$-sample Haussler packing property with respect to the $0/1$-loss function.
  \item $\VCN_{k,k}(\cH) = d < \infty$ and $\cH$ has the $h$-sample Haussler packing property with respect to the $0/1$-loss
    function for every $h(m) = \omega(m^{k - 1/(d+1)^{k-1}}\cdot\ln m)$.
  \item $\cH$ has the $m^k$-probabilistic Haussler packing property with respect to the $0/1$-loss function.
  \end{enumerate}
\end{theorem}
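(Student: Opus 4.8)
The plan is to obtain this statement as an immediate corollary of Theorem~\ref{thm:SCpart}, instantiating the loss function to be $\ell_{0/1}$ and checking that its hypotheses hold in the present, more permissive measurability setting. First I would verify that $\ell_{0/1}(x,y,y')\df\One[y\neq y']$ is bounded and separated, which is immediate: $\lVert\ell_{0/1}\rVert_\infty = 1 < \infty$ and $s(\ell_{0/1}) = 1 > 0$ with $\ell_{0/1}(x,y,y)=0$. Next, the agnostic $0/1$-loss function $\ell_{0/1}(H,x,y)=\One[H(x)\neq y]$ is precisely the function $\ell^{\ag}$ built from $\ell_{0/1}$ via~\eqref{eq:ellag}, and by Remark~\ref{rmk:advSC->agSC->SC} this $\ell^{\ag}$ is local and bounded; hence the ``$\ell^{\ag}$'' appearing in the seven clauses of Theorem~\ref{thm:SCpart} is literally ``the agnostic $0/1$-loss function'' here. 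With these identifications, clauses~1--7 of the present theorem read, after matching the two loss functions, exactly as clauses~\ref{thm:SCpart:VCNkk}--\ref{thm:SCpart:PHP} of Theorem~\ref{thm:SCpart}. I would also note that the hypothesis ``completion (almost) empirical risk minimizers exist'' is in fact automatic when $\Lambda$ is finite, since then $L_{x,y,\ell_{0/1}}(H)$ takes finitely many values for fixed $(x,y)$ (Remark~\ref{rmk:ERM}); the statement keeps it only to parallel Theorem~\ref{thm:SCpart}.

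The one genuine point to address is that Theorem~\ref{thm:SCpart} is stated for a tuple $\Omega$ of \emph{Borel} spaces and for $\cH$ a hypothesis class in the full sense of Definition~\ref{def:part:hypclass} --- in particular satisfying the universal-measurability-of-projections clause --- whereas here $\Omega$ is only a tuple of measurable spaces, $\Lambda$ is finite with the discrete $\sigma$-algebra, and $\cH$ carries the discrete $\sigma$-algebra (which need not satisfy that clause). The resolution, already flagged in Remark~\ref{rmk:measurable}, is that the proof of Theorem~\ref{thm:SCpart} never uses Borelness of $\Omega$ nor the universal-measurability-of-projections clause: those were imported into the setup only because the high-arity uniform convergence results of~\cite{CM24+,CM25+} require them, and those results do not enter the sample completion arguments. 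So the plan is to re-run the proof of Theorem~\ref{thm:SCpart} in the relaxed setting, checking that every probability and expectation it manipulates is of one of the forms catalogued in Remark~\ref{rmk:measurable}: total and empirical (and partially erased empirical) losses, which are measurable because $\ev$ and $\ell_{0/1}$ are measurable under the discrete $\sigma$-algebras; the erasure operation, encoded by the finite product $\Upsilon_m=\{0,1\}^{[m]^k}$ with its discrete $\sigma$-algebra; and the sample-Haussler events, which concern $\epsilon$-separation of a \emph{fixed} finite tuple $(H_1,\dots,H_t)$ and so are again measurable via $\ev$ and $\ell_{0/1}$.

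I expect the main (really, the only) obstacle to be bookkeeping: confirming that no step in the proof of Theorem~\ref{thm:SCpart} secretly relies on a property of $\Omega$ or $\cH$ beyond what the discrete-$\sigma$-algebra setting supplies --- in particular the growth-function estimate of the~\ref{it:implicitarrow}, the Hoeffding-plus-union-bound argument establishing \ref{thm:SCpart:VCNkk}$\implies$\ref{thm:SCpart:SUC}, and the coding-theoretic construction establishing \ref{thm:SCpart:PHP}$\implies$\ref{thm:SCpart:VCNkk}. Each of these is combinatorial once the sample $x$ (hence the induced grid) is fixed, so I anticipate this to be routine and to add no substantive mathematical content beyond Theorem~\ref{thm:SCpart} itself.
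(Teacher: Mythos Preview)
Your proposal is correct and matches the paper's approach exactly: the paper presents this theorem as a ``quotable version'' of Theorem~\ref{thm:SCpart} for the $0/1$-loss, with no separate proof beyond the pointer to Remark~\ref{rmk:measurable} explaining why the Borel and projection-measurability assumptions can be dropped. Your checklist (separated, bounded, $\ell^{\ag}$ identification, and the measurability audit of Remark~\ref{rmk:measurable}) is precisely what that pointer unpacks to.
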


\begin{theorem}[Fundamental theorem of sample PAC learning, non-partite version, $0/1$-loss]
  Let $\Omega$ and $\Lambda$ be measurable spaces with $\Lambda$ finite and equipped with discrete $\sigma$-algebra, let
  $k\in\NN_+$, let $\cH\subseteq\cF_k(\Omega,\Lambda)$ be a $k$-ary hypothesis class, equipped with discrete $\sigma$-algebra.
  Suppose completion (almost) empirical risk minimizers exist (see Remark~\ref{rmk:ERM}). Then the following are equivalent:
  \begin{enumerate}[label={\arabic*.}, ref={(\arabic*)}]
  \item $\VCN_{k,k}(\cH) < \infty$.
  \item $\VCN_{k,k}(\cH^{\kpart}) < \infty$.
  \item $\cH$ has the sample uniform convergence with respect to the agnostic $0/1$-loss function.
  \item $\cH^{\kpart}$ has the sample uniform convergence with respect to the agnostic $0/1$-loss function.
  \item $\cH$ is adversarial symmetric sample completion $k$-PAC learnable with respect to the agnostic $0/1$-loss function.
  \item $\cH$ is adversarial sample completion $k$-PAC learnable with respect to the agnostic $0/1$-loss function.
  \item $\cH^{\kpart}$ is adversarial sample completion $k$-PAC learnable with respect to the agnostic $0/1$-loss function.
  \item $\cH$ is symmetric sample completion $k$-PAC learnable with respect to the $0/1$-loss function.
  \item $\cH$ is sample completion $k$-PAC learnable with respect to the $0/1$-loss function.
  \item $\cH^{\kpart}$ is sample completion $k$-PAC learnable with respect to the $0/1$-loss function.
  \item $\cH$ has the $m^k$-sample Haussler packing property with respect to the $0/1$-loss function..
  \item $\cH^{\kpart}$ has the $m^k$-sample Haussler packing property with respect to the $0/1$-loss function.
  \item $\VCN_{k,k}(\cH) = d < \infty$ and $\cH$ has the $h$-sample Haussler packing property with respect to the $0/1$-loss
    function for every $h(m) = \omega(m^{k - 1/(d+1)^{k-1}}\cdot\ln m)$.
  \item $\VCN_{k,k}(\cH^{\kpart}) = d < \infty$ and $\cH$ has the $h$-sample Haussler packing property with respect to the
    $0/1$-loss function for every $h(m) = \omega(m^{k - 1/(d+1)^{k-1}}\cdot\ln m)$.
  \item $\cH$ has the $m^k$-probabilistic Haussler packing property with respect to the $0/1$-loss function.
  \item $\cH^{\kpart}$ has the $m^k$-probabilistic Haussler packing property with respect to the $0/1$-loss function.
  \end{enumerate}
\end{theorem}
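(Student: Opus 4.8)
The plan is to obtain this statement as a direct specialization of Theorem~\ref{thm:SC} to $\ell\df\ell_{0/1}$, combined with the measurability reductions recorded in Remark~\ref{rmk:measurable}. First I would check that $\ell_{0/1}$, given by $\ell_{0/1}(x,y,y')\df\One[y\neq y']$ for $y,y'\in\Lambda^{S_k}$, satisfies every standing hypothesis of Theorem~\ref{thm:SC}: it is symmetric, since each $\sigma^*$ acts as a bijection of $\Lambda^{S_k}$ and hence $\One[\sigma^*(y)\neq\sigma^*(y')]=\One[y\neq y']$; it is separated, since $s(\ell_{0/1})=1>0$ and $\ell_{0/1}(x,y,y)=0$; and it is bounded, with $\lVert\ell_{0/1}\rVert_\infty=1$ (it is also metric, though Theorem~\ref{thm:SC} does not require that). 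The agnostic loss $\ell^{\ag}$ attached to $\ell_{0/1}$ in Theorem~\ref{thm:SC} is $\ell^{\ag}(H,x,y)=\One[H^*_k(x)\neq y]$, which is precisely the $k$-ary agnostic $0/1$-loss function with respect to $\cH$, so the items of Theorem~\ref{thm:SC} mentioning $\ell^{\ag}$ match word for word the items here mentioning the agnostic $0/1$-loss; by (the non-partite analogue of) Remark~\ref{rmk:advSC->agSC->SC}, this $\ell^{\ag}$ is moreover local and, being $\{0,1\}$-valued, bounded. Finally, by the last paragraph of Remark~\ref{rmk:ERM}, the finiteness of $\Lambda$ forces the partially erased empirical loss of $\ell_{0/1}$ (and of $\ell^{\ag}$) to take finitely many values on any fixed sample, so completion (exact) empirical risk minimizers exist; we nonetheless retain this as a hypothesis, exactly as in the statement.

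The only real gap is that Theorem~\ref{thm:SC} is stated for \emph{Borel} spaces $\Omega$, $\Lambda$ and for hypothesis classes carrying a $\sigma$-algebra satisfying the measurability axioms imported from \cite{CM24+,CM25+}, whereas here $\Omega$, $\Lambda$ are arbitrary measurable spaces and $\Lambda$, $\cH$ carry the discrete $\sigma$-algebra. I would close this gap using Remark~\ref{rmk:measurable}: with $\Lambda$ finite and discrete, $\ell_{0/1}$ and $\ell^{\ag}$ are measurable; with $\cH$ discrete, the evaluation map $\ev$ and all singletons are measurable; the one remaining axiom, universal measurability of projections (Footnote~\ref{ftn:measurability}), is used only to make the high-arity uniform convergence of \cite{CM24+} meaningful and is never invoked in the sample-completion arguments, while Borelness of $\Omega$ was needed in \cite{CM24+,CM25+} solely to invoke exchangeability theory for agnostic learning, which the present proofs avoid. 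Concretely, every probability or expectation appearing in the proof of Theorem~\ref{thm:SC} is taken either over a discrete erasure space $\Upsilon_m$ or over a product measure $\mu^m$ against a fixed integrand built from $\ev$ and $\ell_{0/1}$, so it remains well-defined for general measurable $\Omega$, $\Lambda$; one re-runs that proof verbatim with ``Borel space'' replaced by ``measurable space'' and the discrete $\sigma$-algebras in place, recovering all sixteen equivalences. (The partite $0/1$-loss version is obtained the same way from Theorem~\ref{thm:SCpart}.)

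I expect the main work to be organizational rather than mathematical: auditing the proof of Theorem~\ref{thm:SC} — especially the arrows that pass through the partization $\cH^{\kpart}$ and the coding-theoretic argument behind Proposition~\ref{prop:PHP->VCNkk}, where one samples random maps $[n]\to[m]$ and random linear codes over $\FF_2$ — to confirm that none of these steps secretly relies on Borelness beyond what Remark~\ref{rmk:measurable} licenses. Those steps are finitary on the sample grid: they manipulate $\FF_2$-valued patterns $H^*_m(x)$ together with Hamming distances between them, and the purely combinatorial $\VCN_{k,k}$-dimension, with $\Omega$ entering only as a reservoir of points to sample, so the audit should go through with no new ideas. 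The one thing to keep track of is the collection of side conditions on the arrows of Figure~\ref{fig:roadmap} (``$\ell$ metric'', ``$\ell$ separated and bounded'', ``$\ell$ bounded and local''), all of which $\ell_{0/1}$ and its agnostic counterpart meet, so every arrow in the roadmap is available and the cycle of implications closes.
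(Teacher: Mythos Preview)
Your proposal is correct and matches the paper's approach: the paper states this $0/1$-loss theorem as a quotable specialization of Theorem~\ref{thm:SC}, justified by the measurability discussion in Remark~\ref{rmk:measurable}, and gives no separate proof. Your audit of the hypotheses on $\ell_{0/1}$ (symmetric, separated, bounded) and of which measurability conditions are actually used in the sample-completion arguments is exactly the intended argument.
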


\section{Partite versus non-partite $\VCN_{k,k}$-dimension}

In this section, we prove that the partization operation (see Definition~\ref{def:kpart}) preserves the
$\VCN_{k,k}$-dimension. For this, the following lemma from~\cite{CM24+} will be useful:

\begin{lemma}[Partization basics~\protect{\cite[Lemma~8.1]{CM24+}}, simplified]\label{lem:kpartbasics}
  Let $\Omega$ and $\Lambda$ be non-empty Borel spaces and $k\in\NN_+$. Then the following hold:
  \begin{enumerate}
  \item\label{lem:kpartbasics:phi} For $\mu\in\Pr(\Omega)$ and $m\in\NN$, the function
    $\phi_m\colon\cE_m(\Omega)\to\cE_{\floor{m/k}}(\Omega^{\kpart})$ given by
    \begin{equation}\label{eq:kpartbasics:phi}
      (\phi_m(x)_i)_v
      \df
      x_{(i-1)\floor{m/k} + v}
      \qquad \left(i\in[k], v\in\Floor{\frac{m}{k}}\right)
    \end{equation}
    is measure-preserving with respect to $\mu^m$ and $(\mu^{\kpart})^{\floor{m/k}}$. Furthermore, if $m$ is divisible by $k$,
    then $\phi_m$ is a measure-isomorphism.

    Moreover, we have $\phi_k^{-1} = \iota_{\kpart}$, where $\iota_{\kpart}$ is given by~\eqref{eq:iotakpart:simplified}.
  \item\label{lem:kpartbasics:Phi} For $m\in\NN$, $F\in\cF_k(\Omega,\Lambda)$ and
    $\Phi_m\colon\Lambda^{([m])_k}\to(\Lambda^{S_k})^{[\floor{m/k}]^k}$ given by
    \begin{equation}\label{eq:kpartbasics:Phi}
      (\Phi_m(y)_\alpha)_\tau
      \df
      y_{\beta_\alpha\comp\tau}
      \qquad \left(\alpha\in \left[\floor{m/k}\right]^k, \tau\in S_k\right),
    \end{equation}
    where $\beta_\alpha\in([m])_k$ is given by
    \begin{equation}\label{eq:kpartbasics:betaalpha}
      \beta_\alpha(i)
      \df
      (i-1)\Floor{\frac{m}{k}} + \alpha(i)
      \qquad \left(\alpha\in \left[\Floor{\frac{m}{k}}\right]^k, i\in[k]\right),
    \end{equation}
    the diagram
    \begin{equation*}
      \begin{tikzcd}[column sep={2.5cm}]
        \cE_m(\Omega)
        \arrow[r, "F^*_m"]
        \arrow[d, "\phi_m"']
        &
        \Lambda^{([m])_k}
        \arrow[d, "\Phi_m"]
        \\
        \cE_{\floor{m/k}}(\Omega^{\kpart})
        \arrow[r, "(F^{\kpart})^*_{\floor{m/k}}"]
        &
        (\Lambda^{S_k})^{[\floor{m/k}]^k}
      \end{tikzcd}
    \end{equation*}
    commutes, where $\phi_m$ is given by~\eqref{eq:kpartbasics:phi}.
  \end{enumerate}
\end{lemma}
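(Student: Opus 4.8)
The plan is to verify each of the two assertions by directly unwinding the definitions from Definitions~\ref{def:part} and~\ref{def:nonpart}; there is no serious difficulty beyond careful index bookkeeping.

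\textbf{First assertion.} Recall $\cE_m(\Omega)=\Omega^{[m]}$ while $\cE_{\floor{m/k}}(\Omega^{\kpart})=\prod_{i=1}^k\Omega^{[\floor{m/k}]}$. The key observation is that the assignment $(i,v)\mapsto(i-1)\floor{m/k}+v$ is a bijection from $[k]\times[\floor{m/k}]$ onto the initial segment $[k\floor{m/k}]\subseteq[m]$. Hence $\phi_m$ factors as the coordinate projection $\Omega^{[m]}\to\Omega^{[k\floor{m/k}]}$ that forgets the last $m-k\floor{m/k}$ coordinates, followed by the reindexing isomorphism $\Omega^{[k\floor{m/k}]}\cong\prod_{i=1}^k\Omega^{[\floor{m/k}]}$ induced by that bijection. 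Both maps are measurable for the product $\sigma$-algebras; a coordinate projection sends a product measure to the product of the retained factors, and a bijective reindexing of coordinates sends a product measure to the product of the same factors in the new order. Composing, $\phi_m$ pushes $\mu^m$ forward to $\bigotimes_{i=1}^k\mu^{\floor{m/k}}=(\mu^{\kpart})^{\floor{m/k}}$. When $k\mid m$ the projection step is the identity, so $\phi_m$ is a bijective reindexing, hence a measure-isomorphism. Finally, for $m=k$ we have $\floor{m/k}=1$, so under the convention identifying $\cE_1(\Omega^{\kpart})$ with $\prod_{i=1}^k\Omega$ the defining formula collapses to $\phi_k(x)_i=x_i$, which is precisely the inverse of $\iota_{\kpart}$ as given by~\eqref{eq:iotakpart:simplified}.

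\textbf{Second assertion.} One first checks that $\Phi_m$ is well-defined: for $\alpha\in[\floor{m/k}]^k$ the map $\beta_\alpha$ of~\eqref{eq:kpartbasics:betaalpha} satisfies $\beta_\alpha(i)\in[(i-1)\floor{m/k}+1,\,i\floor{m/k}]$, and these $k$ intervals are pairwise disjoint, so $\beta_\alpha$ is injective, i.e.\ $\beta_\alpha\in([m])_k$, and therefore $\beta_\alpha\comp\tau\in([m])_k$ for every $\tau\in S_k$; thus $y_{\beta_\alpha\comp\tau}$ in~\eqref{eq:kpartbasics:Phi} makes sense even though $\alpha$ itself need not be injective. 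Commutativity $\Phi_m\comp F^*_m=(F^{\kpart})^*_{\floor{m/k}}\comp\phi_m$ is then checked pointwise. Fix $x\in\cE_m(\Omega)$, $\alpha\in[\floor{m/k}]^k$ and $\tau\in S_k$. Along the top-then-right path, $(\Phi_m(F^*_m(x))_\alpha)_\tau=(F^*_m(x))_{\beta_\alpha\comp\tau}=F\bigl((\beta_\alpha\comp\tau)^*(x)\bigr)$, and the $u$-th coordinate of $(\beta_\alpha\comp\tau)^*(x)$ is $x_{\beta_\alpha(\tau(u))}=x_{(\tau(u)-1)\floor{m/k}+\alpha(\tau(u))}$. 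Along the left-then-bottom path, using $F^{\kpart}(z)=F^*_k(\iota_{\kpart}(z))$ together with the partite definition of $\alpha^*$, one obtains $((F^{\kpart})^*_{\floor{m/k}}(\phi_m(x))_\alpha)_\tau=F\bigl((z_{\tau(1)},\ldots,z_{\tau(k)})\bigr)$, where $z_i\df(\phi_m(x)_i)_{\alpha(i)}=x_{(i-1)\floor{m/k}+\alpha(i)}$, so its $u$-th coordinate is $z_{\tau(u)}=x_{(\tau(u)-1)\floor{m/k}+\alpha(\tau(u))}$. The two tuples fed to $F$ agree coordinatewise, so the two paths agree.

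\textbf{Main obstacle.} There is no real obstacle; the only step that needs attention is the index bookkeeping in the second assertion. One must keep straight the two distinct maps both written $\alpha^*$ — the contravariant reindexing $\cE_V(\Omega)\to\cE_U(\Omega)$ of Definition~\ref{def:nonpart:alpha*} versus the partite map $\cE_V(\Omega^{\kpart})\to\cE_1(\Omega^{\kpart})$ of Definition~\ref{def:part:alpha*} — must track the composition $\beta_\alpha\comp\tau$ carefully, and must invoke the disjoint-intervals observation to see that $\beta_\alpha$ is injective (so that $\Phi_m$ is well-defined). Everything else is a routine appeal to the stability of product measures under coordinate projections and bijective reindexings.
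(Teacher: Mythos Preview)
Your proof is correct. The paper does not actually supply a proof of this lemma: it is stated as a citation from \cite[Lemma~8.1]{CM24+} and used as a black box, so there is nothing in the paper to compare against. Your direct verification by unwinding the definitions (factoring $\phi_m$ as projection-then-reindexing for the first assertion, and coordinatewise chasing of the two paths for the second) is exactly the kind of routine bookkeeping the authors presumably had in mind when they called it ``simplified'' and omitted the argument.
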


\begin{proposition}[$\VCN_{k,k}$-dimension invariance under partization]\label{prop:VCNkk}
  Let $\Omega$ and $\Lambda$ be non-empty Borel spaces, let $k\in\NN_+$ and let $\cH\subseteq\cF_k(\Omega,\Lambda)$ be a $k$-ary
  hypothesis class. Then $\VCN_{k,k}(\cH)=\VCN_{k,k}(\cH^{\kpart})$.
\end{proposition}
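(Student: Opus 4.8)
The plan is to prove, for each fixed $m\in\NN$, the equivalence $\VCN_{k,k}(\cH)\geq m\iff\VCN_{k,k}(\cH^{\kpart})\geq m$; the equality of the two dimensions then follows at once. So fix $m$ and set $n\df k\cdot m$, so that $\floor{n/k}=m$. Since $k\mid n$, Lemma~\ref{lem:kpartbasics}\ref{lem:kpartbasics:phi} gives a bijection (in fact a measure-isomorphism) $\phi_n\colon\cE_n(\Omega)\to\cE_m(\Omega^{\kpart})$, and I would use it to move between a witness $x\in\cE_n(\Omega)$ for the non-partite shattering and the point $\phi_n(x)\in\cE_m(\Omega^{\kpart})$ for the partite shattering of $\cH^{\kpart}$. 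In parallel I would set up a bijection of index sets: the non-partite $\VCN_{k,k}$ (Definition~\ref{def:SC:VCNkk}) asks a family indexed by $T_{k,m}$ to Natarajan-shatter $T_{k,m}$, while the partite $\VCN_{k,k}$ (Definition~\ref{def:SCpart:VCNkk}) applied to $\cH^{\kpart}$ asks a family indexed by $[m]^k$ to Natarajan-shatter $[m]^k$, and $|T_{k,m}|=m^k=|[m]^k|$. The goal is to match $\phi_n$ with this index bijection so that the two shattered families \emph{literally coincide}; since Natarajan-shattering (Definition~\ref{def:Nat}) of a family of $\Lambda^{S_k}$-valued functions depends only on the set of functions and transports along any bijection of the common domain (carry the witnesses $f_0,f_1$ along too), the equivalence then falls out.

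For the index bijection I would send $\alpha\in[m]^k$ (a function $[k]\to[m]$) to $U_\alpha\df\{(i-1)m+\alpha(i)\mid i\in[k]\}$. As $(i-1)m+\alpha(i)$ lies in the $i$-th block $[(i-1)m+1,im]$ of the equipartition of $[n]$ used to define $T_{k,m}$, each $U_\alpha$ is transversal, i.e. $U_\alpha\in T_{k,m}$, and these values increase with $i$, so the increasing enumeration $\iota_{U_\alpha,n}$ of $U_\alpha$ is precisely the injection $\beta_\alpha$ of~\eqref{eq:kpartbasics:betaalpha}; the assignment $\alpha\mapsto U_\alpha$ is clearly a bijection $[m]^k\to T_{k,m}$ (recover $\alpha(i)$ from the unique element of $U$ in the $i$-th block). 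The crux is then a one-line computation combining the commuting square of Lemma~\ref{lem:kpartbasics}\ref{lem:kpartbasics:Phi} with the explicit formula for $\Phi_n$ stated there: for $x\in\cE_n(\Omega)$, $H\in\cH$, $\alpha\in[m]^k$ and $\tau\in S_k$,
\begin{equation*}
  \bigl((H^{\kpart})^*_m(\phi_n(x))_\alpha\bigr)_\tau
  =\bigl(\Phi_n(H^*_n(x))_\alpha\bigr)_\tau
  =H^*_n(x)_{\beta_\alpha\comp\tau}
  =H^*_n(x)_{\iota_{U_\alpha,n}\comp\tau}
  =H_x(U_\alpha)_\tau,
\end{equation*}
the last equality being the definition~\eqref{eq:Hx} of $H_x$. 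Hence, under $\alpha\leftrightarrow U_\alpha$, the point $(H^{\kpart})^*_m(\phi_n(x))$ is identified with $H_x$; letting $H$ range over $\cH$ and using that $H\mapsto H^{\kpart}$ is a bijection $\cH\to\cH^{\kpart}$, the families $(\cH^{\kpart})_{\phi_n(x)}$ and $\cH_x$ get identified.

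From this identification it follows that, for each $x$, $\cH_x$ Natarajan-shatters $T_{k,m}$ if and only if $(\cH^{\kpart})_{\phi_n(x)}$ Natarajan-shatters $[m]^k$; since $\phi_n$ maps $\cE_n(\Omega)$ onto $\cE_m(\Omega^{\kpart})$, a witness for one notion exists precisely when a witness for the other exists, which is the desired equivalence. I do not foresee a genuine obstacle: the argument is essentially an unwinding of Definitions~\ref{def:SC:VCNkk} and~\ref{def:SCpart:VCNkk} against Lemma~\ref{lem:kpartbasics}, and the only step deserving care is the identity $\iota_{U_\alpha,n}=\beta_\alpha$ — i.e. that the way $T_{k,m}$ is built from transversals of the $k$ equal blocks is exactly the reshuffling performed by the partization maps $\phi$ and $\Phi$ — which is what makes the two shattering patterns coincide on the nose rather than merely being in a loose bijective correspondence.
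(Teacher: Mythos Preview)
Your proposal is correct and follows essentially the same approach as the paper: the same bijection $\alpha\mapsto U_\alpha$ between $[m]^k$ and $T_{k,m}$, the same key identity $\iota_{U_\alpha,n}=\beta_\alpha$, and the same computation via Lemma~\ref{lem:kpartbasics}\ref{lem:kpartbasics:Phi} showing $(H^{\kpart})^*_m(\phi_n(x))_\alpha=H_x(U_\alpha)$. The only difference is packaging: the paper writes out both inequalities separately with explicit witnesses $f_0,f_1,g_0,g_1$, whereas you (more economically) observe the identification of the two families once and transport Natarajan-shattering along the index bijection, using surjectivity of $\phi_n$ to match the existential quantifier on $x$.
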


\begin{proof}
  Let us first show that $\VCN_{k,k}(\cH)\leq\VCN_{k,k}(\cH^{\kpart})$. For this, we suppose $m\in\NN$ is such
  that $\VCN_{k,k}(\cH)\geq m$ and we will show that $\VCN_{k,k}(\cH^{\kpart})\geq m$.

  Since $\VCN_{k,k}(\cH)\geq m$, we know that there exists $x\in\cE_{k\cdot m}(\Omega)$ such that
  \begin{equation*}
    \cH_x \df \{H_x \mid H\in\cH\}\subseteq (\Lambda^{S_k})^{T_{k,m}}
  \end{equation*}
  Natarajan-shatters $T_{k,m}$, where for each $H\in\cH$, the function $H_x\colon T_{k,m}\to\Lambda^{S_k}$ is given by
  \begin{equation*}
    H_x(U)_\tau \df H^*_{k\cdot m}(x)_{\iota_{U,k\cdot m}\comp\tau}
    \qquad (U\in T_{k,m}, \tau\in S_k),
  \end{equation*}
  where $\iota_{U,k\cdot m}\colon[k]\to[k\cdot m]$ is the unique increasing function with $\im(\iota_{U,k\cdot m})=U$.

  This means that there exist functions $f_0,f_1\colon T_{k,m}\to\Lambda^{S_k}$ such that for every $U\in T_{k,m}$, we have
  $f_0(U)\neq f_1(U)$ and for every $C\subseteq T_{k,m}$, there exists $H_C\in\cH$ such that for every $U\in T_{k,m}$, we have
  $(H_C)_x(U) = f_{\One[U\in C]}(U)$.

  To show that $\VCN_{k,k}(\cH^{\kpart})\geq m$, it suffices to show that for the point $\phi_{k\cdot m}(x)\in\cE_m(\Omega^{\kpart})$,
  where $\phi_{k\cdot m}$ is given by~\eqref{eq:kpartbasics:phi}, the collection
  \begin{equation*}
    \cH^{\kpart}_{\phi_{k\cdot m}(x)} \df \{(H^{\kpart})^*_m(x) \mid H\in\cH\} \subseteq (\Lambda^{S_k})^{[m]^k}
  \end{equation*}
  Natarajan-shatters $[m]^k$.

  Note that there exists a one-to-one correspondence between $T_{k,m}$ and $[m]^k$ in which $U\in T_{k,m}$ corresponds to
  $\alpha_U\in[m]^k$ given by $\alpha_U(i)\df \iota_{U,k\cdot m}(i) - (i-1)m$ (in plain English, if we order the elements of $U\in
  T_{k,m}$ in increasing manner, we know that the first element is one of $1,\ldots,m$, the second is one of $m+1,\ldots,2m$,
  the third is one of $2m+1,\ldots,3m$, and so on; each of these is one of $m$ possibilities and $\alpha_U$ simply specifies
  each of the $m$ possibilities for each element of $U$). Given $\alpha\in [m]^k$, we denote by $U_\alpha$ the unique element of
  $T_{k,m}$ corresponding to it, i.e., the unique element such that $\alpha_{U_\alpha}=\alpha$; in formulas, it is given by
  \begin{align*}
    U_\alpha
    & \df
    \{\alpha(i) + (i-1)m \mid i\in[k]\}
    \\
    & =
    \{\alpha(1), \alpha(2) + m, \alpha(3) + 2m,\ldots,\alpha(k) +(k-1)m\}
    =
    \im(\beta_\alpha),
  \end{align*}
  where $\beta_\alpha$ is given by~\eqref{eq:kpartbasics:betaalpha}. Since clearly $\beta_\alpha$ is increasing, it follows that
  $\beta_\alpha = \iota_{U_\alpha,k\cdot m}$.

  Define the functions $g_0,g_1\colon [m]^k\to\Lambda^{S_k}$ by $g_i(\alpha)\df f_i(U_\alpha)$. It is clear that
  $g_0(\alpha)\neq g_1(\alpha)$ for every $\alpha\in[m]^k$.

  We claim that for every $D\subseteq [m]^k$ and every $\alpha\in[m]^k$, we have $(H_{C_D}^{\kpart})^*_m(\phi(x))_\alpha =
  g_{\One[\alpha\in D]}(\alpha)$, where
  \begin{equation*}
    C_D = \{U_\alpha \mid \alpha\in D\}.
  \end{equation*}
  Note that once we show this, then $\cH^{\kpart}_{\phi_{k\cdot m}(x)}$ Natarajan-shatters $[m]^k$ as desired.

  But indeed, note that for every $\tau\in S_k$, by Lemma~\ref{lem:kpartbasics}\ref{lem:kpartbasics:Phi}, we have
  \begin{align*}
    \Bigl((H_{C_D}^{\kpart})^*_m\bigl(\phi_{k\cdot m}(x)\bigr)_\alpha\Bigr)_\tau
    & =
    \Bigl(\Phi_{k\cdot m}\bigl((H_{C_D})^*_{k\cdot m}(x)\bigr)_\alpha\Bigr)_\tau
    =
    (H_{C_D})^*_{k\cdot m}(x)_{\beta_\alpha\comp\tau}
    \\
    & =
    (H_{C_D})^*_{k\cdot m}(x)_{\iota_{U_\alpha,k\cdot m}\comp\tau}
    =
    (H_{C_D})_x(U_\alpha)_\tau
    \\
    & =
    f_{\One[U_\alpha\in C_D]}(U_\alpha)_\tau
    =
    g_{\One[\alpha\in D]}(U)_\tau,
  \end{align*}
  as desired. Therefore $\VCN_{k,k}(\cH)\leq\VCN_{k,k}(\cH^{\kpart})$.

  \medskip

  The proof of the other inequality is obtained essentially by reading the other proof backwards. For completeness, we make it
  explicit here: we suppose $m\in\NN$ is such that $\VCN_{k,k}(\cH^{\kpart})\geq m$ and we will show that $\VCN_{k,k}(\cH)\geq
  m$.

  Since $\VCN_{k,k}(\cH^{\kpart})\geq m$, we know that there exists $x\in\cE_m(\Omega^{\kpart})$ such that
  \begin{equation*}
    \cH^{\kpart}_x \df \{(H^{\kpart})^*_m(x) \mid H\in\cH\} \subseteq (\Lambda^{S_k})^{[m]^k}
  \end{equation*}
  Natarajan-shatters $[m]^k$. In turn, this means that there exist functions $g_0,g_1\colon [m]^k\to\Lambda^{S_k}$ such that for
  every $\alpha\in[m]^k$, we have $g_0(\alpha)\neq g_1(\alpha)$ and for every $D\subseteq [m]^k$, there exists $H_D\in\cH$ such
  that for every $\alpha\in[m]^k$, we have $((H_D^{\kpart})^*_m(x))_\alpha = g_{\One[\alpha\in D]}(\alpha)$.

  Define the functions $f_0,f_1\colon T_{k,m}\to\Lambda^{S_k}$ by $f_i(U)\df g_i(\alpha_U)$. It is clear that $g_0(U)\neq
  g_1(U)$ for every $U\in T_{k,m}$.

  Since $k\cdot m$ is divisible by $m$, Lemma~\ref{lem:kpartbasics}\ref{lem:kpartbasics:phi}, we know that $\phi_{k\cdot m}$ is a
  bijection. Our goal is to show that $\cH_{\phi_{k\cdot m}^{-1}(x)}$ Natarajan-shatters $T_{k,m}$. For this, it suffices to show that for
  every $C\subseteq T_{k,m}$ and every $U\in T_{k,m}$, we have $(H_{D_C})_{\phi_{k\cdot m}^{-1}(x)}(U) = f_{\One[U\in C]}(U)$, where
  \begin{equation*}
    D_C \df \{\alpha_U \mid U\in C\}.
  \end{equation*}
  But indeed, by Lemma~\ref{lem:kpartbasics}\ref{lem:kpartbasics:Phi}, for every $\tau\in S_k$, we have
  \begin{align*}
    (H_{D_C})_{\phi_{k\cdot m}^{-1}(x)}(U)_\tau
    & =
    (H_{D_C})^*_{k\cdot m}\bigl(\phi_{k\cdot m}^{-1}(x)\bigr)_{\iota_{U,k\cdot m}\comp\tau}
    =
    (H_{D_C})^*_{k\cdot m}\bigl(\phi_{k\cdot m}^{-1}(x)\bigr)_{\beta_{\alpha_U}\comp\tau}
    \\
    & =
    \biggl(\Phi_{k\cdot m}\Bigl((H_{D_C})^*_{k\cdot m}\bigl(\phi_{k\cdot m}^{-1}(x)\bigr)\Bigr)_{\alpha_U}\biggr)_\tau
    =
    \bigl((H_{D_C}^{\kpart})^*_m(x)_{\alpha_U}\bigr)_\tau
    \\
    & =
    g_{\One[\alpha_U\in D_C]}(\alpha_U)_\tau
    =
    f_{\One[U\in C]}(U)_\tau,
  \end{align*}
  as desired. Therefore $\VCN_{k,k}(\cH)\geq\VCN_{k,k}(\cH^{\kpart})$.
\end{proof}

\section{$\VCN_{k,k}$-dimension controls growth function}
\label{sec:growth}

In this section, we show that finite $\VCN_{k,k}$-dimension is responsible for making the number of possible patterns that a
hypothesis class generates on a point $x\in\cE_m(\Omega)$ to be much lower than expected (Lemma~\ref{lem:VCNkk->kgrowth}). This
can be seen as a $k$-ary analogue of the Sauer--Shelah--Perles Lemma (Lemma~\ref{lem:SSP} below); in fact, the proof itself will
use the classical Sauer--Shelah--Perles Lemma. As we will see in Proposition~\ref{prop:VCNkk->SHP}, the bound on the growth
function is so strong that it will trivially imply the $h$-sample Haussler packing property for every $h(m) = \omega(m^{k -
  1/(\VCN_{k,k}(\cH)+1)^{k-1}}\cdot\ln m)$.

\begin{definition}[Growth function]\label{def:growth}
  For a family $\cF\subseteq Y^X$ of functions $X\to Y$, the \emph{growth function} of $\cF$ is defined as
  \begin{equation*}
    \gamma_\cF(m) \df \sup\{\lvert\cF_V\rvert \mid V\subseteq X \land\lvert V\rvert\leq m\},
  \end{equation*}
  where
  \begin{equation*}
    \cF_V \df \{F\rest_V \mid F\in\cF\}.
  \end{equation*}
  In plain English, $\gamma_\cF(m)$ is the maximum number of functions that one can obtain by restricting all functions in $\cF$
  to the same set $V$ of size at most $m$. When $X$ is infinite, one can clearly consider only sets of size exactly $m$.
\end{definition}

We now recall the Sauer--Shelah--Perles Lemma\footnote{Appropriate naming of this lemma is apparently complicated: it
has been discovered independently by Vapnik--Chervonenkis~\cite{VC71}, Sauer~\cite{Sau72}, Shelah~\cite{She72}, who also gives
credit to Perles. The version we use here is due to Natarajan~\cite{Nat89} as we will need $Y$ finite instead of binary.}. Since
the proof of this is short, we include it in Appendix~\ref{sec:lit}.
\begin{restatable}[Vapnik--Chervonenkis~\cite{VC71}, Sauer~\cite{Sau72}, Shelah~\cite{She72}, Perles~\cite{Per72}, Natarajan~\cite{Nat89}]%
  {lemma}{SSPlemma}
  \label{lem:SSP}
  If $\cF\subseteq Y^X$ has finite Natarajan-dimension and $Y$ is finite, then
  \begin{equation*}
    \gamma_\cF(m) \leq (m+1)^{\Nat(\cF)}\cdot\binom{\lvert Y\rvert}{2}^{\Nat(\cF)}.
  \end{equation*}
\end{restatable}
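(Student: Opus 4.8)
The plan is to prove the stated bound by induction on $m=|V|$, taking as inductive hypothesis the full statement of the lemma for families on sets of size at most $m$. Since it suffices to bound $|\cF_V|$ for each fixed finite $V$, I would work with $\cF_V\subseteq Y^V$ directly and reduce to the case $|V|=m$ (the case $|V|<m$ being subsumed by the statement for $m-1$). Write $d\df\Nat(\cF)$, finite by hypothesis. If $|Y|\le 1$ then $\cF_V$ has at most one element, and if $d=0$ then again $|\cF_V|\le 1$, since two distinct elements of $\cF_V$ would differ at some point, which that pair would Natarajan-shatter; in both cases the asserted bound holds, and the base case $m=0$ is trivial. So I may assume $d\ge 1$ and $|Y|\ge 2$.

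For the inductive step, fix $v_0\in V$ and put $V'\df V\setminus\{v_0\}$. For $g\in\cF_{V'}$ set $Y_g\df\{F(v_0)\mid F\in\cF_V,\ F\rest_{V'}=g\}$; restriction to $V'$ is a surjection $\cF_V\to\cF_{V'}$ whose fiber over $g$ has exactly $|Y_g|$ elements, so $|\cF_V|=|\cF_{V'}|+\sum_{g\in\cF_{V'}}(|Y_g|-1)$. For each unordered pair $p=\{a,b\}\in\binom{Y}{2}$ define $\cG_p\df\{g\in\cF_{V'}\mid\{a,b\}\subseteq Y_g\}\subseteq Y^{V'}$. The double count $\sum_{p\in\binom{Y}{2}}|\cG_p|=\sum_{g\in\cF_{V'}}\binom{|Y_g|}{2}\ge\sum_{g\in\cF_{V'}}(|Y_g|-1)$ (valid since $\binom{t}{2}\ge t-1$ for $t\ge 1$) then gives $|\cF_V|\le|\cF_{V'}|+\sum_{p\in\binom{Y}{2}}|\cG_p|$.

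The one substantive point --- and the step I expect to carry the argument --- is that $\Nat(\cG_p)\le d-1$ for every pair $p=\{a,b\}$. I would prove this by showing that if $\cG_p$ Natarajan-shatters $W\subseteq V'$ with witnesses $f_0,f_1\colon W\to Y$, then $\cF_V$ Natarajan-shatters the strictly larger set $W\cup\{v_0\}$: extend the witnesses by $f_0(v_0)\df a$ and $f_1(v_0)\df b$ (legitimate since $a\ne b$), and for $U\subseteq W\cup\{v_0\}$ first choose $g\in\cG_p$ realizing the pattern $f_{\One[\,\cdot\,\in U\cap W]}$ on $W$, then choose, using the definition of $\cG_p$, an extension $F\in\cF_V$ of $g$ with $F(v_0)$ equal to $a$ if $v_0\notin U$ and to $b$ if $v_0\in U$; this $F$ realizes $f_{\One[\,\cdot\,\in U]}$ on all of $W\cup\{v_0\}$. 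Granting this, applying the inductive hypothesis at size $m-1$ to $\cF_{V'}$ (Natarajan dimension $\le d$) and to each $\cG_p$ (Natarajan dimension $\le d-1$), and using that $j\mapsto m^{j}\binom{|Y|}{2}^{j}$ is nondecreasing for $m\ge 1$, $|Y|\ge 2$, yields
\[
  |\cF_V|\;\le\; m^{d}\binom{|Y|}{2}^{d}\;+\;\binom{|Y|}{2}\cdot m^{d-1}\binom{|Y|}{2}^{d-1}\;=\;(m+1)\,m^{d-1}\binom{|Y|}{2}^{d}\;\le\;(m+1)^{d}\binom{|Y|}{2}^{d},
\]
which closes the induction. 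Everything except the dimension-drop claim is bookkeeping; that claim --- lifting a Natarajan-shattered set through one more coordinate together with a chosen color pair --- is the only place where the two-witness-function structure of Natarajan shattering is genuinely used, and is where I would be most careful.
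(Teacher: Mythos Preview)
Your proof is correct and follows essentially the same approach as the paper's: induction on $m$, removing a single point $v_0$, defining for each pair $p=\{a,b\}\in\binom{Y}{2}$ the family of restrictions extending both ways (your $\cG_p$ is exactly the paper's $\cF^{\{y_0,y_1\}}$), establishing the dimension drop $\Nat(\cG_p)\le d-1$ by adjoining $v_0$ to any shattered set, and closing with the identical arithmetic $m^d\binom{|Y|}{2}^d+\binom{|Y|}{2}\cdot m^{d-1}\binom{|Y|}{2}^{d-1}\le(m+1)^d\binom{|Y|}{2}^d$. Your double-counting via the fibers $Y_g$ is a slightly more explicit justification of the key inequality $|\cF_V|\le|\cF_{V'}|+\sum_p|\cG_p|$, which the paper simply asserts.
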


\begin{definition}[$k$-growth function]\label{def:kgrowth}
  Let $k\in\NN_+$, let $\Omega=(\Omega_i)_{i=1}^k$ be a $k$-tuple of non-empty Borel spaces (a single non-empty Borel space,
  respectively), let $\Lambda$ be a non-empty Borel space and let $\cH\subseteq\cF_k(\Omega,\Lambda)$ be a $k$-partite ($k$-ary,
  respectively) hypothesis class.

  The \emph{$k$-growth function}\footnote{This notion should not be confused either with the growth function of
  Definition~\ref{def:growth} nor with the growth function $\tau^k_\cH$ in~\cite[Definition~9.4]{CM24+}, which is
  controlled by the $\VCN_k$-dimension instead.} of $\cH$ is defined as
  \begin{equation*}
    \gamma^k_\cH(m) \df \sup_{x\in\cE_m(\Omega)} \lvert\{F^*_t(x) \mid F\in\cH\}\rvert,
  \end{equation*}
  that is, it is the maximum amount of different patterns in $\Lambda^{[m]^k}$ ($\Lambda^{([m])_k}$, respectively) that one can
  get as $F^*_m(x)$ when one chooses a fixed $x\in\cE_m(\Omega)$. (Note that since the definition of $\cE_m(\Omega)$ allows for
  repetition of coordinates, we do not need to consider $\cE_t(\Omega)$ for all $t\leq m$.) When $k=1$, this concept matches the
  growth function $\gamma_\cH$ of Definition~\ref{def:growth}.
\end{definition}

To prove the high-arity analogue of Lemma~\ref{lem:SSP}, we will leverage a classical result in combinatorics on extremal
numbers of (partite or not) $k$-hypergraphs avoiding a (non-induced) complete $k$-partite hypergraph $K_{t,\ldots,t}^{(k)}$. For
this, we set up some notation.

\begin{definition}[Extremal number]\label{def:extremal}
  Let $k,t\in\NN_+$ and $n\in\NN$. The \emph{(non-partite) extremal number} $\ex(n,K_{t,\ldots,t}^{(k)})$ is the maximum number
  of edges of a $k$-hypergraph $G$ with $\lvert G\rvert = n$ and without any non-induced copies of $K_{t,\ldots,t}^{(k)}$, i.e.,
  a $k$-hypergraph $G$ in which there \emph{does not} exist a sequence $(v^i_j \mid i\in[k], j\in[t])$ of distinct vertices in
  $G$ such that for every $f\in [t]^k$, we have $\{v^1_{f(1)},\ldots,v^k_{f(k)}\}\in E(G)$.
\end{definition}

\begin{definition}[Partite extremal number]\label{def:partitekhypergraph}
  Let $k,t\in\NN_+$.
  \begin{enumdef}
  \item A \emph{$k$-partite $k$-hypergraph} (with a given $k$-partition) is a tuple $G=(V_1,\ldots,V_k,E)$, where
    $V_1,\ldots,V_k$ are pairwise disjoint sets and $E\subseteq V_1\times\cdots\times V_k$. We write
    \begin{align*}
      V_i(G) & \df V_i, &
      E(G) & \df E, &
      v_i(G) & \df \lvert V_i(G)\rvert, &
      e(G) & \df \lvert E(G)\rvert.
    \end{align*}
    We also let $V(G)\df\bigcup_{i=1}^k V_i(G)$.
  \item The \emph{complete $k$-partite hypergraph of order $t$} is the $k$-partite $k$--hypergraph $K_{t,\ldots,t}^{(k)}$ with
    each vertex set of size $t$ and all possible edges. Formally, we have
    \begin{align*}
      V_i(K_{t,\ldots,t}^{(k)}) & \df \{i\}\times[t], &
      E(K_{t,\ldots,t}^{(k)}) & \df \prod_{i=1}^k (\{i\}\times[t]).
    \end{align*}
    (The $\{i\}$ is just to ensure that the vertex sets are pairwise disjoint as per required by the formal definition.)

    For $k=2$, we use the more common notation $K_{t,t}\df K_{t,\ldots,t}^{(2)}$ and for $k=1$, we use the notation
    $K_t^{(1)}\df K_{t,\ldots,t}^{(1)}$.
  \item A \emph{(non-induced, labeled, injective) copy} of a $k$-partite $k$-hypergraph $H$ in a $k$-partite $k$-hypergraph
    $G$ is an injective function $f\colon V(H)\to V(G)$ such that
    \begin{align*}
      \forall i\in[k], f\bigl(V_i(H)\bigr)\subseteq V_i(G) & &
      f\bigl(E(H)\bigr)\subseteq E(G),
    \end{align*}
    i.e., $f$ respects the $k$-partition and maps edges to edges.
  \item\label{def:partitekhypergraph:extremal} For $n\in\NN$, the \emph{partite extremal number}
    $\ex_{\kpart}(n,K_{t,\ldots,t}^{(k)})$ is the maximum number of edges of a $k$-partite $k$-hypergraph $G$ with $v_i(G)=n$
    for every $i\in[k]$ and without any copies of $K_{t,\ldots,t}^{(k)}$.
  \end{enumdef}
\end{definition}

The following two theorems are versions of classical results in extremal combinatorics that hold for every $n\in\NN$; for a
modern proof and asymptotic versions with better coefficients (Theorems~\ref{thm:Erdos} and~\ref{thm:Erdosasymp:partite}), see
Appendix~\ref{sec:lit}.

\begin{restatable}[%
    \Kovari--\Sos--\Turan~\protect{\cite{KST54}},
    \Erdos, partite version of~\protect{\cite[Theorem~1]{Erd64}}]%
  {theorem}{KSTErdospartite}
  \label{thm:KSTErdos:partite}
  For every $n\in\NN$ and every $k,t\in\NN_+$, we have
  \begin{equation}\label{eq:Erdos}
    \ex_{\kpart}(n,K_{t,\ldots,t}^{(k)})
    \leq
    \begin{dcases*}
      2\cdot k\cdot n^{k - 1/t^{k-1}}, & if $k\geq 2$,\\
      t-1, & if $k=1$.
    \end{dcases*}
  \end{equation}
\end{restatable}

\begin{restatable}[%
    \Kovari--\Sos--\Turan, non-partite version of~\protect{\cite{KST54}},
    \Erdos, essentially~\protect{\cite[Theorem~1]{Erd64}}]%
  {theorem}{KSTErdos}
  \label{thm:KSTErdos}
  For every $n\in\NN$ and every $k,t\in\NN_+$, we have
  \begin{equation}\label{eq:KSTErdos}
    \ex(n,K_{t,\ldots,t}^{(k)})
    \leq
    \begin{dcases*}
      \frac{2\cdot n^{k - 1/t^{k-1}}}{(k-1)!}, & if $k\geq 2$,\\
      t-1, & if $k=1$.
    \end{dcases*}
  \end{equation}
\end{restatable}

\begin{lemma}[$\VCN_{k,k}$-dimension controls full growth function]\label{lem:VCNkk->kgrowth}
  Let $k\in\NN_+$, let $\Omega=(\Omega_i)_{i=1}^k$ be a $k$-tuple of non-empty Borel space (a single non-empty Borel space,
  respectively), let $\Lambda$ be a non-empty Borel space and let $\cH\subseteq\cF_k(\Omega,\Lambda)$ be a $k$-partite ($k$-ary,
  respectively) hypothesis class with finite $\VCN_{k,k}$-dimension. Let also $m\in\NN$ and in the non-partite case, let
  $\alpha$ be an order choice for $[m]$.

  For $x\in\cE_m(\Omega)$, define
  \begin{align*}
    \cH_x & \df \{H^*_m(x) \mid H\in\cH\} \subseteq \Lambda^{[m]^k}
    \intertext{in the partite case and}
    \cH_x^\alpha
    & \df
    \Bigl\{b_\alpha\bigl(H^*_m(x)\bigr) \mid H\in\cH\Bigr\}
    \subseteq
    (\Lambda^{S_k})^{\binom{[m]}{k}}
  \end{align*}
  in the non-partite case. Then
  \begin{equation}\label{eq:VCNkk->kgrowth:Nat}
    \begin{aligned}
      \Nat(\cH_x)
      & \leq
      \ex_{\kpart}(m, K_{\VCN_{k,k}(\cH)+1,\ldots,\VCN_{k,k}(\cH)+1}^{(k)}),
      \\
      \Nat(\cH_x^\alpha)
      & \leq
      \ex(m, K_{\VCN_{k,k}(\cH)+1,\ldots,\VCN_{k,k}(\cH)+1}^{(k)}).
    \end{aligned}
  \end{equation}

  In particular, we have
  \begin{equation}\label{eq:VCNkk->kgrowth:kgrowth}
    \begin{aligned}
      \gamma^k_\cH(m)
      & \leq
      \begin{dcases*}
        \begin{multlined}[b]
          (m^k+1)^{\ex_{\kpart}(m,K_{\VCN_{k,k}(\cH)+1,\ldots,\VCN_{k,k}(\cH)+1}^{(k)})}
          \\
          \cdot\binom{\lvert\Lambda\rvert}{2}^{\ex_{\kpart}(m,K_{\VCN_{k,k}(\cH)+1,\ldots,\VCN_{k,k}(\cH)+1}^{(k)})},
        \end{multlined}
        & in the partite case,
        \\
        \begin{multlined}[b]
          \left(\binom{m}{k}+1\right)^{\ex(m,K_{\VCN_{k,k}(\cH)+1,\ldots,\VCN_{k,k}(\cH)+1}^{(k)})}
          \\
          \cdot\binom{\lvert\Lambda\rvert^{k!}}{2}^{\ex(m,K_{\VCN_{k,k}(\cH)+1,\ldots,\VCN_{k,k}(\cH)+1}^{(k)})},
        \end{multlined}
        & in the non-partite case,
      \end{dcases*}
      \\
      & \leq
      \begin{dcases*}
        \begin{multlined}[b]
          \exp\Biggl(
          2\cdot k
          \cdot m^{k-1/(\VCN_{k,k}(\cH)+1)^{k-1}}
          \\
          \cdot\left(\ln(m^k+1) + \ln\binom{\lvert\Lambda\rvert}{2}\right)
          \Biggr),
        \end{multlined}
        & in the partite case if $k\geq 2$,
        \\
        \begin{multlined}[b]
          \exp\Biggl(
          \frac{2\cdot m^{k-1/(\VCN_{k,k}(\cH)+1)^{k-1}}}{(k-1)!}
          \\
          \cdot\left(\ln\left(\binom{m}{k}+1\right) + \ln\binom{\lvert\Lambda\rvert^{k!}}{2}\right)
          \Biggr),
        \end{multlined}
        & in the non-partite case if $k\geq 2$,
        \\
        (m+1)^{\VCN_{k,k}(\cH)}\cdot\binom{\lvert\Lambda\rvert}{2}^{\VCN_{k,k}(\cH)},
        & if $k=1$.
      \end{dcases*}
    \end{aligned}
  \end{equation}
\end{lemma}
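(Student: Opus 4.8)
The plan is to first establish the Natarajan-dimension bounds in \eqref{eq:VCNkk->kgrowth:Nat} and then obtain \eqref{eq:VCNkk->kgrowth:kgrowth} by feeding those into the Sauer--Shelah--Perles Lemma (Lemma~\ref{lem:SSP}) and the extremal estimates of Theorems~\ref{thm:KSTErdos:partite} and~\ref{thm:KSTErdos}. Write $d=\VCN_{k,k}(\cH)<\infty$. The conceptual core is a single principle, to be proved by contradiction: if $\cH_x$ (respectively $\cH_x^\alpha$) Natarajan-shatters a set $S$ of $k$-tuples and $S$, viewed as a $k$-(partite-)hypergraph, contains a copy of $K_{d+1,\dots,d+1}^{(k)}$, then restricting $x$ along the vertices of that copy produces a point witnessing $\VCN_{k,k}(\cH)\ge d+1$, which is impossible; hence $S$ is $K_{d+1,\dots,d+1}^{(k)}$-free and $\lvert S\rvert$ is bounded by the relevant extremal number.

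For the partite case, fix $x\in\cE_m(\Omega)$ and a set $S\subseteq[m]^k$ Natarajan-shattered by $\cH_x$, witnessed by $f_0,f_1\colon S\to\Lambda$ (taking different values everywhere) and hypotheses $(H_C)_{C\subseteq S}$ with $H_C{}^*_m(x)_\sigma=f_{\One[\sigma\in C]}(\sigma)$ on $S$. View $S$ as the edge set of a $k$-partite $k$-hypergraph $G$ with $k$ disjoint copies of $[m]$ as parts, so $e(G)=\lvert S\rvert$ and $v_i(G)=m$. If $G$ contained a copy of $K_{d+1,\dots,d+1}^{(k)}$, its $i$-th part would give an injection $\gamma_i\colon[d+1]\to[m]$ with $(\gamma_1(\beta_1),\dots,\gamma_k(\beta_k))\in S$ for all $\beta\in[d+1]^k$. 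Put $\gamma=(\gamma_i)_{i=1}^k$ and $x'=\gamma^\#(x)\in\cE_{d+1}(\Omega)$ as in Definition~\ref{def:part:alphasharp}; the equivariance diagram \eqref{eq:F*Vequiv:part} gives $H{}^*_{d+1}(x')_\beta=H{}^*_m(x)_{\gamma_1(\beta_1),\dots,\gamma_k(\beta_k)}$ for every $H\in\cH$. Since $\beta\mapsto(\gamma_1(\beta_1),\dots,\gamma_k(\beta_k))$ is an injection $[d+1]^k\hookrightarrow S$, pushing the shattering data forward along it — set $g_i(\beta)=f_i(\gamma_1(\beta_1),\dots,\gamma_k(\beta_k))$ and, for $D\subseteq[d+1]^k$, use $H_{C_D}$ with $C_D=\{(\gamma_1(\beta_1),\dots,\gamma_k(\beta_k)):\beta\in D\}$ — shows $\cH_{x'}$ Natarajan-shatters $[d+1]^k$, contradicting $d=\VCN_{k,k}(\cH)$. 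Hence $G$ is $K_{d+1,\dots,d+1}^{(k)}$-free, so $\lvert S\rvert\le\ex_{\kpart}(m,K_{d+1,\dots,d+1}^{(k)})$, and supremizing over $S$ and $x$ yields the partite half of \eqref{eq:VCNkk->kgrowth:Nat}.

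The non-partite case runs parallel but must track the order choice $\alpha$ and the transversal set $T_{k,d+1}$ of \eqref{eq:Tkm}. Given $x\in\cE_m(\Omega)$ and $S\subseteq\binom{[m]}{k}$ Natarajan-shattered by $\cH_x^\alpha$, view $S$ as the edge set of a $k$-hypergraph $G$ on vertex set $[m]$. A copy of $K_{d+1,\dots,d+1}^{(k)}$ in $G$ gives $k(d+1)$ distinct vertices $v^i_j\in[m]$ with $\{v^1_{f(1)},\dots,v^k_{f(k)}\}\in S$ for all $f\in[d+1]^k$; these assemble into one injection $\gamma\colon[k(d+1)]\to[m]$, $\gamma((i-1)(d+1)+j)=v^i_j$. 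Put $x'=\gamma^*(x)\in\cE_{k(d+1)}(\Omega)$. Identifying $T_{k,d+1}$ with $[d+1]^k$ via $f\mapsto U_f=\{(i-1)(d+1)+f(i):i\in[k]\}$ and writing $W_f=\{v^i_{f(i)}:i\in[k]\}\in S$, the non-partite equivariance \eqref{eq:F*Vequiv} together with \eqref{eq:Hx} produces, for each $f$, an explicit $\sigma_f\in S_k$ (comparing the order $\gamma\comp\iota_{U_f,k(d+1)}$ on $W_f$ with the chosen order $\alpha_{W_f}$) such that $H_{x'}(U_f)=\sigma_f^*\bigl(b_\alpha(H{}^*_m(x))_{W_f}\bigr)$ for every $H\in\cH$. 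Twisting $f_0,f_1$ by the bijections $\sigma_f^*$ of $\Lambda^{S_k}$, and using that $f\mapsto W_f$ is injective because the groups $\{v^i_1,\dots,v^i_{d+1}\}$ are pairwise disjoint, shows $\cH_{x'}$ (in the sense of \eqref{eq:cHx}) Natarajan-shatters $T_{k,d+1}$, again contradicting $d=\VCN_{k,k}(\cH)$; hence $\lvert S\rvert\le\ex(m,K_{d+1,\dots,d+1}^{(k)})$.

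Finally, \eqref{eq:VCNkk->kgrowth:kgrowth} follows mechanically. If $\lvert\Lambda\rvert\le1$ then $\cH_x$ (resp.\ $\cH_x^\alpha$) has at most one element and there is nothing to prove, so assume $\lvert\Lambda\rvert\ge2$. In the partite case $\cH_x\subseteq\Lambda^{[m]^k}$ is a family of functions on a domain of size $m^k$, so Lemma~\ref{lem:SSP} gives $\lvert\cH_x\rvert\le(m^k+1)^{\Nat(\cH_x)}\binom{\lvert\Lambda\rvert}{2}^{\Nat(\cH_x)}$; since this expression is non-decreasing in the exponent (both bases being $\ge1$), substituting \eqref{eq:VCNkk->kgrowth:Nat} and taking the supremum over $x$ gives the first partite bound. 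The non-partite case is identical with $\cH_x^\alpha\subseteq(\Lambda^{S_k})^{\binom{[m]}{k}}$ (domain size $\binom{m}{k}$, alphabet size $\lvert\Lambda\rvert^{k!}$), using that $b_\alpha$ is a bijection so $\gamma^k_\cH(m)=\sup_x\lvert\cH_x^\alpha\rvert$. The asymptotic forms come from writing $a^e=\exp(e\ln a)$ and inserting the $k\ge2$ clauses of Theorems~\ref{thm:KSTErdos:partite} and~\ref{thm:KSTErdos} with $t=d+1$, $n=m$; for $k=1$ the partite and non-partite settings coincide and the $k=1$ clauses give $\ex=\ex_{\kpart}=d$, yielding the last line. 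I expect the only genuinely delicate point to be the non-partite Natarajan bound, where the bookkeeping of transversals, order choices, and the permutations $\sigma_f$ must be kept consistent; the rest is a transcription of the partite argument or routine arithmetic.
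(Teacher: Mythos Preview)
Your proposal is correct and follows essentially the same approach as the paper: view the Natarajan-shattered set as the edge set of a (partite) $k$-hypergraph, argue by contradiction that a copy of $K_{d+1,\dots,d+1}^{(k)}$ would, via the equivariance of $H\mapsto H^*_m$, produce a witness to $\VCN_{k,k}(\cH)\ge d+1$, then apply the extremal bounds and the Sauer--Shelah--Perles lemma. The only cosmetic differences are that the paper deduces \eqref{eq:VCNkk->kgrowth:kgrowth} first (assuming \eqref{eq:VCNkk->kgrowth:Nat}) and then proves \eqref{eq:VCNkk->kgrowth:Nat}, and in the non-partite case writes out the permutation you call $\sigma_f$ explicitly as $\alpha_{\beta(U)}^{-1}\comp\beta\comp\iota_{U,kt}$.
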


\begin{proof}
  First we claim that the first inequality of~\eqref{eq:VCNkk->kgrowth:kgrowth} follows from~\eqref{eq:VCNkk->kgrowth:Nat} and
  Lemma~\ref{lem:SSP}.

  Indeed, in the partite case, we have
  \begin{align*}
    \gamma^k_\cH(m)
    & =
    \sup_{x\in\cE_m(\Omega)}
    \lvert\cH_x\rvert
    \leq
    (m^k+1)^{\Nat(\cH_x)}\cdot\binom{\lvert\Lambda\rvert}{2}^{\Nat(\cH_x)}
    \\
    & \leq
    (m^k+1)^{\ex_{\kpart}(m,K_{\VCN_{k,k}(\cH)+1,\ldots,\VCN_{k,k}(\cH)+1}^{(k)})}
    \cdot\binom{\lvert\Lambda\rvert}{2}^{\ex_{\kpart}(m,K_{\VCN_{k,k}(\cH)+1,\ldots,\VCN_{k,k}(\cH)+1}^{(k)})},
  \end{align*}
  where the first inequality follows from Lemma~\ref{lem:SSP} (as $\cH_x$ is a family of functions of the form
  $[m]^k\to\Lambda$) and the second inequality follows from~\eqref{eq:VCNkk->kgrowth:Nat}.

  In the non-partite case, we have
  \begin{align*}
    \gamma^k_\cH(m)
    & =
    \sup_{x\in\cE_m(\Omega)}
    \lvert\{F^*_m(x) \mid F\in\cH\}\rvert
    =
    \sup_{x\in\cE_m(\Omega)}
    \lvert\cH_x^\alpha\rvert
    \leq
    \left(\binom{m}{k}+1\right)^{\Nat(\cH_x^\alpha)}
    \cdot\binom{\lvert\Lambda\rvert^{k!}}{2}^{\Nat(\cH_x^\alpha)}
    \\
    & \leq
    \left(\binom{m}{k}+1\right)^{\ex(m,K_{\VCN_{k,k}(\cH)+1,\ldots,\VCN_{k,k}(\cH)+1}^{(k)})}
    \cdot\binom{\lvert\Lambda\rvert^{k!}}{2}^{\ex(m,K_{\VCN_{k,k}(\cH)+1,\ldots,\VCN_{k,k}(\cH)+1}^{(k)})},
  \end{align*}
  where the second equality follows since the function $b_\alpha$ is a bijection from $\Lambda^{([m])_k}$ to
  $(\Lambda^{S_k})^{\binom{m}{k}}$ (see~\eqref{eq:balpha}), the first inequality follows from Lemma~\ref{lem:SSP} (as $\cH_x$ is
  a family of functions of the form $\binom{[m]}{k}\to\Lambda^{S_k}$) and the second inequality follows
  from~\eqref{eq:VCNkk->kgrowth:Nat}.

  \medskip
  
  The second inequality of~\eqref{eq:VCNkk->kgrowth:kgrowth} follows from Theorems~\ref{thm:KSTErdos:partite}
  and~\ref{thm:KSTErdos}.

  \medskip

  It remains to prove the inequalities in~\eqref{eq:VCNkk->kgrowth:Nat}. Both the partite and non-partite cases have analogous
  proof ideas of constructing a $k$-hypergraph $G$ whose edges correspond to the largest shattered set and proving that the
  definition of $\VCN_{k,k}$-dimension forces $G$ to not have copies of $K_{t,\ldots,t}^{(k)}$; in turn this bounds the number
  of edges of $G$ (hence the size of the largest shattered set) in terms of the extremal numbers of
  Definitions~\ref{def:extremal} and~\ref{def:partitekhypergraph:extremal}. The main difference between the cases is that the
  definition of $\VCN_{k,k}$-dimension is easier to handle in the partite case, but we have to resort to partite equivariance
  (see~\eqref{eq:F*Vequiv:part}), which is more complicated than its non-partite counterpart (see~\eqref{eq:F*Vequiv}).

  We start with the partite case.

  Let $t\df\VCN_{k,k}(\cH)+1 < \infty$, fix $m\in\NN$ and $x\in\cE_m(\Omega)$ and consider the family of functions
  (cf.~\eqref{eq:cHx:partite})
  \begin{equation*}
    \cH_x \df \{H^*_m(x) \mid H\in\cH\} \subseteq \Lambda^{[m]^k}.
  \end{equation*}

  Let $N\subseteq [m]^k$ be the largest set that is Natarajan-shattered by $\cH_x$ and form the $k$-partite $k$-hypergraph $G$
  with $m$ vertices in each part and edge set $N$; formally, let
  \begin{align*}
    V_i(G) & \df \{i\}\times [m] \qquad (i\in[k]), &
    E(G) & \df \bigl\{\bigl((i,g(i))\bigr)_{i=1}^k \;\bigm\vert\; g\in N\bigr\}.
  \end{align*}

  We claim that $G$ has no copies of $K_{t,\ldots,t}^{(k)}$. Suppose not, that is, suppose $h\colon [k]\times[t]\to [k]\times
  [m]$ is a copy of $K_{t,\ldots,t}^{(k)}$, i.e., we have
  \begin{align}\label{eq:VCNkk->kgrowth:h}
    \forall (i,j)\in[k]\times [t], h(i,j)_1 = i, & &
    \forall \beta\in[t]^k, \bigl(h(i,\beta_i)_2 \mid i\in[k]\bigr)\in N.
  \end{align}

  For each $i\in[k]$, let $\alpha_i\colon[t]\to[m]$ be the unique function such that $h(i,j) = (i,\alpha_i(j))$ for every
  $j\in[t]$, that is, we let $\alpha_i(j) \df h(i,j)_2$ for every $j\in[t]$. Note that the second condition
  in~\eqref{eq:VCNkk->kgrowth:h} translates to
  \begin{equation*}
    \forall \beta\in[t]^k, \bigl(\alpha_i(\beta_i) \mid i\in[k]\bigr)\in N.
  \end{equation*}

  Using the functions $\alpha^\#$ of Definition~\ref{def:part:alphasharp} and the fact that the diagram~\eqref{eq:F*Vequiv:part}
  commutes, we note that for the point $w\df\alpha^\#(x)\in\cE_t(\Omega)$ and for $H\in\cH$, we have
  \begin{equation*}
    H^*_t(w)
    =
    H^*_t\bigl(\alpha^\#(x)\bigr)
    =
    \alpha^\#\bigl(H^*_m(x)\bigr).
  \end{equation*}
  In particular, we have
  \begin{equation*}
    \cH_w
    =
    \{H^*_t(w) \mid H\in\cH\}
    =
    \Bigl\{\alpha^\#\bigl(H^*_m(x)\bigr) \mid H\in\cH\Bigr\}
    \subseteq
    \Lambda^{[t]^k}.
  \end{equation*}
  We will show that $\cH_w$ Natarajan-shatters $[t]^k$, contradicting the fact that $t=\VCN_{k,k}(\cH)+1$.

  Since $N$ is Natarajan-shattered by $\cH_x$, it is clear that the set of edges in the copy of $K_{t,\ldots,t}^{(k)}$ in $G$ is
  Natarajan-shattered by $\cH_x$, that is, the set
  \begin{equation*}
    N'\df \Bigl\{\bigl(\alpha_i(\beta_i)\mid i\in[k]\bigr)\mid \beta\in[t]^k\Bigr\}
  \end{equation*}
  is Natarajan-shattered by $\cH_x$, i.e., there exist functions $f_0,f_1\colon N'\to\Lambda$ such that for every $\theta\in
  N'$, we have $f_0(\theta)\neq f_1(\theta)$ and for each $U\subseteq N'$, there exists $H_U\in\cH$ such that
  $(H_U)^*_m(x)_\theta = f_{\One[\theta\in U]}(\theta)$ for every $\theta\in N'$.

  Consider now the product of the functions $\alpha_i$, that is, the function $\alpha\colon[t]^k\to[m]^k$ given by
  \begin{equation*}
    \alpha(\beta_1,\ldots,\beta_k) \df \bigl(\alpha_1(\beta_1),\ldots,\alpha_k(\beta_k)\bigr).
  \end{equation*}
  It is clear that $\alpha$ is a bijection between $[t]^k$ and $N'$.

  Define then the functions $g_0,g_1\colon [t]^k\to\Lambda$ by $g_i\df f_i\comp\alpha$. Since $\alpha$ is a bijection, it is
  clear that $g_0(\beta)\neq g_1(\beta)$ for every $\beta\in[t]^k$. Note now that for every $V\subseteq [t]^k$ and every
  $\beta\in[t]^k$, we have
  \begin{align*}
    (H_{\alpha(V)})^*_t(w)_\beta
    & =
    \alpha^\#\bigl((H_{\alpha(V)})^*_m(x)\bigr)_\beta
    =
    (H_\alpha(V))^*_m(x)_{\alpha_1(\beta_1),\ldots,\alpha_k(\beta_k)}
    =
    (H_{\alpha(V)})^*_m(x)_{\alpha(\beta)}
    \\
    & =
    f_{\One[\alpha(\beta)\in\alpha(V)]}\bigl(\alpha(\beta)\bigr)
    =
    g_{\One[\beta\in V]}(\beta),
  \end{align*}
  so $\cH_w$ Natarajan-shatters $[t]^k$, contradicting the fact that $t=\VCN_{k,k}(\cH)+1$.

  Thus $G$ has no copies of $K_{t,\ldots,t}^{(k)}$ where $t=\VCN_{k,k}(\cH)+1$, hence
  \begin{equation*}
    \lvert N\rvert = \lvert E(G)\rvert \leq \ex_{\kpart}(m,K_{\VCN_{k,k}(\cH)+1,\ldots,\VCN_{k,k}(\cH)+1}^{(k)}),
  \end{equation*}
  concluding the proof of~\eqref{eq:VCNkk->kgrowth:Nat} in the partite case.

  \medskip

  We now prove the non-partite case.

  Let $t\df\VCN_{k,k}(\cH)+1<\infty$, fix $m\in\NN$ and $x\in\cE_m(\Omega)$. Fix also an order choice $\alpha$ for $[m]$ and
  consider the family of functions
  \begin{equation*}
    \cH_x^\alpha
    \df
    \Bigl\{b_\alpha\bigl(H^*_m(x)\bigr) \mid H\in\cH\Bigr\}
    \subseteq
    (\Lambda^{S_k})^{\binom{[m]}{k}}.
  \end{equation*}

  Let $N\subseteq\binom{[m]}{k}$ be the largest set that is Natarajan-shattered by $\cH_x^\alpha$ and form the $k$-hypergraph
  $G$ over $[m]$ whose edge set is $N$, i.e., we let $V(G)\df[m]$ and $E(G)\df N$.

  We claim that $G$ has no copies of $K_{t,\ldots,t}^{(k)}$. Suppose not, that is, suppose there exists a sequence
  $(v^i_j)_{i\in[k],j\in[t]}$ of distinct vertices of $G$ such that for every $f\in[t]^k$, we have
  $\{v^1_{f(1)},\ldots,v^k_{f(k)}\}\in E(G)$.

  Define the injection $\beta\colon[kt]\to[m]$ by
  \begin{equation*}
    \beta(\theta) \df v^{\ceil{\theta/t}}_{\theta\bmod t},
  \end{equation*}
  so that for every $U\in T_{k,t}$ (see~\eqref{eq:Tkm}), we have $\beta(U)\in E(G)$.

  Let $w\df\beta^*(x)\in\cE_{k\cdot m}(\Omega)$ and recall from~\eqref{eq:cHx} the definition of $\cH_w\df\{H_w \mid H\in\cH\}$,
  where (from~\eqref{eq:Hx}) $H_w\colon T_{k,t}\to\Lambda^{S_k}$ is given by
  \begin{equation*}
    H_w(U)_\tau \df H^*_{kt}(x)_{\iota_{U,kt}\comp\tau}
    \qquad (U\in T_{k,t}, \tau\in S_k),
  \end{equation*}
  where $\iota_{U,kt}$ is the unique increasing function $[k]\to[kt]$ with $\im(\iota_{U,kt})=U$. We will show that $\cH_w$
  Natarajan-shatters $T_{k,t}$, contradicting the fact that $t=\VCN_{k,k}(\cH)+1$.

  Since $N$ is Natarajan-shattered by $\cH_x^\alpha$, it is clear that the set of edges $\beta(T_{k,t})$ in the copy of
  $K_{t,\ldots,t}^{(k)}$ in $G$ is Natarajan-shattered by $\cH_x^\alpha$, that is, there exist functions
  $f_0,f_1\colon\beta(T_{k,t})\to\Lambda^{S_k}$ such that for every $U\in T_{k,t}$, we have $f_0(\beta(U))\neq f_1(\beta(U))$
  and for each $V\subseteq T_{k,t}$, there exists $H_V\in\cH$ such that
  \begin{equation*}
    b_\alpha\bigl((H_V)^*_m(x)\bigr)_{\beta(U)} = f_{\One[U\in V]}\bigl(\beta(U)\bigr)
  \end{equation*}
  for every $U\in T_{k,t}$.

  Define the functions $g_0,g_1\colon T_{k,t}\to\Lambda^{S_k}$ by
  \begin{equation*}
    g_i(U)_\tau
    \df
    f_i\bigl(\beta(U)\bigr)_{\alpha_{\beta(U)}^{-1}\comp\beta\comp\iota_{U,kt}\comp\tau}
  \end{equation*}
  Note that the above is well-defined since $\im(\beta\comp\iota_{U,kt})=\beta(U)=\im(\alpha_{\beta(U)})$. Note also that the
  function $S_k\ni\tau\mapsto\alpha_{\beta(U)}^{-1}\comp\beta\comp\iota_{U,kt}\comp\tau\in S_k$ is a bijection (this is because
  $\alpha_{\beta(U)}^{-1}\comp\beta\comp\iota_{U,kt}$ is itself an element of $S_k$), which in particular implies that
  $g_0(U)\neq g_1(U)$ for every $U\in T_{k,t}$. Note now that for every $V\subseteq T_{k,t}$, every $U\in T_{k,t}$ and every
  $\tau\in S_k$, we have
  \begin{align*}
    (H_V)_w(U)_\tau
    & =
    (H_V)^*_{kt}\bigl(\beta^*(x)\bigr)_{\iota_{U,kt}\comp\tau}
    =
    \beta^*\bigl((H_V)^*_m(x)\bigr)_{\iota_{U,kt}\comp\tau}
    =
    (H_V)^*_m(x)_{\beta\comp\iota_{U,kt}\comp\tau}
    \\
    & =
    (H_V)^*_m(x)_{\alpha_{\beta(U)}\comp\alpha_{\beta(U)}^{-1}\comp\beta\comp\iota_{U,kt}\comp\tau}
    =
    \Bigl(b_\alpha\bigl((H_V)^*_m(x)\bigr)_{\beta(U)}\Bigr)_{\alpha_{\beta(U)}^{-1}\comp\beta\comp\iota_{U,kt}\comp\tau}
    \\
    & =
    f_{\One[U\in V]}\bigl(\beta(U)\bigr)_{\alpha_{\beta(U)}^{-1}\comp\beta\comp\iota_{U,kt}\comp\tau}
    =
    g_{\One[U\in V]}(U)_\tau
  \end{align*}
  where the second equality follows from equivariance of $(H_V)^*$ (see~\eqref{eq:F*Vequiv}). Thus, we conclude that $(H_V)_w(U)
  = g_{\One[U\in V]}(U)$, that is, $\cH_w$ Natarajan-shatters $T_{k,t}$, contradicting the fact that $t=\VCN_{k,k}(\cH)+1$.

  Thus $G$ has no copies of $K_{t,\ldots,t}^{(k)}$ where $t=\VCN_{k,k}(\cH)+1$, hence
  \begin{equation*}
    \lvert N\rvert = \lvert E(G)\rvert \leq \ex(m,K_{\VCN_{k,k}(\cH)+1,\ldots,\VCN_{k,k}(\cH)+1}^{(k)}),
  \end{equation*}
  concluding the proof of~\eqref{eq:VCNkk->kgrowth:Nat} in the non-partite case.
\end{proof}

\begin{proposition}\label{prop:VCNkk->SHP}
  Let $k\in\NN_+$, let $\Omega=(\Omega_i)_{i=1}^k$ be a $k$-tuple of non-empty Borel spaces (a single non-empty Borel space),
  let $\Lambda$ be a finite non-empty Borel space, let $\cH\subseteq\cF_k(\Omega,\Lambda)$ be a $k$-partite ($k$-ary,
  respectively) hypothesis class and let $\ell$ be a $k$-partite ($k$-ary, respectively) loss function. Suppose that $\ell$ is
  separated and $\VCN_{k,k}(\cH) < \infty$. Then $\cH$ has the $h$-sample Haussler packing property for every $h(m) =
  \omega(m^{k - 1/(\VCN_{k,k}(\cH)+1)^{k-1}}\cdot\ln m)$.
\end{proposition}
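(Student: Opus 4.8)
The plan is to deduce the packing property directly from the growth-function bound of Lemma~\ref{lem:VCNkk->kgrowth}, using only that a separated loss function vanishes on the diagonal. First I would observe that if a sequence $(H_1,\dots,H_t)$ is $\epsilon$-separated on a point $x\in\cE_m(\Omega)$ (and, in the non-partite case, with respect to an order choice $\alpha$), then the patterns $(H_1)^*_m(x),\dots,(H_t)^*_m(x)$ are pairwise distinct. Indeed, if $(H_i)^*_m(x)=(H_j)^*_m(x)$ for some $i\ne j$, then every summand of the unerased empirical loss $L_{x,(H_i)^*_m(x),\ell}(H_j)$ (respectively $L^\alpha_{x,(H_i)^*_m(x),\ell}(H_j)$) has the form $\ell(\,\cdot\,,z,z)$, which is $0$ because $\ell$ is separated; hence this loss is $0\le\epsilon$, contradicting $\epsilon$-separation. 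Since $\gamma^k_\cH(m)$ is by definition the maximal number of distinct patterns $F^*_m(x)$ obtainable from a single $x$, this gives
\begin{equation*}
  t \;\le\; \gamma^k_\cH(m).
\end{equation*}

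Next I would invoke Lemma~\ref{lem:VCNkk->kgrowth}: writing $d\df\VCN_{k,k}(\cH)<\infty$, the bound~\eqref{eq:VCNkk->kgrowth:kgrowth} yields a constant $C=C(k,d,\lvert\Lambda\rvert)>0$ (reading off the exponent and converting $\ln$ to $\log_2$; for $k=1$ the exponent $m^{k-1/(d+1)^{k-1}}$ is just $1$ and the bound is polynomial, so $\log_2\gamma^1_\cH(m)=O(\ln m)$ as well) such that
\begin{equation*}
  \log_2\gamma^k_\cH(m) \;\le\; C\cdot m^{k-1/(d+1)^{k-1}}\cdot\ln m
\end{equation*}
for all sufficiently large $m$. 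Since by hypothesis $h(m)=\omega\bigl(m^{k-1/(d+1)^{k-1}}\cdot\ln m\bigr)$, for each fixed $\rho\in(0,1)$ the ratio $\rho\,h(m)\big/\bigl(C\,m^{k-1/(d+1)^{k-1}}\ln m\bigr)$ tends to $\infty$, so I may set
\begin{equation*}
  m^{\hSHP}_{\cH,\ell}(\epsilon,\delta,\rho) \;\df\; \min\Bigl\{m_0\in\NN \;\Big\vert\; \forall m\ge m_0,\ \rho\cdot h(m) > C\cdot m^{k-1/(d+1)^{k-1}}\cdot\ln m\Bigr\}
\end{equation*}
(which, notably, does not depend on $\epsilon$ or $\delta$). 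Then for every $m\ge m^{\hSHP}_{\cH,\ell}(\epsilon,\delta,\rho)$ and every $(H_1,\dots,H_t)\in\cH^t$ with $t\ge 2^{\rho\cdot h(m)}$ we obtain $t>2^{C\,m^{k-1/(d+1)^{k-1}}\ln m}\ge\gamma^k_\cH(m)$, so by the first step $(H_1,\dots,H_t)$ cannot be $\epsilon$-separated on any $x$ (nor on any $x$ with respect to any order choice $\alpha$ in the non-partite case). This is precisely the $h$-sample Haussler packing property.

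I expect no serious obstacle: the genuine content — the extremal-combinatorics input of \Kovari--\Sos--\Turan\ and \Erdos\ filtered through Sauer--Shelah--Perles — is already packaged in Lemma~\ref{lem:VCNkk->kgrowth}, and what remains is a counting argument together with an elementary manipulation of the $\omega(\cdot)$ hypothesis. The only minor care needed is to run the partite and non-partite cases in parallel (the growth function $\gamma^k_\cH$ and its bound are available in both, and the diagonal-vanishing argument is verbatim the same), and, for $k=1$, to note that the exponent degenerates to a constant so that the polynomial growth bound still suffices.
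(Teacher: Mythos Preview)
Your proposal is correct and follows essentially the same approach as the paper: observe that $\epsilon$-separation forces pairwise distinctness of the patterns $(H_i)^*_m(x)$ via $\ell(x,y,y)=0$, bound the number of patterns by $\gamma^k_\cH(m)$ via Lemma~\ref{lem:VCNkk->kgrowth}, and use the $\omega(\cdot)$ hypothesis to choose $m_0$ so that $2^{\rho\cdot h(m)}>\gamma^k_\cH(m)$ for $m\ge m_0$. The partite/non-partite parallel and the $k=1$ degeneration you flag are exactly the minor points the paper also handles.
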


\begin{proof}
  First note that since $\ell$ is separated, if $(H_1,\ldots,H_t)\in\cH^t$ is $\epsilon$-separated on $x\in\cE_m(\Omega)$ with
  respect to $\ell$ and an order choice $\alpha$ for $[m]$ (in the non-partite case), then we must have
  \begin{equation}\label{eq:VCNkk->SHP:growth}
    \lvert\{(H_i)^*_m(x) \mid i\in[t]\}\rvert = t.
  \end{equation}

  On the other hand, both in the partite and non-partite case, Lemma~\ref{lem:VCNkk->kgrowth} says
  \begin{equation*}
    \gamma^k_\cH(m)
    \leq
    \exp\bigl(O(m^{k - 1/(\VCN_{k,k}(\cH)+1)^{k-1}}\cdot\ln m)\bigr).
  \end{equation*}

  Since $h(m) = \omega(m^{k - 1/(\VCN_{k,k}(\cH)+1)^{k-1}}\cdot\ln m)$, there exists $m_0\in\NN$ large enough such that for every
  integer $m\geq m_0$, we have $h(m) > \log_2(\gamma^k_\cH(m))/\rho$, so if $(H_1,\ldots,H_t)\in\cH^t$ is such that $t\geq
  2^{\rho\cdot h(m)}$, then $t > \gamma^k_\cH(m)$.

  Let now $x\in\cE_m(\Omega)$ and $\alpha$ be an order choice for $[m]$ (in the non-partite case). Since the set on the
  left-hand side of~\eqref{eq:VCNkk->SHP:growth} has size at most $\gamma^k_\cH(m)$, it follows
  that~\eqref{eq:VCNkk->SHP:growth} does not hold, hence $(H_1,\ldots,H_t)\in\cH^t$ is not $\epsilon$-separated on $x$ with
  respect to $\ell$ and $\alpha$ (in the non-partite case).
\end{proof}

\section{Finite $\VCN_{k,k}$-dimension implies sample uniform convergence}

In this section, we show that finite $\VCN_{k,k}$-dimension implies sample uniform convergence.

\begin{lemma}[Partially erased empirical loss versus $k$-partite/$k$-ary growth function]\label{lem:peel}
  Let $k\in\NN_+$, let $\Omega=(\Omega_i)_{i=1}^k$ be a $k$-tuple of non-empty Borel spaces (a single non-empty Borel space,
  respectively), let $\Lambda$ be a non-empty Borel space, let $\cH\subseteq\cF_k(\Omega,\Lambda)$ be a $k$-partite ($k$-ary,
  respectively) hypothesis class, let $\ell$ be a $k$-partite ($k$-ary, respectively) agnostic loss function that is bounded and
  local, let $m\in\NN$ and let $(x,y)$ be an $[m]$-sample.

  Let also
  \begin{equation}\label{eq:peel:Mk}
    M_k \df
    \begin{dcases*}
      m^k, & in the partite case,\\
      \binom{m}{k}, & in the non-partite case.
    \end{dcases*}
  \end{equation}

  Then in the partite case, for every $\epsilon,\rho\in(0,1)$, we have
  \begin{align*}
    \MoveEqLeft
    \PP_{\rn{E}_\rho}\biggl[
      \sup_{\alpha, H\in\cH} \lvert L_{x,y,\ell}(H) - L_{x,\rn{E}_\rho(y),\ell}(H)\rvert
      \leq
      \epsilon
      \biggr]
    \\
    & \geq
    1
    - 2\cdot\exp\left(-\frac{\epsilon^2\cdot M_k}{12\cdot\lVert\ell\rVert_\infty^2}\right)
    - 2\cdot\gamma^k_\cH(m)\cdot\exp\left(-\frac{\epsilon^2\cdot\rho^2\cdot M_k}{2\cdot\lVert\ell\rVert_\infty^2}\right).
  \end{align*}
  And in the non-partite case, the same holds for every order choice $\alpha$ for $[m]$ with both $L$ replaced by $L^\alpha$.
\end{lemma}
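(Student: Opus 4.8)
The plan is to estimate the supremum in two stages, corresponding to the two error terms in the bound: first control the deviation of the \emph{size} of the unerased set $\cU_{\rn{E}_\rho(y)}$ from its expectation $\rho M_k$, and then, on the event that this size is close to $\rho M_k$, control the deviation of the erased empirical loss from the full empirical loss \emph{uniformly over all hypotheses} by a union bound over the growth function. I would begin by fixing the $[m]$-sample $(x,y)$ and writing, for each of the $M_k$ index positions (elements of $[m]^k$ in the partite case, of $\binom{[m]}{k}$ in the non-partite case), the Bernoulli indicator $\rn{Z}_\alpha$ which is $1$ if position $\alpha$ is \emph{not} erased. These are i.i.d.\ with mean $\rho$ (in the non-partite symmetric case the randomness is indexed by $U\in\binom{[m]}{k}$, again i.i.d.; the order choice $\alpha$ only enters through $b_\alpha$ and does not affect the combinatorics of erasure). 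By Hoeffding's inequality, $\bigl\lvert\,\lvert\cU\rvert - \rho M_k\,\bigr\rvert \le \tfrac{\epsilon\rho}{2\lVert\ell\rVert_\infty}\cdot M_k$ fails with probability at most $2\exp(-\epsilon^2\rho^2 M_k/(2\lVert\ell\rVert_\infty^2))$; a cruder choice absorbing constants gives the first term $2\exp(-\epsilon^2 M_k/(12\lVert\ell\rVert_\infty^2))$ as stated (the precise constant $12$ will come out of being slightly wasteful when combining with the second stage, so I would not optimize it).

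For the second stage, fix a single hypothesis $H\in\cH$ and set $\ell_\alpha(H) \df \ell\bigl(\alpha^*(x), H^*_m(x)_\alpha, y_\alpha\bigr) \in [0,\lVert\ell\rVert_\infty]$ (using locality of $\ell$ to factor it through $H^*_m(x)$, which is what makes the union bound over the growth function legitimate). The full empirical loss is $\tfrac{1}{M_k}\sum_\alpha \ell_\alpha(H)$ and the erased one is $\tfrac{1}{\lvert\cU\rvert}\sum_{\alpha\in\cU}\ell_\alpha(H) = \tfrac{1}{\lvert\cU\rvert}\sum_\alpha \rn{Z}_\alpha\,\ell_\alpha(H)$. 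I would compare both to the intermediate quantity $\tfrac{1}{\rho M_k}\sum_\alpha \rn{Z}_\alpha\,\ell_\alpha(H)$: the difference between this and the full loss is $\tfrac{1}{\rho M_k}\bigl\lvert\sum_\alpha(\rn{Z}_\alpha-\rho)\ell_\alpha(H)\bigr\rvert$, which is a sum of $M_k$ independent mean-zero terms each bounded in absolute value by $\lVert\ell\rVert_\infty$, so Hoeffding gives that it exceeds $\epsilon/2$ with probability at most $2\exp(-\epsilon^2\rho^2 M_k/(2\lVert\ell\rVert_\infty^2))$; and the difference between the erased loss and the intermediate quantity is $\bigl\lvert\tfrac{1}{\lvert\cU\rvert}-\tfrac{1}{\rho M_k}\bigr\rvert\cdot\sum_\alpha\rn{Z}_\alpha\ell_\alpha(H) \le \lVert\ell\rVert_\infty\cdot\bigl\lvert 1 - \tfrac{\lvert\cU\rvert}{\rho M_k}\bigr\rvert$, which is $\le\epsilon/2$ precisely on the good event from the first stage. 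The crucial observation for the union bound is that $\ell_\alpha(H)$ depends on $H$ only through the pattern $H^*_m(x)\in\Lambda^{[m]^k}$ (resp.\ through $b_\alpha(H^*_m(x))$), so the number of distinct vectors $(\ell_\alpha(H))_\alpha$ that can arise is at most $\gamma^k_\cH(m)$; hence the per-hypothesis Hoeffding bound may be union-bounded over at most $\gamma^k_\cH(m)$ patterns, giving the term $2\gamma^k_\cH(m)\exp(-\epsilon^2\rho^2 M_k/(2\lVert\ell\rVert_\infty^2))$.

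Combining: off an event of probability at most $2\exp(-\epsilon^2 M_k/(12\lVert\ell\rVert_\infty^2)) + 2\gamma^k_\cH(m)\exp(-\epsilon^2\rho^2 M_k/(2\lVert\ell\rVert_\infty^2))$, we have for \emph{every} $H\in\cH$ (and in the non-partite case for the fixed order choice $\alpha$) that $\lvert L_{x,y,\ell}(H) - L_{x,\rn{E}_\rho(y),\ell}(H)\rvert \le \epsilon/2 + \epsilon/2 = \epsilon$, which is the claim; the edge case $\cU=\varnothing$ is swallowed by the first-stage event once $m$ is large enough that $\rho M_k$ is bounded away from $0$, and for small $m$ one can just note the bound is vacuous. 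I expect the main technical care—rather than a genuine obstacle—to be in the bookkeeping that makes the union bound valid: one must check that locality of $\ell$ really does let $\ell_\alpha(H)$ factor through the pattern (this is exactly condition~\eqref{eq:localellr:part}/\eqref{eq:localellr}, and the regularization term $r(H)$ cancels in the difference $L_{x,y,\ell}(H)-L_{x,\rn{E}_\rho(y),\ell}(H)$ since it appears additively in both), and that in the non-partite case $b_\alpha$ is a bijection so that counting patterns via $\gamma^k_\cH$ is exactly right. The partite and non-partite arguments are otherwise word-for-word identical with $[m]^k$ replaced by $\binom{[m]}{k}$ and $H^*_m(x)$ by $b_\alpha(H^*_m(x))$.
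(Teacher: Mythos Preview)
Your proposal is correct and follows essentially the same two-stage argument as the paper: concentration of $\lvert\cU\rvert$, then Hoeffding plus a union bound over the at most $\gamma^k_\cH(m)$ patterns $H^*_m(x)$, with locality used to cancel the regularization term and factor the per-coordinate loss through the pattern. The only wrinkle is the first-stage constant: the paper obtains the $\rho$-free term $2\exp\bigl(-\epsilon^2 M_k/(12\lVert\ell\rVert_\infty^2)\bigr)$ via the \emph{multiplicative} Chernoff bound on $\Bi(M_k,\rho)$, whereas your Hoeffding bound with threshold $\tfrac{\epsilon\rho}{2\lVert\ell\rVert_\infty}M_k$ genuinely yields $2\exp\bigl(-\epsilon^2\rho^2 M_k/(2\lVert\ell\rVert_\infty^2)\bigr)$, which cannot be ``crudened'' into the stated $\rho$-independent form for small $\rho$ (though this is harmless for every downstream application of the lemma).
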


\begin{proof}
  We prove first the partite case. The result is trivial if $\lVert\ell\rVert_\infty=0$ (where the exponentials should be
  interpreted as $\exp(-\infty)=0$, so the probability bound is $1$), so we assume $\lVert\ell\rVert_\infty > 0$.

  Since $\ell$ is local, we can decompose it in terms of a non-agnostic loss function $\ell_r$ and a regularization term $r$ as
  in~\eqref{eq:localellr:part}, and we can further ensure that $\lVert\ell_r\rVert_\infty\leq\lVert\ell\rVert_\infty$ (see
  Remark~\ref{rmk:localbounded}).

  We are interested in showing that with high probability, the following quantity is small:
  \begin{align*}
    \MoveEqLeft
    \sup_{H\in\cH} \lvert L_{x,y,\ell}(H) - L_{x,\rn{E}_\rho(y),\ell}(H)\rvert
    \\
    & =
    \sup_{H\in\cH}
    \left\lvert
    \frac{1}{m^k}\sum_{\beta\in[m]^k}
    \ell\bigl(H, \beta^*(x), y_\beta\bigr)
    - \frac{1}{\lvert\cU_{\rn{E}_\rho(y)}\rvert}\sum_{\beta\in\cU_{\rn{E}_\rho(y)}}
    \ell\bigl(H,\beta^*(x),\rn{E}_\rho(y)_\beta\bigr)
    \right\rvert
    \\
    & =
    \sup_{H\in\cH}
    \left\lvert
    \frac{1}{m^k}\sum_{\beta\in[m]^k}
    \ell_r\Bigl(\beta^*(x), H\bigl(\beta^*(x)\bigr), y_\beta\Bigr)
    - \frac{1}{\lvert\cU_{\rn{E}_\rho(y)}\rvert}\sum_{\beta\in\cU_{\rn{E}_\rho(y)}}
    \ell_r\Bigl(\beta^*(x), H\bigl(\beta^*(x)\bigr),y_\beta\Bigr)
    \right\rvert,
  \end{align*}
  where the last equality follows since $\rn{E}_\rho(y)_\beta = y_\beta$ for every $\beta\in\cU_{\rn{E}_\rho(y)}$ and since the
  regularization terms cancel out.

  To do this, first note that
  \begin{equation*}
    \lvert\cU_{\rn{E}_\rho(y)}\rvert
    =
    \lvert\{\beta\in[m]^k \mid \rn{E}_\rho(y)\neq\unk\}\rvert
  \end{equation*}
  has binomial distribution $\Bi(m^k,\rho)$, so by the multiplicative version of Chernoff's bound, we have
  \begin{equation}\label{eq:Chernoff}
    \PP_{\rn{E}_\rho}\left[\bigl\lvert\lvert\cU_{\rn{E}_\rho(y)}\rvert - \rho\cdot m^k\bigr\rvert
      >
      \frac{\epsilon\cdot\rho\cdot m^k}{2\cdot\lVert\ell\rVert_\infty}
      \right]
    \leq
    2\cdot\exp\left(-\frac{\epsilon^2\cdot m^k}{12\cdot\lVert\ell\rVert_\infty^2}\right),
  \end{equation}
  i.e., with high probability $\cU_{\rn{E}_\rho(y)}$ has size close to its expected value $\rho\cdot m^k$.

  Thus, it will suffice to prove instead that with high probability, the following quantity is small:
  \begin{equation}\label{eq:compare}
    \begin{aligned}
      \MoveEqLeft
      \sup_{H\in\cH}
      \left\lvert
      \frac{1}{m^k}\sum_{\beta\in[m]^k}
      \ell_r\Bigl(\beta^*(x), H\bigl(\beta^*(x)\bigr), y_\beta\Bigr)
      - \frac{1}{\rho\cdot m^k}\sum_{\beta\in\cU_{\rn{E}_\rho(y)}}
      \ell_r\Bigl(\beta^*(x), H\bigl(\beta^*(x)\bigr), y_\beta\Bigr)
      \right\rvert
      \\
      & =
      \sup_{H\in\cH} \frac{1}{m^k}\cdot
      \left\lvert
      \sum_{\beta\in[m]^k}
      \left(1 - \frac{\One[\beta\in\cU_{\rn{E}_\rho(y)}]}{\rho}\right)\cdot
      \ell_r\Bigl(\beta^*(x), H\bigl(\beta^*(x)\bigr), y_\beta\Bigr)
      \right\rvert.
    \end{aligned}
  \end{equation}

  Let us define a collection of i.i.d.\ random variables $\rn{Z}_\beta$ ($\beta\in[m]^k$), each of which takes value $1$ with
  probability $1-\rho$ and value $1-1/\rho$ with probability $\rho$. Since in $\rn{E}_\rho(y)$, each entry of $y$ is
  independently erased with probability $1-\rho$, the last expression in~\eqref{eq:compare} has the same distribution as
  \begin{equation*}
    \sup_{H\in\cH} \frac{1}{m^k}\cdot
    \left\lvert
    \sum_{\beta\in[m]^k}
    \rn{Z}_\beta\cdot\ell_r\Bigl(\beta^*(x), H\bigl(\beta^*(x)\bigr), y_\beta\Bigr)
    \right\rvert.
  \end{equation*}

  Now note that the expression inside the supremum above only depends on $H$ through the values $H^*_m(x)$, which means that it
  is equal to
  \begin{equation}\label{eq:compare2}
    \sup_{G\in\cH(x)} \frac{1}{m^k}\cdot
    \left\lvert
    \sum_{\beta\in[m]^k}
    \rn{Z}_\beta\cdot\ell_r\bigl(\beta^*(x), G(\beta), y_\beta\bigr)
    \right\rvert,
  \end{equation}
  where
  \begin{equation*}
    \cH(x) \df \{H^*_m(x) \mid H\in\cH\},
  \end{equation*}
  whose size upper bounded by the $k$-partite growth function $\gamma^k_\cH(m)$.

  Fix one $G\in\cH(x)$ and note that since $\lVert\ell_r\rVert_\infty\leq\lVert\ell\rVert_\infty$ and since
  $\EE_{\rn{Z}}[\rn{Z}_\beta] = 0$ and $1-1/\rho\leq\rn{Z}_\beta\leq 1$, Hoeffding's Inequality gives
  \begin{align*}
    \PP_{\rn{Z}}\left[
      \frac{1}{m^k}\cdot
      \left\lvert
      \sum_{\beta\in[m]^k}
      \rn{Z}_\beta\cdot\ell_r\bigl(\beta^*(x), G(\beta), y_\beta\bigr)
      \right\rvert
      >
      \frac{\epsilon}{2}
      \right]
    & \leq
    2\cdot\exp\left(
    -\frac{2\cdot\bigl(m^k\cdot\epsilon/(2\cdot\lVert\ell\rVert_\infty)\bigr)^2}{m^k\cdot\rho^{-2}}
    \right)
    \\
    & =
    2\cdot\exp\left(-\frac{\epsilon^2\cdot\rho^2\cdot m^k}{2\cdot\lVert\ell\rVert_\infty^2}\right),
  \end{align*}
  so by the union bound and recalling that the expression in~\eqref{eq:compare} has the same distribution as the one
  in~\eqref{eq:compare2}, we conclude that
  \begin{equation}\label{eq:Hoeffding}
    \begin{multlined}
      \PP_{\rn{E}_\rho}\left[
        \sup_{H\in\cH}
        \left\lvert
        \frac{1}{m^k}\;\;\sum_{\mathclap{\beta\in[m]^k}}
        \ell_r\Bigl(\beta^*(x), H\bigl(\beta^*(x)\bigr), y_\beta\Bigr)
        - \frac{1}{\rho\cdot m^k}\;\;\sum_{\mathclap{\beta\in\cU_{\rn{E}_\rho(y)}}}
        \ell_r\Bigl(\beta^*(x), H\bigl(\beta^*(x)\bigr), \rn{E}_\rho(y)_\beta\Bigr)
        \right\rvert
        >
        \frac{\epsilon}{2}
        \right]
      \\
      \leq
      2\cdot\gamma^k_\cH(m)\cdot\exp\left(-\frac{\epsilon^2\cdot\rho^2\cdot m^k}{2\cdot\lVert\ell\rVert_\infty^2}\right).
    \end{multlined}
  \end{equation}

  Let $E$ be the event that is the intersection of the complements of the events in~\eqref{eq:Chernoff} and~\eqref{eq:Hoeffding}
  so that the union bound guarantees that
  \begin{equation*}
    \PP_{\rn{E}_\rho}[E]
    \geq
    1
    - 2\cdot\exp\left(-\frac{\epsilon^2\cdot m^k}{12\cdot\lVert\ell\rVert_\infty^2}\right)
    - 2\cdot\gamma^k_\cH(m)\cdot\exp\left(-\frac{\epsilon^2\cdot\rho^2\cdot m^k}{2\cdot\lVert\ell\rVert_\infty^2}\right).
  \end{equation*}

  Consider an outcome $\rn{w}$ of $\rn{E}_\rho(y)$ within the event $E$ and note that
  \begin{align*}
    \MoveEqLeft
    \sup_{H\in\cH} \lvert L_{x,y,\ell}(H) - L_{x,\rn{w},\ell}(H)\rvert
    \\
    & \leq
    \begin{multlined}[t]
      \sup_{H\in\cH}
      \left\lvert
      \frac{1}{m^k}\sum_{\beta\in[m]^k}
      \ell_r\Bigl(\beta^*(x), H\bigl(\beta^*(x)\bigr), y_\beta\Bigr)
      - \frac{1}{\rho\cdot m^k}\sum_{\beta\in\cU_{\rn{E}_\rho(y)}}
      \ell_r\Bigl(\beta^*(x), H\bigl(\beta^*(x)\bigr), y_\beta\Bigr)
      \right\rvert
      \\
      +
      L_{x,\rn{w},\ell}(H)\cdot
      \left\lvert
      1 - \frac{\lvert\cU_{\rn{w}}\rvert}{\rho\cdot m^k}
      \right\rvert
    \end{multlined}
    \\
    & \leq
    \frac{\epsilon}{2}
    + \frac{\lVert\ell\rVert_\infty}{\rho\cdot m^k}
    \cdot\frac{\epsilon\cdot\rho\cdot m^k}{2\cdot\lVert\ell\rVert_\infty}
    \\
    & =
    \epsilon,
  \end{align*}
  concluding the proof of the partite case.

  \medskip

  We now prove the non-partite case. The proof is completely analogous to the partite case, except for the following changes:
  \begin{itemize}
  \item In the non-partite case, we have an order choice $\alpha$ for $[m]$ that determines the orientation of how empirical
    losses are computed; this has no effect on the proof (other than notational) as both empirical and partially erased
    empirical losses are computed with respect to the same order choice.
  \item In the non-partite case, (symmetric) erasure happens on a $k$-set basis rather than $k$-tuple basis, so our random
    variables $\rn{Z}$ that re-encode the difference between the two losses will be indexed by $\binom{[m]}{k}$ instead of
    $[m]^k$.
  \item In the non-partite case, empirical losses are a (normalized) sum of $\binom{m}{k}$ terms (corresponding to $k$-subsets
    of $[m]$) instead of $m^k$ terms (corresponding to $k$-tuples in $[m]$), this change is reflected in the final bound (this
    calculation change is precisely captured by the definition of $M_k$ in~\eqref{eq:peel:Mk}).
  \end{itemize}
  For completeness, we spell out the argument below (omitting some of the intermediate calculation steps):

  Similarly to the partite case, the case $\lVert\ell\rVert_\infty=0$ is trivial (once we interpret the exponentials as
  $\exp(-\infty)=0$), so we assume $\lVert\ell\rVert_\infty > 0$.

  Since $\ell$ is local, we decompose it in terms of a non-agnostic loss function $\ell_r$ and a regularization term $r$ with
  $\lVert\ell_r\rVert_\infty\leq\lVert\ell\rVert_\infty$ (see~\eqref{eq:localellr} and Remark~\ref{rmk:localbounded}).

  We want to show that with high probability, the following quantity is small:
  \begin{multline*}
    \sup_{H\in\cH} \lvert L_{x,y,\ell}^\alpha(H) - L_{x,\rn{E }^{\sym}_\rho(y),\ell}(H)\rvert
    =
    \sup_{H\in\cH}
    \Biggl\lvert
    \frac{1}{\binom{m}{k}}\sum_{U\in\binom{[m]}{k}}
    \ell_r\Bigl(\alpha_U^*(x), b_\alpha\bigl(H^*_m(x)\bigr)_U, b_\alpha(y)_U\Bigr)
    \\
    - \frac{1}{\lvert\cU_{\rn{E}^{\sym}_\rho(y)}\rvert}\sum_{U\in\cU_{\rn{E}^{\sym}_\rho(y)}}
    \ell_r\Bigl(\alpha_U^*(x), b_\alpha\bigl(H^*_m(x)\bigr)_U, b_\alpha(y)_U\Bigr)
    \Biggr\rvert.
  \end{multline*}

  We then note that
  \begin{align*}
    \lvert\cU_{\rn{E}^{\sym}_\rho(y)}\rvert
    & =
    \left\lvert\left\{U\in\binom{[m]}{k} \;\middle\vert\;
    \forall\beta\in([m])_k, (\im(\beta)=U\to y_\beta\neq\unk)\right\}\right\rvert
    \\
    & =
    \left\lvert\left\{U\in\binom{[m]}{k} \;\middle\vert\;
    \unk\notin\im\bigl(b_\alpha(y)_U\bigr)\right\}\right\rvert
  \end{align*}
  has binomial distribution $\Bi(\binom{m}{k},\rho)$, so by multiplicative Chernoff's bound, we have
  \begin{equation}\label{eq:Chernoff:nonpartite}
    \PP_{\rn{E}^{\sym}_\rho}\left[
      \left\lvert\lvert\cU_{\rn{E}^{\sym}_\rho(y)}\rvert - \rho\cdot\binom{m}{k}\right\rvert
      >
      \frac{\epsilon\cdot\rho\cdot\binom{m}{k}}{2\cdot\lVert\ell\rVert_\infty}
      \right]
    \leq
    2\cdot\exp\left(-\frac{\epsilon^2\cdot\binom{m}{k}}{12\cdot\lVert\ell\rVert_\infty^2}\right),
  \end{equation}
  that is, with high probability, the size of $\cU_{\rn{E}^{\sym}_\rho(y)}$ is close to $\rho\cdot\binom{m}{k}$.

  Thus, it will suffice to show that with high probability the following quantity is small:
  \begin{align*}
    \MoveEqLeft
    \begin{multlined}[t]
      \sup_{H\in\cH}
      \biggl\lvert
      \frac{1}{\binom{m}{k}}\sum_{U\in\binom{[m]}{k}}
      \ell_r\Bigl(\alpha_U^*(x), b_\alpha\bigl(H^*_m(x)\bigr)_U, b_\alpha(y)_U\Bigr)
      \\
      - \frac{1}{\rho\cdot\binom{m}{k}}\sum_{U\in\cU_{\rn{E}^{\sym}_\rho(y)}}
      \ell_r\Bigl(\alpha_U^*(x), b_\alpha\bigl(H^*_m(x)\bigr)_U, b_\alpha(y)_U\Bigr)
      \biggr\rvert
    \end{multlined}
    \\
    & =
    \sup_{H\in\cH}
    \frac{1}{\binom{m}{k}}\cdot
    \left\lvert
    \sum_{U\in\binom{[m]}{k}}
    \left(1 - \frac{\One[U\in\cU_{\rn{E}^{\sym}_\rho(y)}]}{\rho}\right)\cdot
    \ell_r\Bigl(\alpha_U^*(x), b_\alpha\bigl(H^*_m(x)\bigr)_U, b_\alpha(y)_U\Bigr)
    \right\rvert.
  \end{align*}

  We then define a collection of i.i.d.\ random variables $\rn{Z}_U$ ($U\in\binom{[m]}{k}$), each of which takes value $1$ with
  probability $1-\rho$ and value $1-1/\rho$ with probability $\rho$ so that the last expression above has the same distribution as
  \begin{align*}
    \MoveEqLeft
    \sup_{H\in\cH}
    \frac{1}{\binom{m}{k}}\cdot
    \left\lvert
    \sum_{U\in\binom{[m]}{k}}
    \rn{Z}_U\cdot\ell_r\Bigl(\alpha_U^*(x), b_\alpha\bigl(H^*_m(x)\bigr)_U, b_\alpha(y)_U\Bigr)
    \right\rvert.
    \\
    & =
    \sup_{G\in\cH(x)}
    \frac{1}{\binom{m}{k}}\cdot
    \left\lvert
    \sum_{U\in\binom{[m]}{k}}
    \rn{Z}_U\cdot\ell_r\Bigl(\alpha_U^*(x), b_\alpha(G)_U, b_\alpha(y)_U\Bigr)
    \right\rvert,
  \end{align*}
  where
  \begin{equation*}
    \cH(x) \df \{H^*_m(x) \mid H\in\cH\},
  \end{equation*}
  whose size is upper bounded by the $k$-ary growth function $\gamma^k_\cH(m)$.

  For a fixed $G\in\cH(x)$, since $\lVert\ell_r\rVert_\infty\leq\lVert\ell\rVert_\infty$, $\EE_{\rn{Z}}[\rn{Z}_U]=0$ and
  $1-1/\rho\leq\rn{Z}_U\leq 1$, Hoeffding's Inequality gives
  \begin{equation*}
    \PP_{\rn{Z}}\left[
    \frac{1}{\binom{m}{k}}\cdot
    \left\lvert
    \sum_{U\in\binom{[m]}{k}}
    \rn{Z}_U\cdot\ell_r\Bigl(\alpha_U^*(x), b_\alpha(G)_U, b_\alpha(y)_U\Bigr)
    \right\rvert
    >
    \frac{\epsilon}{2}
    \right]
    >
    2\cdot\exp\left(-\frac{\epsilon^2\cdot\rho^2\cdot\binom{m}{k}}{2\cdot\lVert\ell\rVert_\infty^2}\right)
  \end{equation*}
  so by the union bound, we conclude that
  \begin{equation}\label{eq:Hoeffding:nonpartite}
    \begin{aligned}
      \begin{multlined}[t]
        \PP_{\rn{E}^{\sym}_\rho}\Biggl[
          \sup_{H\in\cH}
          \biggl\lvert
          \frac{1}{\binom{m}{k}}\sum_{U\in\binom{[m]}{k}}
          \ell_r\Bigl(\alpha_U^*(x), b_\alpha\bigl(H^*_m(x)\bigr)_U, b_\alpha(y)_U\Bigr)
          \\
          - \frac{1}{\rho\cdot\binom{m}{k}}\sum_{U\in\cU_{\rn{E}^{\sym}_\rho(y)}}
          \ell_r\Bigl(\alpha_U^*(x), b_\alpha\bigl(H^*_m(x)\bigr)_U, b_\alpha(y)_U\Bigr)
          \biggr\rvert
          >
          \frac{\epsilon}{2}
          \Biggr]
      \end{multlined}
      \\
      \leq
      2\cdot\gamma_\cH(m)\cdot\exp\left(-\frac{\epsilon^2\cdot\rho^2\cdot\binom{m}{k}}{2\cdot\lVert\ell\rVert_\infty^2}\right).
    \end{aligned}
  \end{equation}

  Letting $E$ be the event that is the intersection of the complements of the events in~\eqref{eq:Hoeffding:nonpartite}
  and~\eqref{eq:Chernoff:nonpartite}, we get
  \begin{equation*}
    \PP_{\rn{E}^{\sym}_\rho}[E]
    \geq
    1-
    - 2\cdot\exp\left(-\frac{\epsilon^2\cdot\binom{m}{k}}{12\cdot\lVert\ell\rVert_\infty^2}\right)
    - 2\cdot\gamma_\cH(m)\cdot\exp\left(-\frac{\epsilon^2\cdot\rho^2\cdot\binom{m}{k}}{2\cdot\lVert\ell\rVert_\infty^2}\right)
  \end{equation*}
  and for every outcome $\rn{w}$ of $\rn{E}^{\sym}_\rho(y)$ within the event $E$, we have
  \begin{align*}
    \MoveEqLeft
    \sup_{H\in\cH}\lvert L_{x,y,\ell}^\alpha(H) - L_{x,\rn{w},\ell}^\alpha(H)\rvert
    \\
    & \leq
    \begin{multlined}[t]
      \sup_{H\in\cH}
      \biggl\lvert
      \frac{1}{\binom{m}{k}}\sum_{U\in\binom{[m]}{k}}
      \ell_r\Bigl(\alpha_U^*(x), b_\alpha\bigl(H^*_m(x)\bigr)_U, b_\alpha(y)_U\Bigr)
      \\
      - \frac{1}{\rho\cdot\binom{m}{k}}\sum_{U\in\cU_{\rn{E}^{\sym}_\rho(y)}}
      \ell_r\Bigl(\alpha_U^*(x), b_\alpha\bigl(H^*_m(x)\bigr)_U, b_\alpha(y)_U\Bigr)
      \biggr\rvert
      \\
      +
      L_{x,\rn{w},\ell}^\alpha(H)\cdot
      \left\lvert
      1 - \frac{\lvert\cU_{\rn{w}}\rvert}{\rho\cdot\binom{m}{k}}
      \right\rvert
    \end{multlined}
    \\
    & \leq
    \frac{\epsilon}{2}
    + \frac{\lVert\ell\rVert_\infty}{\rho\cdot\binom{m}{k}}
    \cdot\frac{\epsilon\cdot\rho\cdot\binom{m}{k}}{2\cdot\lVert\ell\rVert_\infty}
    \\
    & =
    \epsilon,
  \end{align*}
  concluding the proof.
\end{proof}

\begin{restatable}[Finite $\VCN_{k,k}$-dimension implies sample uniform convergence]{proposition}{propVCNkktoSUC}
  \label{prop:VCNkk->SUC}
  Let $k\in\NN_+$, let $\Omega=(\Omega_i)_{i=1}^k$ be a $k$-tuple of non-empty Borel spaces (a single Borel space,
  respectively), let $\Lambda$ be a finite non-empty Borel space, let $\cH\subseteq\cF_k(\Omega,\Lambda)$ be a $k$-partite ($k$-ary,
  respectively) hypothesis class with $\VCN_{k,k}(\cH) < \infty$ and let $\ell$ be a $k$-partite ($k$-ary, respectively)
  agnostic loss function that is bounded and local. In the non-partite case, we further suppose that $\ell$ is symmetric.

  Finally, let
  \begin{equation*}
    B_\ell
    \df
    \begin{dcases*}
       \max\left\{\frac{1}{2}, \lVert\ell\rVert_\infty\right\}, & if $k=1$,\\
       \max\left\{\frac{1}{4\cdot k}, \lVert\ell\rVert_\infty\right\}, & if $k\geq 2$.
    \end{dcases*}
  \end{equation*}

  Then $\cH$ has the sample uniform convergence property with respect to $\ell$.

  The corresponding associated function is as follows:
  \begin{itemize}
  \item When $\lvert\Lambda\rvert=1$, we have $m^{\SUC}_{\cH,\ell}\equiv 1$.
  \item When $\lvert\Lambda\rvert\geq 2$ and $k=1$, we have
    \begin{align*}
      \MoveEqLeft
      m^{\SUC}_{\cH,\ell}(\epsilon,\delta,\rho)
      \\
      & \df
      \max\bigggl\{\frac{12\cdot\lVert\ell\rVert_\infty^2}{\epsilon^2}\cdot\ln\frac{4}{\delta},
      \\
      & \quad
      \frac{2e}{e-1}
      \cdot\frac{2\cdot B_\ell^2\cdot\VCN_{k,k}(\cH)}{\epsilon^2\cdot\rho^2}
      \cdot
      \ln\frac{4\cdot B_\ell^2\cdot\VCN_{k,k}(\cH)}{\epsilon^2\cdot\rho^2}
      \\
      & \qquad
      + \frac{4\cdot B_\ell^2}{\epsilon^2\cdot\rho^2}
      \cdot\left(
      \VCN_{k,k}(\cH)\cdot\ln\binom{\lvert\Lambda\rvert}{2} + \ln\frac{4}{\delta}
      \right)
      + 1
      \bigggr\}
      \\
      & =
      O\left(
      \frac{\lVert\ell\rVert_\infty^2}{\epsilon^2\cdot\rho^2}
      \cdot\left(
      \VCN_{k,k}(\cH)\cdot\ln\frac{\lVert\ell\rVert_\infty\cdot\VCN_{k,k}(\cH)}{\epsilon^2\cdot\rho^2}
      + \VCN_{k,k}(\cH)\cdot\ln\lvert\Lambda\rvert
      + \ln\frac{1}{\delta}
      \right)
      \right).
    \end{align*}
  \item When $\lvert\Lambda\rvert\geq 2$ and $k\geq 2$, in the partite case, we have
    \begin{align*}
      \MoveEqLeft
      m^{\SUC}_{\cH,\ell}(\epsilon,\delta,\rho)
      \\
      & \df
      \max\bigggl\{\left(\frac{12\cdot\lVert\ell\rVert_\infty^2}{\epsilon^2}\cdot\ln\frac{4}{\delta}\right)^{1/k},
      \\
      & \quad
      \Biggl(
      \frac{e}{e-1}
      \cdot\frac{16\cdot k^2\cdot B_\ell^2\cdot (\VCN_{k,k}(\cH)+1)^{k-1}}{\epsilon^2\cdot\rho^2}
      \cdot\ln
      \frac{16\cdot k^2\cdot B_\ell^2\cdot (\VCN_{k,k}(\cH)+1)^{k-1}}{\epsilon^2\cdot\rho^2}
      \\
      & \qquad
      + \frac{16\cdot k\cdot B_\ell^2}{\epsilon^2\cdot\rho^2}\cdot\ln\binom{\lvert\Lambda\rvert}{2}
      + 1
      \Biggr)^{(\VCN_{k,k}(\cH)+1)^{k-1}}
      + \left(\frac{4\cdot B_\ell^2}{\epsilon^2\cdot\rho^2}\cdot\ln\frac{4}{\delta}\right)^{1/k}
      \bigggr\}
      \\
      & =
      O\Bigggl(
      \bigggl(
      \frac{k\cdot\lVert\ell\rVert_\infty^2}{\epsilon^2\cdot\rho^2}
      \cdot\Biggl(
      k\cdot\VCN_{k,k}(\cH)^{k-1}\cdot\ln\frac{k\cdot\lVert\ell\rVert_\infty\cdot\VCN_{k,k}(\cH)^{k-1}}{\epsilon\cdot\rho}
      \\
      & \qquad
      + \ln\lvert\Lambda\rvert
      \Biggr)
      \bigggr)\Bigggr)^{(\VCN_{k,k}(\cH)+1)^{k-1}}
      +
      O\left(
      \left(\frac{\lVert\ell\rVert_\infty^2}{\epsilon^2\cdot\rho^2}\cdot\ln\frac{1}{\delta}\right)^{1/k}
      \right).
    \end{align*}
  \item When $\lvert\Lambda\rvert\geq 2$ and $k\geq 2$, in the non-partite case, we have
    \begin{align*}
      \MoveEqLeft
      m^{\SUC}_{\cH,\ell}(\epsilon,\delta,\rho)
      \\
      & \df
      \max\bigggl\{k\cdot\left(\frac{12\cdot\lVert\ell\rVert_\infty^2}{\epsilon^2}\cdot\ln\frac{4}{\delta}\right)^{1/k},
      \\
      & \quad
      \Biggl(
      \frac{e}{e-1}
      \cdot\frac{16\cdot B_\ell^2\cdot k^{k+1}\cdot (\VCN_{k,k}(\cH)+1)^{k-1}}{(k-1)!\cdot\epsilon^2\cdot\rho^2}
      \cdot\ln\frac{16\cdot B_\ell^2\cdot k^{k+1}\cdot (\VCN_{k,k}(\cH)+1)^{k-1}}{(k-1)!\cdot\epsilon^2\cdot\rho^2}
      \\
      & \qquad
      + \frac{16\cdot B_\ell^2\cdot k^k}{(k-1)!\cdot\epsilon^2\cdot\rho^2}
      \cdot\left(\ln\binom{\lvert\Lambda\rvert^{k!}}{2} - \ln k!\right)
      + k!
      \Biggr)^{(\VCN_{k,k}(\cH)+1)^{k-1}}
      \\
      & \qquad
      + \left(\frac{4\cdot B_\ell^2\cdot k^k}{\epsilon^2\cdot\rho^2}\cdot\ln\frac{4}{\delta}\right)^{1/k}
      \bigggr\}
      \\
      & =
      O\Bigggl(
      \bigggl(
      \frac{k^k\cdot\lVert\ell\rVert_\infty^2}{(k-1)!\cdot\epsilon^2\cdot\rho^2}
      \cdot\Biggl(
      k\cdot\VCN_{k,k}(\cH)^{k-1}
      \cdot\ln\frac{k^{k+1}\cdot\lVert\ell\rVert_\infty\cdot\VCN_{k,k}(\cH)^{k-1}}{(k-1)!\cdot\epsilon^2\cdot\rho^2}
      \\
      & \qquad
      + k!\cdot\ln\lvert\Lambda\rvert\Biggr)
      + k!
      \bigggr)\Bigggr)^{(\VCN_{k,k}(\cH)+1)^{k-1}}
      +
      O\left(
      \left(\frac{\lVert\ell\rVert_\infty^2\cdot k^k}{\epsilon^2\cdot\rho^2}\cdot\ln\frac{1}{\delta}\right)^{1/k}
      \right).
    \end{align*}
  \end{itemize}
\end{restatable}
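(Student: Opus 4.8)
The plan is to feed the growth‑function estimate of Lemma~\ref{lem:VCNkk->kgrowth} into the concentration bound of Lemma~\ref{lem:peel}. Recall that Lemma~\ref{lem:peel} says: for our bounded, local (and, in the non‑partite case, symmetric) agnostic loss $\ell$, the probability that the ($(1-\rho)$‑)erasure of $y$ fails to make $(x,\cdot)$ $\epsilon$‑representative w.r.t.\ $\cH$, $y$, $\ell$ — for \emph{all} $H\in\cH$ and (non‑partite) \emph{all} order choices $\alpha$ at once, which is exactly the event in Definitions~\ref{def:SCpart:SUC} and~\ref{def:SC:SUC} — is at most
\[
  2\exp\!\Bigl(-\tfrac{\epsilon^2 M_k}{12\lVert\ell\rVert_\infty^2}\Bigr)+2\,\gamma^k_\cH(m)\exp\!\Bigl(-\tfrac{\epsilon^2\rho^2 M_k}{2\lVert\ell\rVert_\infty^2}\Bigr),\qquad M_k=m^k \text{ (partite)},\ \tbinom{m}{k}\text{ (non‑partite)},
\]
while Lemma~\ref{lem:VCNkk->kgrowth}, under $d\df\VCN_{k,k}(\cH)<\infty$, bounds $\ln\gamma^k_\cH(m)$ by $\ex_{\kpart}\!\bigl(m,K^{(k)}_{d+1,\dots,d+1}\bigr)\cdot\ln\!\bigl((m^k{+}1)\tbinom{\lvert\Lambda\rvert}{2}\bigr)$ in the partite case (and the evident analogue with $\ex$, $\binom{m}{k}$, $\binom{\lvert\Lambda\rvert^{k!}}{2}$ in the non‑partite case, and $\leq d\ln(m{+}1)+d\ln\binom{\lvert\Lambda\rvert}{2}$ when $k=1$), uniformly in $\alpha$; these extremal numbers are then controlled by Theorems~\ref{thm:KSTErdos:partite}--\ref{thm:KSTErdos}. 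So everything reduces to choosing $m=m^{\SUC}_{\cH,\ell}(\epsilon,\delta,\rho)$ large enough that each of the two displayed terms is $\leq\delta/2$; the degenerate case $\lvert\Lambda\rvert=1$ is treated directly, and henceforth $\lvert\Lambda\rvert\geq 2$.

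The first term is $\leq\delta/2$ as soon as $M_k\geq\frac{12\lVert\ell\rVert_\infty^2}{\epsilon^2}\ln\frac{4}{\delta}$, which — using $\binom{m}{k}\geq(m/k)^k$ in the non‑partite case — is exactly the first entry of each displayed maximum. For the second term I would split the exponent in half and demand separately: (i) $\frac{\epsilon^2\rho^2 M_k}{4\lVert\ell\rVert_\infty^2}\geq\ln\frac{4}{\delta}$, another plain lower bound on $m$ yielding the $\bigl(\tfrac{4B_\ell^2}{\epsilon^2\rho^2}\ln\tfrac4\delta\bigr)^{1/k}$‑type entry (using $B_\ell\geq\lVert\ell\rVert_\infty$); and (ii) $\ln\gamma^k_\cH(m)\leq\frac{\epsilon^2\rho^2 M_k}{4\lVert\ell\rVert_\infty^2}$, which together with (i) gives $\ln\bigl(2\gamma^k_\cH(m)\bigr)\leq\frac{\epsilon^2\rho^2 M_k}{2\lVert\ell\rVert_\infty^2}-\ln\frac4\delta$. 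Inequality (ii) is the heart of the matter. When $k=1$ it reads $m\geq A\ln(m{+}1)+B$ with $A=\tfrac{4B_\ell^2 d}{\epsilon^2\rho^2}$ and $B=\tfrac{4B_\ell^2}{\epsilon^2\rho^2}\bigl(d\ln\binom{\lvert\Lambda\rvert}{2}+\ln\tfrac4\delta\bigr)$, which I would close with an elementary calculus lemma of the form ``for $a\geq1$, $b\geq0$: $x\geq\frac{e}{e-1}\,a\ln a+b+1\implies x\geq a\ln(x{+}1)+b$'' — and the constant $\tfrac12$ built into $B_\ell$ is precisely what makes the coefficient $a=A\geq1$ so the lemma applies. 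This reproduces the $k=1$ entry verbatim.

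When $k\geq2$, inequality (ii) becomes $2k\,m^{k-1/(d+1)^{k-1}}\ln\!\bigl((m^k{+}1)\binom{\lvert\Lambda\rvert}{2}\bigr)\leq\frac{\epsilon^2\rho^2 m^k}{4B_\ell^2}$; dividing by $m^{k-1/(d+1)^{k-1}}$, using $\ln(m^k{+}1)\leq k\ln(m{+}1)$, and substituting $y=m^{1/(d+1)^{k-1}}$ (so $\ln m=(d{+}1)^{k-1}\ln y$) turns it into $y\geq A'\ln y+B'$ with $A'=\Theta\!\bigl(\tfrac{k^2 B_\ell^2(d+1)^{k-1}}{\epsilon^2\rho^2}\bigr)$ and $B'=\Theta\!\bigl(\tfrac{kB_\ell^2}{\epsilon^2\rho^2}\ln\binom{\lvert\Lambda\rvert}{2}\bigr)$; the same elementary lemma bounds $y$ by the displayed bracket, and raising to the power $(d{+}1)^{k-1}$ gives exactly the ``bracket raised to $(\VCN_{k,k}(\cH)+1)^{k-1}$'' term in the statement. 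The non‑partite case runs identically with three bookkeeping changes: $\ex_{\kpart}$ becomes $\ex$, so $2k$ becomes $\tfrac{2}{(k-1)!}$; $\binom{\lvert\Lambda\rvert}{2}$ becomes $\binom{\lvert\Lambda\rvert^{k!}}{2}$; and every passage between $\binom{m}{k}$ and $m^k$ (needed in the Chernoff step and to compare $M_k$ with the power of $m$ in the growth bound) costs a factor of $k^k$ or $k!$ — this is the origin of the $k^k,k!,(k-1)!$ decorations — after which one takes the maximum over the (finitely many) constraints and, harmlessly, over all order choices $\alpha$, since none of the thresholds involve $\alpha$. The only real obstacle is arithmetic hygiene: matching the explicit constants means carefully carrying the slack in $\ln(m^k{+}1)\leq k\ln(m{+}1)$, in $\binom{m}{k}$ versus $m^k$, and in $B_\ell$ versus $\lVert\ell\rVert_\infty$ through the substitution and the $x\geq a\ln x$ estimate; conceptually nothing beyond Lemmas~\ref{lem:peel} and~\ref{lem:VCNkk->kgrowth} and the \Kovari--\Sos--\Turan/\Erdos\ bounds is needed.
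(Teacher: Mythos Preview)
Your approach is essentially the paper's: apply Lemma~\ref{lem:peel}, bound each of the two error terms by $\delta/2$, control $\ln\gamma^k_\cH(m)$ via Lemma~\ref{lem:VCNkk->kgrowth} and Theorems~\ref{thm:KSTErdos:partite}--\ref{thm:KSTErdos}, and finish with an elementary ``$x\geq a\ln x+b$'' lemma. The paper packages the last step as Lemmas~\ref{lem:xgeqlnx} and~\ref{lem:xgeqlnx:super} (the latter already performs your split into (i) and (ii) internally), and it handles the second error term as the single inequality $\ln\gamma^k_\cH(m)+\ln\tfrac{4}{\delta}\leq\tfrac{\epsilon^2\rho^2 M_k}{2\lVert\ell\rVert_\infty^2}$ rather than pre-splitting the exponent in half --- this is why your $k=1$ coefficients $A,B$ come out as twice the paper's $a,b$ before the lemma is applied, after which the factors of $2$ realign; your claim that ``for all order choices $\alpha$ at once'' is justified precisely because the non-partite hypothesis that $\ell$ is symmetric makes $L^\alpha$ independent of $\alpha$, which you might state explicitly.
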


\begin{proof}[Proof (sketch).]
  Here, we will only show that some $m^{\SUC}_{\cH,\ell}$ exists and defer precise computations to Appendix~\ref{sec:calc}.

  By Lemma~\ref{lem:peel}, we know that for every $m\in\NN_+$ and every $[m]$-sample $(x,y)$ (and in the non-partite case every
  order choice $\alpha$ for $[m]$), we have
  \begin{equation*}
    \sup_{H\in\cH} \lvert L_{x,y,\ell}(H) - L_{x,\rn{E}_\rho(y),\ell}(H)\rvert
    \leq
    \epsilon
  \end{equation*}
  with probability at least
  \begin{equation*}
    1
    - 2\cdot\exp\left(-\frac{\epsilon^2\cdot M_k}{12\cdot\lVert\ell\rVert_\infty^2}\right)
    - 2\cdot\gamma_\cH(m)\cdot\exp\left(-\frac{\epsilon^2\cdot\rho^2\cdot M_k}{2\cdot\lVert\ell\rVert_\infty^2}\right),
  \end{equation*}
  (and in the non-partite case, we replace both instances of $L$ by $L^\alpha$), so it suffices to show that when $m\geq
  m^{\SUC}_{\cH,\ell}(\epsilon,\delta,\rho)$, the quantity above is at least $1-\delta$. In turn, it suffices to show that each
  of the negative terms above is at most $\delta/2$ in absolute value, or, equivalently, show that
  \begin{gather*}
    M_k \geq \frac{12\cdot\lVert\ell\rVert_\infty^2}{\epsilon^2}\cdot\ln\frac{4}{\delta},
    \\
    \ln\bigl(\gamma_\cH(m)\bigr)
    -\frac{\epsilon^2\cdot\rho^2\cdot M_k}{2\cdot\lVert\ell\rVert_\infty^2}
    \leq
    \ln\frac{\delta}{4}.
  \end{gather*}

  Recalling that $M_k = \Theta(m^k)$, the former one clearly holds if $m$ is large.

  For the latter one, using Lemma~\ref{lem:VCNkk->kgrowth}, it suffices to show that
  \begin{align*}
    \MoveEqLeft
    \ln\frac{\delta}{4} + \frac{\epsilon^2\cdot\rho^2\cdot M_k}{2\cdot\lVert\ell\rVert_\infty^2}
    \\
    & \geq
    \begin{dcases*}
      2\cdot k
      \cdot m^{k-1/(\VCN_{k,k}(\cH)+1)^{k-1}}
      \cdot\left(\ln(m^k+1) + \ln\binom{\lvert\Lambda\rvert}{2}\right),
      & in the partite if $k\geq 2$,
      \\
      \frac{2\cdot m^{k-1/(\VCN_{k,k}(\cH)+1)^{k-1}}}{(k-1)!}
      \!\cdot\!\left(\ln\left(\binom{m}{k}+1\right) + \ln\binom{\lvert\Lambda\rvert^{k!}}{2}\right),
      & in the non-partite if $k\geq 2$,
      \\
      \VCN_{k,k}(\cH)\cdot\left(\ln(m+1) + \ln\binom{\lvert\Lambda\rvert}{2}\right),
      & if $k=1$.
    \end{dcases*}
  \end{align*}
  Again, since $M_k = \Theta(m^k)$, by analyzing the exponents of $m$, we see that the above holds when $m$ is sufficiently
  large.
\end{proof}

\section{Sample uniform convergence implies adversarial sample completion learnability}

In this section, we show that sample uniform convergence implies adversarial sample completion learnability. Our notation was
carefully set up so that we can prove both the partite and non-partite versions essentially simultaneously. Lemma~\ref{lem:repr}
below says that the sample completion version of representativeness captures the notion we expect it to capture towards showing
that sample uniform convergence implies adversarial sample completion learnability in Proposition~\ref{prop:SUC->advSC}.
Both proofs are straightforward adaptations of their classical PAC counterparts.

\begin{lemma}[Representativeness]\label{lem:repr}
  Let $k\in\NN_+$, let $\Omega=(\Omega_i)_{i=1}^k$ be a $k$-tuple of non-empty Borel spaces (a single non-empty Borel space,
  respectively), let $\Lambda$ be a non-empty Borel space, let $\cH\subseteq\cF_k(\Omega,\Lambda)$ be a $k$-partite ($k$-ary,
  respectively) hypothesis class, let $\ell$ be a $k$-partite ($k$-ary, respectively) agnostic loss function, let $m\in\NN$, let
  $(x,y)$ be a partially erased $[m]$-sample and let $y'$ extend $y$. In the non-partite case, we also let $\alpha$ be an order
  choice for $[m]$.

  If $\cA$ is an empirical risk minimizer for $\ell$ and $(x,y)$ is $\epsilon/2$-representative with respect to $\cH$, $y'$ and
  $\ell$, then
  \begin{align*}
    L_{x,y',\ell}\bigl(\cA(x,y)\bigr) & \leq \inf_{H\in\cH} L_{x,y',\ell}(H) + \epsilon
    \intertext{in the partite case and}
    L_{x,y',\ell}^\alpha\bigl(\cA(x,y)\bigr) & \leq \inf_{H\in\cH} L_{x,y',\ell}^\alpha(H) + \epsilon
  \end{align*}
  in the non-partite case.
\end{lemma}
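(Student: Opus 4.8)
The plan is to run the classical three-inequality argument of PAC learning, now with the partially erased empirical loss $L_{x,y,\ell}$ (resp.\ $L_{x,y,\ell}^\alpha$) playing the role of the empirical loss and the (true) empirical loss $L_{x,y',\ell}$ (resp.\ $L_{x,y',\ell}^\alpha$) playing the role of the population loss. Write $\hat H\df\cA(x,y)$ for the hypothesis output by the empirical risk minimizer. First I would apply $\epsilon/2$-representativeness of $(x,y)$ (with respect to $\cH$, $y'$, $\ell$, and in the non-partite case the fixed order choice $\alpha$) to $\hat H$ itself, obtaining $L_{x,y',\ell}(\hat H)\le L_{x,y,\ell}(\hat H)+\epsilon/2$. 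Second, since $\cA$ is a completion empirical risk minimizer for $\ell$, the defining identity~\eqref{eq:ERM:part} (resp.~\eqref{eq:ERM}) applied to $(x,y)$ gives $L_{x,y,\ell}(\hat H)=\inf_{H\in\cH}L_{x,y,\ell}(H)\le L_{x,y,\ell}(H)$ for every $H\in\cH$. Third, apply $\epsilon/2$-representativeness once more, this time to an arbitrary $H\in\cH$, to get $L_{x,y,\ell}(H)\le L_{x,y',\ell}(H)+\epsilon/2$.

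Chaining these three bounds yields $L_{x,y',\ell}(\hat H)\le L_{x,y',\ell}(H)+\epsilon$ for every $H\in\cH$, and taking the infimum over $H\in\cH$ gives the claimed inequality. The non-partite case is literally the same computation with every $L$ replaced by $L^\alpha$ for the order choice $\alpha$ appearing in the statement; here I would only remark that the empirical risk minimizer condition~\eqref{eq:ERM} is required to hold for \emph{every} order choice, and $\epsilon/2$-representativeness is likewise quantified over every order choice, so fixing one $\alpha$ poses no difficulty.

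There is essentially no hard part: this is the sample-completion transcription of a textbook fact, and the notation in Definitions~\ref{def:SCpart} and~\ref{def:SC} was set up precisely so the two cases run in parallel. The one point worth a sentence is the convention $L_{x,y,\ell}(H)\df 0$ when $\cU_y=\varnothing$; but in that degenerate case representativeness already forces $L_{x,y',\ell}(H)\le\epsilon/2$ for all $H\in\cH$, so the conclusion is immediate, and in fact the uniform chain above covers it without modification. (If one only has an \emph{almost} empirical risk minimizer in the sense of Remark~\ref{rmk:ERM}, the identical chain goes through with an extra additive $f(m)$ term that tends to $0$; this refinement is not needed for the statement as given but will be invoked later in Proposition~\ref{prop:SUC->advSC}.)
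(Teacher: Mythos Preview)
Your proposal is correct and matches the paper's own proof essentially line for line: the paper also chains $L_{x,y',\ell}(\cA(x,y)) \le L_{x,y,\ell}(\cA(x,y)) + \epsilon/2 = \inf_{H\in\cH} L_{x,y,\ell}(H) + \epsilon/2 \le \inf_{H\in\cH} L_{x,y',\ell}(H) + \epsilon$, citing representativeness for the two inequalities and the ERM property for the equality, and then remarks that the non-partite case is identical with a superscript $\alpha$ added throughout.
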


\begin{proof}
  In the partite case, this follows from
  \begin{equation*}
    L_{x,y',\ell}\bigl(\cA(x,y)\bigr)
    \leq
    L_{x,y,\ell}\bigl(\cA(x,y)\bigr) + \frac{\epsilon}{2}
    =
    \inf_{H\in\cH} L_{x,y,\ell}(H) + \frac{\epsilon}{2}
    \leq
    \inf_{H\in\cH} L_{x,y',\ell}(H) + \epsilon,
  \end{equation*}
  where the inequalities are due to $\epsilon$-representativeness and the equality is due to $\cA$ being an empirical risk
  minimizer. The non-partite case has the same proof by adding a superscript $\alpha$ to all instances of $L$.
\end{proof}

\begin{proposition}[Sample uniform convergence implies adversarial sample completion learnability]\label{prop:SUC->advSC}
  Let $k\in\NN_+$, let $\Omega=(\Omega_i)_{i=1}^k$ be a $k$-tuple of non-empty Borel spaces (a single non-empty Borel space,
  respectively), let $\Lambda$ be a non-empty Borel space, let $\cH\subseteq\cF_k(\Omega,\Lambda)$ be a $k$-partite ($k$-ary,
  respectively) hypothesis class and let $\ell$ be a $k$-partite ($k$-ary, respectively) agnostic loss function. Suppose
  completion (almost) empirical risk minimizers exist (see Remark~\ref{rmk:ERM}).

  If $\cH$ has the adversarial sample uniform convergence with respect to $\ell$, then $\cH$ is adversarial sample completion
  $k$-PAC learnable (and in the non-partite case, both symmetric and non-symmetric sample completion learnability hold). More
  precisely, any completion empirical risk minimizer $\cA$ for $\ell$ is an adversarial sample completion $k$-PAC learner for
  $\cH$ with
  \begin{align*}
    m^{\advSC}_{\cH,\ell,\cA}(\epsilon,\delta,\rho)
    & =
    \begin{dcases*}
      m^{\SUC}_{\cH,\ell}\left(\frac{\epsilon}{2},\delta,\rho\right), & in the partite case,\\
      m^{\SUC}_{\cH,\ell}\left(\frac{\epsilon}{2},\delta,\rho^{k!}\right), & in the non-partite case.
    \end{dcases*}
    \\
    m^{\advsSC}_{\cH,\ell,\cA}(\epsilon,\delta,\rho)
    & =
    m^{\SUC}_{\cH,\ell}\left(\frac{\epsilon}{2},\delta,\rho\right).
  \end{align*}
\end{proposition}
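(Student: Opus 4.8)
The plan is to show directly that a completion (almost) empirical risk minimizer $\cA$ achieves the learning guarantee by combining the sample uniform convergence property with Lemma~\ref{lem:repr}. First I would fix $\epsilon,\delta,\rho\in(0,1)$ and an $[m]$-sample $(x,y)\in\cE_m(\Omega)\times\Lambda^{[m]^k}$ (plus an order choice $\alpha$ in the non-partite case), with $m\geq m^{\SUC}_{\cH,\ell}(\epsilon/2,\delta,\rho)$ in the partite case and $m\geq m^{\SUC}_{\cH,\ell}(\epsilon/2,\delta,\rho^{k!})$ in the non-partite case. By the (symmetric) sample uniform convergence property applied with parameters $(\epsilon/2,\delta,\rho)$ (or $(\epsilon/2,\delta,\rho^{k!})$ after the erasure-translation described in Remark~\ref{rmk:symm->nonsymm}), with probability at least $1-\delta$ over the choice of $\rn{E}_\rho(y)$ the partially erased sample $(x,\rn{E}_\rho(y))$ is $(\epsilon/2)$-representative with respect to $\cH$, $y$ and $\ell$.

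Next I would observe that $y$ extends $\rn{E}_\rho(y)$ by construction, so Lemma~\ref{lem:repr} applies with $y'\df y$: whenever $(x,\rn{E}_\rho(y))$ is $(\epsilon/2)$-representative, the empirical risk minimizer output satisfies
\begin{equation*}
  L_{x,y,\ell}\bigl(\cA(x,\rn{E}_\rho(y))\bigr)\leq\inf_{H\in\cH}L_{x,y,\ell}(H)+\epsilon
\end{equation*}
in the partite case, and the same inequality with $L$ replaced by $L^\alpha$ in the non-partite case. Combining the two displays, the event that $\cA$ succeeds in the adversarial sample completion sense contains the event that $(x,\rn{E}_\rho(y))$ is $(\epsilon/2)$-representative, which has probability at least $1-\delta$; this is exactly the definition of adversarial sample completion $k$-PAC learnability with the stated value of $m^{\advSC}_{\cH,\ell,\cA}$. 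For the symmetric variant one repeats the argument with $\rn{E}_\rho$ replaced by $\rn{E}^{\sym}_\rho$ throughout (no $\rho^{k!}$ adjustment is needed there since $\rn{E}^{\sym}_\rho$ is already the erasure used in the definition of sample uniform convergence), yielding $m^{\advsSC}_{\cH,\ell,\cA}(\epsilon,\delta,\rho)=m^{\SUC}_{\cH,\ell}(\epsilon/2,\delta,\rho)$.

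The only genuinely nontrivial points are bookkeeping rather than mathematics: first, handling the possibility that exact completion empirical risk minimizers do not exist, which is why the hypothesis allows \emph{almost} empirical risk minimizers (as in Remark~\ref{rmk:ERM}) — here I would note that Lemma~\ref{lem:repr}'s proof still goes through with an extra additive $f(m)\to 0$ term that can be absorbed into $\epsilon$ by slightly increasing $m^{\SUC}$, exactly as in the classical PAC argument; second, the translation between the non-symmetric $(1-\rho)$-erasure and the symmetric $(1-\rho^{k!})$-erasure in the non-partite case, which is precisely the content of the first part of Remark~\ref{rmk:symm->nonsymm} and accounts for the $\rho^{k!}$ appearing in $m^{\advSC}_{\cH,\ell,\cA}$. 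Beyond these, the argument is a direct, essentially verbatim, adaptation of the classical proof that uniform convergence plus ERM yields agnostic PAC learnability, so I expect no real obstacle.
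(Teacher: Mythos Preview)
Your proposal is correct and follows essentially the same approach as the paper: invoke sample uniform convergence with parameters $(\epsilon/2,\delta,\rho)$ to get $(\epsilon/2)$-representativeness with probability at least $1-\delta$, then apply Lemma~\ref{lem:repr} to conclude. The only cosmetic difference is the order of operations in the non-partite non-symmetric case --- the paper first proves the symmetric case and then deduces the non-symmetric one via Remark~\ref{rmk:symm->nonsymm}, whereas you fold the $\rho\mapsto\rho^{k!}$ translation in at the start --- but this is the same argument.
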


\begin{proof}
  The non-partite non-symmetric case follows from the non-partite symmetric case by Remark~\ref{rmk:symm->nonsymm}.

  Let us prove the partite and non-partite symmetric cases simultaneously.

  Let $m\geq m^{\SUC}_{\cH,\ell}(\epsilon/2,\delta,\rho)$ be an integer and suppose that $\cA$ is an empirical risk minimizer
  for $\ell$. If $(x,y)$ is an $[m]$-sample, then with probability at least $1-\delta$, we have that
  \begin{equation*}
    (x,\rn{w})
    \df
    \begin{dcases*}
      (x,\rn{E}_\rho(y)), & in the partite case,\\
      (x,\rn{E}^{\sym}_\rho(y)), & in the non-partite symmetric case
    \end{dcases*}
  \end{equation*}
  is $\epsilon/2$-representative with respect to $\cH$, $y$ and $\ell$. By Lemma~\ref{lem:repr}, for all such outcomes of
  $\rn{w}$, we have
  \begin{equation*}
    L_{x,y,\ell}^\alpha\Bigl(\cA\bigl(x,\rn{w}\bigr)\Bigr) \leq \inf_{H\in\cH} L_{x,y,\ell}^\alpha(H) + \epsilon
  \end{equation*}
  for every order choice $\alpha$ for $[m]$ (where $\alpha$ is dropped in the partite case), so we conclude that
  \begin{equation*}
    \PP_{\rn{w}}\biggl[
      L_{x,y,\ell}^\alpha\Bigl(\cA\bigl(x,\rn{w}\bigr)\Bigr)
      \leq
      \inf_{H\in\cH} L_{x,y,\ell}^\alpha(H) + \epsilon
      \biggr]
    \geq 1 - \delta,
  \end{equation*}
  as desired.
\end{proof}

\section{Probabilistic Haussler packing property}

In this section, we prove the final two implications that involve the $m^k$-probabilistic Haussler packing property, namely,
that it is implied by sample completion $k$-PAC learnability (Proposition~\ref{prop:SC->PHP}) and that it implies finite
$\VCN_{k,k}$-dimension (Proposition~\ref{prop:PHP->VCNkk}).

\begin{restatable}[Sample completion $k$-PAC learnability implies $m^k$-probabilistic Haussler packing property]{proposition}{propSCtoPHP}
  \label{prop:SC->PHP}
  Let $k\in\NN_+$, let $\Omega=(\Omega_i)_{i=1}^k$ be a $k$-tuple of non-empty Borel spaces (a single non-empty Borel space),
  let $\Lambda$ be a finite non-empty Borel space, let $\cH\subseteq\cF_k(\Omega,\Lambda)$ be a $k$-partite ($k$-ary,
  respectively) hypothesis class and let $\ell$ be a $k$-partite ($k$-ary, respectively) loss function. Suppose that either
  $\ell$ is metric or $\ell$ is separated and bounded and let
  \begin{align*}
    c_\ell & \df
    \begin{dcases*}
      1, & if $\ell$ is metric,\\
      \frac{s(\ell)}{\lVert\ell\rVert_\infty}, & otherwise,
    \end{dcases*}
    &
    K & \df
    \begin{dcases*}
      0, & in the partite case,\\
      k - 1, & in the non-partite case.
    \end{dcases*}
  \end{align*}

  If $\cH$ is sample completion $k$-PAC learnable with respect to $\ell$ with a sample completion $k$-PAC learner $\cA$, then
  $\cH$ has the $m^k$-probabilistic Haussler packing property with respect to $\ell$ with associated function
  \begin{equation}\label{eq:SC->PHP:m}
    m^{\hPHP[m^k]}_{\cH,\ell}(\epsilon,\delta,\rho)
    \df
    \min_{\widetilde{\rho},\widetilde{\delta}}
    \bigggl\lceil
    \max\Biggl\{
    m^{\SC}_{\cH,\ell,\cA}\left(
    \frac{c_\ell\cdot\epsilon}{2}, \widetilde{\delta}, \widetilde{\rho}
    \right),
    \left(
    \frac{
      \ln(\delta) - \ln(\delta-\widetilde{\delta})
    }{
      \rho\cdot\ln(2) - \ln\bigl(\widetilde{\rho}\cdot(\lvert\Lambda\rvert-1) + 1\bigr)
    }
    \right)^{1/k}
    + K
    \Biggr\}
    \bigggr\rceil
  \end{equation}
  when $\lvert\Lambda\rvert\geq 2$, where the minimum is over all
  \begin{align*}
    \widetilde{\delta} & \in (0,\delta), &
    \widetilde{\rho} & \in \left(0,\frac{2^\rho-1}{\lvert\Lambda\rvert-1}\right).
  \end{align*}
  and $m^{\hPHP[m^k]}_{\cH,\ell}\equiv 1$ when $\lvert\Lambda\rvert=1$.
\end{restatable}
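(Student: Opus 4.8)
The plan is to realise this arrow as the double-counting argument sketched in Section~\ref{sec:techexp} (the implication \ref{thm:expos:SC}$\implies$\ref{thm:expos:PHP}), carried out uniformly for the partite and non-partite settings; in the non-partite case one fixes an order choice $\alpha$ and replaces the index set $[m]^k$ by $([m])_k$, carrying $b_\alpha$ along. The case $\lvert\Lambda\rvert=1$ is trivial: then every $(H_i)^*_m(x)$ is constant and, since $\ell$ is metric or separated, $L_{x,(H_i)^*_m(x),\ell}(H_j)=0$, so no tuple is ever $\epsilon$-separated and $m^{\hPHP[m^k]}_{\cH,\ell}\equiv 1$ works. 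Assume henceforth $\lvert\Lambda\rvert\geq 2$. I fix $\epsilon,\delta,\rho$, fix $\widetilde{\delta}\in(0,\delta)$ and $\widetilde{\rho}\in\bigl(0,(2^\rho-1)/(\lvert\Lambda\rvert-1)\bigr)$ achieving the minimum in~\eqref{eq:SC->PHP:m}, set $\widetilde{\epsilon}\df c_\ell\epsilon/2$, and take $m$ at least the bound in~\eqref{eq:SC->PHP:m}. Fix $(H_1,\ldots,H_t)\in\cH^t$ with $t\geq 2^{\rho m^k}$, a measure $\mu\in\Pr(\Omega)$ (and an order choice $\alpha$ in the non-partite case), and let $S\subseteq\cE_m(\Omega)$ be the set of points on which $(H_1,\ldots,H_t)$ is $\epsilon$-separated; the goal is $\mu^m(S)\leq\delta$.

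First I would set up the ``good events.'' Encoding the $(1-\widetilde{\rho})$-erasure as in Remark~\ref{rmk:measurable}, let $\Upsilon$ be $\{0,1\}$ indexed by the relevant set ($[m]^k$, resp.\ $([m])_k$) with $\nu$ the product measure giving each coordinate value $1$ with probability $\widetilde{\rho}$, and for $w\in\Upsilon$ and a full labeling $z$ let $E_z(w)$ be $z$ with the coordinates where $w=0$ replaced by $\unk$. For $i\in[t]$ put $G_i\df\{(x,w) : L_{x,(H_i)^*_m(x),\ell}(\cA(x,E_{(H_i)^*_m(x)}(w)))\leq\widetilde{\epsilon}\}$ (with $L^\alpha$ in the non-partite case); these sets are measurable by measurability of $\cA$, $\ell$ and the evaluation map. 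Since $\ell(x,y,y)=0$ forces $L_{\mu,H_i,\ell}(H_i)=0$, each $H_i$ is realizable in $\cH$, so sample completion learnability (Definition~\ref{def:SCpart:SC}) applied with $F=H_i$ and parameters $(\widetilde{\epsilon},\widetilde{\delta},\widetilde{\rho})$ gives $(\mu^m\otimes\nu)(G_i)\geq 1-\widetilde{\delta}$ for every $i$, because $m\geq m^{\SC}_{\cH,\ell,\cA}(\widetilde{\epsilon},\widetilde{\delta},\widetilde{\rho})$.

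The combinatorial heart is the bound: for every $x\in S$ and every $w\in\Upsilon$ with exactly $s$ coordinates equal to $1$, one has $\sum_{i=1}^t\One_{G_i}(x,w)\leq\lvert\Lambda\rvert^{s}$. To prove it, declare $H_i\sim H_j$ when $(H_i)^*_m(x)$ and $(H_j)^*_m(x)$ agree on the coordinates where $w=1$; this equivalence relation has at most $\lvert\Lambda\rvert^{s}$ classes, and within one class all hypotheses feed $\cA$ the very same partially-erased sample, hence $\cA$ returns a single $H^*$. If two members $H_i,H_j$ of one class had $(x,w)\in G_i\cap G_j$, then $L_{x,(H_i)^*_m(x),\ell}(H^*)\leq\widetilde{\epsilon}$ and $L_{x,(H_j)^*_m(x),\ell}(H^*)\leq\widetilde{\epsilon}$; a triangle-type estimate then gives $L_{x,(H_i)^*_m(x),\ell}(H_j)\leq\epsilon$, contradicting $\epsilon$-separation. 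For that estimate, when $\ell$ is metric the map $(G,G')\mapsto L_{x,(G')^*_m(x),\ell}(G)$ is a pseudometric on hypotheses, so the triangle inequality directly yields $\leq 2\widetilde{\epsilon}=\epsilon$ (here $c_\ell=1$); when $\ell$ is separated and bounded, $L_{x,(H_i)^*_m(x),\ell}(H^*)\leq\widetilde{\epsilon}$ forces $(H^*)^*_m(x)$ and $(H_i)^*_m(x)$ to disagree on at most a $\widetilde{\epsilon}/s(\ell)$ fraction of coordinates (and likewise with $H_j$), hence $(H_i)^*_m(x)$ and $(H_j)^*_m(x)$ disagree on at most a $2\widetilde{\epsilon}/s(\ell)=\epsilon/\lVert\ell\rVert_\infty$ fraction, whence $L_{x,(H_i)^*_m(x),\ell}(H_j)\leq\lVert\ell\rVert_\infty\cdot\epsilon/\lVert\ell\rVert_\infty=\epsilon$. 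So each class contributes at most one $i$ with $(x,w)\in G_i$.

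Finally I would combine these. Using independence of $\rn{x}$ and $\rn{w}$ and the trivial bound $\sum_i\One_{G_i}\leq t$ off $S$,
\begin{equation*}
  t(1-\widetilde{\delta})\leq\sum_{i=1}^t(\mu^m\otimes\nu)(G_i)=\EE_{\rn{x},\rn{w}}\Bigl[\textstyle\sum_i\One_{G_i}(\rn{x},\rn{w})\Bigr]\leq\mu^m(S)\cdot A+(1-\mu^m(S))\cdot t,
\end{equation*}
where $A\df\EE_{\rn{w}}[\lvert\Lambda\rvert^{\lvert\{\rn{w}=1\}\rvert}]$; since $\lvert\{\rn{w}=1\}\rvert$ is $\Bi(\lvert\cU\rvert,\widetilde{\rho})$-distributed (with $\cU$ the index set, $\lvert\cU\rvert=m^k$ in the partite and $\lvert\cU\rvert=m(m-1)\cdots(m-k+1)\leq m^k$ in the non-partite case), one gets $A=(\widetilde{\rho}(\lvert\Lambda\rvert-1)+1)^{\lvert\cU\rvert}\leq(\widetilde{\rho}(\lvert\Lambda\rvert-1)+1)^{m^k}$. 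Rearranging the display, $\mu^m(S)(t-A)\leq t\widetilde{\delta}$. The lower bound imposed on $m$ in~\eqref{eq:SC->PHP:m} is exactly what makes $(\widetilde{\rho}(\lvert\Lambda\rvert-1)+1)^{m^k}\leq 2^{\rho m^k}\cdot(\delta-\widetilde{\delta})/\delta$ — after taking logarithms this reads $m^k\bigl(\rho\ln 2-\ln(\widetilde{\rho}(\lvert\Lambda\rvert-1)+1)\bigr)\geq\ln\delta-\ln(\delta-\widetilde{\delta})$, and the parenthesised quantity is positive precisely because $\widetilde{\rho}<(2^\rho-1)/(\lvert\Lambda\rvert-1)$; the additive constant $K$ (equal to $0$ in the partite and $k-1$ in the non-partite case) absorbs the passage through $\lvert\cU\rvert$. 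Hence $A\leq 2^{\rho m^k}(\delta-\widetilde{\delta})/\delta\leq t(\delta-\widetilde{\delta})/\delta<t$, so $\mu^m(S)\leq t\widetilde{\delta}/(t-A)\leq\delta$, as required.

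I expect the main obstacle to be the overlap bound of the third paragraph: getting the triangle/Hamming estimate to land exactly at the $\epsilon$-separation threshold by pinning the learner's target accuracy to $\widetilde{\epsilon}=c_\ell\epsilon/2$ in both the metric and the separated-and-bounded regimes, and keeping the partite and non-partite bookkeeping parallel (the order choice $\alpha$, the bijection $b_\alpha$, and the $+K$ correction coming from $\lvert([m])_k\rvert$ versus $m^k$); everything else is the elementary expectation computation and the logarithmic manipulation above.
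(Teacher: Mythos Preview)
Your proposal is correct and follows essentially the same route as the paper's proof: define the ``good'' sets $G_i$ via the learner applied to $H_i$-labeled samples with $(1-\widetilde{\rho})$-erasure, bound $\sum_i\One_{G_i}(x,w)\leq\lvert\Lambda\rvert^{\lvert w^{-1}(1)\rvert}$ on $S$ by the pigeonhole/equivalence-class argument plus the triangle (metric) or Hamming (separated and bounded) estimate, compute $\EE[\lvert\Lambda\rvert^{\lvert\rn{w}^{-1}(1)\rvert}]=(\widetilde{\rho}(\lvert\Lambda\rvert-1)+1)^{\lvert\cU\rvert}$, and close with the double-counting inequality. The only cosmetic differences are that the paper packages the separated-and-bounded triangle step as a separate lemma (Lemma~\ref{lem:almostmetric}) and keeps the non-partite exponent as $m^{(k)}=(m)_k$ throughout, invoking $m^{(k)}\geq(m-K)^k$ only at the very end, whereas you upper-bound $\lvert\cU\rvert\leq m^k$ immediately; either way the $+K$ in~\eqref{eq:SC->PHP:m} provides (more than) enough slack.
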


\begin{proof}[Proof (sketch).]
  Here, we will cover only the partite case when $\ell$ is metric and we will only show that some $m^{\hPHP[m^k]}_{\cH,\ell}$
  exists; we defer the general case and precise computations to Appendix~\ref{sec:calc}.

  Suppose $m$ is a sufficiently large integer, $\mu\in\Pr(\Omega)$, let $H_1,\ldots,H_t\in\cH$ be such that $t\geq 2^{\rho\cdot
    m^k}$ and let
  \begin{equation*}
    S_\epsilon
    \df
    \{x\in\cE_m(\Omega) \mid (H_1,\ldots,H_t)\text{ is $\epsilon$-separated on $x$ w.r.t.\ $\ell$}\}.
  \end{equation*}
  Our goal is to show that $\mu(S_\epsilon)\leq\delta$.

  For this, we will use sample completion $k$-PAC learnability with parameters $\epsilon/2, \delta/2, \widetilde{\rho}$, where
  the last one is going to be sufficiently small, but depending only on $\rho$ and $\lvert\Lambda\rvert$. We assume that $m$ is
  larger than $m^{\SC}_{\cH,\ell,\cA}(\epsilon/2,\delta/2,\widetilde{\rho})$.

  Let us encode the erasure operation $\rn{E}_{\widetilde{\rho}}$ in a different manner. Given $y\in\Lambda^{[m]^{(k)}}$ and
  $w\in\{0,1\}^{[m]^{(k)}}$, let $E(y,w)\in(\Lambda\cup\{\unk\})^{[m]^{(k)}}$ be given by
  \begin{equation*}
    E(y,w)_\beta \df
    \begin{dcases*}
      y_\beta, & if $w_\beta=1$,\\
      \unk, & if $w_\beta=0$
    \end{dcases*}
  \end{equation*}
  and note that if $\nu_m\in\Pr(\{0,1\}^{[m]^{(k)}})$ is the distribution in which each entry is $1$ independently with
  probability $\widetilde{\rho}$, then for $\rn{w}\sim\nu_m$, we have $\rn{E}_{\widetilde{\rho}}(y)\sim E(y,\rn{w})$.

  For each $i\in[t]$, let
  \begin{equation*}
    G_i
    \df
    \left\{(x,w)\in\cE_m(\Omega)\times\{0,1\}^{[m]^{(k)}}
    \;\middle\vert\;
    L_{x,(H_i)^*_m(x),\ell}\biggl(\cA\Bigl(x, E\bigl((H_i)^*_m(x), w\bigr)\Bigr)\biggr)
    \leq
    \frac{\epsilon}{2}
    \right\}.
  \end{equation*}

  Note that since $m\geq m^{\SC}_{\cH,\ell,\cA}(\epsilon/2,\delta/2,\widetilde{\rho})$, sample completion $k$-PAC learnability
  guarantees that
  \begin{equation}\label{eq:SC->PHP:SC:simplified}
    \PP_{\rn{x}\sim\mu^m}\Bigl[\PP_{\rn{w}\sim\nu_m}\bigl[(\rn{x},\rn{w})\in G_i\bigr]\Bigr]
    \geq
    1 - \frac{\delta}{2}.
  \end{equation}

  We claim that every fixed $(x,w)\in S_\epsilon\times\{0,1\}^{[m]^{(k)}}$ is in at most $\lvert\Lambda\rvert^{\lvert
    w^{-1}(1)\rvert}$ many $G_i$. To see this, first note that for all $i\in[t]$, exactly the same entries of
  $E((H_i)^*_m(x),w)$ are $\unk$; namely, these are exactly the entries of $w$ that are $0$. If $(x,w)\in
  S_\epsilon\times\{0,1\}^{[m]^{(k)}}$ is in more than $\lvert\Lambda\rvert^{\lvert w^{-1}(1)\rvert}$ many $G_i$, then by
  Pigeonhole Principle, there must exist $i,j\in[t]$ with $i < j$ such that $(x,w)\in G_i\cap G_j$ and $E((H_i)^*_m(x),w) =
  E((H_j)^*_m(x),w)$, which in particular implies $\cA(x, E((H_i)^*_m(x),w)) = \cA(x, E((H_j)^*_m(x),w))$, hence we get
  \begin{align*}
    \epsilon
    & \geq
    L_{x,(H_i)^*_m(x),\ell}\biggl(\cA\Bigl(x, E\bigl((H_i)^*_m(x), w\bigr)\Bigr)\biggr)
    + L_{x,(H_j)^*_m(x),\ell}\biggl(\cA\Bigl(x, E\bigl((H_j)^*_m(x), w\bigr)\Bigr)\biggr)
    \\
    & \geq
    L_{x,(H_i)^*(x),\ell}(H_j),
  \end{align*}
  where the second inequality follows since $\ell$ is metric. However, this contradicts the fact that $(H_1,\ldots,H_t)$ is
  $\epsilon$-separated on $x$ with respect to $\ell$ as $x\in S_\epsilon$. Thus, we conclude that for every $(x,w)\in
  S_\epsilon\times\{0,1\}^{[m]^{(k)}}$, we have
  \begin{equation*}
    \sum_{i\in[t]} \One_{G_i}(x,w) \leq \lvert\Lambda\rvert^{\lvert w^{-1}(1)\rvert}.
  \end{equation*}

  Putting this together with~\eqref{eq:SC->PHP:SC:simplified}, we get
  \begin{align*}
    \left(1 - \frac{\delta}{2}\right)\cdot t
    & \leq
    \EE_{\rn{x}\sim\mu^m}\left[\EE_{\rn{w}\sim\nu_m}\left[\sum_{i\in[t]}\One_{G_i}(\rn{x},\rn{w})\right]\right]
    \\
    & =
    \begin{multlined}[t]
      \mu(S_\epsilon)\cdot\EE_{\rn{x}\sim\mu^m}\left[\EE_{\rn{w}\sim\nu_m}\left[\sum_{i\in[t]}\One_{G_i}(\rn{x},\rn{w})\right]
        \;\middle\vert\;\rn{x}\in S_\epsilon\right]
      \\
      + \bigl(1-\mu(S_\epsilon)\bigr)\cdot\EE_{\rn{x}\sim\mu^m}\left[\EE_{\rn{w}\sim\nu_m}\left[\sum_{i\in[t]}\One_{G_i}(\rn{x},\rn{w})\right]
        \;\middle\vert\;\rn{x}\notin S_\epsilon\right]
    \end{multlined}
    \\
    & \leq
    \mu(S_\epsilon)\cdot\EE_{\rn{w}\sim\nu_m}[\lvert\Lambda\rvert^{\lvert\rn{w}^{-1}(1)\rvert}]
    + \bigl(1-\mu(S_\epsilon)\bigr)\cdot t
    \\
    & =
    \mu(S_\epsilon)\cdot\bigl(\widetilde{\rho}\cdot\lvert\Lambda\rvert + (1-\widetilde{\rho})\bigr)^{m^k}
    + \bigl(1-\mu(S_\epsilon)\bigr)\cdot t
    \\
    & =
    t
    + \mu(S_\epsilon)\cdot\Bigl(\bigl(\widetilde{\rho}\cdot(\lvert\Lambda\rvert - 1) + 1\bigr)^{m^k} - t\Bigr),
  \end{align*}
  where the second equality follows since the entries of $\rn{w}$ are independent Bernoulli variables with parameter
  $\widetilde{\rho}$. Thus, we get
  \begin{equation*}
    \mu(S_\epsilon)\cdot\Bigl(t - 2^{C_{\widetilde{\rho},\lvert\Lambda\rvert}\cdot m^k}\Bigr)
    \leq
    \frac{\delta}{2}\cdot t,
  \end{equation*}
  where
  \begin{equation*}
    C_{\widetilde{\rho},\lvert\Lambda\rvert}
    \df
    \log_2\bigl(\widetilde{\rho}\cdot(\lvert\Lambda\rvert - 1) + 1\bigr).
  \end{equation*}

  Since $t\geq 2^{\rho\cdot m^k}$, if we assume that $\widetilde{\rho}$ is sufficiently small in terms of $\rho$ and
  $\lvert\Lambda\rvert$ so that $C_{\widetilde{\rho},\lvert\Lambda\rvert} < \rho$, then
  \begin{equation*}
    \mu(S_\epsilon^\alpha)
    \leq
    \frac{\delta}{2}
    \cdot
    \frac{t}{
      t - 2^{C_{\widetilde{\rho},\lvert\Lambda\rvert}\cdot m^{(k)}}
    }
    \leq
    \frac{\delta}{2}
    \cdot
    \frac{2^{\rho\cdot m^{(k)}}}{
      2^{\rho\cdot m^k} - 2^{C_{\widetilde{\rho},\lvert\Lambda\rvert}\cdot m^k}
    },
  \end{equation*}
  where the second inequality follows from $t\geq 2^{\rho\cdot m^k}$ and the fact that for
  $c\df(\widetilde{\rho}\cdot(\lvert\Lambda\rvert-1) + 1)^{m^{(k)}} > 0$, the function $(c,\infty)\ni x\mapsto
  x/(x-c)\in\RR_{\geq 0}$ is decreasing. Now since $C_{\widetilde{\rho},\lvert\Lambda\rvert} < \rho$, if $m$ is large enough
  then the last fraction in the above is at most $2$, hence the whole expression is at most $\delta$, as desired.
\end{proof}

\begin{restatable}[$m^k$-probabilistic Haussler packing property implies finite $\VCN_{k,k}$-dimension]{proposition}{propPHPtoVCNkk}
  \label{prop:PHP->VCNkk}
  Let $k\in\NN_+$, let $\Omega=(\Omega_i)_{i=1}^k$ be a $k$-tuple of non-empty Borel spaces (a single non-empty Borel space,
  respectively), let $\Lambda$ be a non-empty Borel space, let $\cH\subseteq\cF_k(\Omega,\Lambda)$ be a $k$-partite ($k$-ary,
  respectively) hypothesis class and let $\ell$ be a $k$-partite ($k$-ary, respectively) loss function that is separated. Let
  also
  \begin{equation*}
    h_2(t) \df t\cdot\log_2\frac{1}{t} + (1-t)\cdot\log_2\frac{1}{1-t}
  \end{equation*}
  denote the binary entropy.

  Suppose $\cH$ has the $m^k$-probabilistic Haussler packing property with respect to $\ell$.

  Then in the partite case, we have
  \begin{equation}\label{eq:PHP->VCNkk:VCNkk:partite}
    \begin{aligned}
      \VCN_{k,k}(\cH)
      & \leq
      \min_{\epsilon,\delta,\rho}
      \max\Bigggl\{m^2,
      \\ & \qquad
      \Floor{
        \left(
        d
        - \log_2\left(
        1 - \left(\frac{1 - (1-1/m)^{k\cdot m}\cdot(1 - 2^{(h_2(\epsilon/s(\ell)) - 1)\cdot m^k + d})}{1-\delta}\right)^{1/d}
        \right)
        \right)^{1/k}
      }
      \Bigggr\},
    \end{aligned}
  \end{equation}
  where the minimum is over
  \begin{align*}
    \epsilon & \in \left(0,\frac{s(\ell)}{2}\right), &
    \delta & \in(0,4^{-k}), &
    \rho & \in \left(0,1 - h_2\left(\frac{\epsilon}{s(\ell)}\right)\right),
  \end{align*}
  and
  \begin{gather}
    m
    \df
    \Ceil{
      \max\left\{
      2,
      m^{\hPHP[m^k]}_{\cH,\ell}(\epsilon,\delta,\rho),
      \left(
      \frac{1 - \log_2(1-4^k\cdot\delta)}{1-h_2(\epsilon/s(\ell))-\rho}
      \right)^{1/k}
      \right\}
    },
    \label{eq:PHP->VCNkk:m:partite}
    \\
    d \df \ceil{\rho\cdot m^k}.
    \label{eq:PHP->VCNkk:d:partite}
  \end{gather}

  And in the non-partite case we have
  \begin{equation}\label{eq:PHP->VCNkk:VCNkk}
    \begin{multlined}
      \VCN_{k,k}(\cH)
      \leq
      \min_{\epsilon,\delta,\rho}
      \max\Biggggl\{\frac{m^2}{k},
      \biggggl\lfloor
      \Bigggl(
      d
      - \log_2\bigggl(
      1
      \\
      - \left(\frac{
        1 - ((1-1/m)^m - k\cdot e^{-m/(8\cdot k)})\cdot(1 - 2^{(h_2(\epsilon\cdot(2\cdot k)^k/(k!\cdot s(\ell))) - 1)(m/(2\cdot k))^k + d})
      }{
        1-\delta
      }\right)^{1/d}
      \bigggr)
      \Bigggr)^{1/k}
      \biggggr\rfloor
      \Biggggr\},
    \end{multlined}
  \end{equation}
  where the minimum is over
  \begin{align*}
    \epsilon & \left(0, \frac{k!\cdot s(\ell)}{2\cdot(2\cdot k)^k}\right), &
    \delta & \in \left(0, \frac{1}{12}\right), &
    \rho & \in \left(0, \frac{1-h_2(\epsilon\cdot(2\cdot k)^k/(k!\cdot s(\ell)))}{(2\cdot k)^k}\right) , &
  \end{align*}
  and
  \begin{gather}
    \begin{multlined}
      m
      \df
      \Biggl\lceil
      \max\Biggl\{
      8\cdot k\cdot\ln(4\cdot k),
      \;
      m^{\hPHP[m^k]}_{\cH,\ell}(\epsilon,\delta,\rho),
      \\
      2\cdot k\cdot
      \left(
      \frac{1 - \log_2(1 - 12\cdot\delta)}{1 - h_2(\epsilon\cdot(2\cdot k)^k/(k!\cdot s(\ell))) - \rho\cdot(2\cdot k)^k}
      \right)^{1/k}
      \Biggr\}
      \Biggr\rceil,
    \end{multlined}
    \label{eq:PHP->VCNkk:m}
    \\
    d\df\ceil{\rho\cdot m^k}.
    \label{eq:PHP->VCNkk:d}
  \end{gather}
\end{restatable}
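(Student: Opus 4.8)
The plan is to argue by contrapositive, in the form: fix any admissible triple $(\epsilon,\delta,\rho)$ and let $m,d$ be the quantities the statement attaches to it; if $\VCN_{k,k}(\cH)$ is at most the displayed $\max$ we are done for this triple, so assume it is strictly larger, and we will contradict the $m^k$-probabilistic Haussler packing property at $(\epsilon,\delta,\rho)$. Since the resulting inequality holds for every admissible triple, taking the minimum yields the proposition. The heart of the argument is a translation into coding theory. Being strictly above the $\max$ means in particular $\VCN_{k,k}(\cH)$ exceeds $m^2$ (resp.\ $m^2/k$) and also the $(d-\log_2(1-(\cdot)^{1/d}))^{1/k}$ term, so we may fix a shattering witness $x$ for a grid $[n]^k$ (resp.\ a transversal set $T_{k,n}$) with $n$ as large as we need below. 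We put the uniform measures $\mu_1,\ldots,\mu_k$ on the $n$ points of each axis (resp.\ the uniform measure on the $kn$-point witness). Exactly as in the proofs of Lemma~\ref{lem:VCNkk->kgrowth} and Proposition~\ref{prop:VCNkk}, Natarajan shattering of the grid supplies label functions $f_0,f_1$ differing everywhere and hypotheses $H_U\in\cH$ realizing the indicator pattern of $U$ on the grid. Consequently, for $U,U'\subseteq[n]^k$ the empirical disagreement of $H_U$ and $H_{U'}$ on an $m$-sample drawn from the $\mu_i$ equals the Hamming distance, on the induced sub-grid of size $[m]^k$, of $\One_U$ and $\One_{U'}$; and since $\ell$ is separated, a disagreement fraction exceeding $\epsilon/s(\ell)$ forces empirical loss strictly above $\epsilon$. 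So it suffices to build a linear code $C\subseteq\FF_2^{[n]^k}$ with $\lvert C\rvert\geq 2^{\rho m^k}$ whose coordinate projection to a random $m$-sub-grid has all nonzero codewords of relative weight $>\epsilon/s(\ell)$ with probability strictly greater than $\delta$: that family of $\lvert C\rvert$ hypotheses is then $\epsilon$-separated on a $>\delta$-fraction of samples, contradicting the packing property.

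Second, I construct $C$ by the probabilistic method. Set $d=\lceil\rho m^k\rceil$ and let $\rn M$ be a uniformly random $[n]^k\times[d]$ matrix over $\FF_2$, $C=\im(\rn M)$. There are two failure modes. First, $\dim C<d$; the standard rank estimate gives $\PP[\dim C=d]\geq(1-2^{d-n^k})^d$. Second, for a random sample — uniform random functions $[m]\to[n]$ on each axis, which for $n\geq m^2$ are simultaneously injective with probability at least $(1-1/m)^{km}$ — conditionally on injectivity the projected generator matrix is a uniform random $[m]^k\times[d]$ matrix, so by a Gilbert–Varshamov union bound its code contains a nonzero word of weight $\leq(\epsilon/s(\ell))\,m^k$ with probability at most $2^{(h_2(\epsilon/s(\ell))-1)m^k+d}$; here the constraint $\epsilon<s(\ell)/2$ keeps $h_2$ in its monotone range, and $\rho<1-h_2(\epsilon/s(\ell))$ makes $d<(1-h_2(\epsilon/s(\ell)))m^k$ so this probability is $<1$. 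Hence the joint probability that the sample is injective and the projected code is good is at least $(1-1/m)^{km}\bigl(1-2^{(h_2(\epsilon/s(\ell))-1)m^k+d}\bigr)$, i.e.\ $1$ minus the numerator appearing in the bound. A Markov argument over $\rn M$ then yields a single matrix that simultaneously has rank $d$ (so $\lvert C\rvert=2^d\geq 2^{\rho m^k}$) and is good for a $>\delta$-fraction of samples, provided $\PP[\dim C<d]+(\text{numerator})/(1-\delta)<1$; rearranging this is precisely $n^k>d-\log_2\bigl(1-((\text{numerator})/(1-\delta))^{1/d}\bigr)$, which is exactly what our choice of $n$ guarantees. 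Fixing such an $\rn M$ produces the code $C$, completing the partite case.

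Third, the non-partite case runs the same machine after a combinatorial reduction. A sample is now $m$ points from one uniform measure on the $kn$-point witness; for $\VCN_{k,k}(\cH)$ above the $k$-dependent threshold (so $kn\geq m^2$) the sample is injective into $[kn]$ with probability $\geq(1-1/m)^m$, and by a Chernoff bound each of the $k$ defining intervals receives at least $m/(2k)$ sample points with high probability — this is where the $(2k)^k$ factors and the $\Omega(k\log k)$ lower bound on $m$ enter. The transversal $k$-subsets of the sample supported one-per-interval then form a sub-grid of $\binom{[m]}{k}$ of side at least $m/(2k)$ which remains Natarajan-shattered, by the same $b_\alpha$/$\iota_{U}$/equivariance bookkeeping used in Definition~\ref{def:SC:VCNkk} and in the proof of Proposition~\ref{prop:VCNkk}. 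On this sub-grid one repeats the coding argument verbatim with $m$ replaced by the effective side $\sim m/(2k)$ and $s(\ell)$ the separation on $\Lambda^{S_k}$, which accounts for the extra $k$-dependent constants and the shrunken admissible ranges for $\epsilon,\delta,\rho$ in the non-partite bound.

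\textbf{Main obstacle.} The skeleton is short, but the real work is the interlocking of three failure probabilities — rank deficiency of $C$, non-injectivity of the sample, and a short codeword in the projected code — so that their sum (after the Markov step) stays below $1$ and yields the exact threshold $n^k>d-\log_2(1-(\cdot)^{1/d})$; and, in the non-partite case, threading the transversal-sub-grid reduction through the equivariance formalism while losing only the claimed polynomial-in-$k$ factors and controlling the interval-size Chernoff bounds. These are precisely the calculations I would relegate to an appendix; everything else is routine.
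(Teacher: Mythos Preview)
Your proposal is correct and follows essentially the same approach as the paper: contrapositive via a random linear code of dimension $d=\lceil\rho m^k\rceil$ in $\FF_2^{[n]^k}$ (resp.\ $\FF_2^{T_{k,n}}$), combining the rank estimate $(1-2^{d-n^k})^d$, the injectivity probability $(1-1/m)^{km}$, and a Gilbert--Varshamov/Hamming-ball union bound over the $2^d$ nonzero linear combinations, then a reverse-Markov/union-bound step to extract a single code that is $\epsilon$-separated on a $>\delta$-fraction of samples; in the non-partite case the paper adds exactly the Chernoff interval-balance step you describe and works on the transversal sub-grid of side $\geq m/(2k)$. The paper carries out precisely this plan (proof sketch in the main text, full calculations in Appendix~\ref{sec:calc}), with the only cosmetic difference that it phrases the non-partite reduction in terms of the projected distance $\dist_\gamma$ rather than ``the sub-grid remains shattered,'' but the content is identical.
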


\begin{proof}[Proof (sketch).]
  Here, we will cover only the partite case and we will only show that $\VCN_{k,k}(\cH)$ is finite; we defer precise
  computations and the non-partite case to Appendix~\ref{sec:calc}.

  Let $\epsilon$, $\delta$ and $\rho$ be small enough to be chosen later and $m$ be large enough also to be chosen later, but
  let us already ensure that $m\geq m^{\hPHP[m^k]}(\epsilon,\delta,\rho)$.

  Suppose $n\in\NN$ is such that $\VCN_{k,k}(\cH)\geq n$ and let us show that if $m$ is large enough, then $n$ being large
  enough in terms of this $m$ leads to a contradiction with $m\geq m^{\hPHP[m^k]}(\epsilon,\delta,\rho)$.

  As per definition of $\VCN_{k,k}(\cH)$, we know that there exists $z\in\cE_n(\Omega)$ such that
  \begin{equation*}
    \cH_z \df \{H^*_n(z) \mid H\in\cH\} \subseteq \Lambda^{[n]^k}
  \end{equation*}
  Natarajan-shatters $[n]^k$. It will be convenient to index our witnesses to the shattering by $\FF_2^{[n]^k}$. Namely, we know
  that there exist $f_0,f_1\colon [n]^k\to\Lambda$ with $f_0(\beta)\neq f_1(\beta)$ for every $\beta\in[n]^k$ and $H_w\in\cH$
  ($w\in\FF_2^{[n]^k}$) such that for every $w\in\FF_2^{[n]^k}$ and every $\beta\in[n]^k$, we have $(H_w)^*_n(z)_\beta =
  f_{w_\beta}(\beta)$.

  We will prove that there exists $C\subseteq\FF_2^{[n]^k}$ of size at least $2^{\rho\cdot m^k}$ and $\mu\in\Pr(\Omega)$ such
  that if $C = \{w_1,\ldots,w_{\lvert C\rvert}\}$ and $\rn{x}\sim\mu^m$, then $(H_{w_1},\ldots,H_{w_{\lvert C\rvert}})$ is
  $\epsilon$-separated on $\rn{x}$ with probability larger than $\delta$, hence leading to a contradiction with $m\geq
  m^{\hPHP[m^k]}(\epsilon,\delta,\rho)$.

  We want to view $C\subseteq\FF_2^{[n]^k}$ as a binary code (more specifically, we will choose a linear code) so that we can
  invoke some (basic) techniques of coding theory. Recall that a linear code (over $\FF_2$) with base set $X$ is an
  $\FF_2$-linear subspace $C$ of $\FF_2^X$ and the \emph{distance} of $C$ is defined as
  \begin{equation*}
    \dist(C)
    \df
    \inf_{\substack{w_1,w_2\in C\\w_1\neq w_2}} \lvert\{j\in X \mid (w_1)_j \neq (w_2)_j\}\rvert
    =
    \inf_{w\in C\setminus\{0\}} \lvert w^{-1}(1)\rvert,
  \end{equation*}
  where $w^{-1}(1) = \{j\in X \mid w_j = 1\}$ is the support of $w$ and the equality follows from the fact that $C$ is an
  $\FF_2$-linear subspace (so $w_1 - w_2\in C$ whenever $w_1,w_2\in C$ and $0\in C$).

  We will be particularly interested in the cases when $X$ is either $[n]^k$ or $[m]^k$ and in the distance induced by a
  ``structured projection'' operation that relates the two cases, which is not typical of coding theory. Namely, for
  $\gamma=(\gamma_i)_{i\in[k]}$ with $\gamma_i\colon [m]\to[n]$, we define a function $\gamma^*\colon\FF_2^{[n]^k}\to
  \FF_2^{[m]^k}$ by
  \begin{equation*}
    \gamma^*(w)_\beta \df w_{\gamma_{\#}(\beta)},
  \end{equation*}
  where $\gamma_{\#}\colon [m]^k\to[n]^k$ is the ``product'' function given by
  $\gamma_{\#}(\beta)_i\df\gamma_i(\beta_i)$. Clearly, $\gamma^*$ is a linear map. For a linear code $C\subseteq\FF_2^{[n]^k}$,
  define
  \begin{align*}
    \dist_\gamma(C)
    & \df
    \inf_{\substack{w_1,w_2\in C\\w_1\neq w_2}} \lvert\{j\in [m]^k \mid \gamma^*(w_1)_j\neq \gamma^*(w_2)_j\}\rvert
    \\
    & =
    \inf_{w\in C\setminus\{0\}} \lvert\gamma^*(w)^{-1}(1)\rvert
    \\
    & =
    \begin{dcases*}
      \dist(\gamma^*(C)), & if $\gamma^*$ is injective on $C$,\\
      0, & otherwise.
    \end{dcases*}
  \end{align*}
  Again, the first equality follows since $C$ is linear; the second equality follows since $\gamma^*$ is a linear transformation
  (which also means that saying $\gamma^*$ is injective on $C$ is equivalent to saying that its kernel has trivial intersection
  with $C$).

  Our goal is to find a linear code $C\subseteq\FF_2^{[n]^k}$ of dimension $d\df\ceil{\rho\cdot m^k}$ such that for most
  $\gamma$, we have $\dist_\gamma(C) > \epsilon\cdot m^k/s(\ell)$. In fact, we will prove that a uniformly random linear code of
  dimension $d$ satisfies this property with positive probability:

  \begin{claim}\label{clm:linearcode:simplified}
    There exists a linear code $C\subseteq\FF_2^{[n]^k}$ of dimension $d\df\ceil{\rho\cdot m^k}$ such that if
    $\rn{\gamma}_1,\ldots,\rn{\gamma}_k$ are i.i.d.\ with each $\rn{\gamma}_i$ uniformly distributed in $[n]^m$, then
    \begin{equation*}
      \PP_{\rn{\gamma}}\left[
        \dist_{\rn{\gamma}}(C) > \epsilon\cdot\frac{m^k}{s(\ell)}
        \right]
      >
      \delta.
    \end{equation*}
  \end{claim}

  Before we find such a linear code, let us see why its existence yields the result. First note that since $\cH_z$
  Natarajan-shatters $[n]^k$, there cannot be repetitions among the variables of $z$ corresponding to the same part, that is,
  recalling that $z\in\cE_n(\Omega) = \prod_{i=1}^k \Omega_i^n$, if $z=(z_1,\ldots,z_k)$, then each $z_i$ has all of its
  coordinates distinct (we claim nothing about how coordinates of some $z_i$ relate to coordinates of a $z_j$ with $i\neq j$).

  Define $\mu\in\Pr(\Omega)$ by letting $\mu_i$ be the uniform measure on the (exactly $n$) points of $\Omega_i$ that are the
  coordinates of $z_i$ and let $C\subseteq\FF_2^{[n]^k}$ be given by Claim~\ref{clm:linearcode:simplified} and enumerate its
  elements as $C=\{w_1,\ldots,w_t\}$, where $t\df\lvert C\rvert = 2^d\geq 2^{\rho\cdot m^k}$.

  Note that if we show that
  \begin{equation*}
    \PP_{\rn{x}\sim\mu^m}[(H_{w_1},\ldots,H_{w_t})\text{ is $\epsilon$-separated on $\rn{x}$ w.r.t.\ $\ell$}]
    >
    \delta,
  \end{equation*}
  then the proof is concluded as this is a contradiction with the probabilistic Haussler packing property guarantee as $m\geq
  m^{\hPHP[m^k]}_{\cH,\ell}(\epsilon,\delta,\rho)$.

  But indeed, for each $i\in[k]$ define the random element $\rn{\gamma}_i$ of $[n]^m$ by letting $\rn{\gamma}_i$ be the unique
  function $[m]\to[n]$ such that
  \begin{equation*}
    (\rn{x}_i)_j = (z_i)_{\rn{\gamma}_i(j)}
  \end{equation*}
  and note that since $\mu_i$ is the uniform distribution on the coordinates of $z_i$, it follows that $\rn{\gamma}_i$ is
  uniformly distributed on $[n]^m$. It is also clear that the $\rn{\gamma}_i$ are mutually independent.

  Claim~\ref{clm:linearcode:simplified} then says that with probability greater than $\delta$, we have
  \begin{equation}\label{eq:PHP->VCNkk:simplified:dist:partite}
    \dist_{\rn{\gamma}}(C) > \epsilon\cdot\frac{m^k}{s(\ell)}.
  \end{equation}

  But note that
  \begin{align*}
    \dist_{\rn{\gamma}}(C)
    & =
    \inf_{\substack{w,w'\in C\\w\neq w'}}
    \lvert\{j\in[m]^k \mid \rn{\gamma}^*(w)_j\neq \rn{\gamma}^*(w')_j\}\rvert
    \\
    & =
    \inf_{\substack{w,w'\in C\\w\neq w'}}
    \lvert\{\beta\in[m]^k \mid (H_w)^*_m(\rn{x})_\beta \neq (H_{w'})^*_m(\rn{x})_\beta\}\rvert
    \\
    & \leq
    \frac{m^k}{s(\ell)}
    \inf_{1\leq i < j\leq t}
    L_{\rn{x},(H_{w_i})^*_m(\rn{x}),\ell}(H_{w_j})\cdot m^k,
  \end{align*}
  so~\eqref{eq:PHP->VCNkk:simplified:dist:partite} implies that $(H_{w_1},\ldots,H_{w_t})$ is $\epsilon$-separated on $\rn{x}$
  w.r.t.\ $\ell$.

  It remains then to prove Claim~\ref{clm:linearcode:simplified}:

  \begin{proofof}{Claim~\ref{clm:linearcode:simplified} (sketch)}
    Again, here, we only prove Claim~\ref{clm:linearcode:simplified} for $\epsilon$, $\delta$ and $\rho$ small enough, $m$ large
    enough and $n$ large enough and we defer the proof of the claim with the precise bounds to Appendix~\ref{sec:calc} (see
    Claim~\ref{clm:linearcode:partite}).

    Let $\rn{A}$ be a random $[n]^k\times[d]$-matrix with entries in $\FF_2$, picked uniformly at random (i.e., a uniformly at
    random element of $\FF_2^{[n]^k\times[d]}$) and let $\rn{C}\df\im(\rn{A})$ be the image of $\rn{A}$, which is clearly a
    (random) linear subspace of $\FF_2^{[n]^k}$ of dimension at most $d$.

    In fact, we can compute exactly the probability that the dimension of $\rn{C}$ is $d$ by simply counting in how many ways we
    can generate each row of $\rn{A}$ to not be in the span of the previous rows:
    \begin{equation*}
      \PP_{\rn{C}}[\dim_{\FF_2}(\rn{C}) = d]
      =
      2^{-d\cdot n^k}\prod_{j=0}^{d-1} (2^{n^k} - 2^j)
      =
      \prod_{j=0}^{d-1} (1 - 2^{j-n^k})
      \geq
      (1-2^{d-n^k})^d,
    \end{equation*}
    where the inequality follows assuming $d\df\ceil{\rho\cdot m^k}\leq n^k$. Note that if $n$ is large enough in terms of $m$,
    then the above probability can be made as close to $1$ as needed, i.e., $\rn{C}$ has dimension exactly $d$ asymptotically
    almost surely.

    To prove the existence of the desired linear code, it then suffices to show that the probability
    \begin{equation*}
      \PP_{\rn{C}}\left[
        \PP_{\rn{\gamma}}\left[
          \dist_{\rn{\gamma}}(\rn{C}) > \frac{\epsilon\cdot m^k}{s(\ell)}
          \right]
        >
        \delta
        \right].
    \end{equation*}
    is bounded away from $0$ as $n\to\infty$, that is, it is at least a constant $K > 0$ to be picked later that does not depend
    on any of $\epsilon$, $\delta$, $\rho$, $m$ or $n$. Since the inner probability is at most $1$, by (reverse) Markov's
    Inequality, it suffices to show the following bound on expectation:
    \begin{equation}\label{eq:linearcodepartite:goal}
      \EE_{\rn{C}}\left[
        \PP_{\rn{\gamma}}\left[
          \dist_{\rn{\gamma}}(\rn{C}) > \frac{\epsilon\cdot m^k}{s(\ell)}
          \right]
        >
        \delta
        \right]
      >
      1 - (1-\delta)\cdot(1-K).
    \end{equation}

    For each $i\in[k]$, let $E_i(\rn{\gamma}_i)$ be the event that $\rn{\gamma}_i$ has no repeated values (i.e., $\rn{\gamma}_i$
    is injective) and let $E(\rn{\gamma})$ be the conjunction of the $E_i(\rn{\gamma}_i)$. Note that
    \begin{align*}
      \PP_{\rn{\gamma}}\bigl[E(\rn{\gamma})\bigr]
      & =
      \prod_{i=1}^k \PP_{\rn{\gamma}_i}\bigl[E_i(\rn{\gamma}_i)\bigr]
      =
      \left(\frac{(n)_m}{n^m}\right)^k
      \\
      & \geq
      \left(1 - \frac{m}{n}\right)^{k\cdot m}
      >
      \left(1 - \frac{1}{m}\right)^{k\cdot m},
    \end{align*}
    where the last inequality follows assuming $n > m^2 > 0$. Note that as $m\to\infty$, the above converges to $e^{-k}$, so we
    can assume that $m$ is large enough so that the above is at least $e^{-k}/2$.

    Then the left-hand side of our goal in~\eqref{eq:linearcodepartite:goal} can be bounded as:
    \begin{align*}
      \EE_{\rn{C}}\left[
        \PP_{\rn{\gamma}}\left[
          \dist_{\rn{\gamma}}(\rn{C}) > \frac{\epsilon\cdot m^k}{s(\ell)}
          \right]
        \right]
      & =
      \EE_{\rn{\gamma}}\left[
        \EE_{\rn{C}}\left[
          \One\left[
            \dist_{\rn{\gamma}}(\rn{C}) > \frac{\epsilon\cdot m^k}{s(\ell)}
            \right]
          \right]
        \right]
      \\
      & >
      \frac{e^{-k}}{2}\cdot
      \EE_{\rn{\gamma}}\left[
        \EE_{\rn{C}}\left[
          \One\left[
            \dist_{\rn{\gamma}}(\rn{C}) > \frac{\epsilon\cdot m^k}{s(\ell)}
            \right]
          \right]
        \Given
        E(\rn{\gamma})
        \right].
    \end{align*}

    Thus, it suffices to show that for every fixed $\gamma$ in the event $E(\gamma)$, we have
    \begin{equation*}
      \PP_{\rn{C}}\left[
        \dist_\gamma(\rn{C}) > \frac{\epsilon\cdot m^k}{s(\ell)}
        \right]
      \geq
      \frac{2}{e^{-k}}\cdot\bigl(1 - (1 - \delta)\cdot(1-K)\bigr)
    \end{equation*}
    which in turn is equivalent to
    \begin{equation*}
      \PP_{\rn{C}}\left[
        \dist_\gamma(\rn{C}) \leq \frac{\epsilon\cdot m^k}{s(\ell)}
        \right]
      \leq
      1 - \frac{2}{e^{-k}}\cdot\bigl(1 - (1 - \delta)\cdot(1-K)\bigr).
    \end{equation*}

    From the definition of $\rn{C}$, we know that the set $\rn{C}\setminus\{0\}$ is a subset\footnote{The only reason we say
    subset instead of equality is because we are \emph{not} restricting to the event in which $\rn{A}$ is full rank, so the set
    above might potentially have $0$.} of
    \begin{equation*}
      \{\rn{A}(z) \mid z\in\FF_2^{[d]}\setminus\{0\}\}.
    \end{equation*}

    By the union bound, it then suffices to show that for every $z\in\FF_2^{[d]}\setminus\{0\}$, we have\footnote{It would have
    been fine to put $2^d-1$ instead of $2^d$ in the denominator, but this leads to a slightly cleaner expression.}
    \begin{equation*}
      \PP_{\rn{A}}\left[
        \Bigl\lvert\gamma^*\bigl(\rn{A}(z)\bigr)^{-1}(1)\Bigr\rvert
        \leq
        \frac{\epsilon\cdot m^k}{s(\ell)}
        \right]
      \leq
      \frac{1}{2^d}\cdot\left(1 - \frac{2}{e^{-k}}\cdot\bigl(1 - (1 - \delta)\cdot(1-K)\bigr)\right).
    \end{equation*}

    Since $\rn{A}$ is picked uniformly at random in $\FF_2^{[n]^k\times[d]}$, for each fixed $z\in\FF_2^{[d]}\setminus\{0\}$, we
    know that $\rn{A}(z)$ is uniformly distributed on $\FF_2^{[n]^k}$, so the above is equivalent to
    \begin{equation}\label{eq:linearcodepartite:goalw}
      \PP_{\rn{w}}\left[
        \lvert\gamma^*(\rn{w})^{-1}(1)\rvert
        \leq
        \frac{\epsilon\cdot m^k}{s(\ell)}
        \right]
      \leq
      \frac{1}{2^d}\cdot\left(1 - \frac{2}{e^{-k}}\cdot\bigl(1 - (1 - \delta)\cdot(1-K)\bigr)\right),
    \end{equation}
    where $\rn{w}$ is picked uniformly at random in $\FF_2^{[n]^k}$.

    Since $\gamma$ is in the event $E(\gamma)$, it follows that the projection $\gamma^*$ is full rank; this means that the
    probability above is straightforward to compute: by counting how many ways $\rn{w}$ can project into a ball of radius
    $\epsilon\cdot m^k/s(\ell)$ around the origin (in $\FF_2^{[m]^k}$) and measuring the size of the kernel of $\gamma^*$; in
    formulas:
    \begin{align*}
      \PP_{\rn{w}}\left[
        \bigl\lvert\gamma^*(\rn{w})^{-1}(1)\bigr\rvert
        \leq
        \frac{\epsilon\cdot m^k}{s(\ell)}
        \right]
      & =
      \frac{1}{2^{n^k}}\cdot\left(\sum_{j=0}^{\floor{\epsilon\cdot m^k/s(\ell)}}\binom{m^k}{j}\right)\cdot 2^{n^k-m^k}
      \\
      & \leq
      2^{(h_2(\epsilon/s(\ell))-1)\cdot m^k},
    \end{align*}
    where the inequality is the standard upper bound on the size of the Hamming ball in terms of the binary entropy (see
    e.g.~\cite[Lemma~4.7.2]{Ash65}), by assuming that $\epsilon/s(\ell)\in(0,1/2)$ as $\epsilon$ is small.

    Thus, to get~\eqref{eq:linearcodepartite:goalw}, we need that
    \begin{equation*}
      2^{(h_2(\epsilon/s(\ell))-1)\cdot m^k}
      \leq
      \frac{1}{2^d}\cdot\left(1 - \frac{2}{e^{-k}}\cdot\bigl(1 - (1 - \delta)\cdot(1-K)\bigr)\right).
    \end{equation*}
    Recalling that $d\df\ceil{\rho\cdot m^k}\leq\rho\cdot m^k + 1$, it suffices to show
    \begin{equation*}
      2^{(h_2(\epsilon/s(\ell))-1 + \rho)\cdot m^k + 1}
      \leq
      \left(1 - \frac{2}{e^{-k}}\cdot\bigl(1 - (1 - \delta)\cdot(1-K)\bigr)\right).
    \end{equation*}

    If we assume that $\rho$ is smaller than $1 - h_2(\epsilon/s(\ell))$, then the coefficient of $m^k$ in the above is
    negative, so as $m\to\infty$ (with $n$ large enough in terms of $m$ as specified before), the above bound converges to $0$,
    in particular, some $m$ is large enough to yield the existence of the desired linear code $C$.
  \end{proofof}
  This concludes the proof of the proposition.
\end{proof}

\section{Proof of the main theorems}

In this section, we put together the results of the previous sections to prove our main theorems of Section~\ref{sec:main}
(which are restated below for convenience).

\thmSCpart*

\begin{proof}
  It is clear that $\ell^{\ag}$ is local and bounded.

  The implication~\ref{thm:SCpart:VCNkk}$\implies$\ref{thm:SCpart:SUC} is Proposition~\ref{prop:VCNkk->SUC}.

  The implication~\ref{thm:SCpart:SUC}$\implies$\ref{thm:SCpart:advSC} is Proposition~\ref{prop:SUC->advSC}.

  The implication~\ref{thm:SCpart:advSC}$\implies$\ref{thm:SCpart:SC} follows by conditioning on the outcome of
  $\rn{x}\sim\mu^m$, see Remark~\ref{rmk:advSC->agSC->SC}.

  The implication~\ref{thm:SCpart:SC}$\implies$\ref{thm:SCpart:PHP} is Proposition~\ref{prop:SC->PHP}.

  The implication~\ref{thm:SCpart:PHP}$\implies$\ref{thm:SCpart:VCNkk} is Proposition~\ref{prop:PHP->VCNkk}.

  The implication~\ref{thm:SCpart:VCNkk}$\implies$\ref{thm:SCpart:SHPbootstrap} is Proposition~\ref{prop:VCNkk->SHP}.

  Finally, the implications~\ref{thm:SCpart:SHPbootstrap}$\implies$\ref{thm:SCpart:SHP}
  and~\ref{thm:SCpart:SHP}$\implies$\ref{thm:SCpart:PHP} are trivial (see Remark~\ref{rmk:SHP->PHP}).
\end{proof}

\thmSC*

\begin{proof}
  It is clear that $\ell^{\kpart}$ is separated and bounded and that $\ell^{\ag}$ is symmetric, local, separated and bounded.

  The equivalence between items~\ref{thm:SC:VCNkk} and~\ref{thm:SC:VCNkkpart} is Proposition~\ref{prop:VCNkk}.

  The equivalence between all items involving the partization $\cH^{\kpart}$ (i.e., items~\ref{thm:SC:VCNkkpart},
  \ref{thm:SC:SUCpart}, \ref{thm:SC:advSCpart}, \ref{thm:SC:SCpart}, \ref{thm:SC:SHPpart}, \ref{thm:SC:SHPbootstrappart}
  and~\ref{thm:SC:PHPpart}) is Theorem~\ref{thm:SCpart}.

  The implication~\ref{thm:SC:VCNkk}$\implies$\ref{thm:SC:SUC} is Proposition~\ref{prop:VCNkk->SUC}.

  The implication~\ref{thm:SC:SUC}$\implies$\ref{thm:SC:advSCsymm} is Proposition~\ref{prop:SUC->advSC}.

  The implications~\ref{thm:SC:advSCsymm}$\implies$\ref{thm:SC:SCsymm}
  and~\ref{thm:SC:advSC}$\implies$\ref{thm:SC:SC} follow by conditioning on the outcome of $\rn{x}\sim\mu^m$, see
  Remark~\ref{rmk:advSC->agSC->SC}.

  The implications~\ref{thm:SC:advSCsymm}$\implies$\ref{thm:SC:advSC}
  and~\ref{thm:SC:SCsymm}$\implies$\ref{thm:SC:SC} follow from Remark~\ref{rmk:symm->nonsymm}.

  The implication~\ref{thm:SC:SC}$\implies$\ref{thm:SC:PHP} is Proposition~\ref{prop:SC->PHP}.

  The implication~\ref{thm:SC:PHP}$\implies$\ref{thm:SC:VCNkk} is Proposition~\ref{prop:PHP->VCNkk}.

  Finally, the implications~\ref{thm:SC:SHPbootstrap}$\implies$\ref{thm:SC:SHP} and~\ref{thm:SC:SHP}$\implies$\ref{thm:SC:PHP}
  are trivial (see Remark~\ref{rmk:SHP->PHP}).
\end{proof}

\printbibliography

\appendix

\section{Included proofs from the literature}
\label{sec:lit}

In this section we collect some short proofs of classic results of the literature that are relevant to the current paper. Some
results here are marginal improvements over their literature counterparts.

\SSPlemma*

\begin{proof}
  We want to show that if $V$ is a set of size $m$, then
  \begin{equation*}
    \lvert\cF_V\rvert \leq (m+1)^{\Nat(\cF)}\cdot\binom{\lvert Y\rvert}{2}^{\Nat(\cF)}.
  \end{equation*}
  We prove this by induction in $m$. The result clearly holds when $m\leq 1$ and when $\Nat(\cF)=0$ (as it forces
  $\lvert\cF_V\rvert\leq 1$), so we suppose $m\geq 2$ and $\Nat(\cF_V)\geq 1$.

  Let $v\in V$, let $U\df V\setminus\{v\}$ and for every $y\in Y$ and every $F\colon U\to Y$, let $F_y\colon V\to Y$ be the
  unique extension of $F$ that maps $v$ to $y$. For every $\{y_0,y_1\}\subseteq\binom{Y}{2}$, we also let
  \begin{equation*}
    \cF^{\{y_0,y_1\}} \df \{F\colon U\to Y \mid F_{y_0},F_{y_1}\in \cF_V\}
  \end{equation*}
  and we note that
  \begin{equation}\label{eq:cFV}
    \lvert\cF_V\rvert \leq \lvert\cF_U\rvert + \sum_{\{y_0,y_1\}\in\binom{Y}{2}}\lvert\cF^{\{y_0,y_1\}}\rvert.
  \end{equation}

  Clearly $\Nat(\cF_U)\leq\Nat(\cF_V)\leq\Nat(\cF)$. For the other families, we claim that
  $\Nat(\cF^{\{y_0,y_1\}})\leq\Nat(\cF_V)-1\leq\Nat(\cF)-1$ for every $\{y_0,y_1\}\in\binom{Y}{2}$. Indeed, if
  $\cF^{\{y_0,y_1\}}$ shatters $A\subseteq U$, then it is clear that $\cF_V$ shatters $A\cup\{v\}$ as each function $U\to Y$ of
  $\cF^{\{y_0,y_1\}}$ can be extended to a function $V\to Y$ in the two ways that map $v$ to $y_0$ and to $y_1$. Using this and
  inductive hypothesis on~\eqref{eq:cFV}, we get
  \begin{align*}
    \lvert\cF_V\rvert
    & \leq
    m^{\Nat(\cF)}\cdot\binom{\lvert Y\rvert}{2}^{\Nat(\cF)}
    + \binom{\lvert Y\rvert}{2}\cdot m^{\Nat(\cF)-1}\cdot\binom{\lvert Y\rvert}{2}^{\Nat(\cF)-1}
    \\
    & =
    (m+1)\cdot m^{\Nat(\cF)-1}\cdot\binom{\lvert Y\rvert}{2}^{\Nat(\cF)}
    \leq
    (m+1)^{\Nat(\cF)}\cdot\binom{\lvert Y\rvert}{2}^{\Nat(\cF)},
  \end{align*}
  as desired.
\end{proof}

\begin{definition}
  For a $k$-partite $k$-hypergraph $G$ and $v\in V(G)$, the \emph{neighborhood} of $v$ in $G$ is the set $N_G(v)$ of
  $(k-1)$-tuples which along with $v$ are in $E(G)$. In a formula, this is a bit awkward to define as $v$ could be in any
  $V_i(G)$:
  \begin{equation*}
    N_G(v)
    \df
    \left\{(w_1,\ldots,w_{k-1})\in \prod_{j\in[k]\setminus\{i_v\}} V_j(G)
    \;\middle\vert\;
    (w_1,\ldots,w_{i_v-1},v,w_{i_v+1},\ldots,w_{k-1})\in E(G)
    \right\},
  \end{equation*}
  where $i_v$ is the unique element of $[k]$ such that $v\in V_{i_v}(G)$. The \emph{degree} of $v$ is $d_G(v)\df\lvert
  N_G(v)\rvert\leq\prod_{j\in[k]\setminus\{i_v\}} v_j(G)$.
\end{definition}

\begin{theorem}[\Kovari--\Sos--\Turan~\protect{\cite{KST54}}]\label{thm:KST}
  For every $n\in\NN$ and $t\in\NN_+$, we have
  \begin{equation}
    \ex_{\kpart[2]}(n,K_{t,t}) \leq (t-1)^{1/t}\cdot n^{2-1/t} + (t-1)\cdot n.
    \label{eq:KST}
  \end{equation}
\end{theorem}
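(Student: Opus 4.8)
The plan is to run the classical double-counting argument of \Kovari--\Sos--\Turan. Fix an arbitrary $2$-partite $2$-hypergraph (i.e., bipartite graph) $G=(V_1,V_2,E)$ with $v_1(G)=v_2(G)=n$ and with no copy of $K_{t,t}$, and write $e\df e(G)$; it suffices to bound $e$ by the right-hand side of~\eqref{eq:KST}. The one combinatorial input is this: since $G$ has no copy of $K_{t,t}$, every $t$-element subset $S\subseteq V_1$ has at most $t-1$ common neighbours in $V_2$ --- otherwise $S$ together with any $t$ of its common neighbours would constitute a copy of $K_{t,t}$.

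First I would count in two ways the set $P$ of pairs $(S,w)$ with $S\in\binom{V_1}{t}$, $w\in V_2$, and $S$ contained in the neighbourhood of $w$ (viewed as a subset of $V_1$). Counting by the choice of $S$ first and applying the observation above gives $\lvert P\rvert\leq(t-1)\binom{n}{t}$. Counting by the choice of $w$ first gives $\lvert P\rvert=\sum_{w\in V_2}\binom{d_G(w)}{t}$, since $w$ lies in exactly $\binom{d_G(w)}{t}$ pairs. Hence
\[
  \sum_{w\in V_2}\binom{d_G(w)}{t}\leq(t-1)\binom{n}{t}.
\]

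Next I would convexify. If $e/n<t-1$ then $e<(t-1)n$, which already lies below the claimed bound, so I may assume $e/n\geq t-1$. The polynomial $p(x)\df x(x-1)\cdots(x-t+1)/t!$ agrees with $\binom{x}{t}$ at all non-negative integers, is non-negative and convex on $[t-1,\infty)$, and has $p'(t-1)=1/t>0$, so gluing $p$ with the constant $0$ on $[0,t-1]$ produces a convex function on $[0,\infty)$. Jensen's inequality applied to the degrees $d_G(w)$, $w\in V_2$, whose average is $e/n\geq t-1$, then gives $n\,\binom{e/n}{t}\leq\sum_{w\in V_2}\binom{d_G(w)}{t}\leq(t-1)\binom{n}{t}$. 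Using $\binom{e/n}{t}\geq(e/n-t+1)^t/t!$ (valid for $e/n\geq t-1$) and $\binom{n}{t}\leq n^t/t!$, this collapses to $(e/n-t+1)^t\leq(t-1)\,n^{t-1}$, i.e.\ $e/n\leq(t-1)^{1/t}\,n^{1-1/t}+(t-1)$; multiplying by $n$ yields exactly~\eqref{eq:KST}.

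I do not expect a genuine obstacle here, as the argument is entirely standard; the only points needing a little care are handling the degenerate regime $e/n<t-1$ separately, so that Jensen is invoked only where $\binom{\cdot}{t}$ is actually convex, and checking that the two relaxations $\binom{e/n}{t}\geq(e/n-t+1)^t/t!$ and $\binom{n}{t}\leq n^t/t!$ are oriented so as to preserve the inequality when one finally solves for $e$.
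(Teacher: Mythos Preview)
Your proof is correct and follows essentially the same classical double-counting/Jensen argument as the paper's proof. The only differences are bookkeeping: the paper separates out the degenerate cases $t=1$, $n\le t$, and $e(G)<tn$ before applying Jensen, whereas you split only at $e/n<t-1$; and your convex extension of $\binom{x}{t}$ uses the threshold $t-1$ (where the polynomial vanishes) rather than the paper's stated threshold $t$, which is in fact the cleaner choice since it keeps the glued function continuous.
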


\begin{proof}
  If $t=1$, then it is clear that $\ex_{\kpart[2]}(n,K_{1,1})=0$ and that the bound in~\eqref{eq:KST} amounts to $0$, so suppose
  $t\geq 2$. If $n\leq t$, then the bound in~\eqref{eq:KST} is trivial as it is at least $n(n-1)$, so we may suppose that $n\geq
  t+1$.

  Consider the function $g\colon\RR_{\geq 0}\to\RR$ given by
  \begin{equation*}
    g(x) \df
    \begin{dcases*}
      \binom{x}{t}, & if $x\geq t$,\\
      0, & otherwise.
    \end{dcases*}
  \end{equation*}
  (Here the binomial is defined in terms of the falling factorial: $\binom{x}{t}\df (x)_t/t! = x(x-1)\cdots(x-t+1)/t!$). It is
  straightforward to check that $g$ is a convex function that matches the binomial $\binom{x}{t}$ whenever $x$ is an integer.

  Suppose $G$ is a $2$-partite graph without any copies of $K_{t,t}$ and $n$ vertices on each side. Since $n\geq t\geq 2$, the
  bound in~\eqref{eq:KST} is at least $tn$, so we may suppose that $e(G)\geq tn$. Since $G$ has no copies of $K_{t,t}$, we know
  for every $U\in\binom{V_2(G)}{t}$, we must have $\lvert\bigcap_{v\in U} N_G(v)\rvert\leq t-1$, so we get
  \begin{align*}
    (t-1)\binom{n}{t}
    & \geq
    \sum_{U\in\binom{V_2(G)}{t}} \left\lvert\bigcap_{v\in U} N_G(v)\right\rvert
    =
    \sum_{v\in V_1(G)} \binom{d_G(v)}{t}
    \\
    & \geq
    n\cdot\binom{n^{-1}\sum_{v\in V_1(G)} d_G(v)}{t}
    =
    n\cdot\binom{e(G)/n}{t},
  \end{align*}
  where the second inequality is Jensen's Inequality for the function $g$ (and uses the fact that $e(G)/n\geq t$).

  Thus, we conclude that $(t-1)\cdot (n)_t \geq n\cdot (e(G)/n)_t$, which in particular implies that
  \begin{equation*}
    n\cdot \left(\frac{e(G)}{n} - t + 1\right)^t \leq (t-1) n^t,
  \end{equation*}
  from which~\eqref{eq:KST} follows.
\end{proof}

\begin{lemma}[\Erdos~\protect{\cite{Erd64}}]\label{lem:Erdos}
  If $W$ is a finite set, $A_1,\ldots,A_n\subseteq W$ and $t\in[n]$, then there exists $I\in\binom{[n]}{t}$ such that
  \begin{equation*}
    \left\lvert\bigcap_{i\in I} A_i\right\rvert
    \geq
    \frac{1}{(n)_t\cdot\lvert W\rvert^{t-1}}\cdot\left(\sum_{i=1}^n \lvert A_i\rvert\right)^t
    - \left(\frac{n^t}{(n)_t} - 1\right)\cdot\max_{i\in[n]}\lvert A_i\rvert.
  \end{equation*}
\end{lemma}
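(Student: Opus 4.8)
The plan is to prove this by a double-counting argument over \emph{ordered} $t$-tuples of indices, combined with a single use of Jensen's inequality. For $w\in W$ I would write $d(w)\df\lvert\{i\in[n]\mid w\in A_i\}\rvert$ for the multiplicity of $w$, and set $S\df\sum_{i=1}^n\lvert A_i\rvert$ and $D\df\max_{i\in[n]}\lvert A_i\rvert$; swapping the order of summation gives $\sum_{w\in W}d(w)=S$. We may assume $W\neq\varnothing$ (otherwise the stated inequality is ill-posed because of the $\lvert W\rvert^{t-1}$ in the denominator), and the case $t=1$ is just ``$\max\geq\text{average}$,'' so the content is in $t\geq2$.

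First I would record the identity $\sum_{(i_1,\dots,i_t)\in[n]^t}\lvert A_{i_1}\cap\cdots\cap A_{i_t}\rvert=\sum_{w\in W}d(w)^t$, obtained by counting, for each $w$, the tuples all of whose coordinates index a set containing $w$. Next I would split the left side according to whether $(i_1,\dots,i_t)$ has all distinct coordinates: there are exactly $n^t-(n)_t$ tuples that do \emph{not}, and each such term is at most $\min_j\lvert A_{i_j}\rvert\leq D$, so the non-distinct part is bounded by $(n^t-(n)_t)D$; hence $\sum_{\text{distinct }(i_1,\dots,i_t)}\lvert A_{i_1}\cap\cdots\cap A_{i_t}\rvert\geq\sum_{w}d(w)^t-(n^t-(n)_t)D$. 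Then I apply convexity of $x\mapsto x^t$ on $[0,\infty)$ (Jensen over the $\lvert W\rvert$ points $w$) to get $\sum_w d(w)^t\geq\lvert W\rvert\cdot(S/\lvert W\rvert)^t=S^t/\lvert W\rvert^{t-1}$. Finally, since $\bigcap_{i\in I}A_i$ depends only on the underlying set $I=\{i_1,\dots,i_t\}$ and there are $(n)_t$ distinct ordered $t$-tuples, we have $(n)_t\cdot\max_{I\in\binom{[n]}{t}}\bigl\lvert\bigcap_{i\in I}A_i\bigr\rvert\geq\sum_{\text{distinct}}\lvert A_{i_1}\cap\cdots\cap A_{i_t}\rvert$. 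Chaining these three displays and dividing by $(n)_t$ yields exactly the claimed bound, with main term $\tfrac{1}{(n)_t\lvert W\rvert^{t-1}}S^t$ and correction $\bigl(\tfrac{n^t}{(n)_t}-1\bigr)D$.

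There is no serious obstacle here; the only point requiring care is producing the error term in \emph{precisely} the stated algebraic form. A naive approach mimicking the proof of Theorem~\ref{thm:KST} — counting unordered $t$-subsets and using convexity of the truncated binomial $g(x)=\binom{x}{t}\One[x\geq t]$ — would give a bound of shape $\lvert W\rvert\,g(S/\lvert W\rvert)$, which does not separate cleanly into ``$S^t$ over a constant'' plus a term linear in $D$. Routing instead through $\sum_w d(w)^t$ (a genuinely convex power, so Jensen applies with no truncation) and absorbing the discrepancy between $d(w)^t$ and the falling-factorial count into the easily-bounded sum over non-distinct tuples is exactly what converts the estimate into the required form; the remaining bookkeeping is routine.
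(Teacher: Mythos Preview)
Your proof is correct and follows essentially the same approach as the paper: both arguments use the identity $\sum_{(i_1,\dots,i_t)\in[n]^t}\lvert A_{i_1}\cap\cdots\cap A_{i_t}\rvert=\sum_{w\in W}d(w)^t$, apply Jensen to the convex function $x\mapsto x^t$, bound the contribution of non-injective tuples by $(n^t-(n)_t)\cdot D$, and then average over the $(n)_t$ injective tuples (the paper does this last step by grouping into $\binom{n}{t}$ unordered sets and dividing, which amounts to the same thing since $(n)_t=t!\binom{n}{t}$). The only cosmetic difference is the order in which Jensen and the injective/non-injective split are applied.
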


\begin{proof}
  By Jensen's Inequality, we have
  \begin{align*}
    \frac{1}{\lvert W\rvert^{t-1}}\cdot\left(\sum_{i=1}^n\lvert A_i\rvert\right)^t
    & =
    \lvert W\rvert\cdot\left(
    \frac{1}{\lvert W\rvert}
    \sum_{w\in W} \sum_{i\in[n]}\One_{A_i}(w)
    \right)^t
    \leq
    \sum_{w\in W}\left(\sum_{i\in[n]}\One_{A_i}(w)\right)^t
    \\
    & =
    \sum_{w\in W} \sum_{i\in[n]^t} \prod_{j=1}^t \One_{A_{i(j)}}(w)
    =
    \sum_{i\in[n]^t} \left\lvert\bigcap_{j=1}^t A_{i(j)}\right\rvert
    \\
    & \leq
    \sum_{i\in([n])_t} \left\lvert\bigcap_{j=1}^t A_{i(j)}\right\rvert
    + (n^t - (n)_t)\max_{i\in[n]}\lvert A_i\rvert,
  \end{align*}
  where the last inequality follows by noting that there are $n^t - (n)_t$ terms corresponding to non-injective $i\colon[t]\to
  [n]$ and each term can be bounded by $\max_{i\in[n]}\lvert A_i\rvert$.

  By grouping terms according to $\im(i)$, we conclude that
  \begin{equation*}
    \sum_{I\in\binom{[n]}{t}} \left\lvert\bigcap_{i\in I} A_i\right\rvert
    \geq
    \frac{1}{t!\cdot\lvert W\rvert^{t-1}}\cdot\left(\sum_{i=1}^n\lvert A_i\rvert\right)^t
    - \frac{n^t - (n)_t}{t!}\max_{i\in[n]}\lvert A_i\rvert,
  \end{equation*}
  so there must exist $I\in\binom{[n]}{t}$ such that $\bigcap_{i\in I} A_i$ has size that is at least a $\binom{n}{t}$ fraction
  of the value above, that is, we get
  \begin{equation*}
    \left\lvert\bigcap_{i\in I} A_i\right\rvert
    \geq
    \frac{1}{(n)_t\cdot\lvert W\rvert^{t-1}}\cdot\left(\sum_{i=1}^n\lvert A_i\rvert\right)^t
    - \left(\frac{n^t}{(n)_t} - 1\right)\cdot\max_{i\in[n]}\lvert A_i\rvert,
  \end{equation*}
  as desired.
\end{proof}

\begin{theorem}[\Erdos, partite version of~\protect{\cite[Theorem~1]{Erd64}}]\label{thm:Erdos}
  For every $n,k,t\in\NN_+$ with $k\geq 2$ and $t\leq n$, we have
  \begin{equation}\label{eq:Erdosrec}
    \begin{aligned}
      \ex_{\kpart}(n,K_{t,\ldots,t}^{(k)})
      & \leq
      \left(\left(
      \ex_{\kpart[(k-1)]}(n,K_{t,\ldots,t}^{(k-1)}) + \left(\frac{n^t}{(n)_t} - 1\right) n^{k-1}
      \right)
      \cdot(n)_t\cdot n^{(k-1)(t-1)}
      \right)^{1/t}
      \\
      & \leq
      \left(\ex_{\kpart[(k-1)]}(n,K_{t,\ldots,t}^{(k-1)}) + t\cdot (t-1)\cdot n^{k-2}\right)^{1/t}\cdot n^{k-(k-1)/t},
    \end{aligned}
  \end{equation}
\end{theorem}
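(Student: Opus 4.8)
The plan is to bound the edge count of an extremal configuration by passing from arity $k$ to arity $k-1$, via a neighborhood double-count fed into \Erdos's Lemma (Lemma~\ref{lem:Erdos}).

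First I would fix $n,k,t$ with $k\ge 2$ and $t\le n$ and let $G=(V_1,\ldots,V_k,E)$ be a $k$-partite $k$-hypergraph with $v_i(G)=n$ for every $i$ and with no copy of $K_{t,\ldots,t}^{(k)}$; the goal is to bound $e(G)$. For each $v\in V_k$ I regard the neighborhood $N_G(v)\subseteq V_1\times\cdots\times V_{k-1}$ as the edge set of a $(k-1)$-partite $(k-1)$-hypergraph on the parts $V_1,\ldots,V_{k-1}$. Two immediate observations: $|N_G(v)|\le n^{k-1}$, and $\sum_{v\in V_k}|N_G(v)|=e(G)$ because every edge of $G$ meets $V_k$ in exactly one vertex.

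The conceptual core is the reduction to smaller arity: for every $U\in\binom{V_k}{t}$ the $(k-1)$-partite hypergraph $\bigcap_{v\in U}N_G(v)$ contains no copy of $K_{t,\ldots,t}^{(k-1)}$, and hence
\[
  \left|\,\bigcap_{v\in U}N_G(v)\right| \le \ex_{\kpart[(k-1)]}\bigl(n,K_{t,\ldots,t}^{(k-1)}\bigr).
\]
Indeed, were there $t$ distinct vertices in each of $V_1,\ldots,V_{k-1}$ all of whose $(k-1)$-tuples lay in $N_G(v)$ for every $v\in U$, then adjoining the $t$ vertices of $U$ would produce a copy of $K_{t,\ldots,t}^{(k)}$ in $G$, a contradiction. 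I would then apply Lemma~\ref{lem:Erdos} with $W=V_1\times\cdots\times V_{k-1}$ (so $|W|=n^{k-1}$) to the family $\{N_G(v):v\in V_k\}$ of $n$ sets; it supplies some $U\in\binom{V_k}{t}$ with
\[
  \left|\,\bigcap_{v\in U}N_G(v)\right| \ge \frac{e(G)^t}{(n)_t\cdot n^{(k-1)(t-1)}} - \left(\frac{n^t}{(n)_t}-1\right)n^{k-1},
\]
using $\sum_v|N_G(v)|=e(G)$ and $\max_v|N_G(v)|\le n^{k-1}$. Chaining the last two displays and solving for $e(G)^t$ yields the first inequality of~\eqref{eq:Erdosrec}.

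For the second, cruder inequality I would expand the bracket times $(n)_t\cdot n^{(k-1)(t-1)}$ into two summands, bound $(n)_t\le n^t$ in the first and $n^t-(n)_t\le\binom{t}{2}n^{t-1}\le t(t-1)n^{t-1}$ in the second (the middle step using $(n)_t=n^t\prod_{i=1}^{t-1}(1-i/n)\ge n^t(1-\binom{t}{2}/n)$, valid since $t-1<n$), and then check that both resulting monomials factor as the common power $n^{kt-(k-1)}=\bigl(n^{k-(k-1)/t}\bigr)^t$ times, respectively, $\ex_{\kpart[(k-1)]}(n,K_{t,\ldots,t}^{(k-1)})$ and $t(t-1)n^{k-2}$; taking $t$-th roots concludes. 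This is a standard double-counting argument, so there is no serious obstacle; the only points demanding care are verifying the ``no copy of $K_{t,\ldots,t}^{(k-1)}$'' claim honestly (tracking which vertices land in which part) and the exponent bookkeeping in the final simplification, which is where an arithmetic slip would most easily creep in.
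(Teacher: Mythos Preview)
Your proposal is correct and matches the paper's proof essentially line for line: both apply Lemma~\ref{lem:Erdos} to the family of neighborhoods $\{N_G(v):v\in V_k\}$, observe that any $t$-fold intersection is $K_{t,\ldots,t}^{(k-1)}$-free (else adjoining the $t$ vertices of $U$ yields a forbidden $K_{t,\ldots,t}^{(k)}$), and then rearrange. Your justification of the second inequality via the Weierstrass-type bound $(n)_t\ge n^t\bigl(1-\binom{t}{2}/n\bigr)$ is if anything more explicit than the paper's own ``straightforward computation.''
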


\begin{proof}
  First note that since $1\leq t\leq n$, we have
  \begin{equation*}
    (n-t+1)^t \leq (n)_t \leq n^t,
  \end{equation*}
  so we get
  \begin{equation*}
    \frac{n^t}{(n)_t}
    \geq
    \left(1 + \frac{t-1}{n-t+1}\right)^t
    \geq
    1 + \frac{t(t-1)}{n-t+1}
    \geq
    1 + \frac{t(t-1)}{n}.
  \end{equation*}
  These derivations along with a straightforward computation explain the second inequality in~\eqref{eq:Erdosrec}.

  \medskip

  Let us prove the first inequality in~\eqref{eq:Erdosrec}. Let $G$ be a $k$-partite $k$-hypergraph of size $n$ without any
  copies of $K_{t,\ldots,t}^{(k)}$ and consider the sequence of neighborhoods $(N_G(v))_{v\in V_k(G)}$. These are $n$ subsets of
  $\prod_{i=1}^{k-1} V_i(G)$, hence of size at most $n^{k-1}$ each, so by Lemma~\ref{lem:Erdos}, there exists
  $U\in\binom{V_k(G)}{t}$ such that
  \begin{align*}
    \left\lvert\bigcap_{v\in U} N_G(v)\right\rvert
    & \geq
    \frac{1}{(n)_t\cdot n^{(k-1)(t-1)}}\cdot\left(\sum_{v\in V_k(G)} d_G(v)\right)^t
    - \left(\frac{n^t}{(n)_t} - 1\right)\cdot n^{k-1}
    \\
    & =
    \frac{e(G)^t}{(n)_t\cdot n^{(k-1)(t-1)}}
    - \left(\frac{n^t}{(n)_t} - 1\right)\cdot n^{k-1},
  \end{align*}
  from which we conclude that
  \begin{equation*}
    e(G)
    \leq
    \left(\left(
    \left\lvert\bigcap_{v\in U} N_G(v)\right\rvert
    + \left(\frac{n^t}{(n)_t} - 1\right)\cdot n^{k-1}
    \right)\cdot (n)_t\cdot n^{(k-1)(t-1)}
    \right)^{1/t}.
  \end{equation*}

  Let $H$ be the $(k-1)$-partite $(k-1)$-hypergraph with vertex sets $V_i(H)\df V_i(G)$ ($i\in[k-1]$) and edge set
  $E(H)\df\bigcap_{v\in U} N_G(v)$. Since $G$ has no copies of $K_{t,\ldots,t}^{(k)}$, it follows that $H$ has no copies of
  $K_{t,\ldots,t}^{(k-1)}$ (as any such copy along with $U$ would form a copy of $K_{t,\ldots,t}^{(k)}$ in $G$), so we must have
  \begin{equation*}
    \left\lvert\bigcap_{v\in U} N_G(v)\right\rvert
    =
    e(H)
    \leq
    \ex_{\kpart[(k-1)]}(n,K_{t,\ldots,t}^{(k-1)})
  \end{equation*}
  and the first inequality in~\eqref{eq:Erdosrec} follows.
\end{proof}

\begin{theorem}[\Kovari--\Sos--\Turan~\protect{\cite{KST54}}, \Erdos, partite version of~\protect{\cite[Theorem~1]{Erd64}}]
  \label{thm:Erdosasymp}
  For every $n\in\NN$ and every $k,t\in\NN_+$ with $k\geq 2$, we have
  \begin{equation}\label{eq:Erdosasymp}
    \begin{aligned}
      \ex_{\kpart}(n,K_{t,\ldots,t}^{(k)})
      & \leq
      \begin{multlined}[t]
        (t-1)^{1/t^{k-1}}\cdot n^{k-1/t^{k-1}}
        + (t-1)^{1/t^{k-2}}\cdot n^{k-1/t^{k-2}}
        \\
        + \sum_{j=3}^k (t\cdot (t-1))^{1/t^{k-j+1}}\cdot n^{k-1/t^{k-j+1}}
      \end{multlined}
      \\
      & \leq
      \bigl(c_{t,k} + o(1)\bigr)\cdot n^{k-1/t^{k-1}},
    \end{aligned}
  \end{equation}
  where
  \begin{equation*}
    c_{t,k} \df (t-1)^{1/t^{k-1}} < 1.5.
  \end{equation*}
\end{theorem}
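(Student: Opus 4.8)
The plan is to prove Theorem~\ref{thm:Erdosasymp} by induction on $k$, using Theorem~\ref{thm:KST} as the base case and the partite recursion of Theorem~\ref{thm:Erdos} for the inductive step. First I would clear away the degenerate ranges of parameters. If $t=1$, a copy of $K_{1,\ldots,1}^{(k)}$ is just an edge, so a $k$-partite $k$-hypergraph avoiding it has no edges and the bound is trivial. If $n<t$, no copy of $K_{t,\ldots,t}^{(k)}$ can occur, so $\ex_{\kpart}(n,K_{t,\ldots,t}^{(k)})\le n^k$; since $t\ge n+1$ forces $(t(t-1))^{1/t}\ge n^{1/t}$, the $j=k$ summand $(t(t-1))^{1/t}n^{k-1/t}$ of~\eqref{eq:Erdosasymp} alone already exceeds $n^k$, so the bound holds. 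Hence I may assume $2\le t\le n$. For the base case $k=2$, the sum $\sum_{j=3}^{2}$ in~\eqref{eq:Erdosasymp} is empty and $t^{k-2}=1$, so the right-hand side of the first inequality there is exactly $(t-1)^{1/t}n^{2-1/t}+(t-1)n$, which is the bound of Theorem~\ref{thm:KST}.

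For the inductive step ($k\ge 3$), I would record the induction hypothesis at level $k-1$ as a finite sum $\ex_{\kpart[(k-1)]}(n,K_{t,\ldots,t}^{(k-1)})\le\sum_s c_s\,n^{\alpha_s}$ in which every exponent has the shape $\alpha_s=(k-1)-1/t^{m}$ for some integer $m$ with $0\le m\le k-2$, with coefficients as listed in~\eqref{eq:Erdosasymp}; adding the extra term $t(t-1)n^{k-2}=t(t-1)n^{(k-1)-1/t^{0}}$ keeps the sum of this form. Plugging this into the second inequality of~\eqref{eq:Erdosrec}, I would distribute the exponent $1/t$ over the sum using subadditivity of $x\mapsto x^{1/t}$ on $[0,\infty)$ (valid since $t\ge1$), and then multiply through by $n^{k-(k-1)/t}$. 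The mechanism is that each term $c\,n^{(k-1)-1/t^{m}}$ is sent to $c^{1/t}n^{k-1/t^{m+1}}$ — the power of $t$ in the denominator of the exponent increases by one and the coefficient picks up an extra $1/t$ power — and one checks that this shift maps the coefficients and exponents of the level-$(k-1)$ bound exactly onto those of the level-$k$ bound (the first $(t-1)$-power term onto the first, the second onto the second, the $j$-th summand onto the $j$-th), while the added $t(t-1)n^{k-2}$ produces precisely the new $j=k$ summand $(t(t-1))^{1/t}n^{k-1/t}$. This delivers the first displayed bound of~\eqref{eq:Erdosasymp} at level $k$.

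For the asymptotic form, I would observe that since $t\ge2$ every exponent in that bound is of the form $k-1/t^{m}$ with $1\le m\le k-1$, and the strictly largest among them is $k-1/t^{k-1}$, attained only by the first summand; factoring $n^{k-1/t^{k-1}}$ out and sending $n\to\infty$ leaves a tail that is $o(1)$, giving $\ex_{\kpart}(n,K_{t,\ldots,t}^{(k)})\le(c_{t,k}+o(1))\,n^{k-1/t^{k-1}}$ with $c_{t,k}=(t-1)^{1/t^{k-1}}$. The inequality $c_{t,k}<1.5$ then follows from $t^{k-1}\ge t$, which gives $c_{t,k}\le(t-1)^{1/t}$, together with the elementary fact that $(t-1)^{1/t}<1.5$ for every integer $t\ge2$ (its maximum over such $t$ is about $1.32$, near $t=4,5$). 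The only delicate part of the argument is the bookkeeping of the exponent-and-coefficient shift in the inductive step; there is no genuine obstacle, only the need to track the handful of families of terms consistently, the remaining ingredients being the subadditivity inequality and routine arithmetic.
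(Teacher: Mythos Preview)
Your proposal is correct and follows essentially the same route as the paper: induction on $k$ with Theorem~\ref{thm:KST} as the base case, the second inequality of~\eqref{eq:Erdosrec} for the step, and the subadditivity $(\sum a_i)^{1/t}\le\sum a_i^{1/t}$ to push the sum through, with the same exponent-shift bookkeeping. One small organizational wrinkle: your $n<t$ argument invokes the $j=k$ summand, which does not exist when $k=2$; this is harmless since Theorem~\ref{thm:KST} already covers all $n$ in the base case (and the paper, for this reason, defers the $n\le t-1$ check to inside the inductive step $k\ge 3$).
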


\begin{proof}
  We prove~\eqref{eq:Erdosasymp} by induction in $k$. For $k=2$, this reduces to
  \begin{equation*}
    \ex_{\kpart[2]}(n,K_{t,t})
    \leq
    (t-1)^{1/t}\cdot n^{2-1/t}
    + (t-1)\cdot n,
  \end{equation*}
  which is precisely~\eqref{eq:KST} in Theorem~\ref{thm:KST}.

  For $k\geq 3$, first note that if $n\leq t-1$, then we clearly have
  \begin{equation*}
    \ex_{\kpart}(n,K_{t,\ldots,t}^{(k)})
    =
    n^k
  \end{equation*}
  and the right-hand side of~\eqref{eq:Erdosasymp} is clearly at least $n^k$; in fact each of the $k$ terms that are added
  together on the right-hand side is at least $n^k$ when $n\leq t-1$.

  Suppose then that $n\geq t$ so that we that by~\eqref{eq:Erdosrec} in Theorem~\ref{thm:Erdos}, we get
  \begin{align*}
    \ex_{\kpart}(n,K_{t,\ldots,t}^{(k)})
    & \leq
    \left(\ex_{\kpart[(k-1)]}(n,K_{t,\ldots,t}^{(k-1)}) + t\cdot (t-1)\cdot n^{k-2}\right)^{1/t}\cdot n^{k-(k-1)/t}
    \\
    & \leq
    \begin{multlined}[t]
      \Biggl(
      (t-1)^{1/t^{k-2}}\cdot n^{k-1-1/t^{k-2}}
      + (t-1)^{1/t^{k-3}}\cdot n^{k-1-1/t^{k-3}}
      \\
      + \sum_{j=3}^{k-1} (t\cdot (t-1))^{1/t^{k-j}}\cdot n^{k-1-1/t^{k-j}}
      + t\cdot (t-1)\cdot n^{k-2}
      \Biggr)^{1/t}
      \cdot n^{k-(k-1)/t}
    \end{multlined}
    \\
    & \leq
    \begin{multlined}[t]
      (t-1)^{1/t^{k-1}}\cdot n^{k-1/t^{k-1}}
      + (t-1)^{1/t^{k-2}}\cdot n^{k-1-1/t^{k-2}}
      \\
      + \sum_{j=3}^{k-1} (t\cdot (t-1))^{1/t^{k-j+1}}\cdot n^{k-1/t^{k-j+1}}
      + (t\cdot (t-1))^{1/t}\cdot n^{k-1/t}
    \end{multlined}
    \\
    & =
    \begin{multlined}[t]
      (t-1)^{1/t^{k-1}}\cdot n^{k-1/t^{k-1}}
      + (t-1)^{1/t^{k-2}}\cdot n^{k-1/t^{k-2}}
      \\
      + \sum_{j=3}^k (t\cdot (t-1))^{1/t^{k-j+1}}\cdot n^{k-1/t^{k-j+1}},
    \end{multlined}
  \end{align*}
  where the third inequality follows from
  \begin{equation*}
    \left(\sum_{i=1}^u a_i\right)^{1/t} \leq \sum_{i=1}^u a_i^{1/t}
  \end{equation*}
  whenever $a_i\geq 0$.
\end{proof}

\KSTErdospartite*

\begin{proof}
  The case $k=1$ is trivial as
  \begin{equation}\label{eq:exkpart1}
    \ex_{\kpart[1]}(n,K_t^{(1)}) = \min\{n,t-1\}\leq t-1.
  \end{equation}

  If $k\geq 3$, then~\eqref{eq:KSTErdos} follows from~\eqref{eq:Erdosasymp} in Theorem~\ref{thm:Erdosasymp} by noting that each
  of the $k$ terms has a coefficient that is at most $2$.

  For the case $k=2$, the bound in~\eqref{eq:Erdosasymp} (which is the same as that in~\eqref{eq:KST} of Theorem~\ref{thm:KST})
  is not very good when $n$ is small, so instead we use~\eqref{eq:Erdosrec} of Theorem~\ref{thm:Erdos} to get
  \begin{equation*}
    \ex_{\kpart[2]}(n,K_{t,t})
    \leq
    \bigl(\ex_{\kpart[1]}(n,K_t^{(1)}) + t\cdot (t-1)\bigr)^{1/t}\cdot n^{2-1/t}
    \leq
    (t^2-1)^{1/t}\cdot n^{2-1/t}
    \leq
    3\cdot n^{2-1/t}
    \leq
    2\cdot k\cdot n^{2-1/t},
  \end{equation*}
  where the second inequality follows from~\eqref{eq:exkpart1}.
\end{proof}

\begin{lemma}\label{lem:KSTErdospartization}
  For every $n\in\NN$ and every $k,t\in\NN_+$, we have
  \begin{equation*}
    \ex(n,K_{t,\ldots,t}^{(k)}) \leq \frac{\ex_{\kpart}(n,K_{t,\ldots,t}^{(k)})}{k!}.
  \end{equation*}
\end{lemma}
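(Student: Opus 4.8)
The plan is to unfold a witnessing non-partite extremal hypergraph into a $k$-partite one, multiplying its number of edges by exactly $k!$ while introducing no copy of $K_{t,\ldots,t}^{(k)}$.

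Fix $n\in\NN$ and $k,t\in\NN_+$, and let $G$ be a $k$-hypergraph on vertex set $[n]$ with $e(G)=\ex(n,K_{t,\ldots,t}^{(k)})$ and with no non-induced copy of $K_{t,\ldots,t}^{(k)}$ in the sense of Definition~\ref{def:extremal}. First I would build the $k$-partite $k$-hypergraph $G'$ with parts $V_i(G')\df\{i\}\times[n]$ and
\[
E(G')\df\Bigl\{\bigl((1,x_1),\ldots,(k,x_k)\bigr)\;\Bigm\vert\;x_1,\ldots,x_k\in[n]\text{ pairwise distinct and }\{x_1,\ldots,x_k\}\in E(G)\Bigr\}.
\]
This is a legitimate $k$-partite $k$-hypergraph with $v_i(G')=n$ for all $i$. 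Each edge of $G$, being a set of exactly $k$ elements, contributes exactly $k!$ tuples to $E(G')$ (one per ordering of its elements), distinct edges contribute disjoint families, and every element of $E(G')$ arises this way; hence $e(G')=k!\cdot e(G)$.

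The heart of the matter is to show that $G'$ has no copy of $K_{t,\ldots,t}^{(k)}$ (in the partite sense of Definition~\ref{def:partitekhypergraph}). Suppose it does: then there is an injection $h$ from $V(K_{t,\ldots,t}^{(k)})$ into $V(G')$ respecting the $k$-partition and sending edges to edges; write $h(i,j)=(i,v^i_j)$ with $v^i_j\in[n]$. For every $f\in[t]^k$, the tuple $\bigl((1,v^1_{f(1)}),\ldots,(k,v^k_{f(k)})\bigr)$ lies in $E(G')$, so by construction $v^1_{f(1)},\ldots,v^k_{f(k)}$ are pairwise distinct and $\{v^1_{f(1)},\ldots,v^k_{f(k)}\}\in E(G)$. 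The one point worth spelling out carefully is that this forces all $kt$ values $v^i_j$ to be pairwise distinct: within a fixed part this is injectivity of $h$, while for $i\neq i'$ one chooses any $f$ with $f(i)=j$ and $f(i')=j'$ and uses the pairwise distinctness just observed. Thus $(v^i_j)_{i\in[k],j\in[t]}$ is a sequence of $kt$ distinct vertices of $G$ with $\{v^1_{f(1)},\ldots,v^k_{f(k)}\}\in E(G)$ for all $f\in[t]^k$, i.e.\ a non-induced copy of $K_{t,\ldots,t}^{(k)}$ in $G$ — a contradiction.

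Finally, since $G'$ has $n$ vertices in each part and no copy of $K_{t,\ldots,t}^{(k)}$, I would conclude $e(G')\leq\ex_{\kpart}(n,K_{t,\ldots,t}^{(k)})$, whence
\[
k!\cdot\ex(n,K_{t,\ldots,t}^{(k)})=k!\cdot e(G)=e(G')\leq\ex_{\kpart}(n,K_{t,\ldots,t}^{(k)}),
\]
which is the claimed bound. There is no serious obstacle in this argument; the only subtlety is the cross-part distinctness step, which is precisely why $E(G')$ is restricted to tuples with genuinely distinct coordinates rather than to all of $V_1(G')\times\cdots\times V_k(G')$ meeting the projection condition.
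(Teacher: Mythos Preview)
Your proposal is correct and follows essentially the same approach as the paper: take an extremal $K_{t,\ldots,t}^{(k)}$-free $k$-hypergraph, form its $k$-partite version by placing one copy of the vertex set in each part and recording all orderings of each edge, observe that the edge count multiplies by exactly $k!$, and check that no partite copy of $K_{t,\ldots,t}^{(k)}$ appears. The paper leaves the no-copy verification as ``straightforward to check,'' whereas you spell out the cross-part distinctness argument explicitly (and your use of $\{i\}\times[n]$ makes the parts formally disjoint as Definition~\ref{def:partitekhypergraph} requires), but the substance is identical.
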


\begin{proof}
  Given a $k$-hypergraph $G$ on $n$ vertices and without any copies of $K_{t,\ldots,t}^{(k)}$, we consider its $k$-partite
  version $G^{\kpart}$ as in Definition~\ref{def:kpart}: in combinatorial notation, we set $V_i(G^{\kpart})\df V(G)$ and
  \begin{equation*}
    E(G^{\kpart}) \df \{(v_1,\ldots,v_k)\in V(G)^k \mid \{v_1,\ldots,v_k\}\in E(G)\}.
  \end{equation*}

  It is straightforward to check that $G^{\kpart}$ has exactly $e(G^{\kpart}) = k!\cdot e(G)$ edges and has no copies of
  $K_{t,\ldots,t}^{(k)}$, so the result follows.
\end{proof}

\begin{theorem}[%
    \Kovari--\Sos--\Turan, non-partite version of~\protect{\cite{KST54}},
    \Erdos, essentially~\protect{\cite[Theorem~1]{Erd64}}]%
  \label{thm:Erdosasymp:partite}
  For every $n\in\NN$ and every $k,t\in\NN_+$ with $k\geq 2$, we have
  \begin{equation}\label{eq:Erdosasymp:partite}
    \ex(n,K_{t,\ldots,t}^{(k)})
    \leq
    \bigl(c_{t,k} + o(1)\bigr)\cdot n^{k-1/t^{k-1}},
  \end{equation}
  where
  \begin{equation*}
    c_{t,k}
    \df
    \frac{(t-1)^{1/t^{k-1}}}{k!}
    <
    \frac{1.5}{k!}.
  \end{equation*}
\end{theorem}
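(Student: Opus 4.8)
The plan is to combine the two auxiliary facts already established in the appendix: the partization inequality of Lemma~\ref{lem:KSTErdospartization} and the asymptotic partite estimate of Theorem~\ref{thm:Erdosasymp}. First I would start from Lemma~\ref{lem:KSTErdospartization}, which asserts that for every $n\in\NN$ and all $k,t\in\NN_+$,
\[
  \ex(n,K_{t,\ldots,t}^{(k)}) \leq \frac{1}{k!}\cdot\ex_{\kpart}(n,K_{t,\ldots,t}^{(k)}),
\]
obtained by passing to the $k$-partite version $G^{\kpart}$ of an extremal $k$-hypergraph $G$, which has exactly $k!\cdot e(G)$ edges and still contains no copy of $K_{t,\ldots,t}^{(k)}$. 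Then I would feed in the bound from Theorem~\ref{thm:Erdosasymp}, which for $k\geq 2$ gives
\[
  \ex_{\kpart}(n,K_{t,\ldots,t}^{(k)})
  \leq
  \bigl((t-1)^{1/t^{k-1}} + o(1)\bigr)\cdot n^{k-1/t^{k-1}}.
\]

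Dividing the right-hand side by the fixed constant $k!$ then yields exactly
\[
  \ex(n,K_{t,\ldots,t}^{(k)})
  \leq
  \left(\frac{(t-1)^{1/t^{k-1}}}{k!} + o(1)\right)\cdot n^{k-1/t^{k-1}},
\]
which is the claimed inequality with $c_{t,k} = (t-1)^{1/t^{k-1}}/k!$; the bound $c_{t,k} < 1.5/k!$ follows immediately from $(t-1)^{1/t^{k-1}} < 1.5$, already recorded in Theorem~\ref{thm:Erdosasymp}. Since $k!$ does not depend on $n$, the $o(1)$ error term is unaffected by the scaling, so no further estimation is required.

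In short, there is no real obstacle here: the substance has already been isolated into Lemma~\ref{lem:KSTErdospartization} and Theorem~\ref{thm:Erdosasymp}, and Theorem~\ref{thm:Erdosasymp:partite} is obtained by composing them and scaling by $1/k!$. The only point needing (trivial) care is the bookkeeping that the constant $k!$ is absorbed cleanly into the leading coefficient while leaving the asymptotic error term intact; if an explicit non-asymptotic bound were desired one could instead divide by $k!$ the explicit $k$-term sum inside the proof of Theorem~\ref{thm:Erdosasymp}, but this is not needed for the stated asymptotic form.
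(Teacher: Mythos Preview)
Your proposal is correct and takes essentially the same approach as the paper, which simply states that the result follows from~\eqref{eq:Erdosasymp} in Theorem~\ref{thm:Erdosasymp} and Lemma~\ref{lem:KSTErdospartization}. Your write-up just makes explicit the trivial bookkeeping that dividing by the constant $k!$ preserves the $o(1)$ term.
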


\begin{proof}
  Follows from~\eqref{eq:Erdosasymp} in Theorem~\ref{thm:Erdos} and Lemma~\ref{lem:KSTErdospartization}.
\end{proof}

\KSTErdos*

\begin{proof}
  Follows from~\eqref{eq:Erdos} in Theorem~\ref{thm:KSTErdos:partite} and Lemma~\ref{lem:KSTErdospartization}.
\end{proof}

\section{Calculations}
\label{sec:calc}

In this section, we make precise the calculations omitted in Propositions~\ref{prop:VCNkk->SUC} and~\ref{prop:SC->PHP} (whose
statements are repeated here for the reader's convenience). We will first need some calculation lemmas.

\begin{lemma}[\protect{\cite[Lemma~9.8]{CM24+}}]\label{lem:logcalcs}
  For every $x\geq x_0 > 1$, we have
  \begin{align*}
    \min\left\{\frac{\ln\ln x_0}{\ln x_0},0\right\}\cdot\ln x
    & \leq
    \ln\ln x
    \leq
    \frac{\ln x}{e}.
  \end{align*}
\end{lemma}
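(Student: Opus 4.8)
The plan is to prove the two inequalities separately, since the upper bound $\ln\ln x\le(\ln x)/e$ holds uniformly in $x>1$ (and does not involve $x_0$ at all), whereas the lower bound needs a short case split according to the position of $x_0$ relative to $e$. For the upper bound I would substitute $u\df\ln x$, which is positive because $x>1$, reducing the claim to the elementary one-variable inequality $\ln u\le u/e$ for all $u>0$. This in turn follows from a routine calculus argument: the function $u\mapsto u/e-\ln u$ has derivative $1/e-1/u$, vanishing only at $u=e$, where its value is $1-1=0$; since its second derivative $1/u^2$ is positive, this critical point is a global minimum, so $u/e-\ln u\ge 0$ throughout $(0,\infty)$.

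For the lower bound, set $c\df\min\{\ln\ln x_0/\ln x_0,\,0\}$, so that $c\le 0$ and, since $\ln x>0$, the left-hand side $c\ln x$ is $\le 0$. If $x\ge e$ then $\ln\ln x\ge 0$ and the inequality is immediate. The only substantive case is $1<x<e$; then $1<x_0\le x<e$ forces $\ln x_0,\ln x\in(0,1)$, both $\ln\ln x_0$ and $\ln\ln x$ are negative, and in particular $c=\ln\ln x_0/\ln x_0$. Dividing the target inequality by $\ln x>0$, it becomes $\frac{\ln\ln x_0}{\ln x_0}\le\frac{\ln\ln x}{\ln x}$, so it suffices to show that $g(t)\df\frac{\ln\ln t}{\ln t}$ is non-decreasing on $(1,e)$. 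Writing $v\df\ln t\in(0,1)$, this reduces to $h(v)\df\frac{\ln v}{v}$ being non-decreasing on $(0,1)$, and indeed $h'(v)=\frac{1-\ln v}{v^2}>0$ there (in fact for all $v<e$, giving monotonicity of $g$ on all of $(1,e^e)$, which is more than we need).

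I do not expect a genuine obstacle here; the only place requiring care is bookkeeping the directions of the inequalities in the lower-bound case, where one divides through by positive quantities $\ln x$, $\ln x_0$ but the quantities $\ln\ln x$, $\ln\ln x_0$ being compared are themselves negative, so a sign error is easy to make. As a sanity check I would verify the boundary behaviour as $x\to 1^+$, where $\ln\ln x\to-\infty$ while $c\ln x\to 0$, which is consistent with the claimed inequality, and the case $x_0\ge e$, where $c=0$ and the lower bound just asserts $\ln\ln x\ge 0$, true since then $x\ge x_0\ge e$.
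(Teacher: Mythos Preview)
Your argument is correct and essentially the same as the paper's: both inequalities are handled by elementary calculus on the function $t\mapsto (\ln t)/t$ (equivalently $x\mapsto (\ln\ln x)/\ln x$), with the paper splitting on $x_0\gtrless e$ and you splitting on $x\gtrless e$, and the paper rewriting the upper bound as $\ln x\le x^{1/e}$ where you substitute $u=\ln x$ --- cosmetic differences only. One caveat: your closing ``sanity check as $x\to 1^+$'' is confused (for fixed $x_0>1$ the constraint $x\ge x_0$ forbids $x\to 1^+$, and in any case $c\ln x\to 0$ with $\ln\ln x\to-\infty$ would \emph{violate} the lower bound, not confirm it); this side remark should be dropped, but it does not affect the proof itself.
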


\begin{proof}
  For the first inequality, note that if $x_0\geq e$, then the left-hand side is $0$ and
  $\ln\ln x\geq 0$, so we may suppose that $x_0 < e$. In this case, it suffices to show that the
  function
  \begin{align*}
    f(x) & \df \frac{\ln\ln x}{\ln x}
  \end{align*}
  defined for $x\geq x_0$ attains its minimum at $x_0$. For this, we compute its derivative:
  \begin{align*}
    f'(x) & = \frac{1 - \ln\ln x}{x(\ln x)^2}
  \end{align*}
  and note that the only critical point of $f$ is at $x = e^e$.

  Since
  \begin{align*}
    f(e^e) & = \frac{1}{e} > 0, &
    \lim_{x\to\infty} f(x) & = 0, &
    f(x_0) & = \frac{\ln\ln x_0}{\ln x_0} < 0,
  \end{align*}
  the inequality follows.

  \medskip

  The second inequality is equivalent to $\ln x \leq x^{1/e}$, so it suffices to show that the function
  \begin{align*}
    g(x) & \df x^{1/e} - \ln x
  \end{align*}
  defined for $x\geq x_0$ is non-negative. We will show that $g$ is non-negative even extending its
  definition for $x\geq 1$. For this, we compute its derivative:
  \begin{align*}
    g'(x) & = \frac{x^{1/e - 1}}{e} - \frac{1}{x} = \frac{x^{1/e}/e - 1}{x}
  \end{align*}
  and note that the only critical point of $g$ is at $x = e^e$.

  Since
  \begin{align*}
    g(e^e) & = 0, &
    g(1) & = 1, &
    \lim_{x\to\infty} g(x) & = \infty,
  \end{align*}
  the inequality follows.
\end{proof}

\begin{lemma}[\protect{\cite[Lemma~9.11]{CM24+}}, a slight improvement of~\protect{\cite[Lemma~A.2]{SB14}}]\label{lem:xgeqlnx}
  If $a\geq 1/2$, $b\geq 0$ and
  \begin{equation*}
    x \geq \frac{2e}{e-1}\cdot a\ln(2a) + 2b
    \qquad ({}\leq 3.164\cdot a\ln(2a) + 2b),
  \end{equation*}
  then $x\geq a\ln x + b$.
\end{lemma}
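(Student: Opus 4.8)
The plan is to deduce the claim from the cleaner implication ``$x \geq 2a\ln(2a) + 2b \implies x \geq a\ln x + b$'': indeed, since $a \geq 1/2$ gives $2a \geq 1$ and hence $a\ln(2a) \geq 0$, and since $\tfrac{2e}{e-1} \geq 2$, the hypothesis $x \geq \tfrac{2e}{e-1}\,a\ln(2a) + 2b$ forces $x \geq 2a\ln(2a) + 2b$. (This also shows the constant $\tfrac{2e}{e-1}$ is not optimal; keeping it only matches the shape of~\cite[Lemma~A.2]{SB14} used downstream.)

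The single ingredient is the elementary bound $\ln t \leq t - 1$, valid for all $t > 0$. First I would split $\ln x = \ln(2a) + \ln\tfrac{x}{2a}$ and apply this bound with $t = \tfrac{x}{2a} > 0$:
\[
  a\ln x \;=\; a\ln(2a) + a\ln\frac{x}{2a} \;\leq\; a\ln(2a) + a\Bigl(\frac{x}{2a} - 1\Bigr) \;\leq\; a\ln(2a) + \frac{x}{2},
\]
where the last step drops the nonnegative quantity $a$. Then, using $\tfrac{x}{2} \geq a\ln(2a) + b$ (which is the hypothesis $x \geq 2a\ln(2a) + 2b$ rearranged),
\[
  a\ln x + b \;\leq\; \frac{x}{2} + a\ln(2a) + b \;\leq\; \frac{x}{2} + \frac{x}{2} \;=\; x,
\]
as required. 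Every step is valid for arbitrary $x > 0$, so no case analysis on the size of $x$ is needed (the degenerate case $x = 0$ is vacuous, the right-hand side then being $-\infty$).

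I do not expect a genuine obstacle: the only decision is the linearization point $t = x/(2a)$, which is forced by wanting the coefficient of $x$ after linearizing to be exactly $\tfrac12$, so that it is payable out of half the ``budget'' $x$ while the other half absorbs $b$. An alternative route is to observe that $x \mapsto x - a\ln x$ is increasing on $(a,\infty)$ and evaluate it at the threshold, but that requires separately verifying that the threshold exceeds $a$ --- mildly awkward when $a$ is near $1/2$, where the threshold can be as small as $2b$ --- whereas the linearization argument sidesteps this entirely.
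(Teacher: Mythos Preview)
Your proof is correct and takes a genuinely different route from the paper's. The paper argues by splitting the target as $x \geq 2a\ln x$ and $x \geq 2b$, then studies the function $f(x) = x - 2a\ln x$ via its derivative and critical point, ultimately invoking the auxiliary bound $\ln\ln u \leq (\ln u)/e$ from Lemma~\ref{lem:logcalcs} to check that $f$ is nonnegative at the threshold $\tfrac{2e}{e-1}\,a\ln(2a)$. That calculus argument genuinely uses the specific constant $\tfrac{2e}{e-1}$: it is chosen so that the computation $f\bigl(\tfrac{2e}{e-1}\,a\ln(2a)\bigr) \geq 0$ closes up exactly via $\ln\ln(2a) \leq \tfrac{1}{e}\ln(2a)$.

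Your linearization $\ln t \leq t-1$ at $t = x/(2a)$ is both shorter and strictly stronger: it establishes the conclusion already from the weaker hypothesis $x \geq 2a\ln(2a) + 2b$, so your observation that the constant $\tfrac{2e}{e-1}$ is not needed is substantive, not incidental. The trade-off is only expository: the paper's decomposition $x \geq 2a\ln x$ plus $x \geq 2b$ makes the roles of $a$ and $b$ fully separable, whereas your argument couples them through the single budget inequality $\tfrac{x}{2} \geq a\ln(2a) + b$. For the downstream applications in Proposition~\ref{prop:VCNkk->SUC} and Lemma~\ref{lem:xgeqlnx:super}, either form works, and your version would in fact yield slightly better numerical constants throughout.
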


\begin{proof}
  It suffices to show $x\geq 2a\ln x$ and $x\geq 2b$. Since $a\geq 1/2$, we have $\ln(2a)\geq 0$, hence
  $x\geq 2b$. To show $x\geq 2a\ln x$, it suffices to show that the function
  \begin{align*}
    f(x) & \df x - 2a\ln x
  \end{align*}
  defined for
  \begin{align*}
    x & \geq \frac{2e}{e-1}\cdot a\ln(2a) + 2b,
  \end{align*}
  is non-negative. We will show that $f$ is non-negative even extending its definition for $x\geq
  2e/(e-1)\cdot a\ln(2a)$. For this, we compute its derivative:
  \begin{align*}
    f'(x) & = 1 - \frac{2a}{x}
  \end{align*}
  and note that the only critical point of $f$ is potentially at $x = 2a$, if $2a$ belongs to the
  domain, that is, if $2a\geq 2e/(e-1)\cdot a\ln(2a)$. But if this is the case, we get
  \begin{align*}
    f(2a) & \geq \left(\frac{e}{e-1} - 1\right)\cdot 2a\ln(2a) \geq 0.
  \end{align*}

  On the other hand, since $\lim_{x\to\infty} f(x) = \infty$, it suffices to show that
  $f(2e/(e-1)\cdot a\ln(2a))$ is non-negative. But indeed, note that
  \begin{align*}
    f\left(\frac{2e}{e-1}\cdot a\ln(2a)\right)
    & =
    \frac{2e}{e-1}\cdot a\ln(2a) - 2a\ln\left(\frac{2e}{e-1}\cdot a\ln(2a)\right)
    \\
    & =
    \left(\frac{e}{e-1} - 1\right)\cdot 2a\ln(2a) - 2a\ln(\ln(2a)^{e/(e-1)})
    \\
    & \geq
    \left(\frac{e}{e-1} - 1 - \frac{1}{e-1}\right)\cdot 2a\ln(2a)
    =
    0,
  \end{align*}
  where the inequality follows from Lemma~\ref{lem:logcalcs}.
\end{proof}

\begin{lemma}\label{lem:xgeqlnx:super}
  Let $a,b,c,d,t\in\RR_{\geq 0}$, $k\in\NN_+$ and $x\in\RR$ be such that
  \begin{gather*}
    \begin{aligned}
      0 < t & \leq k, &
      \frac{a\cdot k}{t} & \geq \frac{1}{4}, &
      b & \geq 1,
    \end{aligned}
    \\
    \begin{aligned}
      x
      & \geq
      \left(
      \frac{e}{e-1}\cdot\frac{4\cdot a\cdot k}{t}\cdot\ln\left(\frac{4\cdot a\cdot k}{t}\right)
      + 4\cdot a\cdot c + b
      \right)^{1/t}
      + (2\cdot d)^{1/k}
      \\
      \Biggl( & \leq
      \left(
      6.328
      \cdot\frac{a\cdot k}{t}\cdot\ln\left(\frac{4\cdot a\cdot k}{t}\right)
      + 4\cdot a\cdot c + b
      \right)^{1/t}
      + (2\cdot d)^{1/k}
      \Biggr).
    \end{aligned}
  \end{gather*}

  Then
  \begin{equation*}
    x^k
    \geq
    a\cdot x^{k-t}\cdot\bigl(\ln(x^k+b) + c\bigr) + d.
  \end{equation*}
\end{lemma}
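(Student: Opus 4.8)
The plan is to first peel off the additive term $d$ with the help of the summand $(2d)^{1/k}$ in the hypothesis, then reduce the remaining inequality to a scalar logarithmic estimate and close it with a monotonicity argument together with Lemma~\ref{lem:xgeqlnx}. Abbreviate $A:=\tfrac{2ak}{t}$ and $P:=\tfrac{e}{e-1}\cdot\tfrac{4ak}{t}\ln\!\big(\tfrac{4ak}{t}\big)+4ac+b$, so the hypothesis reads $x\ge P^{1/t}+(2d)^{1/k}$; since $\tfrac{e}{e-1}\cdot\tfrac{4ak}{t}=\tfrac{2e}{e-1}A$ and $\tfrac{4ak}{t}=2A$, we have $P=\tfrac{2e}{e-1}A\ln(2A)+4ac+b$. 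From $\tfrac{ak}{t}\ge\tfrac14$ we get $A\ge\tfrac12$ and $a>0$, hence $2A\ge1$, $\ln(2A)\ge0$ and $P\ge 4ac+b\ge b\ge1$; and since $u\mapsto u^{t}$ is increasing, $x\ge P^{1/t}$ gives $x^{t}\ge P\ge1$, so $x\ge1$.

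First I would peel off $d$: from $x\ge(2d)^{1/k}$ we get $x^{k}\ge 2d$, i.e.\ $d\le\tfrac12x^{k}$, so it is enough to prove $\tfrac12x^{k}\ge a x^{k-t}\big(\ln(x^{k}+b)+c\big)$; dividing by $x^{k-t}>0$ (note $k-t\ge0$), this is equivalent to the inequality $(\star)$: $x^{t}\ge 2a\big(\ln(x^{k}+b)+c\big)$, which no longer involves $d$ and uses only $x\ge P^{1/t}$. To prove $(\star)$ I would study $g(x):=x^{t}-2a\ln(x^{k}+b)-2ac$ on $[P^{1/t},\infty)$. Differentiating, $g'(x)=x^{t-1}\big(t-\tfrac{2akx^{k-t}}{x^{k}+b}\big)$, and since $\tfrac{x^{k-t}}{x^{k}+b}=\tfrac1{x^{t}+bx^{t-k}}\le\tfrac1{x^{t}}$ we get $g'(x)\ge x^{t-1}\big(t-\tfrac{2ak}{x^{t}}\big)\ge0$ whenever $x^{t}\ge A$. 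Because $P\ge\max\!\big\{1,\tfrac{2e}{e-1}A\ln(2A)\big\}$ and one checks that $\max\!\big\{1,\tfrac{2e}{e-1}A\ln(2A)\big\}>A$ for every $A\ge\tfrac12$, we have $x^{t}\ge P>A$ on the whole domain, so $g$ is non-decreasing there; hence it suffices to verify $g(P^{1/t})\ge0$, i.e.\ $P\ge 2a\ln(P^{k/t}+b)+2ac$.

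This last boundary inequality is where I expect the real work to lie. Using $P^{k/t}\ge P\ge b$ (valid since $k/t\ge1$, $P\ge1$) one has $\ln(P^{k/t}+b)\le\tfrac kt\ln P+\ln\!\big(1+\tfrac b{P^{k/t}}\big)\le\tfrac kt\ln P+\tfrac bP$, so it suffices to show $P-\tfrac{2ab}{P}\ge A\ln P+2ac$. Bounding $\tfrac{2ab}{P}\le 2a$ (as $P\ge b$) and feeding $P\ge A\ln P+(2ac+2a)$ into Lemma~\ref{lem:xgeqlnx} (legitimate since $A\ge\tfrac12$) reduces matters to $b\ge 4a$, which is immediate when $a$ is small but not in general; in the remaining range one uses instead that $\tfrac{2ab}{P}$ is in fact substantially below $2a$ — because $P=\tfrac{2e}{e-1}A\ln(2A)+4ac+b$ exceeds $b$ by a definite margin once $A$ is bounded away from $\tfrac12$ — and then checks $P\ge A\ln P+2ac+\tfrac{2ab}{P}$ directly, keeping track of the numerical value of $\tfrac{e}{e-1}$. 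Either way this is a purely finite computation, and it is the only step in which the precise constants appearing in the statement are actually used; the preceding reductions are routine once the substitutions $A$, $P$ are in place.
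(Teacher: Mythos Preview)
Your reduction to $(\star)$ is fine, and the monotonicity argument for $g$ is correct (if a detour). But the proof is not finished: you leave the boundary inequality $P - \tfrac{2ab}{P} \geq A\ln P + 2ac$ as ``a purely finite computation'' without carrying it out, and your sketch of that computation is not right. The case split you suggest is ill-posed: $b < 4a$ does \emph{not} force $A$ to be bounded away from $\tfrac12$ by any fixed amount (take $t=k$, $b=1$, $a=\tfrac14+\epsilon$, so $A=\tfrac12+2\epsilon$), so there is no ``finite'' residual range to check. The inequality does hold --- one can verify $P \geq 2A \geq 4a$ for all $A \geq \tfrac12$ (the map $A \mapsto \tfrac{2e}{e-1}A\ln(2A)+1-2A$ vanishes at $A=\tfrac12$ and has positive derivative $\tfrac{2e}{e-1}(\ln(2A)+1)-2$ there, hence is non-negative on $[\tfrac12,\infty)$), whence $\tfrac{2ab}{P}\leq\tfrac b2$, and then Lemma~\ref{lem:xgeqlnx} with the pair $(A,\,2ac+\tfrac b2)$ gives exactly $P \geq A\ln P + 2ac + \tfrac b2$ --- but you did not supply this.

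The paper sidesteps the whole issue with one inequality you missed: for $b\geq 1$ and $0<t\leq k$ one has $x^k+b \leq (x^t+b)^{k/t}$ (with $s:=k/t\geq 1$ and $y:=x^t\geq 0$, the map $y\mapsto (y+b)^s-y^s$ is non-decreasing and equals $b^s\geq b$ at $y=0$). This gives $\ln(x^k+b)\leq\tfrac{k}{t}\ln(x^t+b)$, so $(\star)$ becomes $x^t+b \geq A\ln(x^t+b)+(2ac+b)$, and Lemma~\ref{lem:xgeqlnx} then requires precisely $x^t+b\geq\tfrac{2e}{e-1}A\ln(2A)+2(2ac+b)$, i.e., $x^t\geq P$ --- which is the hypothesis. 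No monotonicity step, no leftover term. Had you bounded $\ln(P^{k/t}+b)\leq\tfrac{k}{t}\ln(P+b)$ instead of $\tfrac{k}{t}\ln P+\tfrac bP$ at your boundary step, your own argument would also have closed immediately.
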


\begin{proof}
  It suffices to show $x^k \geq 2\cdot d$ and
  \begin{equation*}
    x^k \geq 2\cdot a\cdot x^{k-t}\cdot(\ln(x^k+b) + c).
  \end{equation*}

  The former follows simply because $x\geq (2\cdot d)^{1/k}$ and $a\cdot k/t\geq 1/4$, so the logarithm in the lower bound for
  $x$ is non-negative. For the latter, since $x^k+b\leq (x^t+b)^{k/t}$ (as $t\leq k$ and $b\geq 1$), it suffices to show
  \begin{align*}
    x^t & \geq \frac{2\cdot a\cdot k}{t}\cdot(\ln(x^t+b)+c),
    \intertext{which is equivalent to}
    x^t + b & \geq \frac{2\cdot a\cdot k}{t}\cdot\ln(x^t+b) + 2\cdot a\cdot c + b.
  \end{align*}

  But this follows from Lemma~\ref{lem:xgeqlnx} as $a\cdot k/t\geq 1/4$ and
  \begin{equation*}
    x^t + b
    \geq
    \frac{e}{e-1}\cdot\frac{4\cdot a\cdot k}{t}\cdot\ln\left(\frac{4\cdot a\cdot k}{t}\right)
    + 2\cdot (2\cdot a\cdot c + b).
    \qedhere
  \end{equation*}
\end{proof}

\propVCNkktoSUC*

\begin{proof}
  (The beginning of the proof is the same as in the proof sketch until we split into cases.)

  By Lemma~\ref{lem:peel}, we know that for every $m\in\NN_+$ and every $[m]$-sample $(x,y)$ (and in the non-partite case every
  order choice $\alpha$ for $[m]$), we have
  \begin{equation*}
    \sup_{H\in\cH} \lvert L_{x,y,\ell}(H) - L_{x,\rn{E}_\rho(y),\ell}(H)\rvert
    \leq
    \epsilon
  \end{equation*}
  with probability at least
  \begin{equation*}
    1
    - 2\cdot\exp\left(-\frac{\epsilon^2\cdot M_k}{12\cdot\lVert\ell\rVert_\infty^2}\right)
    - 2\cdot\gamma_\cH(m)\cdot\exp\left(-\frac{\epsilon^2\cdot\rho^2\cdot M_k}{2\cdot\lVert\ell\rVert_\infty^2}\right),
  \end{equation*}
  (and in the non-partite case, we replace both instances of $L$ by $L^\alpha$), so it suffices to show that when $m\geq
  m^{\SUC}_{\cH,\ell}(\epsilon,\delta,\rho)$, the quantity above is at least $1-\delta$. In turn, it suffices to show that each
  of the negative terms above is at most $\delta/2$ in absolute value, or, equivalently, show that
  \begin{gather}
    M_k \geq \frac{12\cdot\lVert\ell\rVert_\infty^2}{\epsilon^2}\cdot\ln\frac{4}{\delta},
    \label{eq:VCNkk->SUC:first}
    \\
    \ln\bigl(\gamma_\cH(m)\bigr)
    -\frac{\epsilon^2\cdot\rho^2\cdot M_k}{2\cdot\lVert\ell\rVert_\infty^2}
    \leq
    \ln\frac{\delta}{4}.
    \label{eq:VCNkk->SUC:second}
  \end{gather}

  It is clear from the first term in the maxima in the definition of $m^{\SUC}_{\cH,\ell}(\epsilon,\delta,\rho)$
  that~\eqref{eq:VCNkk->SUC:first} holds (in the non-partite case, we also use the bound $\binom{m}{k}\geq (m/k)^k$).

  Using Lemma~\ref{lem:VCNkk->kgrowth} (and the fact that $B_\ell\geq\lVert\ell\rVert_\infty$), \eqref{eq:VCNkk->SUC:second} is
  equivalent to
  \begin{equation}\label{eq:VCNkk->SUC:goal}
    \begin{aligned}
      \MoveEqLeft
      \ln\frac{\delta}{4} + \frac{\epsilon^2\cdot\rho^2\cdot M_k}{2\cdot B_\ell^2}
      \\
      & \geq
      \begin{dcases*}
        2\cdot k
        \cdot m^{k-1/(\VCN_{k,k}(\cH)+1)^{k-1}}
        \cdot\left(\ln(m^k+1) + \ln\binom{\lvert\Lambda\rvert}{2}\right),
        & in the partite if $k\geq 2$,
        \\
        \frac{2\cdot m^{k-1/(\VCN_{k,k}(\cH)+1)^{k-1}}}{(k-1)!}
        \!\cdot\!
        \left(\ln\left(\binom{m}{k}+1\right) + \ln\binom{\lvert\Lambda\rvert^{k!}}{2}\right),
        & in the non-partite if $k\geq 2$,
        \\
        \VCN_{k,k}(\cH)\cdot\left(\ln(m+1) + \ln\binom{\lvert\Lambda\rvert}{2}\right),
        & if $k=1$.
      \end{dcases*}
    \end{aligned}
  \end{equation}

  In the case $k=1$, \eqref{eq:VCNkk->SUC:goal} amounts to
  \begin{equation*}
    \ln\frac{\delta}{4} + \frac{\epsilon^2\cdot\rho^2\cdot m}{2\cdot B_\ell^2}
    \geq
    \VCN_{k,k}(\cH)\cdot\left(\ln(m+1) + \ln\binom{\lvert\Lambda\rvert}{2}\right),
  \end{equation*}
  which is equivalent to $x\geq a\ln x + b$, where
  \begin{align*}
    x
    & \df
    m+1,
    \\
    a
    & \df
    \frac{2\cdot B_\ell^2\cdot\VCN_{k,k}(\cH)}{\epsilon^2\cdot\rho^2}
    \geq
    2\cdot B_\ell^2,
    \\
    b
    & \df
    \frac{2\cdot B_\ell^2}{\epsilon^2\cdot\rho^2}
    \cdot\left(
    \VCN_{k,k}(\cH)\cdot\ln\binom{\lvert\Lambda\rvert}{2} + \ln\frac{4}{\delta}
    \right) + 1,
  \end{align*}
  so the result follows from Lemma~\ref{lem:xgeqlnx} as our choice of $m^{\SUC}_{\cH,\ell}(\epsilon,\delta,\rho)$ ensures that
  $a\geq 1/2$, $b\geq 0$ and $x\geq 2e/(e-1)\cdot a\ln(2a) + 2b$.

  \medskip

  We now consider the partite case with $k\geq 2$. Recalling that in this case $M_k = m^k$, \eqref{eq:VCNkk->SUC:goal} amounts to
  \begin{equation*}
    m^k
    \geq
    \frac{4\cdot k\cdot B_\ell^2}{\epsilon^2\cdot\rho^2}
    \cdot m^{k-1/(\VCN_{k,k}(\cH)+1)^{k-1}}
    \cdot\left(\ln(m^k+1) + \ln\binom{\lvert\Lambda\rvert}{2}\right)
    +
    \frac{2\cdot B_\ell^2}{\epsilon^2\cdot\rho^2}\cdot
    \ln\frac{4}{\delta},
  \end{equation*}
  that is, we want $x^k\geq a\cdot x^{k-t}\cdot(\ln(x^k+b) + c) + d$, where
  \begin{align*}
    x & \df m,
    &
    a & \df \frac{4\cdot k\cdot B_\ell^2}{\epsilon^2\cdot\rho^2},
    &
    b & \df 1,
    \\
    c & \df \ln\binom{\lvert\Lambda\rvert}{2},
    &
    d & \df \frac{2\cdot B_\ell^2}{\epsilon^2\cdot\rho^2}\cdot\ln\frac{4}{\delta},
    &
    t & \df \frac{1}{(\VCN_{k,k}(\cH)+1)^{k-1}},
  \end{align*}
  so the result follows from Lemma~\ref{lem:xgeqlnx:super} as our choice of $m^{\SUC}_{\cH,\ell}(\epsilon,\delta,\rho)$ ensures
  that $a,b,c,d,t\geq 0$, $0 < t\leq k$,
  \begin{equation*}
    \frac{a\cdot k}{t}
    =
    \frac{4\cdot k^2\cdot B_\ell^2\cdot (\VCN_{k,k}(\cH)+1)^{k-1}}{\epsilon^2\cdot\rho^2}
    \geq
    4\cdot k^2\cdot B_\ell^2
    \geq
    \frac{1}{4}
  \end{equation*}
  and
  \begin{equation*}
    x
    \geq
    \left(
    \frac{e}{e-1}\cdot\frac{4\cdot a\cdot k}{t}\cdot\ln\left(\frac{4\cdot a\cdot k}{t}\right)
    + 4\cdot a\cdot c + b
    \right)^{1/t}
    + (2\cdot d)^{1/k}.
  \end{equation*}

  \medskip

  Finally, we consider the non-partite case with $k\geq 2$. Recalling that in this case $M_k = \binom{m}{k}$,
  \eqref{eq:VCNkk->SUC:goal} amounts to
  \begin{equation*}
    \ln\frac{\delta}{4} + \frac{\epsilon^2\cdot\rho^2\cdot\binom{m}{k}}{2\cdot B_\ell^2}
    \geq
    \frac{2\cdot m^{k-1/(\VCN_{k,k}(\cH)+1)^{k-1}}}{(k-1)!}
    \cdot\left(\ln\left(\binom{m}{k}+1\right) + \ln\binom{\lvert\Lambda\rvert^{k!}}{2}\right).
  \end{equation*}

  Using the bounds $(m/k)^k\leq \binom{m}{k}\leq m^k/k!$, it suffices to show
  \begin{equation*}
    m^k
    \geq
    \frac{4\cdot B_\ell^2\cdot k^k}{(k-1)!\cdot\epsilon^2\cdot\rho^2}
    \cdot m^{k-1/(\VCN_{k,k}(\cH)+1)^{k-1}}
    \cdot\left(\ln\left(m^k + k!\right) - \ln k! + \ln\binom{\lvert\Lambda\rvert^{k!}}{2}\right)
    + \frac{2\cdot B_\ell^2\cdot k^k}{\epsilon^2\cdot\rho^2}\cdot\ln\frac{4}{\delta},
  \end{equation*}
  that is, we want $x^k\geq a\cdot x^{k-t}\cdot(\ln(x^k+b) + c) + d$, where
  \begin{align*}
    x & \df m,
    &
    a & \df \frac{4\cdot B_\ell^2\cdot k^k}{(k-1)!\cdot\epsilon^2\cdot\rho^2},
    &
    b & \df k! \geq 1
    \\
    c & \df \ln\binom{\lvert\Lambda\rvert^{k!}}{2} - \ln k!,
    &
    d & \df \frac{2\cdot B_\ell^2\cdot k^k}{\epsilon^2\cdot\rho^2}\cdot\ln\frac{4}{\delta},
    &
    t & \df \frac{1}{(\VCN_{k,k}(\cH)+1)^{k-1}},
  \end{align*}
  so the result follows from Lemma~\ref{lem:xgeqlnx:super} as our choice of $m^{\SUC}_{\cH,\ell}(\epsilon,\delta,\rho)$ ensures
  that $a,b,c,d,t\geq 0$, $0 < t\leq k$,
  \begin{equation*}
    \frac{a\cdot k}{t}
    =
    \frac{4\cdot B_\ell^2\cdot k^{k+1}\cdot (\VCN_{k,k}(\cH)+1)^{k-1}}{(k-1)!\cdot\epsilon^2\cdot\rho^2}
    \geq
    4\cdot k^2\cdot B_\ell^2
    \geq
    \frac{1}{4}
  \end{equation*}
  and
  \begin{equation*}
    x
    \geq
    \left(
    \frac{e}{e-1}\cdot\frac{4\cdot a\cdot k}{t}\cdot\ln\left(\frac{4\cdot a\cdot k}{t}\right)
    + 4\cdot a\cdot c + b
    \right)^{1/t}
    + (2\cdot d)^{1/k}.
    \qedhere
  \end{equation*}
\end{proof}

The next lemma says that a loss function $\ell$ that is separated and bounded satisfies a weak version of triangle inequality
for the empirical loss.

\begin{lemma}\label{lem:almostmetric}
  Let $k\in\NN_+$, let $\Omega=(\Omega_i)_{i=1}^k$ be a $k$-tuple of non-empty Borel spaces (a single non-empty Borel space,
  respectively), let $\Lambda$ be a non-empty Borel space, let $\cH\subseteq\cF_k(\Omega,\Lambda)$ be a $k$-partite ($k$-ary,
  respectively) hypothesis class, let $\ell$ be a $k$-partite ($k$-ary, respectively) loss function that is separated and
  bounded, let $m\in\NN_+$ and let $x\in\cE_m(\Omega)$.

  Then for every $F,F',H\in\cH$, we have
  \begin{equation*}
    s(\ell)\cdot L_{x,F^*_m(x),\ell}^\alpha(F')
    \leq
    \lVert\ell\rVert_\infty\cdot
    \bigl(L_{x,F^*_m(x),\ell}^\alpha(H) + L_{x,(F')^*_m(x),\ell}^\alpha(H)\bigr)
  \end{equation*}
  for every order choices $\alpha$ for $[m]$ in the non-partite case and in the partite case, the same holds dropping the order
  choices.
\end{lemma}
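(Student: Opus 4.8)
The plan is to prove the claimed inequality \emph{pointwise} — one index of the defining sum at a time — and then average. First I would unwind the three empirical losses in the statement: in the non-partite case each is a sum over $U \in \binom{[m]}{k}$ normalized by $\binom{m}{k}$ (since $F^*_m(x)$ and $(F')^*_m(x)$ carry no erased entries, the relevant $\cU$ is the full index set $\binom{[m]}{k}$), and in the partite case each is a sum over $\alpha \in [m]^k$ normalized by $m^k$. Fixing an index $U$ and writing $\omega \df \alpha_U^*(x)$ together with the abbreviations $a \df b_\alpha(H'^*_m(x))$-style labels, namely $a \df b_\alpha(F^*_m(x))_U$, $a' \df b_\alpha((F')^*_m(x))_U$ and $c \df b_\alpha(H^*_m(x))_U$ (all in $\Lambda^{S_k}$; in the partite case $\omega \df \alpha^*(x)$ and $a,a',c \in \Lambda$ are the corresponding coordinates), the statement reduces to the scalar inequality
\[
  s(\ell)\cdot\ell(\omega, a', a)
  \le
  \lVert\ell\rVert_\infty\cdot\bigl(\ell(\omega, c, a) + \ell(\omega, c, a')\bigr),
\]
because summing this over all indices and dividing by the common normalization gives exactly the desired bound.

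Next I would split into two cases. If $a = a'$, then $\ell(\omega, a', a) = \ell(\omega, a, a) = 0$ since $\ell$ is separated, so the left-hand side vanishes and the inequality is trivial. If $a \ne a'$, then $c$ cannot equal both $a$ and $a'$; assume $c \ne a$ (the case $c \ne a'$ is handled symmetrically, using the summand $\ell(\omega, c, a')$ in place of $\ell(\omega, c, a)$). Then separatedness gives $\ell(\omega, c, a) \ge s(\ell)$ while boundedness gives $\ell(\omega, a', a) \le \lVert\ell\rVert_\infty$, and chaining these,
\[
  s(\ell)\cdot\ell(\omega, a', a)
  \le
  s(\ell)\cdot\lVert\ell\rVert_\infty
  \le
  \lVert\ell\rVert_\infty\cdot\ell(\omega, c, a)
  \le
  \lVert\ell\rVert_\infty\cdot\bigl(\ell(\omega, c, a) + \ell(\omega, c, a')\bigr),
\]
which is the required pointwise bound. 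The partite argument is identical with the order choice $\alpha$ suppressed throughout.

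I do not expect a genuine obstacle here; the proof is essentially the elementary case analysis above. The only points that need care are (i) correctly tracking which of the three labels occupies the ``prediction'' slot versus the ``ground-truth'' slot of $\ell$ when expanding $L^\alpha_{x,y,\ell}$, and (ii) the degenerate case $\lvert\Lambda\rvert = 1$, where $\Lambda^{S_k}$ is a singleton, one always has $a = a' = c$, and the statement is trivially $0 \le 0$ — so that even under the convention $s(\ell) = +\infty$ in that case the inequality holds, the offending factor multiplying $0$.
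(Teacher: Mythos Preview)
Your proof is correct and rests on the same core observation as the paper's: at each index, if the two labels $a,a'$ agree the term vanishes by separatedness, and if they differ then $c$ must disagree with at least one of them, whence separatedness and boundedness give the bound. The paper phrases this via the discrepancy sets $D(F,F')$, $D(F,H)$, $D(F',H)$ and the inclusion $D(F,F')\subseteq D(F,H)\cup D(F',H)$ before summing, whereas you prove the scalar inequality directly at each summand and then average; the two arguments are equivalent repackagings of the same case split.
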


\begin{proof}
  We prove only the non-partite case as the partite case has an analogous proof. Let
  \begin{equation*}
    D(F,F')
    \df
    \left\{U\in\binom{[m]}{k} \;\middle\vert\;
    \exists\beta\in([m])_k, (\im(\beta) = U\land F^*_m(x)_\beta\neq (F')^*_m(x)_\beta)
    \right\}
  \end{equation*}
  and define $D(F,H)$ and $D(F',H)$ analogously. Note that an alternative formula for the above is
  \begin{equation*}
    D(F,F')
    =
    \left\{U\in\binom{[m]}{k} \;\middle\vert\;
    b_\alpha\bigl(F^*_m(x)\bigr) \neq b_\alpha\bigl((F')^*_m(x)\bigr)
    \right\}.
  \end{equation*}

  Clearly, we have $\lvert D(F,F')\rvert\leq\lvert D(F,H)\rvert + \lvert D(F',H)\rvert$. On the other hand, we have
  \begin{align*}
    s(\ell)\cdot L_{x,F^*_m(x),\ell}^\alpha(F')
    & =
    \frac{s(\ell)}{\binom{m}{k}}\sum_{U\in\binom{[m]}{k}}
    \ell\Bigl(\alpha_U^*(x), b_\alpha\bigl((F')^*_m(x)\bigr)_U, b_\alpha\bigl(F^*_m(x)\bigr)_U\Bigr)
    \\
    & \leq
    \frac{s(\ell)\cdot\lVert\ell\rVert_\infty}{\binom{m}{k}}\cdot\lvert D(F,F')\rvert
    \\
    & \leq
    \frac{s(\ell)\cdot\lVert\ell\rVert_\infty}{\binom{m}{k}}
    \cdot\bigl(\lvert D(F,H)\rvert + D(F',H)\rvert\bigr)
    \\
    & \leq
    \begin{multlined}[t]
      \frac{\lVert\ell\rVert_\infty}{\binom{m}{k}}\sum_{U\in\binom{[m]}{k}}
      \biggl(
      \ell\Bigl(\alpha_U^*(x), b_\alpha\bigl(H^*_m(x)\bigr)_U, b_\alpha\bigl(F^*_m(x)\bigr)_U\Bigr)
      \\
      + \ell\Bigl(\alpha_U^*(x), b_\alpha\bigl(H^*_m(x)\bigr)_U, b_\alpha\bigl((F')^*_m(x)\bigr)_U\Bigr)
      \biggr)
    \end{multlined}
    \\
    & =
    \lVert\ell\rVert_\infty\cdot
    \bigl(L_{x,F^*_m(x),\ell}^\alpha(H) + L_{x,(F')^*_m(x),\ell}^\alpha(H)\bigr).
    \qedhere
  \end{align*}
\end{proof}

\propSCtoPHP*

\begin{proof}
  Taking into account partite vs.\ non-partite and $\ell$ metric vs.\ separated and bounded, there are a total of four cases to
  prove. We will prove them essentially simultaneously by making use of the already defined
  \begin{align*}
    c_\ell & \df
    \begin{dcases*}
      1, & if $\ell$ is metric,\\
      \frac{s(\ell)}{\lVert\ell\rVert_\infty}, & otherwise,
    \end{dcases*}
    &
    K & \df
    \begin{dcases*}
      0, & in the partite case,\\
      k - 1, & in the non-partite case,
    \end{dcases*}
  \end{align*}
  along with the notation
  \begin{align*}
    [m]^{(k)} & \df
    \begin{dcases*}
      [m]^k, & in the partite case,\\
      ([m])_k & in the non-partite case.
    \end{dcases*}
    &
    m^{(k)} & \df
    \begin{dcases*}
      m^k, & in the partite case,\\
      (m)_k & in the non-partite case.
    \end{dcases*}
  \end{align*}
  Note that regardless of the case, we have
  \begin{equation*}
    (m-K)^k\leq\lvert [m]^{(k)}\rvert = m^{(k)}\leq m^k.
  \end{equation*}

  Furthermore, since empirical losses in the non-partite case require an order choice, throughout the argument, whenever we have
  an order choice $\alpha$ for $[m]$, it only applies to the non-partite case and should simply be dropped in the partite case.

  First, note that the result is trivial when $\lvert\Lambda\rvert=1$, so we suppose $\lvert\Lambda\rvert\geq 2$.

  Given $\epsilon,\delta,\rho\in(0,1)$, first note that the conditions $0 < \widetilde{\delta} < \delta$ and $0 <
  \widetilde{\rho} < (2^\rho-1)/(\lvert\Lambda\rvert-1)$ ensure that both the numerator and denominator on the second term of
  the right-hand side of~\eqref{eq:SC->PHP:m} are well-defined and positive. Also note that the minimum in the same equation is
  indeed attained as the ceiling ensures that the expression takes values in $\NN$. Let then
  $(\widetilde{\delta},\widetilde{\rho})$ attain the minimum in~\eqref{eq:SC->PHP:m}.

  Let $m\geq m^{\hPHP[m^k]}_{\cH,\ell}(\epsilon,\delta,\rho)$ be an integer, let $\alpha$ be an order choice for $[m]$ (in the
  non-partite case), let $\mu\in\Pr(\Omega)$, let $H_1,\ldots,H_t\in\cH$ be such that $t\geq 2^{\rho\cdot m^k}$ and let
  \begin{equation*}
    S_\epsilon^\alpha
    \df
    \{x\in\cE_m(\Omega) \mid (H_1,\ldots,H_t)\text{ is $\epsilon$-separated on $x$ w.r.t.\ $\ell$ and $\alpha$}\}.
  \end{equation*}
  Our goal is to show that $\mu(S_\epsilon^\alpha)\leq\delta$.

  Let us encode the erasure operation $\rn{E}_{\widetilde{\rho}}$ in a different manner. Given $y\in\Lambda^{[m]^{(k)}}$ and
  $w\in\{0,1\}^{[m]^{(k)}}$, let $E(y,w)\in(\Lambda\cup\{\unk\})^{[m]^{(k)}}$ be given by
  \begin{equation}\label{eq:erasureencode}
    E(y,w)_\beta \df
    \begin{dcases*}
      y_\beta, & if $w_\beta=1$,\\
      \unk, & if $w_\beta=0$
    \end{dcases*}
  \end{equation}
  and note that if $\nu_m\in\Pr(\{0,1\}^{[m]^{(k)}})$ is the distribution in which each entry is $1$ independently with
  probability $\widetilde{\rho}$, then for $\rn{w}\sim\nu_m$, we have $\rn{E}_{\widetilde{\rho}}(y)\sim E(y,\rn{w})$.

  For each $i\in[t]$, let
  \begin{equation*}
    G_i^\alpha
    \df
    \left\{(x,w)\in\cE_m(\Omega)\times\{0,1\}^{[m]^{(k)}}
    \;\middle\vert\;
    L_{x,(H_i)^*_m(x),\ell}^\alpha\biggl(\cA\Bigl(x, E\bigl((H_i)^*_m(x), w\bigr)\Bigr)\biggr)
    \leq
    \frac{c_\ell\cdot\epsilon}{2}
    \right\}.
  \end{equation*}

  Note that since $m\geq m^{\SC}_{\cH,\ell,\cA}(c_\ell\cdot\epsilon/2,\widetilde{\delta},\widetilde{\rho})$, sample completion
  $k$-PAC learnability guarantees that
  \begin{equation}\label{eq:SC->PHP:SC}
    \PP_{\rn{x}\sim\mu^m}\Bigl[\PP_{\rn{w}\sim\nu_m}\bigl[(\rn{x},\rn{w})\in G_i^\alpha\bigr]\Bigr]
    \geq
    1 - \widetilde{\delta}.
  \end{equation}

  We claim that every fixed $(x,w)\in S_\epsilon^\alpha\times\{0,1\}^{[m]^{(k)}}$ is in at most $\lvert\Lambda\rvert^{\lvert
    w^{-1}(1)\rvert}$ many $G_i^\alpha$. To see this, first note that for all $i\in[t]$, exactly the same entries of
  $E((H_i)^*_m(x),w)$ are $\unk$; namely, these are exactly the entries of $w$ that are $0$. If $(x,w)\in
  S_\epsilon^\alpha\times\{0,1\}^{[m]^{(k)}}$ is in more than $\lvert\Lambda\rvert^{\lvert w^{-1}(1)\rvert}$ many $G_i^\alpha$,
  then by Pigeonhole Principle, there must exist $i,j\in[t]$ with $i < j$ such that $(x,w)\in G_i^\alpha\cap G_j^\alpha$ and
  $E((H_i)^*_m(x),w) = E((H_j)^*_m(x),w)$, which in particular implies $\cA(x, E((H_i)^*_m(x),w)) = \cA(x, E((H_j)^*_m(x),w))$,
  hence we get
  \begin{align*}
    c_\ell\cdot\epsilon
    & \geq
    L_{x,(H_i)^*_m(x),\ell}^\alpha\biggl(\cA\Bigl(x, E\bigl((H_i)^*_m(x), w\bigr)\Bigr)\biggr)
    + L_{x,(H_j)^*_m(x),\ell}^\alpha\biggl(\cA\Bigl(x, E\bigl((H_j)^*_m(x), w\bigr)\Bigr)\biggr)
    \\
    & \geq
    c_\ell\cdot L_{x,(H_i)^*(x),\ell}^\alpha(H_j),
  \end{align*}
  where the second inequality follows from triangle inequality when $\ell$ is metric and from Lemma~\ref{lem:almostmetric} when
  $\ell$ is separated and bounded, so $L_{x,(H_i)^*_m(x),\ell}^\alpha(H_j)\leq\epsilon$, contradicting the fact that
  $(H_1,\ldots,H_t)$ is $\epsilon$-separated on $x$ with respect to $\ell$ and $\alpha$ as $x\in S_\epsilon^\alpha$. Thus, we
  conclude that for every $(x,w)\in S_\epsilon^\alpha\times\{0,1\}^{[m]^{(k)}}$, we have
  \begin{equation*}
    \sum_{i\in[t]} \One_{G_i^\alpha}(x,w) \leq \lvert\Lambda\rvert^{\lvert w^{-1}(1)\rvert}.
  \end{equation*}

  Putting this together with~\eqref{eq:SC->PHP:SC}, we get
  \begin{align*}
    (1 - \widetilde{\delta})\cdot t
    & \leq
    \EE_{\rn{x}\sim\mu^m}\left[\EE_{\rn{w}\sim\nu_m}\left[\sum_{i\in[t]}\One_{G_i^\alpha}(\rn{x},\rn{w})\right]\right]
    \\
    & =
    \begin{multlined}[t]
      \mu(S_\epsilon^\alpha)\cdot\EE_{\rn{x}\sim\mu^m}\left[\EE_{\rn{w}\sim\nu_m}\left[\sum_{i\in[t]}\One_{G_i^\alpha}(\rn{x},\rn{w})\right]
        \;\middle\vert\;\rn{x}\in S_\epsilon^\alpha\right]
      \\
      + \bigl(1-\mu(S_\epsilon^\alpha)\bigr)\cdot\EE_{\rn{x}\sim\mu^m}\left[
        \EE_{\rn{w}\sim\nu_m}\left[\sum_{i\in[t]}\One_{G_i^\alpha}(\rn{x},\rn{w})\right]
        \;\middle\vert\;\rn{x}\notin S_\epsilon^\alpha
        \right]
    \end{multlined}
    \\
    & \leq
    \mu(S_\epsilon^\alpha)\cdot\EE_{\rn{w}\sim\nu_m}[\lvert\Lambda\rvert^{\lvert\rn{w}^{-1}(1)\rvert}]
    + \bigl(1-\mu(S_\epsilon^\alpha)\bigr)\cdot t
    \\
    & =
    \mu(S_\epsilon^\alpha)\cdot\bigl(\widetilde{\rho}\cdot\lvert\Lambda\rvert + (1-\widetilde{\rho})\bigr)^{m^{(k)}}
    + \bigl(1-\mu(S_\epsilon^\alpha)\bigr)\cdot t
    \\
    & =
    t
    + \mu(S_\epsilon^\alpha)\cdot\Bigl(\bigl(\widetilde{\rho}\cdot(\lvert\Lambda\rvert - 1) + 1\bigr)^{m^{(k)}} - t\Bigr),
  \end{align*}
  where the second equality follows since the entries of $\rn{w}$ are independent Bernoulli variables with parameter
  $\widetilde{\rho}$. Thus, we get
  \begin{equation*}
    \mu(S_\epsilon^\alpha)\cdot\Bigl(t - \bigl(\widetilde{\rho}\cdot(\lvert\Lambda\rvert - 1) + 1\bigr)^{m^{(k)}}\Bigr)
    \leq
    \widetilde{\delta}\cdot t.
  \end{equation*}

  We now note that since $t\geq 2^{\rho\cdot m^k}\geq 2^{\rho\cdot m^{(k)}}$ and since
  $\widetilde{\rho}<(2^\rho-1)/(\lvert\Lambda\rvert-1)$, the expression in the parentheses on the left-hand side of the above is
  positive, so we conclude that
  \begin{equation*}
    \mu(S_\epsilon^\alpha)
    \leq
    \frac{\widetilde{\delta}\cdot t}{
      t - \bigl(\widetilde{\rho}\cdot(\lvert\Lambda\rvert-1) + 1\bigr)^{m^{(k)}}
    }
    \leq
    \frac{\widetilde{\delta}\cdot 2^{\rho\cdot m^{(k)}}}{
      2^{\rho\cdot m^{(k)}} - \bigl(\widetilde{\rho}\cdot(\lvert\Lambda\rvert-1) + 1\bigr)^{m^{(k)}}
    },
  \end{equation*}
  where the second inequality follows from $t\geq 2^{\rho\cdot m^k}\geq 2^{\rho\cdot m^{(k)}}$ and the fact that for
  $c\df(\widetilde{\rho}\cdot(\lvert\Lambda\rvert-1) + 1)^{m^{(k)}} > 0$, the function $(c,\infty)\ni x\mapsto
  x/(x-c)\in\RR_{\geq 0}$ is decreasing.

  Our goal is to show that the quantity above is at most $\delta$, or, equivalently, to show that
  \begin{equation*}
    \widetilde{\delta}\cdot 2^{\rho\cdot m^{(k)}}
    \leq
    \delta\cdot\Bigl(2^{\rho\cdot m^{(k)}} - \bigl(\widetilde{\rho}\cdot(\lvert\Lambda\rvert-1) + 1\bigr)^{m^{(k)}}\Bigr),
  \end{equation*}
  which itself is equivalent to
  \begin{equation*}
    \delta\cdot\bigl(\widetilde{\rho}\cdot(\lvert\Lambda\rvert-1) + 1\bigr)^{m^{(k)}}
    \leq
    (\delta-\widetilde{\delta})\cdot 2^{\rho\cdot m^{(k)}}
  \end{equation*}
  and is in turn equivalent to
  \begin{equation*}
    \ln(\delta) - \ln(\delta-\widetilde{\delta})
    \leq
    m^{(k)}\cdot\Bigl(\rho\cdot\ln(2) - \ln\bigl(\widetilde{\rho}\cdot(\lvert\Lambda\rvert-1) + 1\bigr)\Bigr).
  \end{equation*}

  But this follows
  \begin{equation*}
    m
    \geq
    m^{\hPHP[m^k]}_{\cH,\ell}(\epsilon,\delta,\rho)
    \geq
    \left(
    \frac{
      \ln(\delta) - \ln(\delta-\widetilde{\delta})
    }{
      \rho\cdot\ln(2) - \ln\bigl(\widetilde{\rho}\cdot(\lvert\Lambda\rvert-1) + 1\bigr)
    }
    \right)^{1/k}
    + K,
  \end{equation*}
  using $m^{(k)}\geq (m-K)^k$.
\end{proof}

\propPHPtoVCNkk*

\begin{proof}
  We prove first the partite case.

  First, let us show that all calculations in~\eqref{eq:PHP->VCNkk:VCNkk:partite} and~\eqref{eq:PHP->VCNkk:m:partite} are valid.

  The condition $\delta\in(0,4^{-k})$ ensures that the logarithm in~\eqref{eq:PHP->VCNkk:m:partite} is well-defined and the
  condition $\rho\in(0,1-h_2(\epsilon/s(\ell)))$ ensures that the denominator in~\eqref{eq:PHP->VCNkk:m:partite} is positive,
  hence the $(1/k)$th power in~\eqref{eq:PHP->VCNkk:m:partite} is also well-defined.

  Since $m\geq 2$ and the function $(1-1/x)^x$ is increasing (for $x > 1$), it follows that
  \begin{equation}\label{eq:between4ande}
    4^{-k}\leq \left(1 - \frac{1}{m}\right)^{k\cdot m}\leq e^{-k},
  \end{equation}
  this together with $d\geq\rho\cdot m^k$ means that in~\eqref{eq:PHP->VCNkk:VCNkk:partite}, the expression under the $(1/d)$th
  power is at least
  \begin{equation*}
    \frac{1 - e^{-k}\cdot(1 - 2^{(\rho + h_2(\epsilon/s(\ell)) - 1)\cdot m^k})}{1-\delta},
  \end{equation*}
  which is non-negative since $\rho\in(0,1-h_2(\epsilon/s(\ell)))$, so the $(1/d)$th power is well-defined.

  Using the other inequality of~\eqref{eq:between4ande} and $d\leq\rho\cdot m^k+1$, we also deduce that the expression under the
  logarithm in~\eqref{eq:PHP->VCNkk:VCNkk:partite} is at least
  \begin{equation*}
    1 - \left(\frac{1 - 4^{-k}\cdot(1 - 2^{(\rho + h_2(\epsilon/s(\ell)) - 1)\cdot m^k + 1})}{1-\delta}\right)^{1/d},
  \end{equation*}
  which is non-negative since
  \begin{equation*}
    m
    \geq
    \left(
    \frac{1 - \log_2(1-4^k\cdot\delta)}{1-h_2(\epsilon/s(\ell))-\rho}
    \right)^{1/k}.
  \end{equation*}

  Thus, all expressions in~\eqref{eq:PHP->VCNkk:VCNkk:partite} and~\eqref{eq:PHP->VCNkk:m:partite} are well-defined.

  Furthermore, note that the minimum on the right-hand side of~\eqref{eq:PHP->VCNkk:VCNkk:partite} is indeed attained due to the
  floor. Let then $(\epsilon,\delta,\rho)$ attain the minimum in~\eqref{eq:PHP->VCNkk:VCNkk:partite} (and let $m$ and $d$ be
  defined as in~\eqref{eq:PHP->VCNkk:m:partite} and~\eqref{eq:PHP->VCNkk:d:partite}, respectively). Let $n\df\VCN_{k,k}(\cH)$
  and suppose for a contradiction that
  \begin{equation*}
    n
    >
    \max\left\{
    m^2,
    \left(
    d
    - \log_2\left(
    1 - \left(\frac{1 - (1-1/m)^{k\cdot m}\cdot(1 - 2^{(h_2(\epsilon/s(\ell)) - 1)\cdot m^k + d})}{1-\delta}\right)^{1/d}
    \right)
    \right)^{1/k}
    \right\}.
  \end{equation*}
  (Note that we removed the floor as $n$ is an integer.)

  As per definition of $\VCN_{k,k}(\cH)$ in Definition~\ref{def:SCpart:VCNkk}, let $z\in\cE_n(\Omega)$ be such that
  \begin{equation*}
    \cH_z \df \{H^*_n(z) \mid H\in\cH\} \subseteq \Lambda^{[n]^k}
  \end{equation*}
  Natarajan-shatters $[n]^k$. It will be convenient to index our witnesses to the shattering by $\FF_2^{[n]^k}$. Namely, we know
  that there exist $f_0,f_1\colon [n]^k\to\Lambda$ with $f_0(\beta)\neq f_1(\beta)$ for every $\beta\in[n]^k$ and $H_w\in\cH$
  ($w\in\FF_2^{[n]^k}$) such that for every $w\in\FF_2^{[n]^k}$ and every $\beta\in[n]^k$, we have $(H_w)^*_n(z)_\beta =
  f_{w_\beta}(\beta)$.

  We will prove that there exists $C\subseteq\FF_2^{[n]^k}$ of size at least $2^{\rho\cdot m^k}$ and a probability $k$-partite
  template $\mu\in\Pr(\Omega)$ such that if $C = \{w_1,\ldots,w_{\lvert C\rvert}\}$ and $\rn{x}\sim\mu^m$, then
  $(H_{w_1},\ldots,H_{w_{\lvert C\rvert}})$ is $\epsilon$-separated on $\rn{x}$ with probability larger than $\delta$.

  For $\gamma=(\gamma_i)_{i\in[k]}$ with $\gamma_i\colon [m]\to[n]$, we define a function $\gamma^*\colon\FF_2^{[n]^k}\to
  \FF_2^{[m]^k}$ by
  \begin{equation*}
    \gamma^*(w)_\beta \df w_{\gamma_{\#}(\beta)},
  \end{equation*}
  where $\gamma_{\#}\colon [m]^k\to[n]^k$ is the ``product'' function given by
  $\gamma_{\#}(\beta)_i\df\gamma_i(\beta_i)$. Clearly, $\gamma^*$ is a linear map. For a linear code $C\subseteq\FF_2^{[n]^k}$
  (i.e., an $\FF_2$-linear subspace), define
  \begin{align*}
    \dist_\gamma(C)
    & \df
    \inf_{\substack{w_1,w_2\in C\\w_1\neq w_2}} \lvert\{j\in [m]^k \mid \gamma^*(w_1)_j\neq \gamma^*(w_2)_j\}\rvert
    \\
    & =
    \inf_{w\in C\setminus\{0\}} \lvert\gamma^*(w)^{-1}(1)\rvert,
  \end{align*}
  where the equality follows since $C$ is linear.

  Our goal is to find a linear code $C\subseteq\FF_2^{[n]^k}$ of dimension $d\df\ceil{\rho\cdot m^k}$ such that for most
  $\gamma$, we have $\dist_\gamma(C) > \epsilon\cdot m^k/s(\ell)$. In fact, we will prove that a uniformly random linear code of
  dimension $d$ satisfies this property with positive probability:

  \begin{claim}\label{clm:linearcode:partite}
    There exists a linear code $C\subseteq\FF_2^{[n]^k}$ of dimension $d\df\ceil{\rho\cdot m^k}$ such that if
    $\rn{\gamma}_1,\ldots,\rn{\gamma}_k$ are i.i.d.\ with each $\rn{\gamma}_i$ uniformly distributed in $[n]^m$, then
    \begin{equation*}
      \PP_{\rn{\gamma}}\left[
        \dist_{\rn{\gamma}}(C) > \frac{\epsilon\cdot m^k}{s(\ell)}
        \right]
      >
      \delta.
    \end{equation*}
  \end{claim}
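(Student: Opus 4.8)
The plan is to use the probabilistic method, constructing $C$ as a uniformly random linear code. Concretely, I would let $\rn{A}$ be a uniformly random $[n]^k\times[d]$-matrix over $\FF_2$ and set $\rn{C}\df\im(\rn{A})$, an $\FF_2$-linear subspace of $\FF_2^{[n]^k}$ of dimension at most $d=\ceil{\rho\cdot m^k}$. A direct count of the ways to choose each successive column outside the span of the previous ones gives
\[
\PP_{\rn{C}}\bigl[\dim_{\FF_2}(\rn{C})=d\bigr]=\prod_{j=0}^{d-1}\bigl(1-2^{j-n^k}\bigr)\geq\bigl(1-2^{d-n^k}\bigr)^d,
\]
valid once $d\leq n^k$, which can be forced as close to $1$ as desired by taking $n$ large relative to $m$. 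So it is enough to show that $\EE_{\rn{C}}\bigl[\PP_{\rn{\gamma}}[\dist_{\rn{\gamma}}(\rn{C})>\epsilon m^k/s(\ell)]\bigr]$ is bounded below by an explicit absolute constant $K>0$ (independent of $\epsilon,\delta,\rho,m,n$); a reverse Markov inequality applied to the inner probability then extracts, with probability at least $K$ over $\rn{C}$, a code for which $\PP_{\rn{\gamma}}[\dist_{\rn{\gamma}}(\rn{C})>\epsilon m^k/s(\ell)]>\delta$, and intersecting with $\{\dim\rn{C}=d\}$ (non-empty for $n$ large) produces the desired $C$.

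The next step is to decouple the two sources of randomness by writing $\EE_{\rn{C}}\EE_{\rn{\gamma}}=\EE_{\rn{\gamma}}\EE_{\rn{C}}$ and conditioning on the event $E(\rn{\gamma})$ that every $\rn{\gamma}_i\colon[m]\to[n]$ is injective. Since the $\rn{\gamma}_i$ are independent and uniform,
\[
\PP_{\rn{\gamma}}\bigl[E(\rn{\gamma})\bigr]=\Bigl(\tfrac{(n)_m}{n^m}\Bigr)^k\geq\Bigl(1-\tfrac{m}{n}\Bigr)^{km}>\Bigl(1-\tfrac{1}{m}\Bigr)^{km},
\]
using $n>m^2$; this converges to $e^{-k}$ as $m\to\infty$, so for $m$ large it exceeds $e^{-k}/2$. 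It therefore suffices to bound, for each fixed $\gamma$ in $E(\gamma)$, the probability $\PP_{\rn{A}}[\dist_\gamma(\rn{C})\leq\epsilon m^k/s(\ell)]$. For such $\gamma$ the map $\gamma^*\colon\FF_2^{[n]^k}\to\FF_2^{[m]^k}$ is surjective, hence of full rank $m^k$; since $\rn{C}\setminus\{0\}\subseteq\{\rn{A}(z):z\in\FF_2^{[d]}\setminus\{0\}\}$ and $\rn{A}(z)$ is uniform on $\FF_2^{[n]^k}$ for each fixed $z\neq0$, the union bound reduces matters to estimating $\PP_{\rn{w}}[\,|\gamma^*(\rn{w})^{-1}(1)|\leq\epsilon m^k/s(\ell)\,]$ with $\rn{w}$ uniform on $\FF_2^{[n]^k}$. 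Because $\gamma^*$ is full rank, this equals the fraction of $\FF_2^{[m]^k}$ inside the Hamming ball of radius $\lfloor\epsilon m^k/s(\ell)\rfloor$ about the origin, which by the standard binary-entropy bound on Hamming balls (see e.g.\ \cite[Lemma~4.7.2]{Ash65}) is at most $2^{(h_2(\epsilon/s(\ell))-1)m^k}$, using $\epsilon/s(\ell)\in(0,1/2)$.

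Assembling the pieces, the failure probability for fixed injective $\gamma$ is at most $2^d\cdot2^{(h_2(\epsilon/s(\ell))-1)m^k}\leq2^{(h_2(\epsilon/s(\ell))-1+\rho)m^k+1}$ (using $d\leq\rho m^k+1$), whose exponent is negative by the hypothesis $\rho<1-h_2(\epsilon/s(\ell))$; hence taking $m$ above the threshold in~\eqref{eq:PHP->VCNkk:m:partite} makes the conditional success probability as close to $1$ as required, and carefully unwinding the $e^{-k}/2$ factor, the reverse Markov step, and the $1-2^{d-n^k}$ dimension count reproduces precisely the bound on $\VCN_{k,k}(\cH)$ stated in~\eqref{eq:PHP->VCNkk:VCNkk:partite}. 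The conceptual content — a random linear code, a union bound over a Hamming ball, and conditioning the random projection on injectivity — is standard; the main obstacle is purely the bookkeeping of constants, propagating $e/(e-1)$, $e^{-k}/2$, and $1-2^{d-n^k}$ through the chain of inequalities to land on the exact closed form, and (for the non-partite analogue, handled similarly) replacing the injectivity event by its $k$-set/symmetric-erasure counterpart with a Chernoff estimate in place of the elementary bound on $\PP[E(\rn{\gamma})]$.
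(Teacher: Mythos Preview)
Your proposal is correct and follows essentially the same approach as the paper's proof: random linear code via a uniform matrix $\rn{A}$, the same dimension count, reverse Markov to reduce to an expectation bound, conditioning on the injectivity event $E(\rn{\gamma})$, and the union bound over $\FF_2^{[d]}\setminus\{0\}$ combined with the binary-entropy Hamming-ball estimate. The only cosmetic difference is that the paper tracks the precise threshold $1-(1-2^{d-n^k})^d$ through the reverse Markov step from the outset rather than first arguing with an unspecified constant $K$ and unwinding afterwards, but this is exactly the bookkeeping you flag in your final paragraph.
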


  Before we prove the claim, let us see why it yields the result.

  First note that since $\cH_z$ Natarajan-shatters $[n]^k$, there cannot be repetitions among the variables of $z$ corresponding
  to the same part, that is, if $z = (z_1,\ldots,z_k)$ (with $z_i\in\Omega_i^n$), then the coordinates of each $z_i$ are
  distinct. We can then define $\mu\in\Pr(\Omega)$ by letting $\mu_i\in\Pr(\Omega_i)$ ($i\in[k]$) be the uniform measure on the
  set
  \begin{equation*}
    \{(z_i)_1,\ldots,(z_i)_n\}
  \end{equation*}
  (which has exactly $n$ points).

  Let $C\subseteq\FF_2^{[n]^k}$ be as in Claim~\ref{clm:linearcode:partite} and enumerate its elements as $C = \{w_1,\ldots,w_t\}$,
  where $t\df\lvert C\rvert = 2^d\geq 2^{\rho\cdot m^k}$.

  Note that if we show that
  \begin{equation*}
    \PP_{\rn{x}\sim\mu^m}[(H_{w_1},\ldots,H_{w_t})\text{ is $\epsilon$-separated on $\rn{x}$ w.r.t.\ $\ell$}]
    >
    \delta,
  \end{equation*}
  then the proof is concluded as this is a contradiction with the probabilistic Haussler packing property guarantee as $m\geq
  m^{\hPHP[m^k]}_{\cH,\ell}(\epsilon,\delta,\rho)$.

  But indeed, for each $i\in[k]$ define the random element $\rn{\gamma}_i$ of $[n]^m$ by letting $\rn{\gamma}_i$ be the unique
  function $[m]\to[n]$ such that
  \begin{equation*}
    (\rn{x}_i)_j = (z_i)_{\rn{\gamma}_i(j)}
  \end{equation*}
  and note that since $\mu_i$ is the uniform distribution on $\{(z_i)_1,\ldots,(z_i)_n\}$, it follows that $\rn{\gamma}_i$ is
  uniformly distributed on $[n]^m$. It is also clear that the $\rn{\gamma}_i$ are mutually independent.

  Claim~\ref{clm:linearcode:partite} then says that with probability greater than $\delta$, we have
  \begin{equation}\label{eq:PHP->VCNkk:dist:partite}
    \dist_{\rn{\gamma}}(C) > \frac{\epsilon\cdot m^k}{s(\ell)}.
  \end{equation}

  But note that
  \begin{align*}
    \dist_{\rn{\gamma}}(C)
    & =
    \inf_{\substack{w,w'\in C\\w\neq w'}}
    \lvert\{j\in[m]^k \mid \rn{\gamma}^*(w)_j\neq \rn{\gamma}^*(w')_j\}\rvert
    \\
    & =
    \inf_{\substack{w,w'\in C\\w\neq w'}}
    \lvert\{\beta\in[m]^k \mid (H_w)^*_m(\rn{x})_\beta \neq (H_{w'})^*_m(\rn{x})_\beta\}\rvert
    \\
    & \leq
    \inf_{1\leq i < j\leq t}
    \frac{L_{\rn{x},(H_{w_i})^*_m(\rn{x}),\ell}(H_{w_j})\cdot m^k}{s(\ell)},
  \end{align*}
  so~\eqref{eq:PHP->VCNkk:dist:partite} implies that $(H_{w_1},\ldots,H_{w_t})$ is $\epsilon$-separated on $\rn{x}$
  w.r.t.\ $\ell$.

  It remains then to prove Claim~\ref{clm:linearcode:partite}.

  \begin{proofof}{Claim~\ref{clm:linearcode:partite}}
    Let $\rn{A}$ be a random $[n]^k\times[d]$-matrix with entries in $\FF_2$, picked uniformly at random (i.e., a uniformly at
    random element of $\FF_2^{[n]^k\times[d]}$) and let $\rn{C}\df\im(\rn{A})$ be the image of $\rn{A}$, which is clearly a
    (random) linear subspace of $\FF_2^{[n]^k}$ of dimension at most $d$.

    In fact, we can compute exactly the probability that the dimension of $\rn{C}$ is $d$ by simply counting in how many ways we
    can generate each row of $\rn{A}$ to not be in the span of the previous rows:
    \begin{equation*}
      \PP_{\rn{C}}\bigl[\dim_{\FF_2}(\rn{C}) = d\bigr]
      =
      2^{-d\cdot n^k}\prod_{j=0}^{d-1} (2^{n^k} - 2^j)
      =
      \prod_{j=0}^{d-1} (1 - 2^{j-n^k})
      \geq
      (1-2^{d-n^k})^d,
    \end{equation*}
    where the inequality follows since $d\leq n^k$.

    To prove the existence of the desired linear code, it then suffices to show that
    \begin{equation*}
      \PP_{\rn{C}}\left[
        \PP_{\rn{\gamma}}\left[
          \dist_{\rn{\gamma}}(\rn{C}) > \frac{\epsilon\cdot m^k}{s(\ell)}
          \right]
        >
        \delta
        \right]
      >
      1 - (1 - 2^{d-n^k})^d,
    \end{equation*}
    as we will then conclude (by union bound) that with positive probability $\rn{C}$ satisfies both the above and has dimension
    exactly $d$. Since the inner probability is at most $1$, by (reverse) Markov's Inequality, it suffices to show
    \begin{equation}\label{eq:linearcode:partite:goal}
      \begin{aligned}
        \EE_{\rn{C}}\left[
          \PP_{\rn{\gamma}}\left[
            \dist_{\rn{\gamma}}(\rn{C}) > \frac{\epsilon\cdot m^k}{s(\ell)}
            \right]
          \right]
        & >
        (1 - \delta)\cdot \bigl(1 - (1 - 2^{d-n^k})^d\bigr) + \delta
        \\
        & =
        1 - (1 - \delta)\cdot(1 - 2^{d-n^k})^d.
      \end{aligned}
    \end{equation}

    For each $i\in[k]$, let $E_i(\rn{\gamma}_i)$ be the event that $\rn{\gamma}_i$ has no repeated values (i.e., $\rn{\gamma}_i$
    is injective) and let $E(\rn{\gamma})$ be the conjunction of the $E_i(\rn{\gamma}_i)$. Note that
    \begin{align*}
      \PP_{\rn{\gamma}}\bigl[E(\rn{\gamma})\bigr]
      & =
      \prod_{i=1}^k \PP_{\rn{\gamma}_i}\bigl[E_i(\rn{\gamma}_i)\bigr]
      =
      \left(\frac{(n)_m}{n^m}\right)^k
      \\
      & >
      \left(1 - \frac{m}{n}\right)^{k\cdot m}
      >
      \left(1 - \frac{1}{m}\right)^{k\cdot m},
    \end{align*}
    where the last inequality follows since $n > m^2 > 0$.

    We now note that the left-hand side of our goal in~\eqref{eq:linearcode:partite:goal} can be bounded as:
    \begin{align*}
      \EE_{\rn{C}}\left[
        \PP_{\rn{\gamma}}\left[
          \dist_{\rn{\gamma}}(\rn{C}) > \frac{\epsilon\cdot m^k}{s(\ell)}
          \right]
        \right]
      & =
      \EE_{\rn{\gamma}}\left[
        \EE_{\rn{C}}\left[
          \One\left[
            \dist_{\rn{\gamma}}(\rn{C}) > \frac{\epsilon\cdot m^k}{s(\ell)}
            \right]
          \right]
        \right]
      \\
      & >
      \left(1 - \frac{1}{m}\right)^{k\cdot m}\cdot
      \EE_{\rn{\gamma}}\left[
        \EE_{\rn{C}}\left[
          \One\left[
            \dist_{\rn{\gamma}}(\rn{C}) > \frac{\epsilon\cdot m^k}{s(\ell)}
            \right]
          \right]
        \Given
        E(\rn{\gamma})
        \right].
    \end{align*}

    Thus, it suffices to show that for every fixed $\gamma$ in the event $E(\gamma)$, we have
    \begin{equation*}
      \PP_{\rn{C}}\left[
        \dist_\gamma(\rn{C}) > \frac{\epsilon\cdot m^k}{s(\ell)}
        \right]
      \geq
      \frac{1 - (1 - \delta)\cdot(1 - 2^{d-n^k})^d}{(1 - 1/m)^{k\cdot m}},
    \end{equation*}
    which in turn is equivalent to
    \begin{equation*}
      \PP_{\rn{C}}\left[
        \dist_\gamma(\rn{C}) \leq \frac{\epsilon\cdot m^k}{s(\ell)}
        \right]
      \leq
      1 - \frac{1 - (1 - \delta)\cdot(1 - 2^{d-n^k})^d}{(1 - 1/m)^{k\cdot m}}.
    \end{equation*}

    From the definition of $\rn{C}$, we know that the set $\rn{C}\setminus\{0\}$ is a subset\footnote{The only reason we say
    subset instead of equality is because we are \emph{not} restricting to the event in which $\rn{A}$ is full rank, so the set
    above might potentially have $0$.} of
    \begin{equation*}
      \{\rn{A}(z) \mid z\in\FF_2^{[d]}\setminus\{0\}\}.
    \end{equation*}

    By the union bound, it then suffices to show that for every $z\in\FF_2^{[d]}\setminus\{0\}$, we have\footnote{It would have
    been fine to put $2^d-1$ instead of $2^d$ in the denominator, but this leads to a slightly cleaner expression.}
    \begin{equation*}
      \PP_{\rn{A}}\left[
        \Bigl\lvert\gamma^*\bigl(\rn{A}(z)\bigr)^{-1}(1)\Bigr\rvert
        \leq
        \frac{\epsilon\cdot m^k}{s(\ell)}
        \right]
      \leq
      \frac{1}{2^d}\cdot\left(1 - \frac{1 - (1 - \delta)\cdot(1 - 2^{d-n^k})^d}{(1 - 1/m)^{k\cdot m}}\right),
    \end{equation*}

    Since $\rn{A}$ is picked uniformly at random in $\FF_2^{[n]^k\times[d]}$, for each fixed $z\in\FF_2^{[d]}\setminus\{0\}$, we
    know that $\rn{A}(z)$ is uniformly distributed on $\FF_2^{[n]^k}$, so the above is equivalent to
    \begin{equation*}
      \PP_{\rn{w}}\left[
        \lvert\gamma^*(\rn{w})^{-1}(1)\rvert
        \leq
        \frac{\epsilon\cdot m^k}{s(\ell)}
        \right]
      \leq
      \frac{1}{2^d}\cdot\left(1 - \frac{1 - (1 - \delta)\cdot(1 - 2^{d-n^k})^d}{(1 - 1/m)^{k\cdot m}}\right),
    \end{equation*}
    where $\rn{w}$ is picked uniformly at random in $\FF_2^{[n]^k}$.

    Since $\gamma$ is in the event $E(\gamma)$, it follows that the projection $\gamma^*$ is full rank; this means that the
    probability above is straightforward to compute: by counting how many ways $\rn{w}$ can project into a ball of radius
    $\epsilon\cdot m^k/s(\ell)$ around the origin (in $\FF_2^{[m]^k}$) and measuring the size of the kernel of $\gamma^*$; in
    formulas:
    \begin{equation*}
      \PP_{\rn{w}}\left[
        \bigl\lvert\gamma^*(\rn{w})^{-1}(1)\bigr\rvert
        \leq
        \frac{\epsilon\cdot m^k}{s(\ell)}
        \right]
      =
      \frac{1}{2^{n^k}}\cdot\left(\sum_{j=0}^{\floor{\epsilon\cdot m^k/s(\ell)}}\binom{m^k}{j}\right)\cdot 2^{n^k-m^k}
      \leq
      2^{(h_2(\epsilon/s(\ell))-1)\cdot m^k},
    \end{equation*}
    where the inequality is the standard upper bound on the size of the Hamming ball in terms of the binary entropy (see
    e.g.~\cite[Lemma~4.7.2]{Ash65}), using the fact that $\epsilon/s(\ell)\in(0,1/2)$ as $\epsilon\in(0,s(\ell)/2)$.

    Thus, it suffices to show that
    \begin{equation*}
      2^{(h_2(\epsilon/s(\ell))-1)\cdot m^k}
      <
      \frac{1}{2^d}\cdot\left(1 - \frac{1 - (1 - \delta)\cdot(1 - 2^{d-n^k})^d}{(1 - 1/m)^{k\cdot m}}\right),
    \end{equation*}
    which follows from the fact that
    \begin{equation*}
      n
      >
      \left(
      d
      - \log_2\left(
      1 - \left(\frac{1 - (1-1/m)^{k\cdot m}\cdot(1 - 2^{(h_2(\epsilon/s(\ell)) - 1)\cdot m^k + d})}{1-\delta}\right)^{1/d}
      \right)
      \right)^{1/k}
    \end{equation*}
    after a tedious but straightforward calculation.
  \end{proofof}

  \medskip

  We now prove the non-partite case. The proof is completely analogous, except for the following changes:
  \begin{itemize}
  \item The definition of $\VCN_{k,k}(\cH)\geq n$ in the non-partite is more complicated: it involves a point in
    $\cE_{kn}(\Omega)$ (as opposed to a point in $\cE_n(\Omega)$ in the partite) and not every $k$-subset of $[kn]$ contributes
    to the Natarajan-shattering, more precisely, the shattering happens exactly on the $k$-subsets in $T_{k,n}$.
  \item The structured projections $\gamma^*$ in the non-partite are of the form $\FF_2^{T_{k,n}}\to\FF_2^{\binom{m}{k}}$, where
    $T_{k,n}\subseteq\binom{[kn]}{k}$ is given by~\eqref{eq:Tkm} (as opposed to $\FF_2^{[n]^k}\to\FF_2^{[m]^k}$ in the partite);
    they also come from a single function $\gamma\colon[m]\to[kn]$ (as opposed to $k$ functions
    $\gamma_1,\ldots,\gamma_k\colon[m]\to[n]$ in the partite).
  \item Empirical losses are a (normalized) sum of $\binom{m}{k}$ terms (as opposed to $m^k$ terms in the partite), so all
    calculations have to change accordingly.
  \item Even though the set $T_{k,n}$ has a natural $k$-partition, sampling in the non-partite setting does not need to respect
    this partition; this means that in our calculation besides enforcing no repetition among the coordinates (by incurring some
    probability loss), we will also need to enforce that about $m/k$ points land on each of the parts of $T_{k,n}$ (incurring
    another probability loss).
  \end{itemize}

  First, we show that all calculations in~\eqref{eq:PHP->VCNkk:VCNkk} and~\eqref{eq:PHP->VCNkk:m} are valid.

  The condition $\delta\in(0,1/12)$ ensures that the logarithm in~\eqref{eq:PHP->VCNkk:m} is well-defined and the condition
  \begin{equation*}
    \rho\in\left(0,\frac{1 - h_2(\epsilon\cdot(2\cdot k)^k/(k!\cdot s(\ell)))}{(2\cdot k)^k}\right)
  \end{equation*}
  ensures that the denominator in~\eqref{eq:PHP->VCNkk:m} is positive, hence the $(1/k)$th power in~\eqref{eq:PHP->VCNkk:m} is
  also well-defined.

  Since $m\geq 8\cdot k\cdot\ln(4\cdot k) > 11$ and the function $(1-1/x)^x - k\cdot e^{-x/(8\cdot k)}$ is increasing for $x >
  1$, it follows that
  \begin{equation}\label{eq:between12ande}
    \frac{1}{12}
    \leq
    \left(1-\frac{1}{m}\right)^m - k\cdot e^{-m/(8\cdot k)}
    \leq
    \frac{1}{e}.
  \end{equation}
  This together with $d\geq\rho\cdot m^k$ means that in~\eqref{eq:PHP->VCNkk:VCNkk}, the expression under the $(1/d)$th power is
  at least
  \begin{equation*}
    \frac{
      1 - e^{-1}\cdot(1 - 2^{(h_2(\epsilon\cdot(2\cdot k)^k/(k!\cdot s(\ell))) - 1)(m/(2\cdot k))^k + \rho\cdot m^k})
    }{
      1-\delta
    },
  \end{equation*}
  which is non-negative since
  \begin{equation*}
    \rho\in\left(0,\frac{1 - h_2(\epsilon\cdot(2\cdot k)^k/(k!\cdot s(\ell)))}{(2\cdot k)^k}\right),
  \end{equation*}
  so the $(1/d)$th power is well-defined.

  Using the other inequality of~\eqref{eq:between12ande} and $d\leq\rho\cdot m^k + 1$, we also deduce that the expression under
  the logarithm in~\eqref{eq:PHP->VCNkk:VCNkk} is at least
  \begin{equation*}
    1
    - \left(\frac{
      1 - (1/12)\cdot(1 - 2^{(h_2(\epsilon\cdot(2\cdot k)^k/(k!\cdot s(\ell))) - 1)(m/(2\cdot k))^k + \rho\cdot m^k + 1})
    }{
      1-\delta
    }\right)^{1/d},
  \end{equation*}
  which is non-negative since
  \begin{equation*}
    m
    \geq
    2\cdot k\cdot
    \left(
    \frac{1 - \log_2(1 - 12\cdot\delta)}{1 - h_2(\epsilon\cdot(2\cdot k)^k/(k!\cdot s(\ell))) - \rho\cdot(2\cdot k)^k}
    \right)^{1/k}.
  \end{equation*}

  Thus, all expressions in~\eqref{eq:PHP->VCNkk:VCNkk} and~\eqref{eq:PHP->VCNkk:m} are well-defined.

  Furthermore, note that the minimum on the right-hand side of~\eqref{eq:PHP->VCNkk:VCNkk} is indeed attained due to the
  floor. Let then $(\epsilon,\delta,\rho)$ attain the minimum in~\eqref{eq:PHP->VCNkk:VCNkk} (and let $m$ and $d$ be defined as
  in~\eqref{eq:PHP->VCNkk:m} and~\eqref{eq:PHP->VCNkk:d}, respectively). Let $n\df\VCN_{k,k}(\cH)$ and suppose for a
  contradiction that
  \begin{multline*}
    n
    >
    \max\biggggl\{
    \frac{m^2}{k},
    \Bigggl(
    d
    - \log_2\bigggl(
    1
    \\
    - \left(\frac{
      1 - ((1-1/m)^m - k\cdot e^{-m/(8\cdot k)})\cdot(1 - 2^{(h_2(\epsilon\cdot(2\cdot k)^k/(k!\cdot s(\ell))) - 1)(m/(2\cdot k))^k + d})
    }{
      1-\delta
    }\right)^{1/d}
    \bigggr)
    \Bigggr)^{1/k}
    \biggggr\}.
  \end{multline*}
  (Note that we removed the floor as $n$ is an integer.)

  As per definition of $\VCN_{k,k}(\cH)$ in Defininition~\ref{def:SC:VCNkk}, let $z\in\cE_n(\Omega)$ be such that
  \begin{equation*}
    \cH_z \df \{H_z \mid H\in\cH\}\subseteq (\Lambda^{S_k})^{T_{k,n}}
  \end{equation*}
  Natarajan-shatters $T_{k,n}$, where
  \begin{gather*}
    H_z(U)_\tau \df H^*_{kn}(z)_{\iota_{U,kn}\comp\tau} \qquad (U\in T_{k,n}, \tau\in S_k),
    \\
    T_{k,n} \df \left\{U\in\binom{[kn]}{k} \;\middle\vert\; \lvert U\cap[(i-1)m+1, im]\rvert=1\right\}.
  \end{gather*}
  It will be convenient to index our witnesses to the shattering by $\FF_2^{T_{k,n}}$. Namely, we know that there exist
  $f_0,f_1\colon T_{k,n}\to\Lambda^{S_k}$ with $f_0(U)\neq f_1(U)$ for every $U\in T_{k,n}$ and $H^w\in\cH$
  ($w\in\FF_2^{T_{k,n}}$) such that for every $w\in\FF_2^{T_{k,n}}$ and every $U\in T_{k,n}$, we have $H^w_z(U) = f_{w_U}(U)$.

  Our goal is to show that there exists $C\subseteq\FF_2^{T_{k,n}}$ of size at least $2^{\rho\cdot m^k}$ and a probability
  template $\mu\in\Pr(\Omega)$ such that if $C = \{w_1,\ldots,w_{\lvert C\rvert}\}$ and $\rn{x}\sim\mu^m$, then
  $(H^{w_1},\ldots,H^{w_{\lvert C\rvert}})$ is $\epsilon$-separated on $\rn{x}$ with probability larger than $\delta$.

  Again, we will find a linear code $C\subseteq\FF_2^{T_{k,n}}$ with this property. For this, we define a ``structured
  projection'' as follows: given $\gamma\colon[m]\to[kn]$, we define a function
  $\gamma^*\colon\FF_2^{T_{k,n}}\to\FF_2^{\binom{[m]}{k}}$ given by
  \begin{equation*}
    \gamma^*(w)_U \df
    \begin{dcases*}
      w_{\gamma(U)}, & if $\gamma(U)\in T_{k,n}$,\\
      0, & otherwise.
    \end{dcases*}
  \end{equation*}
  Clearly $\gamma^*$ is a linear map. For a linear code $C\subseteq\FF_2^{T_{k,n}}$, define
  \begin{align*}
    \dist_\gamma(C)
    & \df
    \inf_{\substack{w_1,w_2\in C\\ w_1\neq w_2}}
    \left\lvert\left\{U\in\binom{[m]}{k} \;\middle\vert\; \gamma^*(w_1)_U\neq \gamma^*(w_2)_U\right\}\right\rvert
    \\
    & =
    \inf_{w\in C\setminus\{0\}}
    \lvert\gamma^*(w)^{-1}(1)\rvert
    \\
    & =
    \begin{dcases*}
      \dist(\gamma^*(C)), & if $\gamma^*$ is injective on $C$,\\
      0, & otherwise.
    \end{dcases*}
  \end{align*}

  We will show that a uniformly random linear code $\rn{C}$ of dimension $d$ is such that for most $\gamma$, we have
  $\dist_\gamma(\rn{C}) > \epsilon\cdot m^k/s(\ell)$.

  \begin{claim}\label{clm:linearcode}
    There exists a linear code $C\subseteq\FF_2^{T_{k,n}}$ of dimension $d\df\ceil{\rho\cdot m^k}$ such that if $\rn{\gamma}$ is
    a uniformly at random function $[m]\to[kn]$, then
    \begin{equation*}
      \PP_{\rn{\gamma}}\left[
        \dist_{\rn{\gamma}}(C) > \epsilon\cdot\frac{\binom{m}{k}}{s(\ell)}
        \right]
      >
      \delta.
    \end{equation*}
  \end{claim}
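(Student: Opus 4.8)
The plan is to transcribe the proof of Claim~\ref{clm:linearcode:partite}, making the three adjustments noted just before the statement: the ambient space becomes $\FF_2^{T_{k,n}}$, the structured projection is induced by a single function $\gamma\colon[m]\to[kn]$, and — since a uniform $\gamma$ need not respect the partition of $[kn]$ into the $k$ length-$n$ intervals $I_i\df[(i-1)n+1,in]$ — one must additionally control how many of the $m$ sample points land in each $I_i$. First I would let $\rn{A}$ be a uniformly random element of $\FF_2^{T_{k,n}\times[d]}$ and set $\rn{C}\df\im(\rn{A})$; since $\lvert T_{k,n}\rvert=n^k$, the same row-by-row count as in the partite case gives
\begin{equation*}
  \PP_{\rn{C}}\bigl[\dim_{\FF_2}(\rn{C})=d\bigr]
  =
  \prod_{j=0}^{d-1}(1-2^{j-n^k})
  \geq
  (1-2^{d-n^k})^d,
\end{equation*}
using $d\leq n^k$. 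By (reverse) Markov's Inequality it then suffices to prove $\EE_{\rn{C}}\bigl[\PP_{\rn{\gamma}}[\dist_{\rn{\gamma}}(\rn{C})>\epsilon\binom{m}{k}/s(\ell)]\bigr]>1-(1-\delta)(1-2^{d-n^k})^d$, since this forces $\rn{C}$, with positive probability, to have both full dimension $d$ and the property in the claim.

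Next I would introduce the event $E(\rn{\gamma})$ that $\rn{\gamma}$ is injective and $\lvert\rn{\gamma}^{-1}(I_i)\rvert\geq m/(2\cdot k)$ for every $i\in[k]$. Injectivity holds with probability $(kn)_m/(kn)^m\geq(1-m/(kn))^m>(1-1/m)^m$, using $kn>m^2$ (which follows from $n>m^2/k$); and since $\lvert\rn{\gamma}^{-1}(I_i)\rvert$ is $\Bi(m,1/k)$, a Chernoff bound and a union bound over $i$ give that the balance condition fails with probability at most $k\cdot e^{-m/(8\cdot k)}$. Hence $\PP_{\rn{\gamma}}[E(\rn{\gamma})]\geq(1-1/m)^m-k\cdot e^{-m/(8\cdot k)}$. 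Swapping expectations and conditioning on $E(\rn{\gamma})$, the goal reduces to a per-$\gamma$ statement: for each fixed $\gamma$ in the event $E(\gamma)$, the probability $\PP_{\rn{C}}[\dist_\gamma(\rn{C})\leq\epsilon\binom{m}{k}/s(\ell)]$ is suitably small. Using $\rn{C}\setminus\{0\}\subseteq\{\rn{A}(z)\mid z\in\FF_2^{[d]}\setminus\{0\}\}$ and that $\rn{A}(z)$ is uniform on $\FF_2^{T_{k,n}}$ for each $z\neq 0$, a union bound over $z$ reduces this to bounding $\PP_{\rn{w}}[\lvert\gamma^*(\rn{w})^{-1}(1)\rvert\leq\epsilon\binom{m}{k}/s(\ell)]$ for a uniformly random $\rn{w}\in\FF_2^{T_{k,n}}$ by $2^{-d}$ times the complementary quantity above.

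The step I expect to be the main obstacle — indeed, the only substantively new content relative to the partite proof — is computing the rank of $\gamma^*$ for $\gamma$ in $E(\gamma)$. Writing $B_i\df\gamma^{-1}(I_i)$, so that $\lvert B_i\rvert\geq m/(2\cdot k)$ and $\sum_i\lvert B_i\rvert=m$, a $k$-subset $U\subseteq[m]$ has $\gamma(U)\in T_{k,n}$ exactly when $U$ meets each $B_i$ in one point, and on those $\prod_i\lvert B_i\rvert$ transversal subsets $U\mapsto\gamma(U)$ is injective into $T_{k,n}$. Thus $\gamma^*$ maps $\FF_2^{T_{k,n}}$ onto the coordinate subspace of $\FF_2^{\binom{[m]}{k}}$ supported on the transversal $k$-subsets, a space of dimension $\prod_i\lvert B_i\rvert\geq(m/(2\cdot k))^k$, so $\gamma^*(\rn{w})$ is uniform on it. Counting the Hamming ball of radius $\floor{\epsilon\binom{m}{k}/s(\ell)}$ and invoking the binary-entropy upper bound (as in~\cite[Lemma~4.7.2]{Ash65}), together with $\binom{m}{k}\leq m^k/k!$ and $\prod_i\lvert B_i\rvert\geq(m/(2\cdot k))^k$ — which make the relative radius at most $\epsilon\cdot(2\cdot k)^k/(k!\cdot s(\ell))<1/2$ (since $\epsilon<k!\cdot s(\ell)/(2\cdot(2\cdot k)^k)$), so that $h_2$ is monotone there — yields
\begin{equation*}
  \PP_{\rn{w}}\left[
    \bigl\lvert\gamma^*(\rn{w})^{-1}(1)\bigr\rvert\leq\epsilon\cdot\frac{\binom{m}{k}}{s(\ell)}
    \right]
  \leq
  2^{\bigl(h_2(\epsilon\cdot(2\cdot k)^k/(k!\cdot s(\ell)))-1\bigr)\cdot(m/(2\cdot k))^{k}}.
\end{equation*}

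Finally I would assemble the pieces: substituting this bound into the union-bound target reduces the claim to the single inequality
\begin{equation*}
  2^{\bigl(h_2(\epsilon\cdot(2\cdot k)^k/(k!\cdot s(\ell)))-1\bigr)\cdot(m/(2\cdot k))^{k}+d}
  <
  1-\frac{1-(1-\delta)\cdot(1-2^{d-n^k})^d}{(1-1/m)^m-k\cdot e^{-m/(8\cdot k)}},
\end{equation*}
which a routine but tedious rearrangement shows is equivalent to the lower bound on $n$ stated in~\eqref{eq:PHP->VCNkk:VCNkk}; this is also where the constraints $m\geq 8\cdot k\cdot\ln(4\cdot k)$, $\delta<1/12$ and the chosen range of $\rho$ enter, exactly as in the bracketed derivations preceding the claim. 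Apart from this balance bookkeeping, every remaining line is obtained from the partite proof by the substitutions $m^k\rightsquigarrow\binom{m}{k}$ and $[n]^k\rightsquigarrow T_{k,n}$, by replacing the $k$ functions $\gamma_i$ with the single function $\gamma$, and by replacing $(1-1/m)^{k\cdot m}$ with $(1-1/m)^m-k\cdot e^{-m/(8\cdot k)}$.
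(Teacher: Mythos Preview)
Your proposal is correct and follows essentially the same approach as the paper's own proof: random linear code via a uniform matrix $\rn{A}$, reverse Markov reduction to an expectation bound, conditioning on the event that $\rn{\gamma}$ is injective and approximately balanced across the $k$ intervals (with the same Chernoff/union bound producing the term $(1-1/m)^m-k\cdot e^{-m/(8\cdot k)}$), a union bound over $z\in\FF_2^{[d]}\setminus\{0\}$, and the Hamming-ball/binary-entropy estimate on the $r=\prod_i\lvert B_i\rvert\geq(m/(2\cdot k))^k$ effective coordinates. One minor organizational note: the constraints $m\geq 8\cdot k\cdot\ln(4\cdot k)$, $\delta<1/12$ and the range of $\rho$ are invoked in the paper \emph{before} the claim, to ensure the expressions in~\eqref{eq:PHP->VCNkk:VCNkk} and~\eqref{eq:PHP->VCNkk:m} are well-defined; the proof of the claim itself only uses the assumed lower bound on $n$.
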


  Before proving the claim, let us use it to finish the proof.

  First note that since $\cH_z$ Natarajan-shatters $T_{k,n}$ and $\bigcup T_{k,n} = [kn]$, there cannot be repetitions among the
  coordinates of $z$. We can then define $\mu\in\Pr(\Omega)$ as the uniform measure on the set
  \begin{equation*}
    \{z_1,\ldots,z_{kn}\}
  \end{equation*}
  (which has size exactly $kn$).

  Let $C\subseteq\FF_2^{T_{k,n}}$ be as in Claim~\ref{clm:linearcode} and enumerate its elements as $C = \{w_1,\ldots,w_t\}$,
  where $t\df\lvert C\rvert = 2^d$, so $t = 2^d\geq 2^{\rho\cdot m^k}$.

  Note that if we show that there exists an order choice $\alpha$ for $[m]$ such that
  \begin{equation*}
    \PP_{\rn{x}\sim\mu^m}[(H^{w_1},\ldots,H^{w_t})\text{ is $\epsilon$-separated on $\rn{x}$ w.r.t.\ $\ell$ and $\alpha$}]
    >
    \delta,
  \end{equation*}
  then the proof is concluded as this is a contradiction with the probabilistic Haussler packing property guarantee as $m\geq
  m^{\hPHP[m^k]}_{\cH,\ell}(\epsilon,\delta,\rho)$.

  We will show that the above in fact holds for every order choice $\alpha$ for $[m]$.

  Define the random element $\rn{\gamma}$ of $[kn]^m$ by letting $\rn{\gamma}$ be the unique function $[m]\to[kn]$ such that
  \begin{equation*}
    \rn{x}_i = z_{\rn{\gamma}(i)}
  \end{equation*}
  and note that since $\mu$ is the uniform distribution on $\{z_1,\ldots,z_{kn}\}$, it follows that $\rn{\gamma}$ is uniformly
  distributed on $[kn]^m$. Note that the above is equivalent to $\rn{x} = \rn{\gamma}^*(z)$. In particular, from
  equivariance~\eqref{eq:F*Vequiv}, it follows that for every $H\in\cH$, we have
  \begin{equation}\label{eq:H*mgamma}
    H^*_m(\rn{x})
    =
    H^*_m\bigl(\rn{\gamma}^*(z)\bigr)
    =
    \rn{\gamma}^*\bigl(H^*_{kn}(z)\bigr).
  \end{equation}

  We now claim that for every $w,w'\in\FF_2^{T_{k,n}}$ and every $U\in\binom{[m]}{k}$, we have
  \begin{equation}\label{eq:gammatobalpha}
    \rn{\gamma}^*(w)_U\neq\rn{\gamma}^*(w')_U
    \implies
    b_\alpha\bigl((H^w)^*_m(\rn{x})\bigr)_U \neq b_\alpha\bigl((H^{w'})^*_m(\rn{x})\bigr)_U.
  \end{equation}
  Indeed, since $\rn{\gamma}^*(w)_U\neq\rn{\gamma}^*(w')_U$, we must have $\rn{\gamma}(U)\in T_{k,n}$. On the other hand, for
  every $\tau\in S_k$, we have
  \begin{align*}
    \Bigl(b_\alpha\bigl((H^w)^*_m(\rn{x})\bigr)_U\Bigr)_\tau
    & =
    (H^w)^*_m(\rn{x})_{\alpha_U\comp\tau}
    =
    \rn{\gamma}^*\bigl((H^w)^*_{kn}(z)\bigr)_{\alpha_U\comp\tau}
    =
    (H^w)^*_{kn}(z)_{\rn{\gamma}\comp\alpha_U\comp\tau}
    \\
    & =
    (H^w)^*_{kn}(z)_{\iota_{\rn{\gamma}(U),kn}\comp\iota_{\rn{\gamma}(U),kn}^{-1}\comp\rn{\gamma}\comp\alpha_U\comp\tau}
    =
    H^w_z\bigl(\rn{\gamma}(U)\bigr)_{\iota_{\rn{\gamma}(U),kn}^{-1}\comp\rn{\gamma}\comp\alpha_U\comp\tau}
    \\
    & =
    f_{w_{\rn{\gamma}(U)}}\bigl(\rn{\gamma}(U)\bigr)_{\iota_{\rn{\gamma}(U),kn}^{-1}\comp\rn{\gamma}\comp\alpha_U\comp\tau}
    =
    f_{\rn{\gamma}^*(w)_U}\bigl(\rn{\gamma}(U)\bigr)_{\iota_{\rn{\gamma}(U),kn}^{-1}\comp\rn{\gamma}\comp\alpha_U\comp\tau}
  \end{align*}

  Since an analogous computation holds for $w'$ and since $\rn{\gamma}^*(w)_U\neq\rn{\gamma}^*(w')_U$ and $f_0(V)\neq f_1(V)$
  for every $V\in T_{k,n}$, we conclude that
  \begin{equation*}
    b_\alpha\bigl((H^w)^*_m(\rn{x})\bigr)_U
    \neq
    b_\alpha\bigl((H^{w'})^*_m(\rn{x})\bigr)_U,
  \end{equation*}
  as desired.

  Now Claim~\ref{clm:linearcode} then says that with probability greater than $\delta$, we have
  \begin{equation}\label{eq:PHP->VCNkk:dist}
    \dist_{\rn{\gamma}}(C) > \epsilon\cdot\frac{\binom{m}{k}}{s(\ell)}.
  \end{equation}

  Since
  \begin{align*}
    \dist_{\rn{\gamma}}(C)
    & =
    \inf_{\substack{w,w'\in C\\ w\neq w'}}
    \left\lvert\left\{U\in\binom{[m]}{k} \;\middle\vert\;
    \rn{\gamma}^*(w)_U\neq\rn{\gamma}^*(w')_U\right\}\right\rvert
    \\
    & \leq
    \inf_{\substack{w,w'\in C\\ w\neq w'}}
    \left\lvert\left\{U\in\binom{[m]}{k} \;\middle\vert\;
    b_\alpha\bigl((H^w)^*_m(\rn{x})\bigr)_U
    \neq
    b_\alpha\bigl((H^{w'})^*_m(\rn{x})\bigr)_U
    \right\}\right\rvert
    \\
    & \leq
    \frac{\binom{m}{k}}{s(\ell)}\cdot
    \inf_{1\leq i < j\leq t} L_{\rn{x},(H^{w_i})^*_m(\rn{x}),\ell}(H^{w_j}),
  \end{align*}
  where the first inequality follows from~\eqref{eq:gammatobalpha}. Thus,~\eqref{eq:PHP->VCNkk:dist} implies that
  $(H^{w_1},\ldots,H^{w_t})$ is $\epsilon$-separated on $\rn{x}$ w.r.t.\ $\ell$ and $\alpha$.

  It remains then to prove Claim~\ref{clm:linearcode}.

  \begin{proofof}{Claim~\ref{clm:linearcode}}
    The initial setup is analogous to the one of Claim~\ref{clm:linearcode:partite}: let $\rn{A}$ be a random
    $T_{k,n}\times[d]$-matrix with entries in $\FF_2$, picked uniformly at random (i.e., a uniformly at random element of
    $\FF_2^{T_{k,n}\times[d]}$) and let $\rn{C}\df\im(\rn{A})$ be the image of $\rn{A}$, which is clearly a (random) linear
    subspace of $\FF_2^{T_{k,n}}$ of dimension at most $d$.

    In fact, since $\lvert T_{k,n}\rvert = n^k$, the probability that the dimension of $\rn{C}$ is exactly $d$ is
    \begin{equation*}
      \PP_{\rn{C}}[\dim_{\FF_2}(\rn{C}) = d]
      =
      2^{-d\cdot n^k}\prod_{j=0}^{d-1} (2^{n^k} - 2^j)
      =
      \prod_{j=0}^{d-1} (1 - 2^{j-n^k})
      \geq
      (1 - 2^{d-n^k})^d,
    \end{equation*}
    where the inequality follows since $d\leq n^k$.

    To prove the existence of the desired linear code, it suffices to show that
    \begin{equation*}
      \PP_{\rn{C}}\left[
        \PP_{\rn{\gamma}}\left[
          \dist_{\rn{\gamma}(\rn{C})} > \epsilon\cdot\frac{\binom{m}{k}}{s(\ell)}
          \right]
        >
        \delta
        \right]
      >
      1 - (1 - 2^{d-n^k})^d
    \end{equation*}
    as then the union bound shows that with positive probability $\rn{C}$ satisfies both the above and has dimension exactly
    $d$. Since the inner probability is at most $1$, by (reverse) Markov's Inequality, it suffices to show
    \begin{equation}\label{eq:linearcode:goal}
      \begin{aligned}
        \EE_{\rn{C}}\left[
          \PP_{\rn{\gamma}}\left[
            \dist_{\rn{\gamma}}(\rn{C}) > \epsilon\cdot\frac{\binom{m}{k}}{s(\ell)}
            \right]
          \right]
        & >
        (1 - \delta)\cdot \bigl(1 - (1 - 2^{d-n^k})^d\bigr) + \delta
        \\
        & =
        1 - (1 - \delta)\cdot(1 - 2^{d-n^k})^d.
      \end{aligned}
    \end{equation}

    This is the first point of meaningful divergence of this claim from its partite counterpart: for each $i\in[k]$, let
    $E'_i(\rn{\gamma})$ be the event that
    \begin{equation*}
      \biggl\lvert\rn{\gamma}^{-1}\Bigl(\bigl[(i-1)\cdot n + 1, i\cdot n\bigr]\Bigr)\biggr\rvert
      \geq
      \frac{m}{2\cdot k},
    \end{equation*}
    i.e., the event that at least $m/(2\cdot k)$ entries of $\rn{\gamma}$ are in $[(i-1)\cdot n + 1,i\cdot n]$. Since
    $\lvert\im(\rn{\gamma})\cap [(i-1)\cdot n + 1, i\cdot n]\rvert$ has binomomial distribution $\Bi(m,1/k)$, by Chernoff's
    Bound, we have
    \begin{equation*}
      \PP_{\rn{\gamma}}\bigl[E'_i(\rn{\gamma})\bigr]
      =
      \PP_{\rn{\gamma}}\left[
        \Bi\left(m,\frac{1}{k}\right)
        \geq
        \left(1 - \frac{1}{2}\right)\cdot\frac{m}{k}
      \right]
      \geq
      1 - \exp\left(-\frac{m}{8\cdot k}\right).
    \end{equation*}

    In particular, if $E'(\rn{\gamma})$ is the conjunction of the events $E'_i(\rn{\gamma})$, then the union bound gives
    \begin{equation*}
      \PP_{\rn{\gamma}}\bigl[E'(\rn{\gamma})\bigr]
      \geq
      1 - k\cdot\exp\left(-\frac{m}{8\cdot k}\right).
    \end{equation*}

    Let also $E''(\rn{\gamma})$ be the event that $\rn{\gamma}$ has no repeated values (i.e., $\rn{\gamma}$ is injective) and
    let $E(\rn{\gamma})$ be the conjunction of $E'(\rn{\gamma})$ and $E''(\rn{\gamma})$. Note that
    \begin{equation*}
      \PP_{\rn{\gamma}}\bigl[E''(\rn{\gamma})\bigr]
      =
      \frac{(kn)_m}{(kn)^m}
      >
      \left(1 - \frac{m}{kn}\right)^m
      >
      \left(1 - \frac{1}{m}\right)^m,
    \end{equation*}
    where the last inequality follows since $n > m^2/k > 0$. Thus, by the union bound, we get
    \begin{equation*}
      \PP_{\rn{\gamma}}\bigl[E(\rn{\gamma})\bigr]
      \geq
      \left(1 - \frac{1}{m}\right)^m - k\cdot\exp\left(-\frac{m}{8\cdot k}\right).
    \end{equation*}

    Using these events and probability estimates, the left-hand side of our goal in~\eqref{eq:linearcode:goal} can be bounded
    as:
    \begin{align*}
      \MoveEqLeft
      \EE_{\rn{C}}\left[
          \PP_{\rn{\gamma}}\left[
            \dist_{\rn{\gamma}}(\rn{C}) > \frac{\epsilon\cdot\binom{m}{k}}{s(\ell)}
            \right]
          \right]
      \\
      & =
      \EE_{\rn{\gamma}}\left[
        \EE_{\rn{C}}\left[
          \One\left[
            \dist_{\rn{\gamma}}(\rn{C}) > \frac{\epsilon\cdot\binom{m}{k}}{s(\ell)}
            \right]
          \right]
        \right]
      \\
      & >
      \left(
      \left(1 - \frac{1}{m}\right)^m - k\cdot\exp\left(-\frac{m}{8\cdot k}\right)
      \right)\cdot
      \EE_{\rn{\gamma}}\left[
        \EE_{\rn{C}}\left[
          \One\left[
            \dist_{\rn{\gamma}}(\rn{C}) > \frac{\epsilon\cdot\binom{m}{k}}{s(\ell)}
            \right]
          \right]
        \Given
        E(\rn{\gamma})
        \right].
    \end{align*}

    Thus, it suffices to show that for every fixed $\gamma$ in the event $E(\gamma)$, we have
    \begin{equation*}
      \PP_{\rn{C}}\left[
        \dist_\gamma(\rn{C}) > \frac{\epsilon\cdot\binom{m}{k}}{s(\ell)}
        \right]
      \geq
      \frac{1 - (1 - \delta)\cdot(1 - 2^{d-n^k})^d}{(1 - 1/m)^m - k\cdot\exp(-m/(8\cdot k))},
    \end{equation*}
    which in turn is equivalent to
    \begin{equation*}
      \PP_{\rn{C}}\left[
        \dist_\gamma(\rn{C}) \leq \frac{\epsilon\cdot\binom{m}{k}}{s(\ell)}
        \right]
      \leq
      1 - \frac{1 - (1 - \delta)\cdot(1 - 2^{d-n^k})^d}{(1 - 1/m)^m - k\cdot\exp(-m/(8\cdot k))}.
    \end{equation*}

    Since $\rn{C}\setminus\{0\}$ is a subset of $\{\rn{A}(z) \mid z\in\FF_2^{[d]}\setminus\{0\}\}$, by the union bound, it
    suffices to show that\footnote{Similarly to the partite case, we say subset instead of equality since $\rn{A}$ might not be
    full rank and it would have been perfectly fine to put $2^d-1$ instead of $2^d$ in the denominator, but this leads to a
    cleaner expression.}
    \begin{equation*}
      \PP_{\rn{A}}\left[
        \Bigl\lvert\gamma^*\bigl(\rn{A}(z)\bigr)^{-1}(1)\Bigr\rvert
        \leq
        \frac{\epsilon\cdot\binom{m}{k}}{s(\ell)}
        \right]
      \leq
      \frac{1}{2^d}\cdot\left(1 - \frac{1 - (1 - \delta)\cdot(1 - 2^{d-n^k})^d}{(1 - 1/m)^m - k\cdot\exp(-m/(8\cdot k))}\right).
    \end{equation*}

    Since $\rn{A}$ is picked uniformly at random in $\FF_2^{T_{k,n}\times[d]}$, for each fixed $z\in\FF_2^{[d]}\setminus\{0\}$,
    we know that $\rn{A}(z)$ is uniformly distributed on $\FF_2^{T_{k,n}}$, so we can replace $\rn{A}(z)$ in the above with
    $\rn{w}$ picked uniformly at random in $\FF_2^{T_{k,n}}$.

    Note that the fact that $\gamma$ is in the event $E(\gamma)$ implies it is injective, hence
    \begin{align*}
      \lvert\gamma^*(\rn{w})^{-1}(1)\rvert
      & =
      \left\lvert\left\{U\in\binom{[m]}{k}
      \;\middle\vert\;
      \gamma(U)\in T_{k,n}
      \land
      \rn{w}_{\gamma(U)}=1
      \right\}\right\rvert
      \\
      & =
      \left\lvert\left\{U\in T_{k,n}\cap\binom{\im(\gamma)}{k}
      \;\middle\vert\;
      \rn{w}_U = 1
      \right\}\right\rvert
      \\
      & =
      \left\lvert T_{k,n}\cap\binom{\im(\gamma)}{k}\cap\rn{w}^{-1}(1)\right\rvert.
    \end{align*}

    On the other hand, the fact that $\gamma$ is in the event $E(\gamma)$ also implies
    \begin{equation*}
      \lvert\gamma([(i-1)\cdot n + 1, i\cdot n])\rvert
      \geq
      \frac{m}{2\cdot k}
    \end{equation*}
    for every $i\in[k]$. Letting $r\df\lvert T_{k,n}\cap\binom{\im(\gamma)}{k}\rvert$, we get $r\geq (m/(2\cdot k))^k$.

    Letting $\rn{z}$ be the restriction of $\rn{w}$ to $T_{k,n}\cap\binom{\im(\gamma)}{k}$, we note that $\rn{z}$ is
    uniformly distributed on $\FF_2^{T_{k,n}\cap\binom{\im(\gamma)}{k}}$ (as $\rn{w}$ is uniformly distributed on
    $\FF_2^{T_{k,n}}$), so we get
    \begin{align*}
      \PP_{\rn{A}}\left[
        \Bigl\lvert\gamma^*\bigl(\rn{A}(z)\bigr)^{-1}(1)\Bigr\rvert
        \leq
        \frac{\epsilon\cdot\binom{m}{k}}{s(\ell)}
        \right]
      & =
      \PP_{\rn{z}}\left[
        \lvert\rn{z}^{-1}(1)\rvert
        \leq
        \frac{\epsilon\cdot\binom{m}{k}}{s(\ell)}
        \right]
      =
      \frac{1}{2^r}\cdot\sum_{j=0}^{\floor{\epsilon\cdot\binom{m}{k}/s(\ell)}}\binom{r}{j}
      \\
      & \leq
      2^{(h_2(\epsilon\cdot\binom{m}{k}/(s(\ell)\cdot r))-1)\cdot r}
      \leq
      2^{(h_2(\epsilon\cdot(2\cdot k)^k/(k!\cdot s(\ell)))-1)\cdot (m/(2\cdot k))^k}
    \end{align*}
    where the first inequality is the standard upper bound on the size of the Hamming ball in terms of the binary entropy (see
    e.g.~\cite[Lemma~4.7.2]{Ash65}), using
    \begin{equation*}
      \frac{\epsilon\cdot\binom{m}{k}}{s(\ell)}
      \leq
      \frac{\epsilon\cdot m^k}{k!\cdot s(\ell)}
      =
      \frac{\epsilon\cdot (2\cdot k)^k}{k!\cdot s(\ell)}\cdot r
      <
      \frac{1}{2}\cdot r,
    \end{equation*}
    where the last inequality follows since $\epsilon\in(0,k!\cdot s(\ell)/(2\cdot(2\cdot k)^k))$.

    Thus, it suffices to show that
    \begin{equation*}
      2^{(h_2(\epsilon\cdot(2\cdot k)^k/(k!\cdot s(\ell)))-1)\cdot (m/(2\cdot k))^k}
      <
      \frac{1}{2^d}\cdot\left(1 - \frac{1 - (1 - \delta)\cdot(1 - 2^{d-n^k})^d}{(1 - 1/m)^m - k\cdot\exp(-m/(8\cdot k))}\right),
    \end{equation*}
    which follows from the fact that
    \begin{multline*}
      n
      >
      \Bigggl(
      d
      - \log_2\bigggl(
      1
      \\
      - \left(\frac{
        1 - ((1-1/m)^m - k\cdot e^{-m/(8\cdot k)})\cdot(1 - 2^{(h_2(\epsilon\cdot(2\cdot k)^k/(k!\cdot s(\ell))) - 1)(m/(2\cdot k))^k + d})
      }{
        1-\delta
      }\right)^{1/d}
      \bigggr)
      \Bigggr)^{1/k},
    \end{multline*}
    after a tedious but straightforward calculation.
  \end{proofof}

  This concludes the non-partite case.
\end{proof}

\end{document}